\documentclass[imslayout,preprint]{imsart}
\def\journal@name{}
\usepackage[margin=1.5in]{geometry}

\usepackage{amsthm}
\usepackage[numbers,sort&compress]{natbib}

\usepackage[colorlinks,citecolor=blue,urlcolor=black,linkcolor=blue,pdfborder={0 0 0}]{hyperref}
\usepackage{graphicx}
\usepackage{subcaption}

\usepackage{booktabs}
\usepackage{multirow}
\usepackage{float}
\usepackage{algorithmic}
\usepackage[linesnumbered,ruled,vlined]{algorithm2e}
\DontPrintSemicolon

\RequirePackage[usenames,dvipsnames]{color}
\usepackage{jhh-misc2}

\typeout{pdfcompresslevel=\the\pdfcompresslevel}
\typeout{pdfobjcompresslevel=\the\pdfobjcompresslevel}
\typeout{pdfminorversion=\the\pdfminorversion}

\pdfminorversion=5
\pdfobjcompresslevel=2
\pdfcompresslevel=5

\crefname{lemma}{Lemma}{Lemmas}
\crefname{proposition}{Proposition}{Propositions}
\crefname{corollary}{Corollary}{Corollaries}
\crefname{theorem}{Theorem}{Theorems}
\crefname{example}{Example}{Examples}
\crefname{assumption}{Assumption}{Assumptions}
\usepackage{autonum} 
\renewcommand{\Pr}{\mathbb{P}}

\newcommand{\derivop}[1]{\grad^{#1}}
\newcommand{\hessian}{\derivop{2}}

\newcommand{\chain}[1]{v^{(#1)}}

\newcommand{\ineff}{\mcI}

%--------------------------------------------------------------------------------------
% Bayesian stuff
%--------------------------------------------------------------------------------------

\newcommand{\param}{\theta}  % parameter of interest
\newcommand{\paramrv}{\vartheta}   % parameter random variable
  % parameter space 
\newcommand{\paramdim}{d}

     % target distribution (probably unneeded)
\newcommand{\targetdensity}{\pi}    % target density

\newcommand{\postdensity}{\targetdensity}     % posterior density 
     
\newcommand{\priordensity}{\targetdensity_{0}}   % prior density 

\newcommand{\lik}{\ell}    % likelihood
    % log-likelihood
\newcommand{\marginallik}{Z}   % marginal likelihood 

%\newcommand{\xvec}{x}
%\newcommand{\xveci}[1]{x_{#1}}
%\newcommand{\Xvec}{X}
%\newcommand{\Xveci}[1]{X_{#1}}

%--------------------------------------------------------------------------------------
% Variational inference
%--------------------------------------------------------------------------------------

\newcommand{\varfamily}{\mcQ}

\newcommand{\approxdensity}{q}

\newcommand{\varparam}{\lambda}   % variational parameter
 % variational parameter random variable
\newcommand{\optvarparam}{\varparam_{*}}   % optimal variational parameter 
   % optimal variational parameter 
\newcommand{\varparamdim}{m}

\newcommand{\discrepancy}[2]{{\mcD_{#1}(#2)}}

\newcommand{\elbo}[1]{\ensuremath{\mathsf{ELBO}(#1)}\xspace}

%--------------------------------------------------------------------------------------
% Optimization
%--------------------------------------------------------------------------------------

\newcommand{\sgditerate}[1]{\varparam^{(#1)}}    % stochastic gradient iterate
\newcommand{\meansgditerate}[1]{\bar\varparam_{#1}}  % mean iterate for step size
\newcommand{\iterateaverage}[1]{\hat{\varparam}_{#1}}  % estimate of mean iterate for step size
\newcommand{\gradsquare}{\nu}    % gradient squared
\newcommand{\graditerate}[1]{m^{(#1)}}    % gradient  iterate 
\newcommand{\gradsquareiterate}[1]{\gradsquare^{(#1)}}    % gradient squared iterate 
  % mean gradient squared iterate for step size
\newcommand{\iternum}{k}   % optimization iteration umber 
\newcommand{\objgrad}{g}   % objective gradient
\newcommand{\objgradest}[1]{\hat\objgrad^{(#1)}}  % objective stochastic gradient
  % zero-mean noise term for objective stochastic gradient
\newcommand{\descentdir}[1]{d^{(#1)}}
\newcommand{\learningrate}{\gamma}
\newcommand{\learningrateiterate}[1]{\learningrate^{(#1)}}
\newcommand{\decay}{\beta}
\newcommand{\decayrate}{\alpha}
\newcommand{\windowprop}{\varrho}

\newcommand{\stationarydist}[1]{\mu_{#1}}  % stationary distribution of fixed-step size stochastic optimization

\newcommand{\epochnum}{t}
\newcommand{\totalepochs}{T}
\newcommand{\iterate}{K}
\newcommand{\skl}[2]{\mathrm{SKL}(#1, #2)}
\newcommand{\mcse}{e}
\newcommand{\invariant}{\Delta}
\newcommand{\invariantiterate}[1]{\invariant^{(#1)}}

\newcommand{\biaspower}{\kappa}
\newcommand{\iternumconverged}{\iternum^\mathrm{conv}}
\newcommand{\iternumaverage}{\iternum^\mathrm{avg}}

\newcommand{\approxhessian}[1]{H^{(#1)}} %approximation of inverse of Hessian matrix 
\newcommand{\expectationparam}{m}
%--------------------------------------------------------------------------------------
% General
%--------------------------------------------------------------------------------------
  % absolutely continuous 

  % complex conjugation 

\newcommand{\textder}[2]{\dee #1/\dee #2}

%--------------------------------------------------------------------------------------
% SDEs
%--------------------------------------------------------------------------------------

 % transition semigroup
 % generator

  % Frechet derivative
  % Frechet derivative wrt a particular norm

%--------------------------------------------------------------------------------------
% Lp spaces
%--------------------------------------------------------------------------------------

%--------------------------------------------------------------------------------------
% Metrics
%--------------------------------------------------------------------------------------

\newcommand{\coupling}{\omega}

\newcommand{\pwassSimple}[3]{\mcW_{#1}(#2, #3)}

%--------------------------------------------------------------------------------------
% Statistics
%--------------------------------------------------------------------------------------

\newcommand{\mean}[1]{m_{#1}}
\newcommand{\meanvec}[1]{m_{#1}}

\newcommand{\fishinf}[1]{{F}_{#1}}

%--------------------------------------------------------------------------------------
% Vectors and matrices
%--------------------------------------------------------------------------------------
 % Bold 1

 % Identity matrix
 % Vector notation
\def\norm#1{\left\|{#1}\right\|} % A norm with 1 argument
\newcommand{\normarg}[2]{\norm{#2}_{#1}}

 % L1 norm
\newcommand{\twonorm}[1]{\norm{#1}_2} % L2 norm
 % Linfty norm
 % Operator norm
 % Frobenius norm
 % Nuclear norm
\def\staticnorm#1{\|{#1}\|} % A static norm that does not resize with input
\newcommand{\statictwonorm}[1]{\staticnorm{#1}_2} % L2 norm
\newcommand{\inner}[2]{\langle{#1},{#2}\rangle} % Inner product
 % Inner product with expandable brackets

 % An n-vector with three arguments

%-------------------------------------------------------------------------------------------------------------------------------------------------------------------------------------------------------------------
% Words with special symbols
%-------------------------------------------------------------------------------------------------------------------------------------------------------------------------------------------------------------------

%

%
\theoremstyle{plain}
\newtheorem{theorem}{Theorem}[section]
\newtheorem{proposition}[theorem]{Proposition}

\theoremstyle{definition}
\newtheorem{definition}[theorem]{Definition}

\theoremstyle{remark}
\newtheorem{remark}[theorem]{Remark}

\graphicspath{ {figures/} }

\setlength{\marginparwidth}{15mm}

\begin{document}

\begin{frontmatter}

\title{A Framework for Improving the Reliability of Black-box Variational Inference}
\runtitle{~Improving the Reliability of BBVI}
\runauthor{Welandawe et al.~}

\begin{aug}

\author[BU]{\fnms{Manushi}  \snm{Welandawe}\ead[label=mw]{manushiw@bu.edu }},
\author[TU]{\fnms{Michael Riis} \snm{Andersen}\ead[label=mra]{michael.riis@gmail.com}}, \\
\author[AU]{\fnms{Aki} \snm{Vehtari}\ead[label=av]{aki.vehtari@aalto.fi}},
\and
\author[BU,CDS]{\fnms{Jonathan H.} \snm{Huggins}\ead[label=jh]{huggins@bu.edu}}
\address[BU]{Department of Mathematics \& Statistics, Boston University, \printead{mw}}
\address[TU]{DTU Compute, Technical University of Denmark, \printead{mra}}
\address[AU]{Department of Computer Science, Aalto University, \printead{av}}
\address[CDS]{Faculty of Computing \& Data Sciences, Boston University, \printead{jh}}
\end{aug}

\begin{abstract}%
Black-box variational inference (BBVI) now sees widespread use in 
machine learning and statistics as a
fast yet flexible alternative to Markov chain Monte Carlo methods 
for approximate Bayesian inference. 
However, stochastic optimization methods for BBVI remain unreliable 
and require substantial expertise and hand-tuning to apply
effectively. 
In this paper, we propose %
\emph{Robust and Automated Black-box VI} (RABVI), a framework for improving the reliability of BBVI optimization.
RABVI is based on rigorously justified automation techniques,
includes just a small number of intuitive tuning parameters, and
detects inaccurate estimates of the optimal variational approximation. 
RABVI adaptively decreases the learning rate by detecting %
convergence of the fixed--learning-rate iterates, then
estimates the symmetrized Kullback--Leibler (KL) divergence between the current variational approximation and the optimal one. 
It also employs a novel optimization termination criterion that enables the user to balance desired accuracy against computational cost
by comparing  (i) the predicted relative decrease in the symmetrized KL divergence if a smaller learning were used and 
(ii) the predicted computation required to converge with the smaller learning rate.
We validate the robustness and accuracy of RABVI through 
carefully designed simulation studies and on a diverse set of real-world model and data examples.

\end{abstract}

\end{frontmatter}

\section{Introduction}

A core strength of the Bayesian approach is that it is conceptually straightforward to carry out inference in \emph{arbitrary} models,
which enables the user to employ whatever model is most appropriate for the problem at hand. 
The flexibility and uncertainty quantification provided by Bayesian inference have led to its widespread use in 
statistics \citep{Robert:2007,Gelman:2013} and machine learning \citep{Bishop:2006,Murphy:2012}, 
including in deep learning \citep[e.g.,][]{Kingma:2014,Rezende:2014,Gal:2016dropout,Maddox:2019,Saatchi:2017,Johnson:2016,Burda:2016:IWAE}.
Using Bayesian methods in practice, however, typically requires using approximate inference algorithms to estimate
quantities of interest such as posterior functionals (e.g., means, covariances, predictive distributions, and tail probabilities)
and measures of model fit (e.g., marginal likelihoods and cross-validated predictive accuracy). 
Therefore, a core challenge in modern Bayesian statistics is the development of \emph{general-purpose} (or \emph{black-box})
algorithms that can accurately approximate these quantities for whatever model the user chooses.
In machine learning, rather than using Markov chain Monte Carlo (MCMC), 
\emph{black-box variational inference} (BBVI) has become the method of choice %
because of its scalability and wide-applicability \citep{Wainwright:2008,Blei:2017,Kingma:2014,Rezende:2014,Burda:2016:IWAE}. 
BBVI is implemented in many software packages for general-purpose inference such as Stan, Pyro, PyMC3, and TensorFlow Probability, which have seen widespread adoption 
by applied data analysts, statisticians, and data scientists.

Variational inference methods aim to minimize a measure of discrepancy between a parameterized family of distributions and
the posterior distribution, with the Kullback--Leibler divergence being the canonical choice of discrepancy.
Conventional approaches to variational inference leverage conditional conjugacy and other model-specific structure to derive
optimization algorithms.  
BBVI, on the other hand, uses stochastic optimization to avoid the need for model-specific derivations,
thereby significantly broadening the applicability of variational methods. 
Ensuring the efficiency and reliability of the BBVI optimization requires careful selection of optimization method
and the stochastic estimator of the discrepancy gradient. 
For example, using adaptive optimization procedures like Adam, RMSProp, and Adagrad can ensure stability \citep{Hinton:2012,Kingma:2015:adam,Duchi:2011:adagrad}.
Due to its relatively small variance, the most common gradient estimator is the reparameterization gradient \citep{salimans:2013,Kingma:2014,Rezende:2014,Ruiz:2016,Bamler:2017,Domke:2019}. 
However, sometimes alternatives such as the score function gradient are employed \citep{Paisley:2012,Ranganath:2014}. 
No matter the choice of gradient estimator, various variance reduction strategies like control variates have been introduced to stabilize and speed up optimization \citep{Miller:2017b,Roeder:2017,Geffner:2018,Geffner:2020,Boustati:2020,Wang:2023a, Wang:2023b}. 
Recent research has furthered our understanding of convergence behavior, tackling theoretical challenges in stochastic optimization and providing new convergence guarantees \citep{Domke:2023,Kim:2023}.

While some recent progress has been made in developing tools for assessing the accuracy of variational approximations \citep{Yao:2018:VI,Huggins:2020:VI,Wang2023},
stochastic optimization methods for BBVI remain unreliable and require substantial hand-tuning of the number of iterations and optimizer tuning parameters.
Moreover, there are few tools available for determining whether the variational parameters estimated by these frameworks are close to optimal
in any meaningful sense and, if not, how to address the problem -- More iterations? A different learning rate schedule? A smaller initial or final learning rate? 
\citet{Agrawal:2020:BBVI} demonstrate the absence of a reliable and coherent optimization methodology for BBVI. 
The authors synthesize and compare recent advances such as normalizing flows and gradient estimators using 30 benchmarked models. 
Despite the fact that these models vary greatly in the complexity and dimensionality of the posteriors, 
\citet{Agrawal:2020:BBVI} run each optimization for a fixed number of iterations (30,000) and for 5 different step sizes
because the existing literature does not provide any compelling guidance for how to automate the choice of step size and 
reliably determine when the optimization has converged.

\begin{figure}[tbp]
\begin{center}
\centering
\includegraphics[width=.80\textwidth]{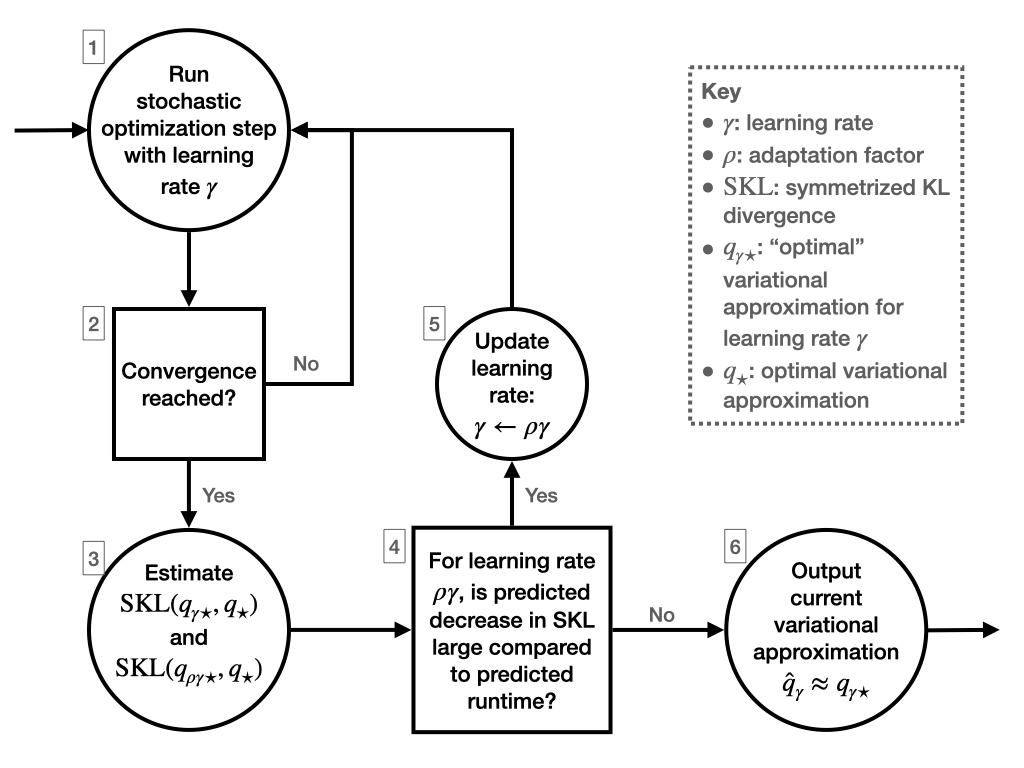}
\caption{
Schematic of the high-level logic of our proposed \emph{Robust and Automated BBVI} (RABVI) framework.} %
\label{fig:algorithm-overview}
\end{center}
\end{figure}

\begin{figure}[tbp]
\begin{center}
\begin{subfigure}[t]{.38\textwidth}
\includegraphics[width=\textwidth]{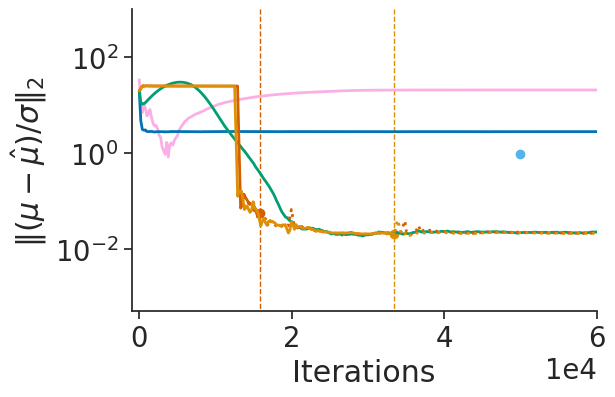}
\caption{\textit{nes2000}}
\end{subfigure}
\begin{subfigure}[t]{.60\textwidth}
\includegraphics[width=\textwidth]{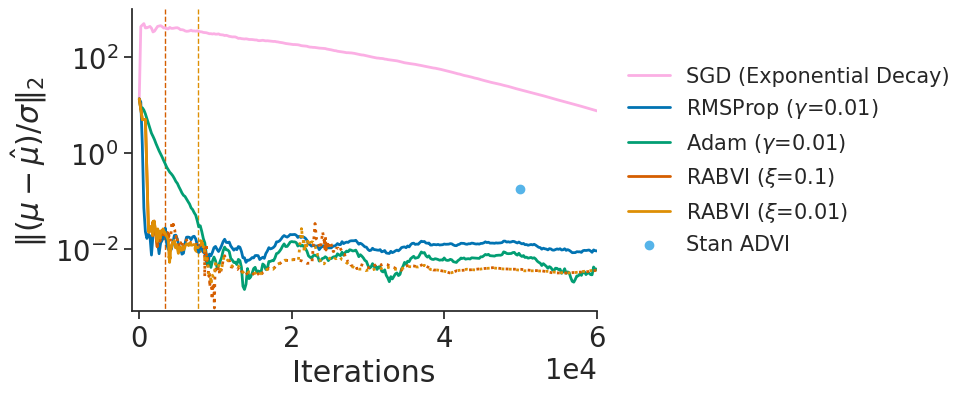}
\caption{\textit{dogs}}
\end{subfigure}
\caption{Accuracy comparison of variational inference algorithms using two datasets/models from the \texttt{posteriordb} package  (see \cref{subsec:posteriordb-datasets} for details).
Accuracy is measured in terms of relative mean error $\twonorm{(\mu - \hat\mu) / \sigma}$,
where $\mu$ and $\sigma$ are, respectively, the posterior mean and standard deviation vectors and $\hat\mu$ is the variational approximation to $\mu$.
The vertical lines indicate the termination points for RABVI, %
which uses averaged Adam (see \cref{sec:SKLD-termination-rule}).
The fixed-learning rate algorithms have a learning rate $\learningrate$ and RABVI has a user-specified accuracy threshold $\xi$.
For SGD exponential decay, we use an initial learning rate of $0.01$ for \textit{dogs} but a smaller initial learning rate of $0.001$ for 
\textit{nes2000} due to optimization instability.
}
\label{fig:example-result}
\end{center}
\end{figure}

\subsection{Contributions}

In view of the significant limitations of existing BBVI optimization methodology, in this paper we aim to provide a practical, cohesive, and theoretically well-grounded optimization framework for BBVI.
To ensure reliability and wide applicability, we develop a framework that is (1) automated, (2) intuitively adjustable by the user, and (3) robust to failure and tuning parameter selection. 
Our approach builds on a recent line of work inspired by \citet{Pflug:1990}, which uses a fixed learning rate $\learningrate$ that is adaptively decreased by 
a multiplicative factor $\rho$ once the optimization iterates, which form a homogenous Markov chain, have converged \citep{Chee:2018,Yaida:2019,Pesme:2020,Chee:2020,Zhang:2020:SALSA,Dhaka:2020}. 
A benefit of this approach is that, for a given learning rate, a dramatically more accurate estimate of the optimal variational parameter can be obtained by using \emph{iterate averaging} \citep{Ruppert:1988,Polyak:1992,Dieuleveut:2020:SGD-MC}. %
However, as we have shown in previous work, existing convergence checks can be unreliable and stop too early  \citep{Dhaka:2020}.
Since the learning rate is decreased by a constant multiplicative factor, decreasing it too early can slow down 
the optimization by an order of magnitude or more. 
Hence, it is crucial to develop methods that do not prematurely declare convergence. 
On the other hand, an optimization framework must also provide a termination criterion that triggers when it is no longer worthwhile to 
 decrease the learning rate further, either because the current variational approximation is sufficiently accurate or because further optimization would be
too time-consuming. 

The key idea that informs our solutions to these challenges is that we want $\approxdensity_{\learningrate*}$, the target variational approximation for learning rate $\learningrate$,
to be close to the optimal variational approximation $\approxdensity_{*}$. 
We measure closeness in terms of \emph{symmetrized Kullback--Leibler (KL) divergence} $\skl{\approxdensity_{*}}{\approxdensity_{\learningrate*}}$ and
show that closeness in symmetrized KL divergence can be translated into bounds on other widely used accuracy metrics like Wasserstein distance  \citep{Huggins:2020:VI,Bolley:2005}. 
\Cref{fig:algorithm-overview} summarizes our proposed framework, which we call \emph{Robust and Automated Black-box VI %
(RABVI)}.
The primary contributions of this paper are in steps 2, 3, and 4. 
In step 2, to determine convergence at a fixed step size, we build upon our approach in  \citet{Dhaka:2020}, where we establish that the scale-reduction factor $\widehat R$ \citep{Gelman:1992,Gelman:2013,Vehtari:2021:R-hat}, which is widely used to determine convergence
of Markov chain Monte Carlo algorithms, can be combined with a Monte Carlo standard error (MCSE) \citep{Geyer:1992,Vehtari:2021:R-hat} cutoff to construct a convergence criteria. %
We improve upon our previous proposal by:
\bealpha
\item adaptively finding the size of the convergence window, which may need to be large for challenging or high-dimensional distributions over the model parameters, and
\item developing a new rigorous MCSE cutoff condition that guarantees the symmetrized KL divergence between $\approxdensity_{\rho \learningrate*}$ 
and the estimate of $\approxdensity_{\rho \learningrate*}$ obtained via iterate averaging will be small. 
\eenum
In step 3, we leverage recent results that characterize the bias of the stationary distribution of SGD with a fixed learning rate \citep{Dieuleveut:2020:SGD-MC}
to estimate $\skl{\approxdensity_{*}}{\approxdensity_{\learningrate*}}$ and $\skl{\approxdensity_{*}}{\approxdensity_{\rho \learningrate*}}$ without 
access to $\approxdensity_{*}$. 
In step 4, these estimates enable the use of a termination criterion that compares (i) the predicted relative decrease in the KL divergence if the smaller learning $\rho \learningrate$ were used and
(ii) the predicted computation required to converge with the learning rate $\rho \learningrate$.
By trading off between (i) and (ii), the criterion enables the user to balance desired accuracy against computational cost. 
\Cref{fig:example-result} provides an example of the faster convergence, higher accuracy, and greater reliability achievable using RABVI compared to alternative optimization algorithms
and demonstrates how the user can trade off accuracy and computation by adjusting the accuracy threshold $\xi$.

In summary, by drawing on recent developments in theory and methods for fixed--learning-rate stochastic optimization, 
tools from MCMC methodology and results from functional analysis, RABVI delivers a number of benefits: 
\bitems
\item it relies on rigorously justified automation techniques, including automatic learning rate adaptation; 
\item it has an interpretable, user-adjustable accuracy parameter along with a small number of additional intuitive tuning parameters;
\item it detects inaccurate estimates of the optimal variational approximation; and
\item it can flexibly incorporate additional or future methodological improvements related to variational inference and stochastic optimization. 
\eitems
We demonstrate through synthetic comparisons and real-world model and data examples that %
RABVI provides high-quality black-box approximate inferences.
We make RABVI available as part of the open source Python package VIABEL.\footnote{\url{https://github.com/jhuggins/viabel}} %

\section{Preliminaries and Background} 
\label{sec:preliminaries}

\subsection{Bayesian inference}  \label{sec:bayes}
Let $\param \in \reals^{d}$ denote a parameter vector of interest, and let $x$
denote observed data. A Bayesian model consists of a prior density
$\priordensity(\dee\param)$ and a likelihood $\lik(x; \param)$.  Together, the
prior and likelihood define a joint distribution over the data and parameters.
The Bayesian posterior distribution $\postdensity$ is the conditional distribution of $\param$
given fixed data $x$, with $x$ suppressed in the notation since it is always fixed throughout
this work.
To write this conditional, we define the unnormalized posterior density
$\postdensity^{u}(\param) \defined \lik(x;\param)\priordensity(\dee\param)$ and the
marginal likelihood, or evidence, $\marginallik \defined \int \postdensity^{u}(\dee\param)$.
Then the posterior is
$
\postdensity \defined \postdensity^{u}/\marginallik. %
$
Typically, practitioners report \emph{posterior summaries}, such as point estimates and uncertainties, rather than the full posterior. 
For $\paramrv \dist \postdensity$, typical summaries include the mean $\mean{\postdensity} \defined \EE(\paramrv)$, 
the covariance $\Sigma_{\postdensity} \defined  \EE\{(\paramrv - \meanvec{\postdensity})(\paramrv - \meanvec{\postdensity})^{\top}\}$,
and $[a,b]$ interval probability $I_{\postdensity,i,a,b} \defined \Pr\left(\paramrv_i \in [a, b]\right) = \EE\{\ind(\paramrv_i \in [a, b])\}$, 
where $\ind(C)$ is equal to one when $C$ is true and zero otherwise. 

\subsection{Variational inference}
In most Bayesian models, it is infeasible to efficiently compute quantities of interest such
as posterior means, variances, and quantiles. 
Therefore, one must use an approximate inference method
that produces an approximation $\approxdensity$ to the posterior $\postdensity$.  
The summaries of $\approxdensity$ may in turn be used as
approximations to the summaries of $\postdensity$. 
One approach, \emph{variational inference}, is widely used in machine learning.
Variational inference aims to minimize some
measure of discrepancy $\discrepancy{\postdensity}{\cdot}$ over a tractable family
$\varfamily = \theset{ \approxdensity_{\varparam} : \varparam \in \reals^{\varparamdim}}$ of approximating
distributions~\citep{Wainwright:2008,Blei:2017}:
\[
\approxdensity_{\optvarparam} = \argmin_{\approxdensity_{\varparam} \in \varfamily} \discrepancy{\postdensity}{\approxdensity_{\varparam}}. \label{eq:varopt}
\]
The variational family $\varfamily$ is chosen to be tractable in the sense
that, for any $\approxdensity \in \varfamily$, we are able to efficiently calculate
relevant summaries either analytically or using independent and identically
distributed samples from $\approxdensity$.  

In variational inference, the standard choice for the discrepancy $\discrepancy{\postdensity}{\cdot}$ is the
\emph{Kullback--Leibler (KL) divergence}
$
\kl{\approxdensity}{\postdensity} \defined \int \log \left(\textder{\approxdensity}{\postdensity}\right) \dee\approxdensity
$
\citep{Bishop:2006}.
Note that the KL divergence is asymmetric in its arguments. The direction
$\discrepancy{\postdensity}{\approxdensity} = \kl{\approxdensity}{\postdensity}$ is most typical in variational inference, largely out of
convenience; the unknown marginal likelihood $\marginallik$ appears in an additive constant that does not influence
the optimization and computing gradients require estimating expectations
only with respect to $\approxdensity \in \varfamily$, which is chosen to be tractable. %
Minimizing $\kl{\approxdensity}{\postdensity}$ is equivalent to maximizing 
the \emph{evidence lower bound} \citep[ELBO;][]{Bishop:2006}: %
\[
\elbo{\approxdensity} &\defined  \int \log\left(\der{\postdensity^{u}}{\approxdensity} \right)\dee\approxdensity. %
\]
While numerous other divergences have been used in the literature \citep[e.g.,][]{HernandezLobato:2016,Li:2016,Bui:2017a,Dieng:2017,Wang:2018:f-divergence,Wan:2020vz}, we focus
on $\kl{\approxdensity}{\postdensity}$ since it is the most common choice; the default or only choice 
in widely used frameworks such as Stan, Pyro, and PyMC3; and easiest to estimate when using simple Monte Carlo sampling to approximate the gradient \citep{Dhaka:2021}.

\subsection{Black-box variational inference}
Black-box variational inference (BBVI) methods have greatly extended the applicability of variational inference %
by removing the need for model-specific derivations \citep{Cornebise:2008,Ranganath:2014,Kucukelbir:2015,Titsias:2014,Mohamed:2020}
and enabling the use of more flexible approximation families \citep{Kingma:2014,Salimans:2015,Papamakarios:2021}.
This flexibility is a result of using simple Monte Carlo (and automatic differentiation) to approximate the (gradients of the) 
expectations that define common choices of the discrepancy objective \citep{Papamakarios:2021,Mohamed:2020}. %
To estimate the optimal variational parameter $\optvarparam$, BBVI methods commonly use stochastic optimization schemes
which at iteration $\iternum$ are of the form 
\[
\sgditerate{\iternum+1} \gets \sgditerate{\iternum} - \learningrateiterate{\iternum} \descentdir{\iternum}, \label{eq:stochastic-opt}
\]
where $\descentdir{\iternum} \in \reals^{\varparamdim}$ is the \emph{descent direction} and $\learningrateiterate{\iternum} > 0$ is the \emph{learning rate} 
(also called the \emph{step size}).
Standard stochastic gradient descent corresponds to taking $\descentdir{\iternum} = \objgradest{\iternum}$, a (usually unbiased) 
stochastic estimate of the gradient $\objgrad(\sgditerate{\iternum}) \defined \grad_{\sgditerate{\iternum}}D_{\pi}(q_{\sgditerate{\iternum}})$.
We are particularly interested in adaptive stochastic optimization methods \citep[e.g.,][]{Duchi:2011:adagrad,Hinton:2012,Kingma:2015:adam} 
that use a smoothed and/or rescaled version of $\objgradest{\iternum}$ as the descent direction.
For example, RMSProp \citep{Hinton:2012} tracks an exponential moving average of the squared gradient,
$\gradsquareiterate{\iternum+1} = \decay \gradsquareiterate{\iternum} + (1-\decay) \objgradest{\iternum} \odot  \objgradest{\iternum}$,
which is used to rescale the current stochastic gradient: $\descentdir{\iternum} = \objgradest{\iternum+1}/\sqrt{\gradsquareiterate{\iternum}}$. 
Or, Adam \citep{Kingma:2015:adam} tracks an exponential moving average of the gradient  $\graditerate{\iternum+1} = \decayrate \graditerate{\iternum} + (1-\decayrate) \objgradest{\iternum}$ 
as well as the squared gradient $\gradsquareiterate{\iternum+1}$ %
and uses both to rescale the current stochastic gradient: $\descentdir{\iternum} = \graditerate{\iternum}/\sqrt{\gradsquareiterate{\iternum}}$.
The benefits of adaptive algorithms include that they tend to be more stable and are scale invariant, 
so the learning rate can be set in a problem-independent manner.

\subsection{Fixed--learning-rate stochastic optimization}
If the learning rate is fixed so that $\learningrateiterate{\iternum} = \learningrate$, then we can view 
the iterates $\sgditerate{1},  \sgditerate{2}, \dots$ produced by \cref{eq:stochastic-opt} as a homogenous Markov chain, which under certain conditions 
will have a stationary distribution $\stationarydist{\learningrate}$ \citep{Dieuleveut:2020:SGD-MC,Gitman:2019, Pflug:1990,Chee:2018,Yaida:2019}.
Stochastic optimization with a fixed learning rate exhibits two distinct phases: a transient (a.k.a.\ warm-up) phase  
during which iterates make rapid progress toward the optimum, 
followed by a stationary (a.k.a.\ mixing) phase during which iterates oscillate around the mean of the stationary distribution,
$\meansgditerate{\learningrate} \defined \int \varparam\,\stationarydist{\learningrate}(\dee\varparam)$ \citep{Gelman:1992}.%

The mean $\meansgditerate{\learningrate}$ is a natural target because even if the variance of the individual iterates $\sgditerate\iternum$ 
means they are far from $\optvarparam$, $\meansgditerate{\learningrate}$ can be a much more accurate approximation to $\optvarparam$.
For example, the following result quantifies the bias of standard fixed--learning-rate SGD
(\citet{Gitman:2019} provide similar results for momentum-based SGD algorithms):
\begin{theorem}[{\citet[Theorem 4]{Dieuleveut:2020:SGD-MC}}] \label{thm:SGD-bias}
Under regularity conditions on the objective function and the unbiased stochastic gradients,
there exist constant vectors $A, B \in \reals^{\varparamdim}$ such that\footnote{As stated in \citet{Dieuleveut:2020:SGD-MC}, the $B\learningrate^{2}$ term
is written as $O(\learningrate^{2})$. However, the fact that this term is of the form $B\learningrate^{2} + o(\learningrate^{2})$ can be 
extracted from the proof.}
\[
\meansgditerate{\learningrate} -  \optvarparam = A\learningrate + B\learningrate^{2} + o(\learningrate^{2}) \label{eq:SGD-bias}
\]
and a matrix $A' \in \reals^{\varparamdim \times \varparamdim}$ such that 
\[
 \int (\varparam - \optvarparam)(\varparam - \optvarparam)^{\top}\,\stationarydist{\learningrate}(\dee\varparam) = A'\learningrate  + O(\learningrate^{2}). 
\]
\end{theorem}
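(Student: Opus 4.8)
The plan is to exploit the fact that, for fixed \learningrate, the iterates form a homogeneous Markov chain whose (assumed) stationary distribution \stationarydist{\learningrate} is left invariant by one optimization step, and to convert this invariance into a hierarchy of moment identities that can be expanded asymptotically as $\learningrate \to 0$. Concretely, draw $\varparamrv \dist \stationarydist{\learningrate}$ and let $\varparamrv' = \varparamrv - \learningrate\,\hat\objgrad(\varparamrv)$ be one SGD step, where $\hat\objgrad(\varparamrv)$ is the unbiased stochastic gradient, $\EE[\hat\objgrad(\varparamrv)\mid\varparamrv] = \objgrad(\varparamrv)$, with mean-zero noise $\varepsilon \defined \hat\objgrad(\varparamrv) - \objgrad(\varparamrv)$. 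By stationarity $\varparamrv'$ has the same law as $\varparamrv$, so every moment of $\varparamrv'$ equals the corresponding moment of $\varparamrv$; equating first and second moments yields the two identities that drive the whole argument.

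Taking expectations in $\varparamrv' = \varparamrv - \learningrate(\objgrad(\varparamrv) + \varepsilon)$ and using $\EE[\varepsilon] = 0$ together with $\EE[\varparamrv'] = \EE[\varparamrv]$ gives the first-moment identity $\EE_{\stationarydist{\learningrate}}[\objgrad(\varparamrv)] = 0$. Writing $\delta \defined \varparamrv - \optvarparam$ and Taylor expanding $\objgrad$ about \optvarparam (where $\objgrad(\optvarparam) = 0$), with $H \defined \grad\objgrad(\optvarparam)$,
\[
0 = \EE[\objgrad(\varparamrv)] = H\,\EE[\delta] + \tfrac{1}{2}\,\EE\!\left[\grad^{2}\objgrad(\optvarparam)[\delta,\delta]\right] + \cdots,
\]
so that $\meansgditerate{\learningrate} - \optvarparam = \EE[\delta] = -\tfrac{1}{2}H^{-1}\EE[\grad^{2}\objgrad(\optvarparam)[\delta,\delta]] + \cdots$. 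Since (as shown next) $\EE[\delta\delta^{\top}] = O(\learningrate)$, the leading contribution is linear in \learningrate, giving the $A\learningrate$ term; carrying the expansion one order further — using that the third moment $\EE[\delta^{\otimes 3}]$ and the $O(\learningrate^{2})$ correction to $\EE[\delta\delta^{\top}]$ are both $O(\learningrate^{2})$ — produces the $B\learningrate^{2}$ coefficient and shows the remainder is genuinely $o(\learningrate^{2})$, which is the refinement noted in the footnote.

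For the covariance, equate second moments. Expanding $(\varparamrv' - \optvarparam)(\varparamrv' - \optvarparam)^{\top}$, invoking stationarity of $\Sigma_{\learningrate} \defined \int \delta\delta^{\top}\,\stationarydist{\learningrate}(\dee\varparam)$, using the approximation $\objgrad(\varparamrv) \approx H\delta$, the conditional mean-zero property of $\varepsilon$ (which kills the cross terms), and $\EE[\varepsilon\varepsilon^{\top}] \to \objgradnoisecov$ at \optvarparam, the first-order and noise terms balance to yield the continuous Lyapunov equation
\[
H\Sigma_{\learningrate} + \Sigma_{\learningrate}H = \learningrate\,\objgradnoisecov + O(\learningrate^{2}).
\]
Solving gives $\Sigma_{\learningrate} = \learningrate A' + O(\learningrate^{2})$, where $A'$ is the unique solution of $HA' + A'H = \objgradnoisecov$ (positive definiteness of $H$ ensures uniqueness), which is the second claim and also supplies the $O(\learningrate)$ covariance used above.

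The hard part is not the formal expansion but its justification: one must establish uniform-in-\learningrate moment bounds $\EE_{\stationarydist{\learningrate}}[\twonorm{\delta}^{p}] = O(\learningrate^{p/2})$ for every order $p$ needed, so that the Taylor remainders are controllable and the $o(\learningrate^{2})$ and $O(\learningrate^{2})$ error terms are honest rather than formal. These bounds follow from a Lyapunov (drift) argument for the chain together with the regularity hypotheses — smoothness of \objgrad, strong convexity near \optvarparam (ensuring $H$ is positive definite), and a growth condition on the noise covariance — which also underwrite the existence and geometric ergodicity of \stationarydist{\learningrate} that the statement takes as given. Making the remainder control uniform as $\learningrate \to 0$ is the principal technical burden; the full details are carried out in \citet{Dieuleveut:2020:SGD-MC}.
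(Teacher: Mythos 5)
This theorem is not proved in the paper at all---it is imported verbatim from \citet[Theorem 4]{Dieuleveut:2020:SGD-MC} with only a footnote noting that the $B\learningrate^{2}$ refinement can be extracted from that proof---and your sketch reconstructs essentially the argument of that source: invariance of $\stationarydist{\learningrate}$ under one SGD step gives the first- and second-moment identities, Taylor expansion of $\objgrad$ about $\optvarparam$ together with the Lyapunov equation $HA' + A'H = \objgradnoisecov$ yields both expansions, and uniform-in-$\learningrate$ stationary moment bounds $\EE\twonorm{\delta}^{p} = O(\learningrate^{p/2})$ from a drift condition control the remainders. The one soft spot is your parenthetical claim that $\EE[\delta^{\otimes 3}] = O(\learningrate^{2})$: the generic moment bound only gives $O(\learningrate^{3/2})$, and the improvement to $O(\learningrate^{2})$ (which is exactly what is needed to isolate the $B\learningrate^{2}$ coefficient) rests on the approximate symmetry of $\stationarydist{\learningrate}$ at scale $\learningrate^{1/2}$ established in the cited proof, so deferring it is legitimate but it is the crux of the footnote's claim rather than a routine estimate.
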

\begin{remark}
The regularity conditions required by \cref{thm:SGD-bias} are mostly mild.
For example, the stochastic gradients must be unbiased and
have finite variance that does not grow too quickly away from the optimum. 
However, it does require the stronger assumptions that the objective function is smooth and strongly convex.
While these conditions do not hold globally, we do not view it be a significant problem in practice because near the
optimum we expect the objective function to be locally smooth and strongly convex.
\end{remark}

\cref{thm:SGD-bias} shows that, at stationarity, a single iterate will satisfy $\sgditerate{k} -  \optvarparam = O(\gamma^{1/2})$ (with high probability) while 
its expectation will satisfy $\meansgditerate{\learningrate} -  \optvarparam = O(\gamma)$.
Therefore, when the learning rate is small, $\meansgditerate{\learningrate}$ is a substantially better estimator for $\optvarparam$ than $\sgditerate{k}$. 
In practice the \emph{iterate average} (i.e., sample mean)
\[
\iterateaverage{\learningrate} \defined \frac{1}{\iternumaverage}\sum_{\iternum=0}^{\iternumaverage-1}\sgditerate{\iternumconverged+\iternum} 
\label{eq:iterate-average}
\]
provides an estimate of $\meansgditerate{\learningrate}$, where $\iternumconverged$ is the iteration at which the optimization has reached the stationary phase
and $\iternumaverage$ is the number of iterations used to compute the average. 
Using $\iterateaverage{\learningrate}$ as an estimate of $\optvarparam$ is known as Polyak--Ruppert averaging \citep{Polyak:1992,Ruppert:1988,Bach:2013}. 

When using iterate averaging, it is crucial to ensure the iterate average accurately approximates $\meansgditerate{\learningrate}$. 
Considering a stationary Markov chain, we can compute the Monte Carlo estimate of the mean of the Markov Chain at stationarity.  
Then the notion of \emph{effective sample size} (ESS) aids in quantifying the accuracy of this Monte Carlo estimate. %
Further, the effective sample size can also be used to define the  \emph{Monte Carlo standard error} (MCSE) when the Markov chain satisfies
a central limit theorem. We can efficiently estimate the ESS and also approximate the MCSE (see \cref{sec:ESS-MCSE} for details). We denote the estimates of ESS and MCSE for the $i$th component using iterates $\sgditerate{\iternumconverged:\iternum}_{i}$  as $\widehat{\mathrm{ESS}}(\sgditerate{\iternumconverged:\iternum}_{i})$  and  $\widehat{\mathrm{MCSE}}(\sgditerate{\iternumconverged:\iternum}_{i})$ respectively. 
\citet{Dhaka:2020} use the conditions $\varparamdim^{-1}\sum_{i=1}^{\varparamdim}\widehat{\mathrm{MCSE}}(\sgditerate{\iternumconverged:\iternum}_{i}) < 0.02$
and $\widehat{\mathrm{ESS}}(\sgditerate{\iternumconverged:\iternum}_{i}) > 20$ to determine when to stop iterate averaging. 
However, no rigorous justification is given for the MCSE threshold. 

\subsection{Automatically scheduling learning rate decreases}
A benefit of using a fixed learning rate is that it can be adaptively and automatically decreased 
once the iterates reach stationarity.
For example, if the initial learning rate is  $\learningrate_{0}$, after reaching stationarity 
the learning rate can decrease to $\learningrate_{1} \defined \rho\learningrate_{0}$, where $\rho \in (0,1)$
is a user-specified adaptation factor.
The process can be repeated: when stationarity is reached at learning rate $\learningrate_{\epochnum}$,
the learning rate can decrease to $\learningrate_{\epochnum+1} \defined \rho\learningrate_{\epochnum}$.
In this way the learning rate is not decreased too early (when the iterates are still making fast progress toward the optimum) or too late
(when the accuracy of the iterates is no longer improving). 
Compare this adaptive approach to the canonical one of setting a schedule such as $\learningrateiterate{\iternum} = \triangle /(\bigcirc + \iternum)^{\square}$,
which requires the choice of three tuning parameters.
These tuning parameters can have a dramatic effect on the speed of convergence, particularly when $\sgditerate{0}$ is far from $\optvarparam$.

The question of how to determine when the stationary phase has been reached has a long history with recent renewed attention
\citep{Pesme:2020,Zhang:2020:SALSA,Chee:2020,Lang:2019,Yaida:2019,Pflug:1990,Chee:2018,Dhaka:2020}.
One line of work \citep{Yaida:2019,Lang:2019,Zhang:2020:SALSA} is based on finding an invariant function that has expectation 
zero under the stationary distribution of the iterates, then using a test for whether the empirical mean of the invariant function is sufficiently close to zero.
An alternative approach developed in \citet{Dhaka:2020} makes use of the potential scale reduction factor $\widehat{R}$,
perhaps the most widely used MCMC diagnostic for detecting stationarity \citep{Gelman:1992,Gelman:2013,Vehtari:2021:R-hat}. 
The standard approach to computing $\widehat{R}$ is to use multiple Markov chains.
If we have $J \ge 2$ chains and $K \gg 1$ iterates in each chain, %
then $\widehat{R} \defined  (\hat{\mathbb{V}}/\hat{\mathbb{W}})^{1/2}$,
where $\hat{\mathbb{V}}$ and  $\hat{\mathbb{W}}$ are estimates of, respectively, 
the between-chain and within-chain variances.
In the split-$\widehat{R}$ version, each chain is split into two before carrying out the computation above, which helps with detecting non-stationarity \citep{Gelman:2013,Vehtari:2021:R-hat} 
and allows for use even when $J = 1$. 
Let $\widehat{R}_{i}(W)$ denote the split-$\widehat{R}$ value computed from $\sgditerate{\iternum-W+1}_{i},\dots,\sgditerate{\iternum}_{i}$,
the $i$th dimension of the last $W$ iterates.
\citet{Dhaka:2020} uses the stationarity condition $\max_{i}\widehat{R}_{i}(100) < 1.1$,

\section{Methodological Criteria} \label{sec:problem-setup}
We now summarize our criteria when designing a robust and automatic optimization framework for BBVI.

\paragraph*{Robustness.}
 A robust method should not be too sensitive to the choice of tuning parameters. 
It should also work well on a wide range of ``typical'' problems. 
To achieve this we design an adaptive methods for setting parameters (such as the window size for detecting convergence) that 
are problem-dependent. 
\paragraph{Automation.}
 An automatic method should require minimal input from the user.
Any inputs that are required should be clearly necessary (e.g., the model
and the data) or be intuitive to an applied user who is not an expert 
in variational inference and optimization.
Therefore, we require the parameters of any adaptation scheme to either be intuitive 
or not require adjustment by the user. 
Examples of intuitive parameters include the maximum number of iterations,
maximum runtime, and, when defined appropriately, accuracy. \\

We ensure these criteria are satisfied when designing the two core components of a BBVI stochastic optimization framework
with automated learning rate scheduling:
\benum
\item A \textbf{termination rule} for stopping the optimization once the final approximation is close to
	the optimal approximation (\cref{sec:termination-rule}).
\item A \textbf{learning rate scheduler}, which must detect stationarity and determine how many iterates to average 
	before decreasing the learning rate (\cref{sec:fixed-learning-rate}). 
\eenum

\section{Termination Rule} \label{sec:termination-rule}

The development of our termination rule will proceed in three steps.
First, we will select an appropriate discrepancy measure between distributions.
Next, we will design an idealized termination rule based on this discrepancy measure.
Finally, we will develop an implementable version of the idealized termination rule
that satisfies the criteria from \cref{sec:problem-setup}.

\subsection{Choice of Accuracy Measure} \label{sec:discrepancy-measures}

To develop a termination rule, we must specify a measure of how close a variational approximation returned by the optimization algorithm, $\hat\approxdensity_{*}$,
is to the optimal variational approximation $\approxdensity_{*} \defined \approxdensity_{\optvarparam}$.
But the answer to this question depends upon choosing an appropriate measure of the discrepancy between
$\approxdensity_{*}$ and the posterior $\postdensity$. 
Based on the discussion in \cref{sec:bayes}, the goal should be for quantities such as $\mean{\approxdensity_{*}}$, $\Sigma_{\approxdensity_{*}}$,
and $I_{\approxdensity_{*},i,a,b}$ 
to be close to, respectively, $\mean{\postdensity}$, $\Sigma_{\postdensity}$, and $I_{\postdensity,i,a,b}$. 
The interval probabilities are already on an interpretable scale, so ensuring that 
$|I_{\approxdensity,i,a,b} - I_{\postdensity,i,a,b}|$ is much less than 1 is an intuitive notion of accuracy. 
Since $\Sigma_{\postdensity}^{1/2}$ establishes the relevant scale of the problem for means and standard deviations, so
it is appropriate to ensure that $\staticnorm{\Sigma_{\postdensity}^{-1/2}(\mean{\postdensity} - \mean{\approxdensity_{*}})}_{2}$
and $\staticnorm{\Sigma_{\postdensity}^{-1}(\Sigma_{\postdensity} - \Sigma_{\approxdensity_{*}})}_{2}
= \staticnorm{I - \Sigma_{\postdensity}^{-1}\Sigma_{\approxdensity_{*}}}_{2}$ 
are much less than 1.

While we want to choose a discrepancy measure that 
guarantees the accuracy of mean, covariance, and interval probabilities,
ideally it would also guarantee other plausible expectations of interest (e.g., predictive densities) are accurately approximated. 
The \emph{Wasserstein distance} provides one convenient metric for accomplishing this goal,
and is widely used in the analysis of MCMC algorithms and in large-scale data 
asymptotics~\citep[e.g.,][]{Joulin:2010,Madras:2010,Rudolf:2015,Durmus:2019,Durmus:2019b,Vollmer:2016,Eberle:2018}. 
For $p \ge 1$ and a positive-definite matrix $\Sigma \in \reals^{\paramdim \times \paramdim}$,
we define the \emph{$(p, \Sigma)$-Wasserstein distance} between distributions $\eta$ and $\zeta$ as
\[
\pwassSimple{p,\Sigma}{\eta}{\zeta} 
\defined \inf_{\coupling} \left\{ \int \staticnorm{\Sigma^{-1/2}(\param - \param')}_{2}^{p} \coupling(\dee \param, \dee \param') \right\}^{1/p},
\]
where the infimum is over the set of \emph{couplings} between $\eta$ and $\zeta$; that is,
Borel measures $\coupling$ on $\reals^{d} \times \reals^{d}$ such that $\eta = \coupling(\cdot, \reals^{d})$ and
$\zeta = \coupling(\reals^{d}, \cdot)$~\citep[Defs.~6.1 \& 1.1]{Villani:2009}. 
Small $(p, \Sigma)$-Wasserstein distance implies many functionals of the two distributions are close
relative to the scale determined by $\Sigma^{1/2}$.

Specifically, we have the following result, which is an immediate corollary of \citet[Theorem 3.4]{Huggins:2020:VI}. 
\begin{proposition} \label[proposition]{prop:Wasserstein-moment-bounds}
If $\pwassSimple{p,\Sigma}{\eta}{\zeta} \le \veps$ for any $p \ge 1$, then 
\[
\staticnorm{\Sigma^{-1/2}(\mean{\eta} - \mean{\zeta})}_{2} \le \veps 
\]
If $\pwassSimple{2,\Sigma}{\eta}{\zeta} \le \veps$, then, for $\varrho \defined \min\{\twonorm{\Sigma^{-1}\Sigma_{\eta}}^{1/2},\twonorm{\Sigma^{-1}\Sigma_{\zeta}}^{1/2}\}$,
\[
\staticnorm{\Sigma^{-1}(\Sigma_{\eta} - \Sigma_{\zeta})}_{2} < 2\veps(\varrho + \veps). %
\]
\end{proposition}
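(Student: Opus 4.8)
The plan is to prove both inequalities by analyzing a single (near-)optimal coupling $\coupling \in \couplings{\eta}{\zeta}$ and reducing every matrix or vector estimate to scalar second moments via Cauchy--Schwarz. Throughout I would work in the whitened coordinates $\tilde\param \defined \Sigma^{-1/2}\param$, so that the transport cost appearing in $\pwassSimple{p,\Sigma}{\eta}{\zeta}$ becomes the ordinary Euclidean cost $\twonorm{\tilde\param - \tilde\param'}$ and the relevant covariance difference becomes the symmetric matrix $\Sigma^{-1/2}(\Sigma_{\eta} - \Sigma_{\zeta})\Sigma^{-1/2}$. I would also record at the outset that one should either invoke existence of an optimal coupling (standard for these costs on $\reals^{d}$) or argue along an $\epsilon$-minimizing sequence; since the left-hand sides below are coupling-independent, it suffices to work with an arbitrary $\coupling$ and take an infimum on the right.

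For the mean bound I would fix any $\coupling$ and write $\Sigma^{-1/2}(\mean{\eta} - \mean{\zeta}) = \EE_{\coupling}[\Sigma^{-1/2}(\param - \param')]$. Jensen's inequality (convexity of the norm) gives $\twonorm{\Sigma^{-1/2}(\mean{\eta} - \mean{\zeta})} \le \EE_{\coupling}\twonorm{\Sigma^{-1/2}(\param - \param')}$, and a second application (monotonicity of $L^{p}$ norms for $p \ge 1$) bounds this by $(\EE_{\coupling}\twonorm{\Sigma^{-1/2}(\param - \param')}^{p})^{1/p}$. Taking the infimum over couplings on the right yields $\pwassSimple{p,\Sigma}{\eta}{\zeta} \le \veps$, which is the first claim.

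For the covariance bound, set $u \defined \Sigma^{-1/2}(\param - \mean{\eta})$ and $v \defined \Sigma^{-1/2}(\param' - \mean{\zeta})$ under $\coupling$, so that $\EE_{\coupling}[uu^{\top}] = \Sigma^{-1/2}\Sigma_{\eta}\Sigma^{-1/2}$, $\EE_{\coupling}[vv^{\top}] = \Sigma^{-1/2}\Sigma_{\zeta}\Sigma^{-1/2}$, and $w \defined u - v$ is exactly $\Sigma^{-1/2}(\param - \param')$ with its mean subtracted. The key algebraic step is the identity $uu^{\top} - vv^{\top} = uw^{\top} + wu^{\top} - ww^{\top}$, giving $\Sigma^{-1/2}(\Sigma_{\eta} - \Sigma_{\zeta})\Sigma^{-1/2} = \EE_{\coupling}[uw^{\top} + wu^{\top} - ww^{\top}]$. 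I would then bound the operator norm term by term, using that the operator norm of $\EE_{\coupling}[xy^{\top}]$ equals the supremum of $\EE_{\coupling}[(a^{\top}x)(b^{\top}y)]$ over unit vectors $a,b$ together with Cauchy--Schwarz in $L^{2}(\coupling)$: this gives $\twonorm{\EE_{\coupling}[uw^{\top}]} \le \twonorm{\EE_{\coupling}[uu^{\top}]}^{1/2}\twonorm{\EE_{\coupling}[ww^{\top}]}^{1/2}$ and $\twonorm{\EE_{\coupling}[ww^{\top}]} \le \EE_{\coupling}\twonorm{w}^{2}$. Since $w$ is centered, $\EE_{\coupling}\twonorm{w}^{2} \le \EE_{\coupling}\twonorm{\Sigma^{-1/2}(\param - \param')}^{2}$, which for a near-optimal $(2,\Sigma)$-coupling is at most $\pwassSimple{2,\Sigma}{\eta}{\zeta}^{2} \le \veps^{2}$. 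Writing $\rho_{\eta} \defined \twonorm{\Sigma^{-1/2}\Sigma_{\eta}\Sigma^{-1/2}}^{1/2} \le \twonorm{\Sigma^{-1}\Sigma_{\eta}}^{1/2}$, the three terms combine to $2\rho_{\eta}\veps + \veps^{2}$; centering around $\zeta$ instead of $\eta$ gives the same bound with $\rho_{\zeta}$, and taking the smaller yields $2\varrho\veps + \veps^{2} < 2\veps(\varrho + \veps)$.

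I expect the genuine obstacle to be reconciling what the coupling argument naturally controls --- the operator norm of the \emph{symmetric} whitened difference $\Sigma^{-1/2}(\Sigma_{\eta} - \Sigma_{\zeta})\Sigma^{-1/2}$ --- with the quantity $\staticnorm{\Sigma^{-1}(\Sigma_{\eta} - \Sigma_{\zeta})}_{2}$ written in the statement. These two matrices are similar, hence share eigenvalues, but need not share operator norms, and a naive non-whitened Cauchy--Schwarz introduces spurious conditioning factors of $\Sigma^{-1}$. Resolving this cleanly is precisely why it is most efficient to obtain the result as a specialization of \citet[Theorem 3.4]{Huggins:2020:VI}: that general moment bound is stated directly in the $\Sigma$-whitened geometry, so instantiating it with the $(p,\Sigma)$-Wasserstein cost and reading off the first- and second-moment terms delivers both displayed inequalities immediately, with the coupling computation above serving as the mechanism behind the cited theorem.
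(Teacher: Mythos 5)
Your proof is correct and, modulo the final citation, more self-contained than the paper's: the paper offers no argument at all beyond the remark that the proposition is an immediate corollary of \citet[Theorem 3.4]{Huggins:2020:VI}, whereas you reconstruct the coupling mechanism behind that theorem. Your mean bound (Jensen plus monotonicity of $L^{p}$ norms, then an infimum over couplings) is exactly right, and your covariance computation --- the identity $uu^{\top}-vv^{\top}=uw^{\top}+wu^{\top}-ww^{\top}$, the Cauchy--Schwarz bound $\twonorm{\EE_{\coupling}[uw^{\top}]}\le\twonorm{\EE_{\coupling}[uu^{\top}]}^{1/2}\twonorm{\EE_{\coupling}[ww^{\top}]}^{1/2}$, the drop to $\EE_{\coupling}\twonorm{w}^{2}\le\veps^{2}$ using that $w$ is centered, and the symmetrization over which measure you center --- is a complete proof of the whitened bound $\staticnorm{\Sigma^{-1/2}(\Sigma_{\eta}-\Sigma_{\zeta})\Sigma^{-1/2}}_{2}\le 2\varrho\veps+\veps^{2}$, which is strictly below $2\veps(\varrho+\veps)$ whenever $\veps>0$. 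Your remark that the whitened second-moment norms are dominated by $\twonorm{\Sigma^{-1}\Sigma_{\eta}}^{1/2}$ and $\twonorm{\Sigma^{-1}\Sigma_{\zeta}}^{1/2}$ (spectral radius at most operator norm) even makes your constant slightly sharper than the one stated.

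The one gap you flag --- passing from the symmetric whitened difference to $\staticnorm{\Sigma^{-1}(\Sigma_{\eta}-\Sigma_{\zeta})}_{2}$ as literally written --- is real, but it is a defect of the statement rather than of your argument. The matrices $\Sigma^{-1}(\Sigma_{\eta}-\Sigma_{\zeta})$ and $\Sigma^{-1/2}(\Sigma_{\eta}-\Sigma_{\zeta})\Sigma^{-1/2}$ are similar, hence share eigenvalues, but the operator norm of the former can exceed that of the latter by up to a factor of $\kappa(\Sigma)^{1/2}$. The change-of-variables specialization of \citet[Theorem 3.4]{Huggins:2020:VI} --- which is all the paper actually invokes, and which is precisely what your coupling computation proves --- controls the whitened norm, equivalently the spectral radius of $\Sigma^{-1}(\Sigma_{\eta}-\Sigma_{\zeta})$. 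Read that way, your argument establishes the proposition without needing the citation at all; read literally, neither your argument nor the paper's one-line proof delivers the displayed norm, so your decision to defer to the cited theorem at exactly that point matches what the paper itself does.
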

More generally, small $(p, \Sigma)$-Wasserstein distance for any $p \ge 1$ guarantees
the accuracy of expectations for any function $f$ with small Lipschitz constant
with respect the metric $d_{\Sigma}(\param, \param') \defined \staticnorm{\Sigma^{-1/2}(\param - \param')}_{2}$
-- that is, when $\sup_{\param \ne \param'}|f(\param) - f(\param')|/d_{\Sigma}(\param, \param')$ is small. 

While the Wasserstein distance controls the error in mean and covariance estimates,
it does not provide strong control on interval probability estimates.
The KL divergence, however, does, since for distributions $\eta$ and $\zeta$,
$|I_{\eta,i,a,b} - I_{\zeta,i,a,b}| \le \sqrt{\kl{\eta}{\zeta}/2}$ for all $a < b$ and $i$
(see \cref{sec:Pinskers-inequality}). 
As we show next, in many scenarios we can bound the Wasserstein distance by the KL divergence %
and therefore enjoy the benefits of both. 
Our result is based on the following definition, which makes the notion of the scale of a distribution precise:
\begin{definition} \label[definition]{def:exponentially-controlled}
For $p \ge 1$ and a positive-definite matrix $\Sigma \in \reals^{\paramdim \times \paramdim}$,
the distribution $\eta$ is said to be \emph{$(p,\Sigma)$-exponentially controlled} if 
\[
\inf_{\param'}  \log \int e^{\staticnorm{\Sigma^{-1/2}(\param - \param')}_{2}^p}\eta(\dee \param) \le \paramdim/2.  \label{eq:exp-controlled}
\]
\end{definition}
Specially, $\Sigma^{1/2}$ establishes the appropriate scale for uncertainty with respect to $\eta$.
For example, if $\eta = \distNorm(m, V)$, then
it is a straightforward exercise to confirm that $\eta$ is $(2, 1.78^{2}V)$-exponentially controlled. 

The following result establishes the relevant link between the KL divergence and the Wasserstein distance
via \cref{def:exponentially-controlled}.
\begin{proposition} \label[proposition]{prop:Wasserstein-control}
If $\eta$ is $(p, \Sigma)$-exponentially controlled,
then for all $\zeta$ absolutely continuous with respect to $\eta$, 
\[
\pwassSimple{p,\Sigma}{\zeta}{\eta} \le (3+d)\mcK_{p}(\zeta \mid \eta),
\]
where $\mcK_{p}(\zeta \mid \eta) \defined \kl{\zeta}{\eta}^{\frac{1}{p}} + \{\kl{\zeta}{\eta}/2\}^{\frac{1}{2p}}$. 
\end{proposition}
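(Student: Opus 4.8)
The plan is to recognize \cref{prop:Wasserstein-control} as a direct instance of the weighted transportation-cost inequality of \citet{Bolley:2005}, specialized to the weighted metric $d_{\Sigma}(\param,\param') \defined \staticnorm{\Sigma^{-1/2}(\param - \param')}_{2}$. First I would observe that $\pwassSimple{p,\Sigma}{\eta}{\zeta}$ is exactly the $p$-Wasserstein distance associated with $d_{\Sigma}$, since the cost $\staticnorm{\Sigma^{-1/2}(\param - \param')}_{2}^{p} = d_{\Sigma}(\param,\param')^{p}$ appearing in its definition is the $p$-th power of $d_{\Sigma}$. (Equivalently one can push both measures forward under $\param \mapsto \Sigma^{-1/2}\param$ to reduce to the ordinary Euclidean $W_p$, but working directly with $d_{\Sigma}$ as the underlying metric is cleaner, and the Bolley--Villani result is already stated for a general measurable metric space.)

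Next I would invoke the Bolley--Villani inequality in its general form: for any probability measure $\mu$ and any $\nu$ with $\nu \abscont \mu$,
\[
\pwassSimple{p,\Sigma}{\mu}{\nu} \le 2\inf_{\param',\, \alpha > 0}\left(\frac{1}{\alpha}\left(\frac{3}{2} + \log\int e^{\alpha d_{\Sigma}(\param,\param')^{p}}\,\mu(\dee\param)\right)\right)^{1/p}\left[\kl{\nu}{\mu}^{1/p} + \left(\frac{\kl{\nu}{\mu}}{2}\right)^{1/(2p)}\right].
\]
I would apply this with $\mu = \eta$ and $\nu = \zeta$; the KL term $\kl{\zeta}{\eta}$ that appears is finite precisely because $\zeta \abscont \eta$, matching the hypothesis, and the bracketed factor is exactly $\mcK_{p}(\eta \mid \zeta)$.

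The remaining work is to control the prefactor using the exponential-control hypothesis. Rather than optimizing over $\alpha$, I would take $\alpha = 1$ and choose $\param'$ to be a (near-)minimizer of the infimum in \cref{def:exponentially-controlled}, which bounds $\log\int e^{d_{\Sigma}(\param,\param')^{p}}\,\eta(\dee\param) \le \paramdim/2$. Since any specific choice of $(\alpha,\param')$ only weakens the infimum, this preserves the upper bound and yields prefactor $2\left(\tfrac{3}{2} + \tfrac{\paramdim}{2}\right)^{1/p}$. I would then simplify: writing $x \defined (3+\paramdim)/2 \ge 1$, for any $x \ge 1$ and $p \ge 1$ we have $x^{1/p} \le x$, so
\[
2\left(\frac{3}{2} + \frac{\paramdim}{2}\right)^{1/p} = 2\left(\frac{3+\paramdim}{2}\right)^{1/p} \le 2\cdot\frac{3+\paramdim}{2} = 3 + \paramdim,
\]
which gives the stated constant.

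The only real subtlety — and what I expect to be the main point to get right — is the bookkeeping on the constant and the direction of the KL divergence: one must check that the divergence appearing in the Bolley--Villani bound is $\kl{\zeta}{\eta}$ (reference measure $\eta$, as forced by $\zeta \abscont \eta$) rather than its reverse, and that specializing to $\alpha = 1$ together with the $\paramdim/2$ moment bound reproduces the clean factor $3+\paramdim$. Everything else is a mechanical substitution into a known inequality.
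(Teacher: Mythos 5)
Your proposal is correct and takes essentially the same approach as the paper: the paper's proof likewise consists of a direct application of \citet[Corollary 2.3]{Bolley:2005} together with the bound in \cref{eq:exp-controlled}, merely phrasing the reduction via the change of variables $\param \mapsto \Sigma^{-1/2}\param$ and KL invariance under diffeomorphisms rather than working in the metric $d_{\Sigma}$ as you do. Your explicit constant bookkeeping (taking $\alpha = 1$, bounding the log-moment by $\paramdim/2$, and using $x^{1/p} \le x$ for $x = (3+\paramdim)/2 \ge 1$) correctly fills in the computation the paper leaves implicit.
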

\begin{proof}
The result follows from \citet[Corollary 2.3]{Bolley:2005} after the change-of-variable $\param \mapsto \Sigma^{-1/2}\param$
and using the fact that the KL divergence is invariant under diffeomorphisms, then applying \cref{eq:exp-controlled}.
\end{proof}
If $\zeta$ and $\eta$ could operate over different scales, 
then we can use the \emph{symmetrized KL divergence} $\skl{\zeta}{\eta} \defined \kl{\zeta}{\eta} + \kl{\eta}{\zeta}$.
Indeed, it follows from \cref{prop:Wasserstein-control} that if $\skl{\zeta}{\eta}$ is small, then
the $(p,\Sigma)$-Wasserstein distance is small whenever \emph{either} $\eta$ or $\zeta$ is $(p, \Sigma)$-exponentially controlled.

\subsection{An Idealized Termination Rule} \label{sec:idealized-termination-rule}

Based on our developments in \cref{sec:discrepancy-measures}, we will define our termination rule in terms of the symmetrized KL divergence.
Recall that $\approxdensity_{*}$ denotes the optimal variational approximation to $\postdensity$ and $\hat\approxdensity_{*}$
denotes an estimate of  $\approxdensity_{*}$. 
Since the total variation and $(1, \Sigma)$-Wasserstein distances are controlled by the square root of the KL divergence,
we focused on the square root of the symmetrized KL divergence.
For the current learning rate $\learningrate > 0$, let $\approxdensity_{\learningrate*} \defined \approxdensity_{\meansgditerate{\learningrate}}$ denote the
target $\learningrate$--learning-rate variational approximation. 
The termination rule we propose is based on the trade-off between the improved accuracy of the approximation if the learning rate were reduced to $\rho\learningrate$
and the time required to reach that improved accuracy. 
To quantify the improved accuracy, we introduce a user-chosen target accuracy target $\xi$ for $\skl{\approxdensity_{*}}{\hat\approxdensity_{*}}^{1/2}$.
If the user expects $\kl{\postdensity}{\approxdensity_{*}}$ to be large, then setting $\xi$ to a moderate value such as 1 or 10 could give acceptable performance. 
If the user expects $\kl{\postdensity}{\approxdensity_{*}}$ to be small, then setting $\xi$ to a value such as 0.1 or 0.01 might be more appropriate.
Using $\xi$, define the \emph{relative SKL improvement}
\[
\mathrm{RSKL} \defined 
\frac{\skl{\approxdensity_{*}}{\approxdensity_{\rho \learningrate*}}^{1/2} + \xi}{\skl{\approxdensity_{*}}{\approxdensity_{\learningrate*}}^{1/2}},
\]
where the first term measures the relative improvement of the approximation if the learning rate were reduced and 
the second term measures the current accuracy relative to the desired accuracy. 
To quantify the time to obtain the relative accuracy improvement, we use the number of iterations to reach convergence for the fixed learning rate $\rho\gamma$.
Letting $\iterate_{\learningrate*}$ denote the number of iterations required to reach the target $\learningrate$--learning-rate variational approximation, 
we define the \emph{relative iteration increase}
\[
\mathrm{RI} \defined \frac{\iterate_{\rho \learningrate*}}{\iterate_{\learningrate*} + \iterate_0},
\]
where $\iterate_0$ denotes the number of iterations the user would consider ``small''. 
Combining $\mathrm{RSKL}$ and $\mathrm{RI}$, we obtain the \emph{inefficiency index} $\ineff = \mathrm{RSKL} \times \mathrm{RI}$,
the relative improvement in accuracy times the relative increase in runtime. 
Thus, we can interpret $\ineff$ as quantifying how much greater the increase in runtime cost (above a baseline of $\iterate_{0}$ iterations) 
will be compared to the reduction in error (down to a target error of $\xi$).
For example, $\ineff = 2$ means the increase in runtime cost is twice as large as the reduction in error.
Our \emph{idealized SKL inefficiency termination rule} triggers when $\ineff > \tau$, where
$\tau$ is a user-specified inefficiency threshold that allows the user to trade off accuracy with computation,
but only up to the point where $\skl{\approxdensity_{*}}{\approxdensity_{\learningrate*}}^{1/2} \approx \xi$.

\subsection{An Implementable Termination Rule} \label{sec:SKLD-termination-rule}

The idealized SKL inefficiency termination rule cannot be directly implemented since
$\approxdensity_{*}$ is unknown -- and if it were known, it would be unnecessary to run the optimization algorithm. 
However, we will show that it is possible to obtain a good estimate of the symmetrized KL divergence 
between the approximation obtained with a given learning rate $\learningrate'$ and the optimal approximation without access to $\approxdensity_{*}$. 
Recall that  $\approxdensity_{\learningrate*}$ denotes the
target $\learningrate$--learning-rate variational approximation.
With a slight abuse of notation, we let the optimal zero--learning-rate approximation refer to the optimal approximation: $\approxdensity_{0*} \defined \lim_{\learningrate \to 0} \approxdensity_{\learningrate *} =  \approxdensity_{*}$. 
Our approach is motivated by \cref{thm:SGD-bias} and in particular the form of the bias $\meansgditerate{\learningrate} - \optvarparam$
in \cref{eq:SGD-bias}.
We first consider the still-common setting when $\varfamily$ is the family of mean-field Gaussian distributions,
where the parameter $\varparam = (\tau, \psi) \in \reals^{2\paramdim}$ corresponds to the distribution $\approxdensity_{\varparam} = \distNorm(\tau, \diag e^{2\psi})$. 

\begin{proposition} \label[proposition]{prop:SKL-small-learning-rate}
Let $\varfamily$ be the family of mean-field Gaussian distributions.
If \cref{eq:SGD-bias} holds and $\learningrate' = O(\learningrate)$, then there is a constant $C > 0$ depending 
only on $A$ and $\optvarparam$ such that 
\[
\skl{\approxdensity_{\learningrate*}}{\approxdensity_{\learningrate'*}}
= C (\learningrate - \learningrate')^{2} + o(\learningrate^{2}). 
\]
\end{proposition}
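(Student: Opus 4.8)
The plan is to exploit the fact that both $\approxdensity_{\learningrate*} = \distNorm(\tau_\learningrate, \diag e^{2\psi_\learningrate})$ and $\approxdensity_{\learningrate'*} = \distNorm(\tau_{\learningrate'}, \diag e^{2\psi_{\learningrate'}})$ are mean-field Gaussians, for which the symmetrized KL divergence has an explicit closed form, and then to expand this form in $\learningrate$ using \cref{eq:SGD-bias}. Writing $\meansgditerate{\learningrate} = (\tau_\learningrate, \psi_\learningrate)$ and $\optvarparam = (\tau_*, \psi_*)$, and splitting $A = (A_\tau, A_\psi)$ and $B = (B_\tau, B_\psi)$ into their mean and log-scale blocks (each in $\reals^{\paramdim}$), \cref{eq:SGD-bias} yields the coordinatewise expansions $\tau_{\learningrate,i} = \tau_{*,i} + A_{\tau,i}\learningrate + B_{\tau,i}\learningrate^2 + o(\learningrate^2)$ and $\psi_{\learningrate,i} = \psi_{*,i} + A_{\psi,i}\learningrate + B_{\psi,i}\learningrate^2 + o(\learningrate^2)$, and similarly with $\learningrate$ replaced by $\learningrate'$; here the hypothesis $\learningrate' = O(\learningrate)$ ensures $o((\learningrate')^2) = o(\learningrate^2)$.

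First I would record the closed form. Setting $\delta_{\psi,i} \defined \psi_{\learningrate,i} - \psi_{\learningrate',i}$ and $\delta_{\tau,i} \defined \tau_{\learningrate,i} - \tau_{\learningrate',i}$, a direct computation for diagonal Gaussians shows that the log-determinant terms cancel upon symmetrization, leaving
\[
\skl{\approxdensity_{\learningrate*}}{\approxdensity_{\learningrate'*}} = \frac{1}{2}\sum_{i=1}^{\paramdim}\left[2\cosh(2\delta_{\psi,i}) - 2 + \delta_{\tau,i}^2\left(e^{-2\psi_{\learningrate,i}} + e^{-2\psi_{\learningrate',i}}\right)\right].
\]
The cancellation of the $\log\sigma^2$ terms is precisely why the symmetrized divergence is the natural quantity: it removes the contribution that is first order in $\delta_\psi$, so both the variance and mean pieces are genuinely quadratic.

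Next I would substitute the expansions. To leading order $\delta_{\psi,i} = A_{\psi,i}(\learningrate - \learningrate') + O(\learningrate^2)$ and $\delta_{\tau,i} = A_{\tau,i}(\learningrate - \learningrate') + O(\learningrate^2)$, so $\delta_{\psi,i}^2 = A_{\psi,i}^2(\learningrate - \learningrate')^2 + o(\learningrate^2)$ and likewise for $\delta_{\tau,i}^2$; using $2\cosh(2\delta_{\psi,i}) - 2 = 4\delta_{\psi,i}^2 + O(\delta_{\psi,i}^4)$ and $e^{-2\psi_{\learningrate,i}} + e^{-2\psi_{\learningrate',i}} = 2e^{-2\psi_{*,i}} + O(\learningrate)$, collecting the leading $(\learningrate - \learningrate')^2$ coefficient gives
\[
\skl{\approxdensity_{\learningrate*}}{\approxdensity_{\learningrate'*}} = (\learningrate - \learningrate')^2\sum_{i=1}^{\paramdim}\left[2A_{\psi,i}^2 + e^{-2\psi_{*,i}}A_{\tau,i}^2\right] + o(\learningrate^2),
\]
which identifies $C = \sum_{i}[2A_{\psi,i}^2 + e^{-2\psi_{*,i}}A_{\tau,i}^2]$, depending only on $A$ and $\optvarparam$ (through $\psi_*$) and strictly positive whenever $A \ne 0$.

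The main obstacle is the error bookkeeping required to guarantee the remainder is $o(\learningrate^2)$ rather than merely $O(\learningrate^2)$, and in particular to show that the second-order bias vector $B$ drops out of $C$. The crucial observation is that the entire divergence is of order $(\learningrate - \learningrate')^2 = O(\learningrate^2)$, while $B$ enters $\delta_\psi$ and $\delta_\tau$ only through the $O(\learningrate^2)$ correction; since the leading parts of $\delta_\psi, \delta_\tau$ are $O(\learningrate)$, every cross term involving $B$ is $O(\learningrate^3) = o(\learningrate^2)$. I would make this explicit by expanding $\delta_{\psi,i} = A_{\psi,i}(\learningrate - \learningrate') + B_{\psi,i}(\learningrate^2 - (\learningrate')^2) + o(\learningrate^2)$, squaring, and checking each resulting product against the $o(\learningrate^2)$ threshold, using $\learningrate' = O(\learningrate)$ to control the scaling of $\learningrate - \learningrate'$, $\learningrate^2 - (\learningrate')^2$, and the higher-order terms. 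The remaining steps are the routine Taylor remainders for $\cosh$ and $e^{-2\psi}$, which are uniformly controlled because $\psi_{*,i}$ is fixed and $\delta_{\psi,i} \to 0$ as $\learningrate \to 0$.
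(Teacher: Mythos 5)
Your proposal is correct and takes essentially the same route as the paper's proof: both start from the closed-form symmetrized KL between diagonal Gaussians (your $2\cosh(2\delta_{\psi,i})$ is just the paper's $e^{2\bar\psi_{\learningrate}-2\bar\psi_{\learningrate'}}+e^{2\bar\psi_{\learningrate'}-2\bar\psi_{\learningrate}}$), expand via \cref{eq:SGD-bias}, check that all $B$-dependent terms are $o(\learningrate^{2})$, and arrive at the identical constant $C=\sum_{i}\bigl\{2A_{\psi,i}^{2}+A_{\tau,i}^{2}e^{-2\psi_{*,i}}\bigr\}$ (the paper simply reduces to $\paramdim=1$ by factorization instead of carrying the sum). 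The only difference worth noting is that you explicitly flag that strict positivity of $C$ requires $A\neq 0$, a caveat the paper's statement leaves implicit.
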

See \cref{sec:SKL-small-learning-rate-proof} for the proof. 
Assuming that the current learning rate is $\learningrate$, then the previous learning rate was $\learningrate/\rho$. 
Let $\delta_{\learningrate} \defined \skl{\approxdensity_{\learningrate*}}{\approxdensity_{\learningrate/\rho*}}$
denote the symmetrized KL divergence between the optimal variational approximations obtained at each of these learning rates. 
In principle we can use \cref{prop:SKL-small-learning-rate} to estimate $C$ by 
\[
\hC = \delta_{\learningrate} \rho^{2}/\{\learningrate^{2}(1-\rho)^{2}\},
\]
and then estimate that 
\[
\skl{\approxdensity_{\learningrate*}}{\approxdensity_{*}} \approx \hC \learningrate^{2} = \delta_{\learningrate} \rho^{2}/(1-\rho)^{2}.
\]

There are, however, two problems with the tentative approach just outlined.
The first problem is that \cref{thm:SGD-bias} only holds for standard SGD; however, adaptive SGD algorithms 
are widely used in practice. 
Indeed, we observe empirically that $\skl{\approxdensity_{\learningrate*}}{\approxdensity_{\learningrate'*}} = \Theta(|\learningrate - \learningrate'|^{\biaspower/2})$ with $\biaspower \approx 1$ for RMSProp (\cref{fig:kappa_hat_comparisons-rmsprop}) and $\biaspower \in (1, 1.6)$ (with a point estimate at 1.2) for Adam \cref{fig:kappa_hat_comparisons-adam}).
Hence RMSProp and Adam both appear to have larger errors than SGD when the step size is small. 
To get the accuracy of SGD but also adaptivity, we modify the adaptive gradient methods to behave asymptotically (in the number of iterations) like SGD.
In the cases of RMSProp and Adam, we propose \emph{averaged RMSProp} (avgRMSProp) and \emph{averaged Adam} (avgAdam), which use the 
squared gradient update 
\[
\gradsquareiterate{\iternum+1} = \decay_{\iternum} \gradsquareiterate{\iternum} + (1-\decay_{\iternum}) \objgradest{\iternum} \odot  \objgradest{\iternum},
\]
for $\decay_{\iternum} = 1 - 1/\iternum$.
Hence, $\gradsquareiterate{\iternum+1} = (\iternum+1)^{-1}\sum_{\iternum'=0}^{\iternum}\objgradest{\iternum} \odot  \objgradest{\iternum}$
is the averaged squared gradient over all iterations \citep[\S4]{Mukkamala:2017}.
As long as the SGD Markov chain is ergodic and  $\EE[\norm{\objgradest{\iternum}}^2_2] < \infty$ at stationarity,
$\gradsquareiterate{\iternum}$ converges almost surely to a constant
and hence the SGD bias analysis also applies to avgRMSProp and avgAdam. 

The second problem is that \cref{prop:SKL-small-learning-rate} only holds for the mean-field Gaussian variational family.
However, other variational families such as normalizing flows are of substantial practical interest. 
Therefore, we consider the weaker assumption that either \cref{eq:SGD-bias} holds \emph{or} 
there exist constant vectors $\Lambda, A \in \reals^{\varparamdim}$ and a constant $\biaspower \in [1/2, 1)$ such that 
\[
\meansgditerate{\learningrate} = \optvarparam + \Lambda\learningrate^{\biaspower} +  A\learningrate  + o(\learningrate^{2\biaspower}).  \label{eq:SGD-bias-p}
\]
Adding the latter assumption, we have the following generalization of \cref{prop:SKL-small-learning-rate},
which holds for any sufficiently regular variational family:

\begin{proposition} \label[proposition]{prop:SKL-small-learning-rate-general}
Let $\varfamily$ be the variational approximation family.
If (i) \cref{eq:SGD-bias-p} holds for some $\biaspower \in [1/2, 1)$ or \cref{eq:SGD-bias} holds (in which case let $\kappa =  1$),
(ii) $\learningrate' = O(\learningrate)$, and
(iii) for all $\param \in \reals^{\paramdim}$, $\log \approxdensity_{\varparam}(\param)$ is three-times continuously differentiable with respect to $\varparam$,
then for some $C \ge 0$, 
\[
\skl{\approxdensity_{\learningrate*}}{\approxdensity_{\learningrate'*}}
= C \{\learningrate^{\biaspower} - (\learningrate')^{\biaspower}\}^{2} + o(\learningrate^{2\biaspower}).
\]
Moreover,  $C$ depends on only $\optvarparam$ and either $\Lambda$  (if $\biaspower \in [1/2, 1)$) or $A$ (if $\biaspower = 1$).
\end{proposition}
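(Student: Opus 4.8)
The plan is to extend the computation underlying \cref{prop:SKL-small-learning-rate} in two stages: first obtain an exact closed form for the symmetrized KL divergence between two mean-field Gaussians, and then insert whichever bias expansion holds. Write $\meansgditerate{\learningrate} = (\tau_{\learningrate}, \psi_{\learningrate})$ and $\optvarparam = (\tau_*, \psi_*)$, so that $\approxdensity_{\learningrate*} = \distNorm(\tau_{\learningrate}, \diag e^{2\psi_{\learningrate}})$. Applying the univariate Gaussian KL formula coordinatewise, the log-scale contributions cancel upon symmetrization and one is left with the exact identity
\[
\skl{\approxdensity_{\learningrate*}}{\approxdensity_{\learningrate'*}}
= \sum_{j} \left\{ \cosh(2[\psi_{\learningrate,j} - \psi_{\learningrate',j}]) - 1 + \frac{1}{2}(\tau_{\learningrate,j} - \tau_{\learningrate',j})^{2}(e^{-2\psi_{\learningrate,j}} + e^{-2\psi_{\learningrate',j}}) \right\}.
\]
Expanding $\cosh$ and using $\psi_{\learningrate,j}, \psi_{\learningrate',j} \to \psi_{*,j}$, the dominant part is the positive-definite diagonal quadratic form $Q(v) \defined \sum_j \{2v_{\psi,j}^{2} + e^{-2\psi_{*,j}}v_{\tau,j}^{2}\}$ evaluated at the parameter difference $v \defined \meansgditerate{\learningrate} - \meansgditerate{\learningrate'} \in \reals^{2\paramdim}$ (split into mean part $v_\tau$ and log-scale part $v_\psi$), with corrections that are quartic in $\psi_{\learningrate,j} - \psi_{\learningrate',j}$ and that account for the deviation of $e^{-2\psi_{\learningrate,j}} + e^{-2\psi_{\learningrate',j}}$ from $2e^{-2\psi_{*,j}}$.

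Next I would substitute the bias expansion into $v$. When $\biaspower \in [1/2,1)$, \cref{eq:SGD-bias-p} gives $v = \Lambda\{\learningrate^{\biaspower} - (\learningrate')^{\biaspower}\} + A(\learningrate - \learningrate') + o(\learningrate^{2\biaspower})$; because $\biaspower < 1$ and $\learningrate' = O(\learningrate)$, the $\Lambda$ term is the leading $\Theta(\learningrate^{\biaspower})$ contribution while $A(\learningrate - \learningrate')$ is the strictly smaller $O(\learningrate)$. When $\biaspower = 1$, \cref{eq:SGD-bias} gives $v = A(\learningrate - \learningrate') + o(\learningrate)$, exactly the setting of \cref{prop:SKL-small-learning-rate}. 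Evaluating $Q(v)$ and retaining only the dominant contribution yields $\skl{\approxdensity_{\learningrate*}}{\approxdensity_{\learningrate'*}} = C\{\learningrate^{\biaspower} - (\learningrate')^{\biaspower}\}^{2} + o(\learningrate^{2\biaspower})$ with $C = Q(\Lambda)$ when $\biaspower \in [1/2,1)$ and $C = Q(A)$ when $\biaspower = 1$. Since $Q$ is a positive-definite diagonal form whose coefficients are determined by $\psi_*$, the constant $C$ is strictly positive provided the leading bias vector ($\Lambda$, resp.\ $A$) is nonzero, and it depends only on $\optvarparam$ and on $\Lambda$ (resp.\ $A$), as claimed; I would flag this mild nondegeneracy of the leading bias vector as the hypothesis that makes $C>0$.

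The crux --- and essentially the only delicate point --- is the order bookkeeping in the genuinely new regime $\biaspower \in [1/2,1)$ (the case $\biaspower = 1$ being \cref{prop:SKL-small-learning-rate}), certifying that every discarded term is $o(\learningrate^{2\biaspower})$. Expanding $Q(v)$ through its associated bilinear form, I must check that (i) the cross term between $\Lambda\{\learningrate^{\biaspower} - (\learningrate')^{\biaspower}\}$ and $A(\learningrate - \learningrate')$ is $O(\learningrate^{1+\biaspower})$, (ii) the pure $A$ term is $O(\learningrate^{2})$, (iii) the quartic-in-$\psi$ remainder from $\cosh$ is $O(\learningrate^{4\biaspower})$, and (iv) replacing $\frac{1}{2}(e^{-2\psi_{\learningrate,j}} + e^{-2\psi_{\learningrate',j}})$ by $e^{-2\psi_{*,j}}$ costs only $O(\learningrate^{3\biaspower})$. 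Each of these is $o(\learningrate^{2\biaspower})$ precisely because $\biaspower < 1$ forces $1 + \biaspower > 2\biaspower$ and $2 > 2\biaspower$, while $\biaspower \ge 1/2$ keeps the remaining exponents above $2\biaspower$. Finally, since the expansion remainder $\meansgditerate{\learningrate} - \optvarparam - \Lambda\learningrate^{\biaspower} - A\learningrate = o(\learningrate^{2\biaspower})$ enters $v$ and $Q$ is quadratic, its contribution to $Q(v)$ is $o(\learningrate^{3\biaspower}) + o(\learningrate^{4\biaspower}) = o(\learningrate^{2\biaspower})$, so the remainder propagates without inflating the order.
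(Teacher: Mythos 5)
Your proposal is correct and follows essentially the same route as the paper's proof: both expand the exact coordinatewise Gaussian symmetrized-KL formula around $\optvarparam$, substitute the bias expansion $\meansgditerate{\learningrate} = \optvarparam + \Lambda\learningrate^{\biaspower} + A\learningrate + o(\learningrate^{2\biaspower})$, and check that the $\Lambda$--$A$ cross term and the pure-$A$ term are $O(\learningrate^{1+\biaspower})$ and $O(\learningrate^{2})$, hence $o(\learningrate^{2\biaspower})$ because $\biaspower < 1$, arriving at the same constant $C = 2\Lambda_{\psi}^{2} + \Lambda_{\tau}^{2}e^{-2\psi_{*}}$ (your quadratic form $Q$ merely repackages the paper's direct Taylor expansion). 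Two minor points in your favor: your $e^{-2\psi_{*,j}}$ corrects what is evidently a typo ($e^{-\psi_{*}}$) in the paper's displayed constant, and your explicit flag that strict positivity $C>0$ requires the leading bias vector ($\Lambda$, resp.\ $A$) to be nonzero makes precise a hypothesis the paper leaves implicit.
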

See \cref{sec:SKL-small-learning-rate-general-proof} for the proof. 
Using \cref{prop:SKL-small-learning-rate-general} and omitting $o(\learningrate^{2\biaspower})$ terms, we have
$
\delta_{\learningrate} 
\approx C \learningrate^{2\biaspower} (1/\rho^{\biaspower} - 1)^{2}.
$
To improve the reliability of the estimates based on \cref{prop:SKL-small-learning-rate-general}, 
we propose to use the symmetrized KL estimates between the variational approximations obtained at successive fixed learning rates. 
Let $\learningrate_{0}$ denote the initial learning rate, so that after $\epochnum$ learning rate decreases, the learning rate 
is $\learningrate_{\epochnum} \defined \learningrate_{0} \rho^{\epochnum}$. 
Let $\delta_{\epochnum} \defined \skl{\approxdensity_{\learningrate_{\epochnum}*}}{\approxdensity_{\learningrate_{\epochnum-1}*}}$
and assume the current learning rate is $\gamma_{\totalepochs}$. 

\begin{figure}[tbp]
\begin{center}
\begin{subfigure}[t]{.49\textwidth}
\includegraphics[width=\textwidth]{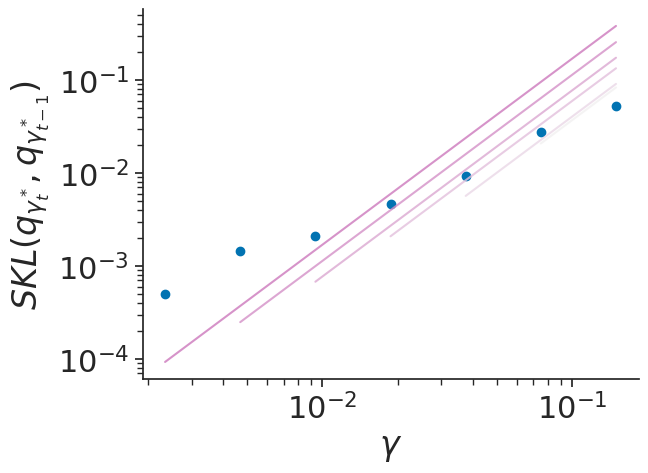}
\caption{} %
\end{subfigure}  
\begin{subfigure}[t]{.49\textwidth}
\includegraphics[width=\textwidth]{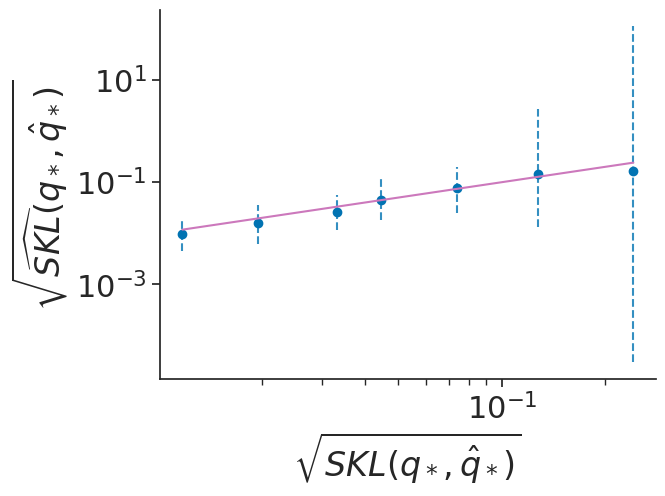}
\caption{} %
\end{subfigure}  
\caption{
Results for estimating the symmetrized KL divergence with avgAdam 
in the case of a Gaussian distribution $\distNorm(0, V)$ with $\paramdim = 100$ and $V_{ij} = j\ind[i=j]$ (diagonal non-identity covariance). 
\textbf{(a)} Learning rate versus symmetrized KL divergence of adjacent iterate averaged estimates of 
the optimal variational distribution. 
The lines indicate the linear regression fits, with setting $\biaspower = 1$. %
\textbf{(b)} Square root of true symmetrized KL divergence versus the estimated value with $95\%$ credible interval.
The uncertainty of the estimates decreases and remains well-calibrated as the learning rate decreases.}
\label{fig:SKL-accuracy}
\end{center}
\end{figure}

Depending on the optimization algorithm, we can estimate $\biaspower$ (or set $\biaspower=1$ if using modified adaptive SGD algorithm with a mean-field Gaussian variation family) and  $C$ using a regression model of the form
\[
 \log \delta_{\epochnum} &= \log C + 2 \log  (1/\rho^{\biaspower} - 1) + 2\biaspower \log \learningrate_{\epochnum} + \eta_{\epochnum}, & \epochnum &= 1,\dots,\totalepochs, \label{eq:SKL-regression-model}
\]
where $\eta_{\epochnum} \dist \distNorm(0, \sigma^{2})$. 
Given the estimate $\hC$ for $C$ and the estimate $\hat\biaspower$ for $\biaspower$ (or $\hat\biaspower = 1$) , %
we obtain the estimated relative SKL, 
\[
\widehat{\mathrm{RSKL}} = \rho^{\hat\biaspower}  +\frac{\xi} {\hC^{1/2} \learningrate_{t}^{\hat\biaspower}}.
\]
Because we use the regression model in \cref{eq:SKL-regression-model} in a low-data setting, we place (weak) priors on $\log C$ and $\sigma$: 
\[
\log C &\dist \distCauchy(0,10), &
\sigma &\dist \distCauchyPlus(0,10),
\]
where $\distCauchyPlus$ is the Cauchy distribution truncated to nonnegative values. 
If we use an adaptive stochastic optimization algorithm then we also place a prior on $\biaspower$:
\[
\biaspower \dist \distUnif(0, 1).
\]
Also, because we expect early SKL estimates to be less informative about $C$ (and $\biaspower$) 
due to the influence of $o(\gamma^{2\biaspower})$ terms, we use a weighted regression
with the likelihood term for $(\delta_{\epochnum}, \learningrate_{\epochnum})$ having 
weight 
\[
w_{\epochnum} = \{1 + (\totalepochs-\epochnum)^{2}/3^{2}\}^{-1/4}. \label{eq:regression-weights}
\] 
The weight formula enables the amplification of the significance of the most recent observations, with 
down-weighting becomes more significant after there are about 3 additional observations. 
On the other hand, the power of $1/4$ ensures a gradual reduction in weight, preventing a steep drop-off in importance.

We use the posterior mean(s) to estimate $\hC$ (and $\hat\biaspower$).
\Cref{fig:SKL-accuracy} validates that, in the case of avgAdam, the log of the learning rate and symmetrized KL divergence
have approximately a linear relationship and that our regression approach to estimating $C$ 
leads to reasonable estimates of $\skl{\approxdensity_{\learningrate*}}{\approxdensity_{*}}$. 
See \cref{fig:SKL-accuracy-more-avgrmsp} for similar results for other target 
distributions with avgAdam.

\begin{figure}[tbp]
\begin{center}
\centering
\includegraphics[width=.49\columnwidth]{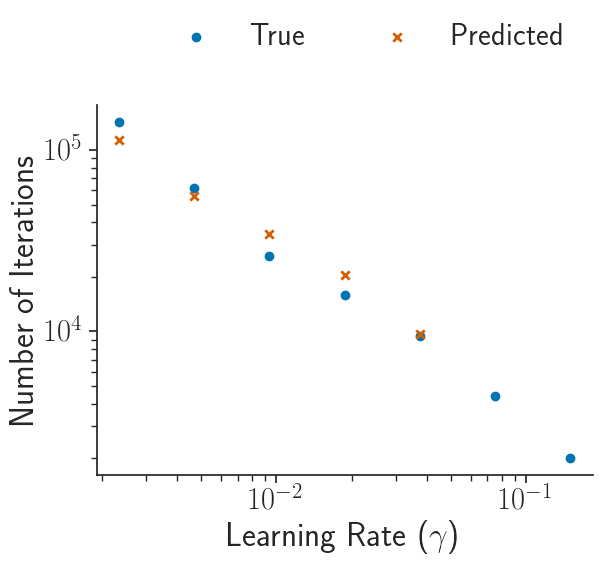}
\caption{
Results for predicting the number of iterations  needed to reach convergence at each learning rate decrease in the case of
Gaussian distribution $\distNorm(0, V)$ with $\paramdim = 100$ and $V_{ij} = j\ind[i=j]$ (diagonal non-identity covariance).
The blue points (orange crosses) represent the true (predicted) number of iterations needed to reach convergence.
}
\label{fig:SKL-convg_iterations}
\end{center}
\end{figure}

\begin{figure}[tbp]
\begin{center}
\begin{subfigure}[t]{.49\textwidth}
\includegraphics[width=\textwidth]{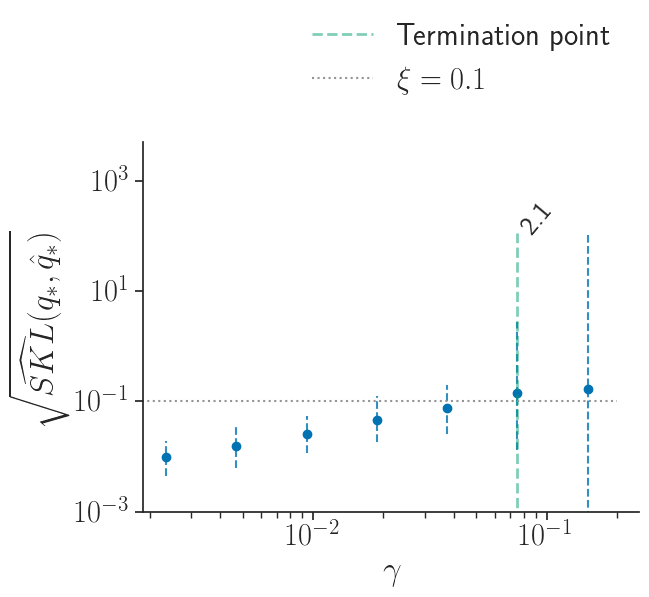}
\caption{}
\end{subfigure}  
\begin{subfigure}[t]{.49\textwidth}
\includegraphics[width=\textwidth]{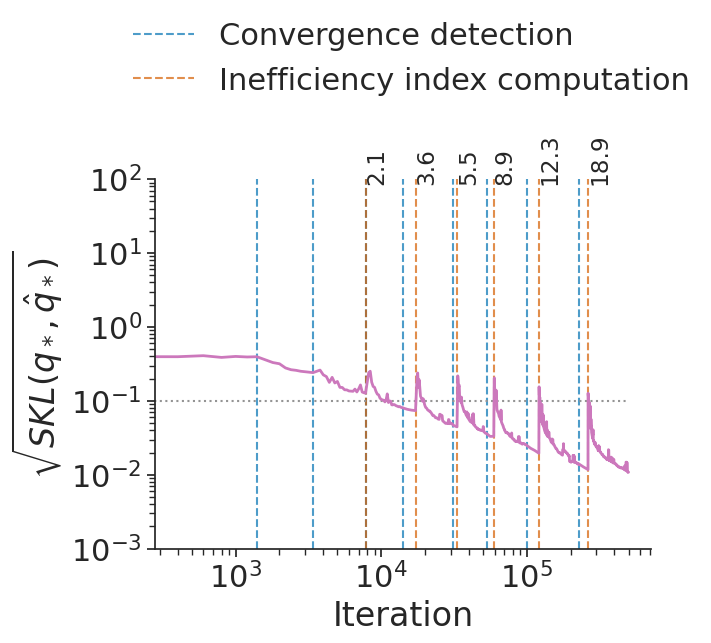}
\caption{}
\end{subfigure}
\caption{
Results for the termination rule trigger point in the case of a Gaussian distribution $\distNorm(0, V)$ with $\paramdim = 100$ and $V_{ij} = j\ind[i=j]$ (diagonal non-identity covariance). 
\textbf{(a)} Learning rate versus square root of estimated symmetrized KL divergence with 95\% credible interval (dashed blue line). 
The green vertical line indicates the termination rule trigger point with the corresponding
$\hat\ineff$ value.
\textbf{(b)} Iterations versus square root of symmetrized KL divergence between iterate average and optimal variational approximation.
The vertical lines indicate the convergence detection points using $\hat{R}$ (blue) and inefficiency index computation ($\hat\ineff$) points (orange) with corresponding values.}
\label{fig:SKL-termination_point}
\end{center}
\end{figure}

To estimate the relative iteration increase $\mathrm{RI}$, we need to estimate
the number of iterations to reach convergence at the next learning rate $\learningrate_{\epochnum+1}$.
It is reasonable to assume that there is exponential growth in the number of iterations to reach convergence 
as the learning rate decreases since stochastic gradient algorithms to converge at a polynomial rate \citep{Bubeck:2015:CO}. 
Recall that $\iterate_{\learningrate_\epochnum}$ is the number of iterations to reach convergence at the current learning rate. 
We fit a weighted least square regression model of the form
\[
 \log \iterate_{\learningrate_\epochnum}  &= \alpha \log \learningrate_\epochnum + \beta + \nu_\epochnum, & \epochnum &= 1,\dots,\totalepochs,
 \label{eq:iterates-vs-epoch-num}
\]
where $\nu_{\epochnum} \dist \distNorm(0, \sigma_{\epochnum}^{2})$.
We then use the coefficient estimates $\hat\alpha$ and $\hat\beta$ to predict the number of iterations required for convergence at the next learning rate
to be $\widehat{\iterate}_{\learningrate_{\epochnum+1}} \defined \learningrate_{\epochnum+1}^{\hat{\alpha}}  e^{\hat{\beta}}$. 
We use the same weights given in \cref{eq:regression-weights} for observations of the regression model due to the non-linear behavior 
of the earlier convergence iterate estimates. 
\Cref{fig:SKL-convg_iterations} demonstrates that linear relationship in \cref{eq:iterates-vs-epoch-num} does in fact hold
and that our weighted least square regression model predicts the number of convergence iterations $\iterate_{\learningrate_\epochnum}$ quite accurately.
The estimated relative iterations is then
$\widehat{\mathrm{RI}} = \widehat{\iterate}_{\learningrate_{\epochnum+1}}/(\iterate_{\learningrate_\epochnum}+\iterate_0)$.

Using the estimates $\widehat{\mathrm{RSKL}}$ and $\widehat{\mathrm{RI}}$ we obtain the termination rule
$\widehat{\ineff} = \widehat{\mathrm{RSKL}} \times \widehat{\mathrm{RI}} > \tau$.
\Cref{fig:SKL-termination_point} shows that when the user chosen target accuracy $\xi=0.1$, the termination rule triggers when the square root of the symmetrized KL divergence 
is approximately equal to $\xi$. 
\Cref{fig:SKL-termination-point-more,fig:SKL-termination-point-posteriordb,fig:SKL-termination-point-posteriordb-more} shows similar results of other Gaussian targets and \texttt{posteriordb} models and datasets (see \cref{subsec:posteriordb-datasets} for details).

\section{Learning Rate Scheduler} \label{sec:fixed-learning-rate}

For a fixed learning rate, computing the iterate average $\iterateaverage{\learningrate}$ defined in \cref{eq:iterate-average} requires
determining the iteration $\iternumconverged$ at which stationarity is reached and the number of iterations $\iternumaverage$ to use for computing the average.
We address each of these in turn. 

\subsection{Detecting convergence to stationarity}

We investigate two approaches to detecting stationarity: the SASA+ algorithm of \citet{Zhang:2020:SALSA} and the $\widehat{R}$-based
criterion from \citet{Dhaka:2020}. 
We make several adjustments to both approaches to reduce the number of tuning parameters and to make the remaining ones more intuitive. 
In our empirical findings, we have observed that the $\widehat{R}$ criterion outperforms the SASA+ criterion. Therefore, we describe the former here
and the latter in \cref{sec:SASA}. 

Let $\widehat{R}(\sgditerate{\iternum-W+1}_{i},\dots,\sgditerate{\iternum}_{i}) $ denotes the split-$\widehat{R}$ of the $i$th component of the last $W$ iterates
and define 
\[
\widehat{R}_{\max}(W) \defined \max_{1 \le i \le \varparamdim}\widehat{R}(\sgditerate{\iternum-W+1}_{i},\dots,\sgditerate{\iternum}_{i}).
\] 
An $\widehat{R}_{\max}(W)$ value close to 1 indicates the last $W$ iterates are close to stationarity.
In MCMC applications having $\widehat{R}_{\max}(W) \le 1.01$ is desirable \citep{Vats:2021:Rhat,Vehtari:2021:R-hat}.
\citet{Dhaka:2020} uses the weaker condition $\widehat{R}_{\max}(W) \le 1.1$ since iterate averaging does not require the same level of 
precision as MCMC. 
\citet{Dhaka:2020} take the window size $W = 100$, but in more challenging and high-dimensional problems a fixed smaller $W$ is insufficient.
Therefore, we instead search over window sizes between a minimum window size $W_{\min}$ and $0.95 \iternum$ to find the one that minimizes $\widehat{R}_{\max}(W)$.
The minimum window size is necessary to ensure the $\widehat{R}$ values are reliable. 
We use the upper bound $0.95 \iternum$ to always allow a small amount of ``warm-up'' without sacrificing more than 5\% efficiency. 
Therefore, we estimate $W_\mathrm{opt} = \argmin_{W_{\min} \le W \le 0.95\iternum}\widehat{R}_{\max}(W)$ using a grid search over 5 equally spaced values ranging from $W_{\min}$ to $0.95 \iternum$
and require $\widehat{R}_{\max}(W_\mathrm{opt}) \le 1.1$ as the stationarity condition. 

\Cref{fig:convergence-detection-comparison} compares our adaptive SASA+ and adaptive $\widehat{R}$ criteria to 
the criterion used in \citet{Dhaka:2020}  with a fixed window size of $W = 800$ and $\Delta$ELBO rule from \citet{Kucukelbir:2015}, which is used Stan's ADVI implementation (cf.~the results of \citet{Dhaka:2020}).
We do not use $W = 100$ as is done by \citet{Dhaka:2020} because it was too small to detect convergence.
Additionally, \cref{fig:convergence-detection-comparison} compares to another convergence detection approach proposed by \citet{Pesme:2020} (described in \cref{sec:distance-based}), 
where they use a distance-based statistic to detect convergence.
While adaptive SASA+, $\Delta$ELBO, fixed window size $\widehat{R}$, and the distance-based statistic approach sometimes trigger too early or too late or SASA+ use iterations before it reaches the convergence,
adaptive $\widehat{R}$ consistently triggers when the full window suggests convergence has been reached. 
See \cref{fig:convergence-detection-comparison-more} for additional Gaussian target examples.

\begin{figure}[tbp]
\begin{center}
\begin{subfigure}[t]{.49\textwidth}
\includegraphics[width=\textwidth]{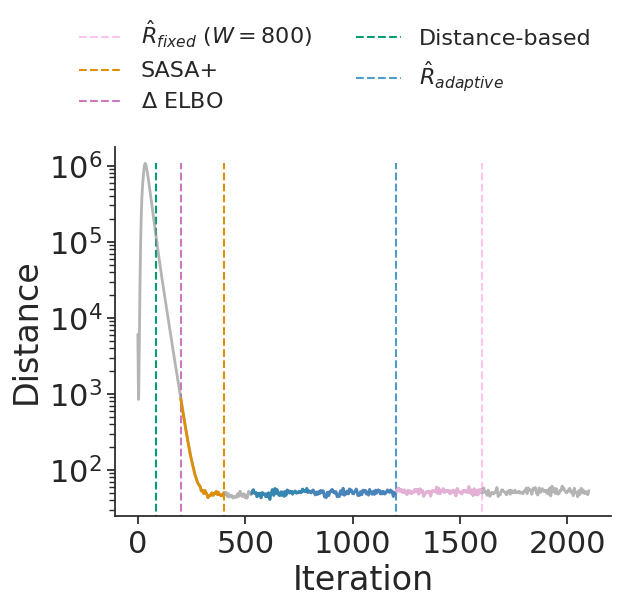}
\caption{} 
\end{subfigure}  
\begin{subfigure}[t]{.49\textwidth}
\includegraphics[width=\textwidth]{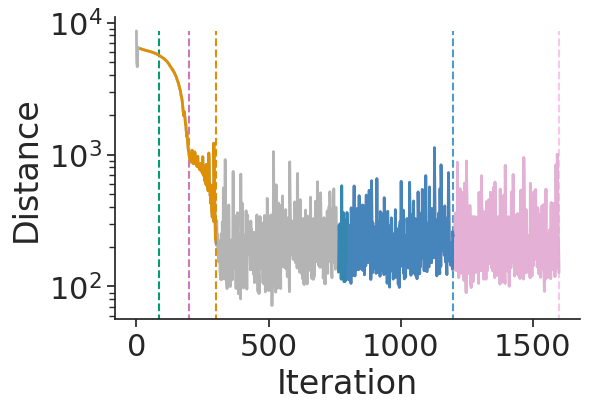}
\caption{} 
\end{subfigure}  
\caption{Iteration number versus distance between iterate average and current iterate. 
The vertical lines indicate convergence detection trigger points and (for SASA+ and $\widehat{R}$) the colored portion of 
the accuracy values indicate they are part of the window used for convergence detection.
\textbf{(a)} An uncorrelated Gaussian distribution $\distNorm(0, V)$ with $\paramdim = 500$ and $V = I$.
\textbf{(b)} A \texttt{posteriordb} dataset/model \textit{mcycle\_gp} with $\paramdim=66$.}
\label{fig:convergence-detection-comparison}
\end{center}
\end{figure}

\subsection{Determining the number of iterates for averaging}
After detecting convergence to stationarity, we need to find $\iternumaverage$ large enough to ensure the iterative average 
is sufficiently close to the mean $\meansgditerate{\learningrate_{\epochnum}}$.
But what is close enough? 
Building on our discussion in \cref{sec:problem-setup}, we aim to ensure the error in
the variational parameter estimates are small relative to the scale of uncertainty. 
For mean-field Gaussian distributions, the following result allows us to make such a guarantee precise.

\begin{proposition} \label[proposition]{prop:MF-Gaussian-error}
Let $\varfamily$ be the family of mean-field Gaussian distributions. %
Let $\hat\varparam = (\hat\tau, \hat\psi)$ denote an approximation to $\bar\varparam = (\bar\tau, \bar\psi)$. 
Define $\hat\sigma \defined \exp(\hat\psi)$ and $\bar\sigma \defined \exp(\bar\psi)$.
If there exists $\varepsilon \in (0,1/2)$ such that $|\hat\tau_{i} - \bar\tau_{i}| \le \varepsilon \hat\sigma_{i}$ and $|\hat\psi_{i} - \bar\psi_{i}| \le \varepsilon$,
then 
\[
\frac{|\hat\sigma_{i} - \bar\sigma_{i}| }{\bar\sigma_{i}} 
&\le 1.5\varepsilon & 
&\text{and} &
\frac{|\hat\tau_{i} - \bar\tau_{i}| }{\bar\sigma_{i}}
&\le 1.75\varepsilon. 
\]
\end{proposition}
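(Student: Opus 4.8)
The plan is to reduce both inequalities to elementary scalar bounds on the exponential, exploiting the log-linear parameterization of the mean-field scale, namely $\sigma = \exp(\psi)$. Write $\delta_i \defined \hat\psi_i - \bar\psi_i$, so the hypothesis $|\hat\psi_i - \bar\psi_i| \le \varepsilon$ becomes $|\delta_i| \le \varepsilon$ and the ratio of scales is simply $\hat\sigma_i/\bar\sigma_i = e^{\hat\psi_i - \bar\psi_i} = e^{\delta_i}$. Once this is in place, neither bound requires any genuinely new idea: each is a one-line manipulation followed by verifying a constant over the range $\varepsilon \in (0,1/2)$.

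For the scale bound, I would observe that $\frac{|\hat\sigma_i - \bar\sigma_i|}{\bar\sigma_i} = |e^{\delta_i} - 1|$. Since $t \mapsto |e^{t} - 1|$ is increasing in $|t|$ and satisfies $e^{t} - 1 \ge 1 - e^{-t}$ for $t \ge 0$ (equivalently $\cosh t \ge 1$), the constraint $|\delta_i| \le \varepsilon$ gives $|e^{\delta_i} - 1| \le e^{\varepsilon} - 1$, so it remains only to check that $e^{\varepsilon} - 1 \le 1.5\varepsilon$ on $(0,1/2)$. For the mean bound, I would start from $|\hat\tau_i - \bar\tau_i| \le \varepsilon\hat\sigma_i$ and convert the reference scale: dividing by $\bar\sigma_i$ yields $\frac{|\hat\tau_i - \bar\tau_i|}{\bar\sigma_i} \le \varepsilon\,\frac{\hat\sigma_i}{\bar\sigma_i} = \varepsilon e^{\delta_i} \le \varepsilon e^{\varepsilon}$, so it remains only to check that $e^{\varepsilon} \le 1.75$ on $(0,1/2)$.

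Both residual scalar inequalities are routine. The bound $e^{\varepsilon} \le 1.75$ follows since $e^\varepsilon$ is increasing and $e^{1/2} \approx 1.649 < 1.75$. For the first, set $g(\varepsilon) \defined 1.5\varepsilon - (e^{\varepsilon} - 1)$; then $g(0) = 0$ and $g'(\varepsilon) = 1.5 - e^{\varepsilon}$ changes sign exactly once, from positive to negative, at $\varepsilon = \ln 1.5$, so $g$ is unimodal on $[0,1/2]$ and its minimum over the interval is attained at an endpoint. Since $g(0) = 0$ and $g(1/2) = 0.75 - (e^{1/2} - 1) > 0$, we get $g \ge 0$ throughout, i.e.\ $e^{\varepsilon} - 1 \le 1.5\varepsilon$. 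There is no real obstacle here; the only point requiring care is that the constants $1.5$ and $1.75$ must be validated over the \emph{entire} range $\varepsilon \in (0,1/2)$ rather than merely to leading order as $\varepsilon \to 0$, which is precisely why the endpoint check at $\varepsilon = 1/2$ (and the unimodality argument ruling out an interior minimum) is the substantive part of the verification.
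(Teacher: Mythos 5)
Your proof is correct and follows essentially the same route as the paper's: both reduce the claims to the scalar bounds $|e^{\delta}-1|\le 1.5\varepsilon$ and $\varepsilon\,\hat\sigma_i/\bar\sigma_i\le 1.75\varepsilon$ for $|\delta|\le\varepsilon<1/2$. The only differences are cosmetic—you explicitly verify the elementary inequality $e^{\varepsilon}-1\le 1.5\varepsilon$ (which the paper simply asserts) and bound the second term via $\varepsilon e^{\varepsilon}\le \varepsilon e^{1/2}<1.75\varepsilon$ rather than the paper's chain $\varepsilon(1+1.5\varepsilon)\le 1.75\varepsilon$.
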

See \cref{sec:MF-Gaussian-error-proof} for the proof.
Based on \cref{prop:MF-Gaussian-error}, for mean-field Gaussian variational families
we use the iterate average once the mean MCSEs $\paramdim^{-1}\sum_{i=1}^{\paramdim}\mathrm{MCSE}(\hat\tau_{\gamma,i})/\hat\sigma_{\gamma,i}$
and $\paramdim^{-1}\sum_{i=1}^{\paramdim}\mathrm{MCSE}(\hat\psi_{\gamma,i})$ are less than $\varepsilon$.%
For other variational families we rely on the less rigorous condition that $\varparamdim^{-1}\sum_{i=1}^{\varparamdim}\mathrm{MCSE}(\hat\varparam_{\gamma,i})$ is less than $\varepsilon$.
We also require the effective sample sizes of all parameters to be at least 50 to ensure the MCSE estimates 
are reliable.

Because the MCSE check requires computing $\paramdim$ ESS values, it can be computationally expensive, especially for high-dimensional models.
Therefore, it is important to optimize when conducting the checks.%
A well-known approach in such situations is the ``doubling trick.''
Let $W_\mathrm{conv}$ denote the window size when convergence is detected, and let $W_\mathrm{opt}$ denote the minimal window
size that satisfies the MCSE check. %
The doubling trick would suggest checking at iteration numbers $\iternumconverged + 2^{j}W_\mathrm{conv}$ for $j=0,1,\dots$,
in which case the total computational cost is within a factor of $4 \log W_\mathrm{opt}$ of the optimal scenario in 
in which the check is only done at $\iternumconverged + W_\mathrm{conv}$ and $\iternumconverged + W_\mathrm{opt}$). 
However, we can potentially do substantially better by accounting for the different computational
cost of the optimization versus the MCSE check. 

\begin{proposition} \label[proposition]{prop:optimal-MCSE-rechecking}
Assume that the cost of the MCSE check using $K$ iterates is $C_{E}K$
and the cost of $K$ iterations of optimization is $C_{O}K$. 
Let $r \defined C_{O}/C_{E}$, $\chi(r) \defined 1 + (1 + r)^{-1/2}$, and $g(r) \defined (2 + r + 2 (1+r)^{1/2})/(1 + r)$. 
If the MCSE check is done on iteration numbers $\iternumconverged + \chi(r)^{j}W_\mathrm{conv}$ for $j=0,1,\dots$, then
the total computational cost will be within a factor of $g(r)$ of optimal.
\end{proposition}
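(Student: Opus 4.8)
The plan is to compare the total cost of the geometric re-checking schedule against an oracle that knows $W_\mathrm{opt}$ in advance, and then verify that the prescribed growth factor is exactly the one minimizing the worst-case ratio. First I would fix notation: the algorithm performs MCSE checks at window sizes $W_j \defined \chi^j W_\mathrm{conv}$, $j = 0,1,\dots$, stopping at the first index $J$ with $W_J \ge W_\mathrm{opt}$. Running the optimizer out to window $W_J$ past $\iternumconverged$ costs $C_O W_J$, while the checks contribute $C_E \sum_{j=0}^{J} W_j$, so the algorithm's total cost is $C_O W_J + C_E \sum_{j=0}^{J} W_j$. The oracle optimizes exactly to $W_\mathrm{opt}$ and verifies with a single check there, costing $(C_O + C_E) W_\mathrm{opt}$; including the unavoidable check at $W_\mathrm{conv}$ would only decrease the ratio, so this is the conservative baseline.

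Next I would bound the numerator in terms of $W_\mathrm{opt}$. Since $J$ is the first index with $W_J \ge W_\mathrm{opt}$, we have $W_{J-1} < W_\mathrm{opt}$ and hence the overshoot bound $W_J = \chi W_{J-1} < \chi W_\mathrm{opt}$. Summing the geometric progression gives $\sum_{j=0}^{J} W_j \le \frac{\chi}{\chi-1} W_J \le \frac{\chi^2}{\chi-1} W_\mathrm{opt}$. Dividing the resulting cost $W_\mathrm{opt}\bigl(C_O \chi + C_E \frac{\chi^2}{\chi-1}\bigr)$ by the oracle cost and writing $r \defined C_O/C_E$, the ratio is at most
\[
R(\chi) \defined \frac{\chi\bigl(r + \frac{\chi}{\chi-1}\bigr)}{r + 1}.
\]
I would also record that this bound is asymptotically tight: taking $W_\mathrm{opt}/W_\mathrm{conv} = \chi^{m}(1+\epsilon)$ with $m$ large and $\epsilon \to 0^+$ drives $W_J \to \chi W_\mathrm{opt}$ and the geometric sum to its limiting value $\frac{\chi}{\chi-1}W_J$, so no smaller constant is possible and $R(\chi)$ is the genuine worst case.

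Then I would minimize $R$ over $\chi \in (1,\infty)$. Because $R(\chi) \to \infty$ both as $\chi \to 1^+$ (the check sum blows up) and as $\chi \to \infty$ (the overshoot blows up), the minimum is attained at an interior critical point. Setting the derivative of $h(\chi) \defined r\chi + \chi^2/(\chi-1)$ to zero yields $r(\chi-1)^2 + \chi^2 - 2\chi = 0$, i.e.\ the quadratic $(r+1)\chi^2 - 2(r+1)\chi + r = 0$, whose unique root exceeding $1$ is $\chi = 1 + (1+r)^{-1/2} = \chi(r)$, matching the prescribed schedule. Substituting $\chi(r)$ back into $R$ and using the identity $(\chi(r)-1)^2 = (1+r)^{-1}$ collapses the expression to the stated constant $g(r)$, which completes the argument.

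The main obstacle is the worst-case bookkeeping rather than the one-variable calculus. Because $W_\mathrm{opt}$ may fall anywhere on a continuum while the checks live on the discrete grid $\{\chi^j W_\mathrm{conv}\}$, I must argue carefully that the overshoot factor $\chi$ and the full geometric tail $\frac{\chi}{\chi-1}$ are \emph{simultaneously} achievable in the limit, so that $R(\chi)$ is not a loose over-estimate; tracking the lower-order $\chi^{-(J+1)}$ correction and confirming that the extremal configuration (fractional offset approaching a full grid step, with the number of checks large) genuinely maximizes the ratio is the delicate step. Everything downstream of the reduction to $R(\chi)$ — differentiating, solving the quadratic, and simplifying at the optimum — is routine.
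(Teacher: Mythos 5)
Your proof is correct and follows essentially the same route as the paper's: bound the overshoot by $W_J < \chi W_\mathrm{opt}$ via $W_{J-1} < W_\mathrm{opt}$, bound the geometric sum of check costs by $\frac{\chi^2}{\chi-1}W_\mathrm{opt}$, and minimize $r\chi + \chi^2/(\chi-1)$ over $\chi > 1$ to obtain $\chi(r)$ --- your explicit quadratic solution and asymptotic-tightness remark are refinements the paper omits, and your leaner oracle baseline $(C_O + C_E)W_\mathrm{opt}$ (dropping the $C_E W_\mathrm{conv}$ term the paper includes in $\mathrm{OPT}$) only strengthens the conclusion. One caution: substituting $\chi(r)$ yields $(2 + r + 2\sqrt{1+r})/(1+r)$, exactly as in the paper's own proof, so the proposition's printed $g(r) = (2 + r + 2r^{1/2})/(1+r)$ is evidently a typo for $2(1+r)^{1/2}$ in the numerator (consistent with the paper's claim that $g(0)=4$), and your final step asserting the expression ``collapses to the stated constant'' should note this discrepancy rather than silently matching the misprinted formula.
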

See \cref{sec:optimal-MCSE-rechecking-proof} for the proof. 
Since $g(0) = 4$ and $g(r)$ is monotonically decreasing in $r$,
when $r \approx 0$ -- that is, $C_{O}$ is negligible compared to $C_{E}$ -- we recover the doubling rule since $\chi(0) = 2.$ 
However, as long as $r$ is significantly greater than zero, the worst-case additional cost factor can be substantially less than 4.
Therefore we carry out the MCSE check on iteration numbers $\iternumconverged + \chi(r)^{j}W_\mathrm{conv}$
with $r$ estimated based on the actual runtimes of the optimization so far and the first MCSE check.

\begin{algorithm}[htb]%
\SetKwInput{Input}{Input}
\SetKw{Break}{break}
\Input{initial variational parameter $\sgditerate{0}$, \newline
learning rate $\learningrate$,   \newline
minimum window size $W_{\min}$,   \newline
initial iterate average relative error threshold $\varepsilon$,    \newline
maximum iterations $K_{\max}$} %
\IncMargin{1em}
$\iternumconverged \gets \mathbf{null}$ \tcp*[l]{iteration when stationarity reached} 
$\mathit{success} \gets \mathbf{false}$ \;
\For{$\iternum = 1,\dots,K_{\max}$}{
	compute stochastic gradient $\objgradest{\iternum}$ \;
	compute descent direction $d_{\iternum}$ \; %
	\tcp*[l]{step in descent direction}
	$\sgditerate{\iternum + 1} \gets \sgditerate{\iternum} - \learningrate d_{\iternum}$  \; %
	\If(check for convergence){$\iternumconverged = \mathbf{null}$ and $\iternum  \mod \iternum_\mathrm{check} = 0$ }{
		  \tcp*[l]{define window-based ESS} 
		 $\widehat{R}_{\max}(W) \defined \max_{1 \le i \le \varparamdim}\widehat{R}(\sgditerate{\iternum-W+1}_{i},\dots,\sgditerate{\iternum}_{i})$\;
		  \tcp*[l]{compute optimal window} 
		 $W_\mathrm{opt} \gets \argmin_{W_{\min} \le W \le 0.95\iternum}\widehat{R}_{\max}(W)$ \hspace{-.5em} \;
		\If{$\widehat{R}_{\max}(W_\mathrm{opt}) \le 1.1$}{
			$\iternumconverged \gets \iternum - W_\mathrm{opt}$ \;
			\tcp*[l]{window size at which to check MCSE}
			$W_\mathrm{check} \gets W_\mathrm{opt}$ \;
			$\chi_{*} \gets \chi(r)$ \tcp*[l]{see Prop.\ \ref{prop:optimal-MCSE-rechecking}}
		}
	}
	\If(check for accuracy of iterate average){$\iternumconverged \ne \mathbf{null}$ and $\iternum - \iternumconverged = W_\mathrm{check}$}{
		$W \gets W_\mathrm{check}$ \;
		$\hat\varparam \gets W^{-1}\sum_{i=\iternum-W+1}^{\iternum}\sgditerate{i}$ \;
		$\mcse \gets \mathrm{MCSE}( \sgditerate{\iternum-W}, \dots,  \sgditerate{\iternum})$\;
		$\mathrm{ESS}_{\min} \gets \min_{i}\mathrm{ESS}( \sgditerate{\iternum-W}_{i}, \dots,  \sgditerate{\iternum}_{i})$\;
		\lIf{mean-field Gaussian family}{
			$\mcse_{i} \gets \mcse_{i} / \exp(\hat\psi_{i})$ for $i=1,\dots,\paramdim$
		}
		\eIf{$\mathrm{mean} \   \mcse_{i} < \varepsilon$ and $\mathrm{ESS}_{\min} \ge 50$}{
			$\mathit{success} \gets \mathbf{true}$ \;
			\Break \;
		}{
			$W_\mathrm{check} \gets \chi_{*} W$    \;
		}
	}
}
$\meansgditerate{} \gets W^{-1}\sum_{i=\iternum-W+1}^{\iternum}\sgditerate{i}$ \;
\Return $(\iternum, \meansgditerate{}, \mathit{success})$
\caption{Fixed--learning-rate Automated Stochastic Optimization (FASO)}
\label{alg:FASO}
\end{algorithm}
\DecMargin{1em}

\begin{algorithm}[htb] %
\SetKwInput{Input}{Input}
\SetKw{Break}{break}
\SetKwFunction{FASO}{FASO} 
\Input{\small initial variational parameter $\sgditerate{0}$,  \newline
maximum number of iterations $K_{\max}$, \newline
initial learning rate $\learningrate_{0}$ (default: 0.3), \newline
minimum window size $W_{\min}$ (default: 200),   \newline
accuracy threshold $\xi$ (default: 0.1),   \newline
inefficiency threshold $\tau$ (default: 1.0),   \newline
initial iterate average error threshold $\varepsilon_0$ (default: 0.1),    \newline
adaptation factor  $\rho$ (default: 0.5)     \newline %
small iteration number $\iterate_0$ (default: 1000)
} %
\IncMargin{1em}
$\meansgditerate{\mathrm{curr}} \gets \sgditerate{0}$  \tcp*[l]{current iterate average} 
$\learningrate \gets \learningrate_{0}$ \tcp*[l]{learning rate} 
$\varepsilon \gets \varepsilon_{0}$ \tcp*[l]{iterate average error threshold}
$\iternum \gets 0$ \tcp*[l]{total iterations} 
$\epochnum \gets 0$ \tcp*[l]{total epochs} 
	\While{$\iternum < K_{\max}$}{
		$\meansgditerate{\mathrm{prev}} \gets \meansgditerate{\mathrm{curr}}$  \tcp*[l]{record previous iterate average}
		$\iternum_\mathrm{new}, \meansgditerate{\mathrm{curr}}, \mathit{success} \gets$ \FASO{$\meansgditerate{\mathrm{curr}}, \learningrate, W_{\min}, \varepsilon, K_{\max} - \iternum$} \;
		\If{$\operatorname{not} \mathit{success}$}{
			\textbf{print} ``Warning: failed to converge. Estimated error is $\mathit{error}$'' \;
			\Break\;
		}
		$\iternum \gets \iternum + \iternum_\mathrm{new}$ \tcp*[l]{update total iterations}
		\If{$\epochnum \ge 1$} {
			$\delta_{\epochnum} \gets \skl{q_{\meansgditerate{\mathrm{prev}}}}{q_{\meansgditerate{\mathrm{curr}}}}$\;
			compute estimates $\hat{C}$ and $\hat{\biaspower}$ using weighted linear regression  \;
			$\widehat{\mathrm{RSKL}} \gets \rho^{\hat{\biaspower}} + \xi / (\hat{C}^{1/2} \learningrate^{\hat{\biaspower}})$\;
			$\hat{\iterate}_{\learningrate} \gets \iternum_\mathrm{new}^{(t)} - \iternum_\mathrm{new}^{(t-1)}$\;
			\If{$\epochnum \ge 2$} {  \tcp*[l]{remove the converged iterations of initial variation parameter}
				compute estimates $\hat{\alpha}$ and $\hat{\beta}$ using weighted least squares \;
				\eIf{$\hat{\beta} < 0$}{
					$\hat{\iterate}_{\rho \learningrate} \gets (\rho \learningrate)^{\hat{\alpha}} e^{\hat{\beta}}$\;
				}{
					$\hat{\iterate}_{\rho \learningrate} \gets \hat{\iterate}_{\learningrate}$\;
				}
				$\widehat{\mathrm{RI}} \gets \hat{\iterate}_{\rho \learningrate} /(\hat{\iterate}_{\learningrate} + \iterate_0)$\;
				\If{$\widehat{\mathrm{RSKL}} \cdot \widehat{\mathrm{RI}} > \tau$}{
		 			\Break
				}
			}
		}
		$\learningrate \gets \rho\learningrate$  \tcp*[l]{decrease learning rate} 
		$\varepsilon \gets \rho\varepsilon$  \tcp*[l]{decrease iterate average error threshold}
		$\epochnum \gets \epochnum + 1$  \tcp*[l]{increment epoch counter}
	}
\Return $\meansgditerate{\mathrm{curr}}$ 
\caption{Robust and automated black-box variational inference (RABVI)}
\label{alg:RABVI}
\end{algorithm}
\DecMargin{1em}

\section{Complete Framework} \label{sec:summary}

Combining our innovations from \cref{sec:termination-rule,sec:fixed-learning-rate} leads to our complete framework.
When $\learningrate$ is fixed, our proposal from \cref{sec:fixed-learning-rate} is summarized in \cref{alg:FASO}, 
which we call \emph{Fixed--learning-rate Automated Stochastic Optimization} (FASO).
Combining the termination rule from \cref{sec:termination-rule} with FASO, we get our complete framework, 
\emph{Robust and Automated Black-box Variational Inference (RABVI)},
which we summarize in \cref{alg:RABVI}.
We will verify the robustness of RABVI through numerical experiments.
RABVI is automatic since the user is only required to provide a target distribution and the only 
tuning parameters we recommend changing from their defaults are defined on interpretable, intuitive scales:
\bitems
\item \textbf{accuracy threshold $\xi$:}
	The symmetrized KL divergence accuracy threshold can be set based on the expected accuracy of the variational approximation.
	If the user expects $\kl{\postdensity}{\approxdensity_{*}}$ to be large, then we recommend choosing $\xi \in [1,10]$.
	If the user expects $\kl{\postdensity}{\approxdensity_{*}}$ to be fairly small, then we recommend choosing $\xi \in [.01, 1]$.
	Our experiments suggest $\xi=0.1$ is a good default value.
\item \textbf{inefficiency threshold $\tau$:}
	We recommend setting the inefficiency threshold $\tau = 1$, as this weights accuracy and computation equally. 
	A larger value (e.g., 2) could be chosen if accuracy is more important while a smaller value (e.g., 1/2) would 
	be appropriate if computation is more of a concern. 
\item \textbf{maximum number of iterations $K_{\max}$:} 
	The maximum number of iterations can be set by the user based on their
	computational budget. 
	RABVI will warn the user
	\clearpage
	
	 if the maximum number of iterations is reached without convergence,
	so the user can either increase $K_{\max}$ or accept the estimated level of accuracy
	that has been reached. 
\eitems

We expect the remaining tuning parameters will typically not be adjusted by the user.
We summarize our recommendations:
\bitems
\item \textbf{initial learning rate $\learningrate_{0}$:}
	When using adaptive methods such as RMSProp or Adam, the initial learning rate can 
	essentially be set in a problem-independent manner. 
	We use $\learningrate_{0} = 0.3$ in all of our experiments.
	If using non-adaptive methods, a line search rule such as the one proposed in \citet{Zhang:2020:SALSA}
	could be used to find a good initial learning rate. 
\item \textbf{minimum window size $W_{\min}$:}  
	We recommend taking $W_{\min} = 200$ so that that each of the split-$\widehat{R}$ 
	values are based on at least 100 samples.
\item \textbf{small iteration number $\iterate_0$:}
	The value of $\iterate_0$ should represent a number of iterations the user considers to be fairly small (that is, not requiring too much
	computational effort). 
	We use $\iterate_0 = 5 W_\mathrm{min} = 1000$ for our experiments, but it could be adjusted by the user. 
\item \textbf{initial iterate average relative error threshold $\varepsilon_0$:}
	We recommend scaling $\veps_0$ with $\xi$
	since more accurate iterate averages are required for sufficiently accurate symmetrized
	KL estimates. Therefore, we take $\veps_0 = \xi$ by default.
\item \textbf{adaptation factor  $\rho$:} 
	We recommend taking $\rho = 0.5$ because using a smaller $\rho$ value could lead to too few
	$\delta_{\epochnum}$ values for the estimation of $C$ (and $\biaspower$) and 
	using a larger $\rho$ value would make the algorithm too slow.
\item \textbf{Monte Carlo samples $M$:}
	We find that $M=10$ provides a good balance between gradient accuracy and computational burden 
	but the performance is fairly robust to the choice of $M$ as long as it is not too small. 
\eitems

\section{Experiments} \label{sec:experiments}

Unless stated otherwise, all experiments use avgAdam to compute the descent direction, mean-field Gaussian distributions as the variational family,
and the tuning parameters values recommended in \cref{sec:summary}. 
We fit the regression model for $C$ (and $\biaspower$) in Stan, which result in extremely small computational overhead of less than {$0.5\%$}.
We compare RABVI 
to FASO, Stan's ADVI implementation,  stochastic gradient descent (SGD) using an exponential decay of the learning rate, and fixed--learning rate versions of RMSProp, Adam, and a windowed version of Adagrad (WAdagrad), which is the default optimizer in PyMC3.
Moreover, we compare RABVI with exponential decay and cosine learning rate schedules using Adam and RMSProp optimization methods.
We run all the algorithms that do not have a termination criterion for $K_{\max} = 100{,}000$ iterations and for the fixed--learning-rate algorithms we use learning rate $\learningrate = 0.01$ in an effort to balance speed with accuracy.
For exponential decay, we use a learning rate of $\learningrate = \learningrate_0  \delta^{\lfloor k/s \rfloor}$, where $\learningrate_0$ denotes the initial learning rate, $\delta$ denotes the decay rate, $k$ denotes the iteration, and $s$ denotes the decay step. We choose  $\learningrate_0=0.01$, $\delta = 0.96$, and $s=900$ so that the final learning rate is approximately $0.0001$ \citep{chen:2017}.
For cosine schedule, we use a learning rate of $\learningrate = \learningrate_{\min} + \frac{1}{2} (\learningrate_{\max} - \learningrate_{\min}) (1 + \cos(\frac{k}{K} \pi))$, where $\learningrate_{\min}$ and  $\learningrate_{\max}$ denote the minimum and maximum values of learning rate, $k$ denotes the current iteration, and $K$ denotes the maximum number of iterations \citep{Loshchilov2017}. We choose $\learningrate_{min} = 0.0001$, $\learningrate_{max} = 0.01$ to make it comparable with other methods. %

We use symmetrized KL divergence as the accuracy measure when we can compute the ground-truth optimal variational approximation. 
Otherwise, we use the following metrics (where $\mu$ and $\sigma$ are, respectively, the posterior mean and standard deviation vectors):
\bitems
\item \emph{Relative mean error} $\twonorm{(\mu - \hat\mu) / \sigma}$, where 
$\hat\mu$ is the variational approximation to $\mu$.
\item \emph{Relative standard deviation error} $\twonorm{\hat\sigma/\sigma - 1}$, where $\hat\sigma$ is the variational approximation to $\sigma$.
\item \emph{Under coverage error} of the variational approximation to the 95\% credible intervals $\min(0, \vert .95 - c_{i} \vert)$, 
where $c_{i} \defined \Pi(\{ \param \st \param_{i} \in (a_{i},b_{i})\})$ and $(a_{i},b_{i})$ is the variational estimate of the central 95\% credible interval for parameter $\param_{i}$. 
\item \emph{Maximum mean discrepancy (MMD)} $\mathrm{MMD}^2(P,Q) \defined \EE[k(x,x') - 2\,k(x,y) + k(y,y')]$, where $x, x' \sim P$ and $y, y' \sim Q$ are independent and $k(x, y) = \exp\{-\frac{1}{2}\norm{\frac{x - y}{\sigma}}_{2}^{2}\}$ is the squared exponential kernel \citep{gretton:2006}.%
\eitems

\subsection{Accuracy with Gaussian Targets}\label{subsec:gaussian-targets}

First, to explore optimization accuracy relative to the optimal variational approximation, we consider
Gaussian targets of the form $\postdensity = \distNorm(0, V)$.
In such cases, we can compute the ground-truth optimal variational approximation either analytically (because the distribution belongs
to the mean-field variational family and hence $\approxdensity_{*} = \pi$) 
or numerically using deterministic optimization (since the KL divergence between Gaussians is available in closed form). 
Specifically, we consider the following covariances that aid in assessing our framework across a range of condition numbers from 1 to around 9000:
\bitems
\item Identity covariance: $V = I$ 
\item  Diagonal non-identity covariance: $V_{ij} = j\ind[i=j]$
\item  Uniform covariance with correlation 0.8: $V_{ij} = \ind[i = j] + 0.8\ind[i \ne j]$
\item  Banded covariance with maximum correlation 0.8: $V_{ij} =  \ind[i = j] + 0.8^{\mid i-j\mid}\ind[i \ne j]$
\item Diagonal non-identity banded covariance with maximum correlation 0.8: $V_{ij} = j\ind[i=j] + 0.8^{\mid i-j\mid}\ind[i \ne j]$
\item Diagonal identity (except first entry) uniform covariance with maximum correlation 0.8: $V_{ij} = 1000\ind[i=j=1] + \ind[i=j \ne 1] + 0.8\ind[i \ne j]$
\item Diagonal identity (except first entry) banded covariance with maximum correlation 0.8: $V_{ij} = 1000\ind[i=j=1] + \ind[i=j \ne 1] + 0.8^{\mid i-j\mid}\ind[i \ne j]$
\eitems 
In our selection, we specifically included diagonal identity matrices (with the exception of the first entry) combined with either uniform or banded covariance structures, showcasing a maximum covariance of 0.8. This setting results in weaker correlation between the first component and the others. This choice was strategic to achieve higher condition numbers (around 5000 and 9000 respectively), given that correlations set at $0.8$ or  $0.8^{\mid i-j\mid}\ind[i \ne j]$ yield condition numbers around 400 and 80, respectively.

\Cref{fig:Accuracy-with-gaussian-targets,fig:Accuracy-with-gaussian-targets-high-condion-covariance-matrix,fig:learning-rate-schedules-with-gaussian-targets} show the comparison of RABVI  to FASO, Stan's ADVI implementation, SGD with exponential decay learning rate (SGD-ED), Adam with exponential decay and cosine learning rates (Adam-ED, Adam-C), RMSProp with exponential decay and cosine learning rates (RMSProp-ED, RMSProp-C), and fixed-learning rate versions of RMSProp, Adam,
Windowed Adagrad (WAdagrad). 
The findings demonstrate that RABVI consistently outperforms ADVI, SGD-ED, both adaptive learning rate versions of RMSProp, and the fixed-learning rate methods in a majority of the cases. While Adam and SGD-ED occasionally reach performance levels similar to RABVI, they tend to converge more slowly and with less reliability. Additionally, despite Adam-ED and Adam-C closely matching RABVI's performance, they lack a dependable mechanism for determining when to terminate the optimization.
On the other hand, by varying the accuracy threshold $\xi$, the quality of the final RABVI approximation $\hat\approxdensity_{*}$ also varies
such that $\skl{\approxdensity_{*}}{\hat\approxdensity_{*}}^{1/2} \approx \xi$. 

To demonstrate the flexibility of our framework, we used RABVI with a variety of optimization methods: 
RMSProp, avgRMSProp, avgAdam, natural gradient descent (NGD), and stochastic quasi-Newton (SQN).
See \cref{sec:ngd,sec:sqn} for details. 
\Cref{fig:RABVI-with-gaussian-targets} shows that avgAdam and avgRMSProp optimization methods have a similar improvement in symmetrized KL divergence between optimal and estimated variational approximation for all cases. 
NGD is not stable for the diagonal non-identity covariance structure and SQN does not perform well with uniform covariance structure. 
Even though RMSProp shows an improvement in accuracy for large step sizes, accuracy does not improve as the step size decreases. 

\begin{figure}[tbp]
\begin{center}
\begin{subfigure}[t]{.48\textwidth}
\centering
\includegraphics[width=\textwidth]{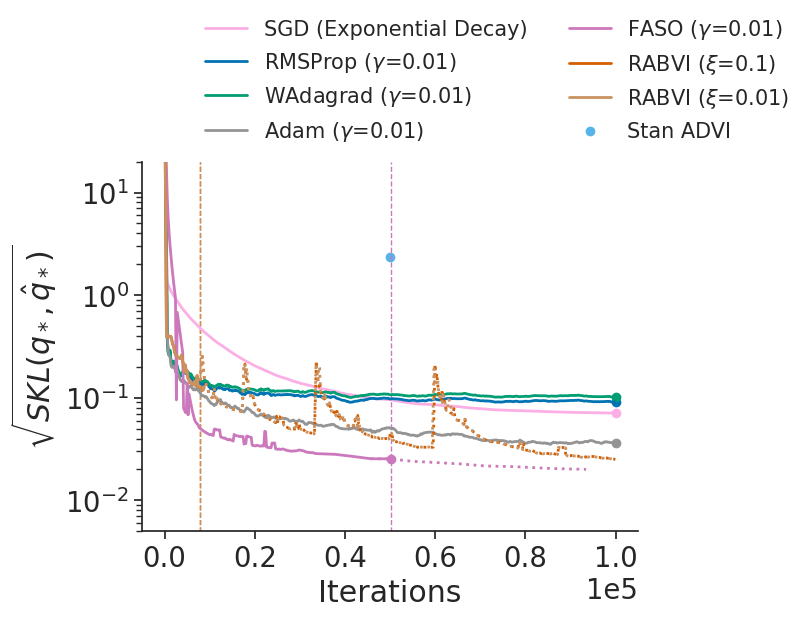}
\caption{uncorrelated $\paramdim = 100$} 
\end{subfigure}  
\begin{subfigure}[t]{.48\textwidth}
\centering
\includegraphics[width=\textwidth]{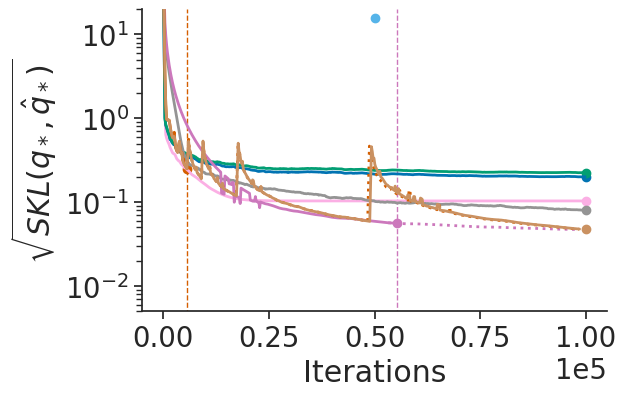}
\caption{uncorrelated $\paramdim = 500$} 
\end{subfigure} 
\begin{subfigure}[t]{.48\textwidth}
\centering
\includegraphics[width=\textwidth]{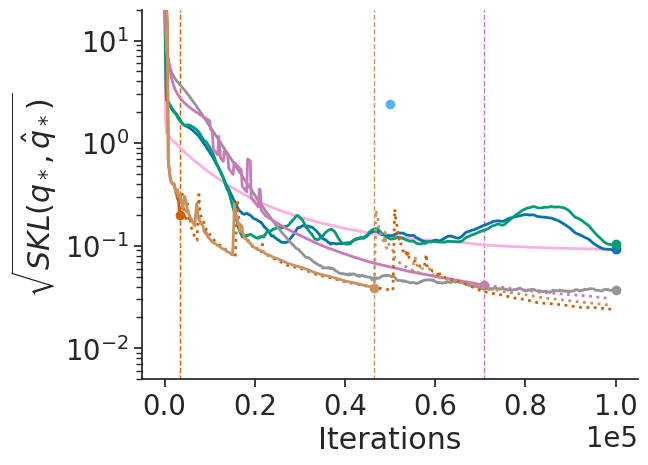}
\caption{uniform correlated $\paramdim = 100$} 
\end{subfigure}  
 \begin{subfigure}[t]{.48\textwidth}
\centering
\includegraphics[width=\textwidth]{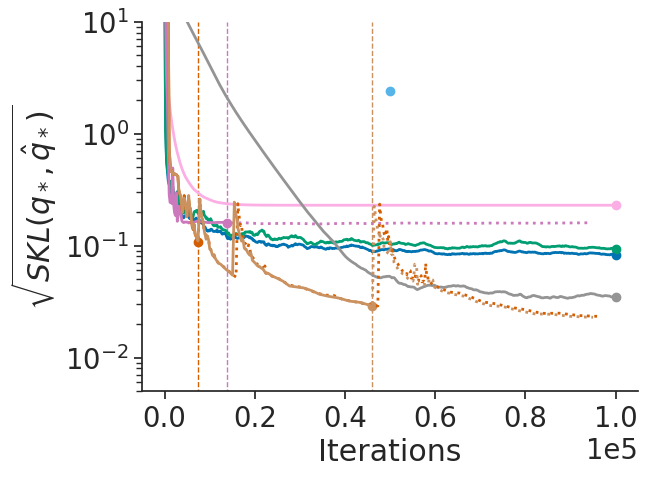}
\caption{banded correlated $\paramdim = 100$} 
\end{subfigure}  
\caption{
Accuracy comparison of variational inference algorithms using Gaussian targets,
where accuracy is measured in terms of the square root of symmetrized KL divergence between iterate average and optimal variational approximation.
The vertical lines indicate the termination rule trigger points of FASO and RABVI.
Iterate averages for Adam, RMSProp, and WAdagrad computed at every 200th iteration using a window 
size of 20\% of iterations. 
}
\label{fig:Accuracy-with-gaussian-targets}
\end{center}
\end{figure}

\subsection{Reliability Across Applications}\label{subsec:posteriordb-datasets} 

To validate the robustness and reliability of RABVI across realistic use cases, 
we consider 18 diverse dataset/model pairs found in the \texttt{posteriordb} package\footnote{\url{https://github.com/stan-dev/posteriordb}}  (see \cref{tab:posteriordb-datasets} for details). The \texttt{posteriordb} package contains a wide range of real-world data and models and
is specifically designed to provide realistic performance evaluations of approximate posterior inference algorithms.
The accuracy was computed based on ground-truth estimates obtained using the posterior draws included in \texttt{posteriordb} package
if available. Otherwise, we ran Stan's dynamic HMC algorithm \citep{stan20} to obtain the ground truth (4 chains for 50{,}000 iterations each).
To stabilize the optimization, we initialize the variational parameter estimates using RMSProp for the initial learning rate only. 
A comparison across optimization methods validates our choice of avgAdam over alternatives (\cref{fig:RABVI-with-posteriordb}). 

 \paragraph{Comparison to alternative optimization methods.}
To evaluate RABVI's effectiveness in real-world applications, we compared it against alternative optimization methods 
with both fixed and adaptive learning rate schedules. 
Based on the results described in \cref{subsec:gaussian-targets}, we opt to compare to Adam using either a fixed, 
exponential decay, or cosine learning rate schedule since they perform best overall in the Gaussian target experiments. 
Additionally, we include FASO, which used avgAdam, as another benchmark.
\Cref{fig:Accuracy-with-posteriordb-targets} shows RABVI is more consistent than all the alternative methods. 
While these methods sometimes matched RABVI's performance, RABVI's ability to identify an appropriate stopping 
point contributes to its overall efficiency, setting it apart from the competition.

 \begin{figure}[tbp]
\begin{center}
\begin{subfigure}[t]{.32\textwidth}
\centering
\includegraphics[width=\textwidth]{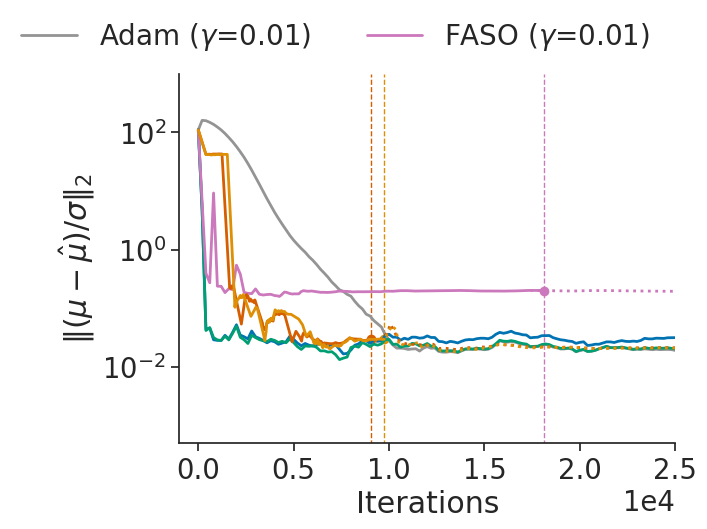}\\
\includegraphics[width=\textwidth]{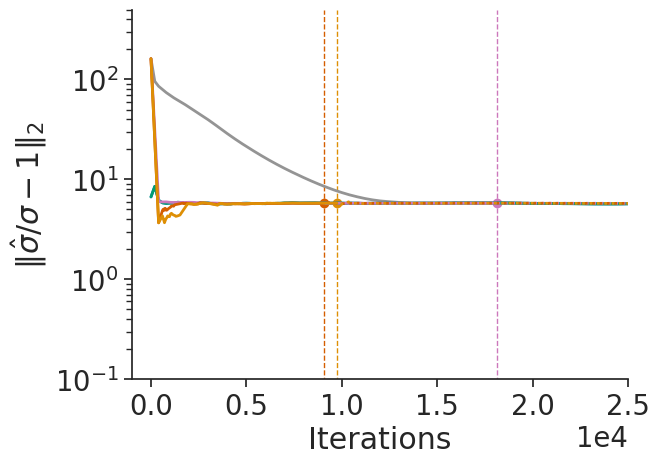}\\
\caption{arK} 
\end{subfigure}  
\begin{subfigure}[t]{.32\textwidth}
\centering
\includegraphics[width=\textwidth]{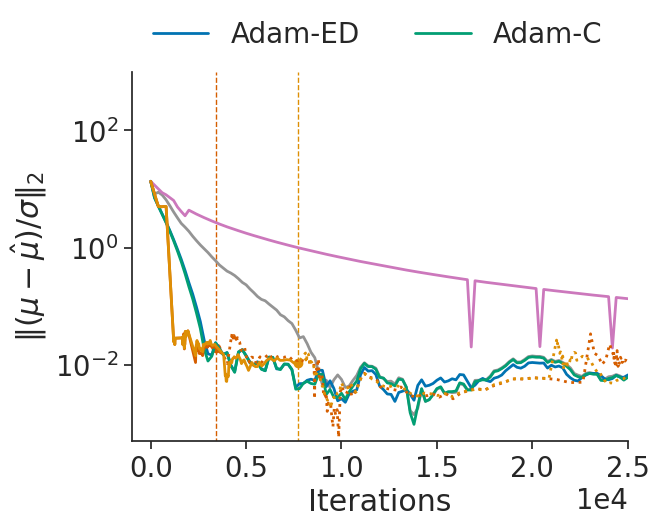}\\ %
\includegraphics[width=\textwidth]{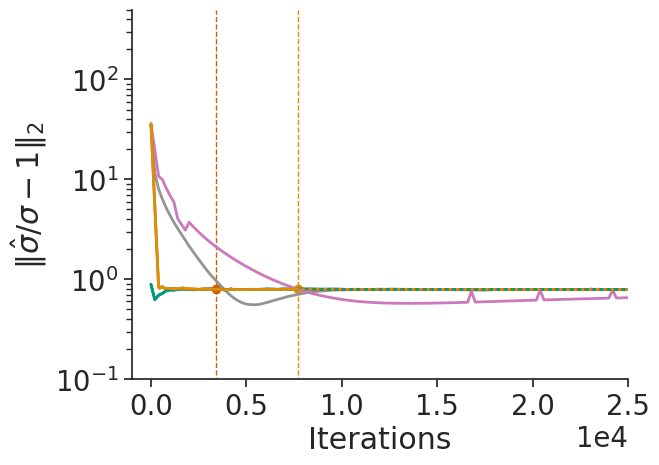}\\
\caption{dogs} 
\end{subfigure}  
\begin{subfigure}[t]{.32\textwidth}
\centering
\includegraphics[width=\textwidth]{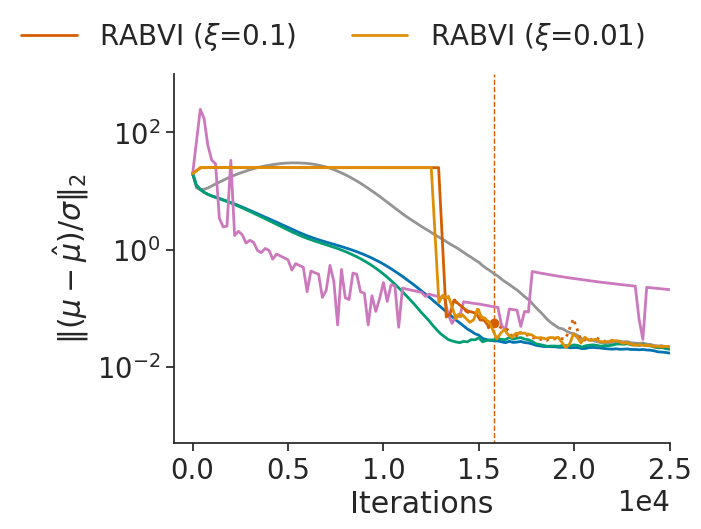}\\
\includegraphics[width=\textwidth]{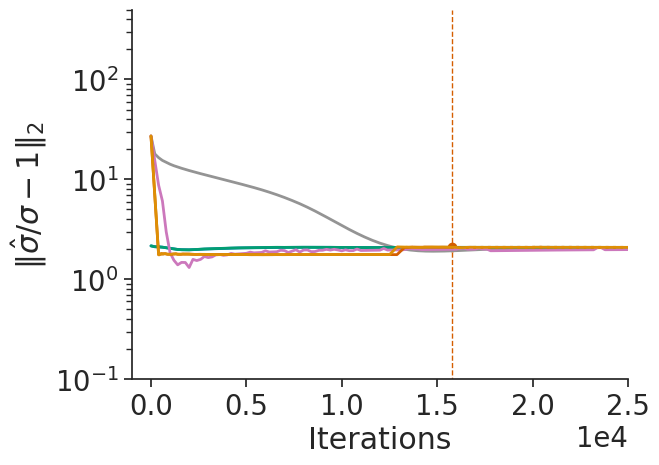}\\
\caption{nes2000} 
\end{subfigure}  
\begin{subfigure}[t]{.32\textwidth}
\centering
\includegraphics[width=\textwidth]{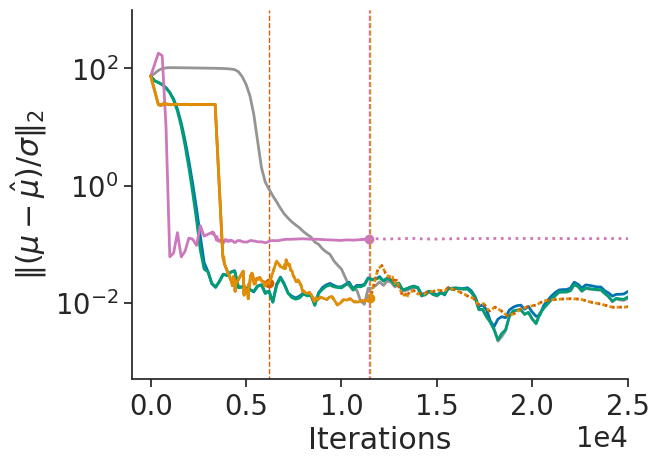}\\
\includegraphics[width=\textwidth]{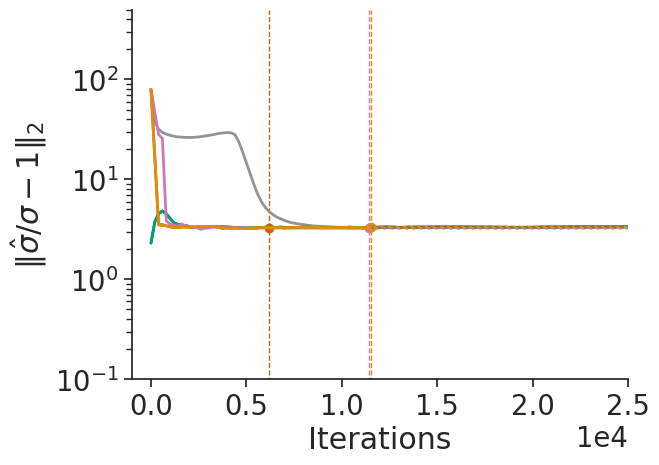}\\
\caption{low\_dim\_gauss} 
\end{subfigure}  
\begin{subfigure}[t]{.32\textwidth}
\centering
\includegraphics[width=\textwidth]{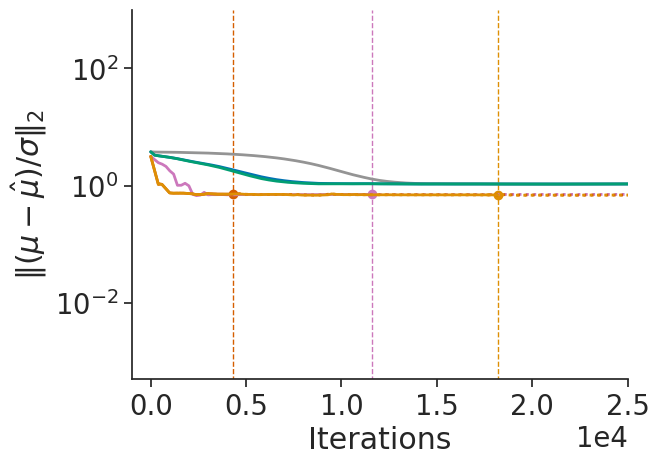}\\
\includegraphics[width=\textwidth]{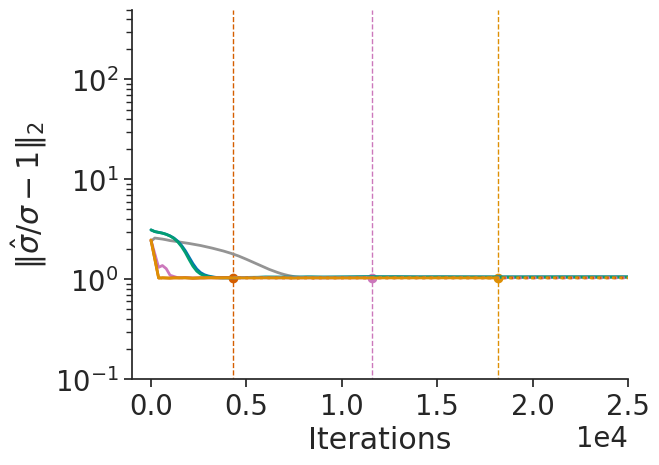}
\caption{8schools\_c} 
\end{subfigure} 
\begin{subfigure}[t]{.32\textwidth}
\centering
\includegraphics[width=\textwidth]{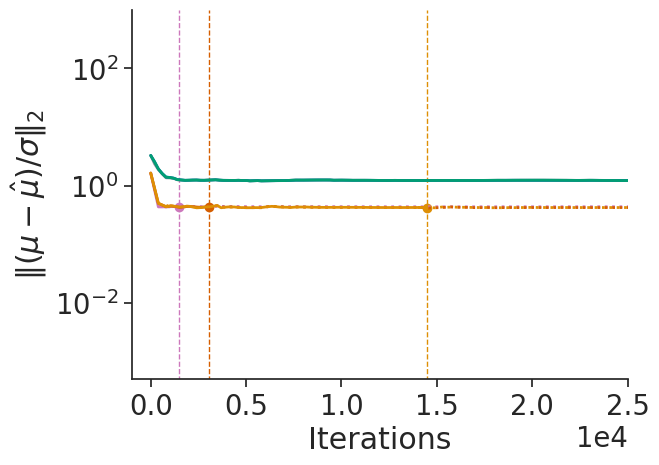}\\
\includegraphics[width=\textwidth]{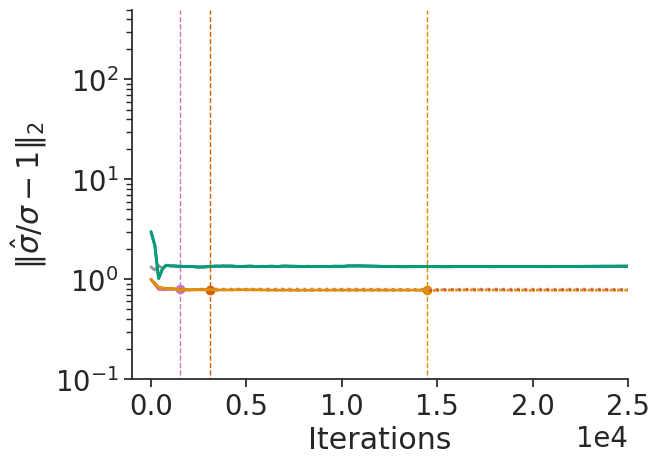}
\caption{8schools\_nc} 
\end{subfigure} 
\caption{
Accuracy comparison of variational inference algorithms using \texttt{posteriordb} models and datasets, where accuracy is measured in terms of relative mean error (top) and relative standard deviation error (bottom). The vertical lines indicate the termination rule trigger points of FASO and RABVI.
The iterate average for Adam is computed at every $200^{th}$ iteration using a window 
size of  20\% of iterations.}
\label{fig:Accuracy-with-posteriordb-targets}
\end{center}
\end{figure}

\paragraph{Accuracy and robustness.}
First, we investigate the accuracy and algorithmic robustness of RABVI. 
In terms of robustness, \Cref{fig:Posteriordb-accuracy-data-and-model-comparisons-mean,fig:Posteriordb-mf_vs_fr-data-and-model-comparisons-mean,fig:Posteriordb-mf_vs_fr-data-and-model-comparisons-std} validate our termination criteria since  
after reaching the termination point there is no considerable improvement of the accuracy for most of the models and datasets.
While in many cases the mean estimates are quite accurate, the standard deviation estimates tended to be poor, which is consistent with
typical behavior of mean-field approximations. 
To examine whether RABVI %
can achieve more accurate results with more flexible variational families, 
we conduct the same experiment using multivariate Gaussian approximation family and normalizing flow family using real-NVP flow with 2 hidden layers, 8 hidden units, and 3 coupling layers \citep{dinh:2017}. We employ FASO in our real-NVP experiments because the complexity of the approximation family prevents us from obtaining a closed form for the SKL divergence, which is necessary for computing the termination rule in RABVI. %
In some cases the accuracy of the mean and/or standard deviations estimates improve (\textit{bball\_0}, \textit{dogs\_log}, \textit{8schools\_c}, \textit{hudson\_lynx}, \textit{hmm\_example}, \textit{nes2000}, and \textit{sblrc}).
However, the results are inconsistent and sometimes worse due to the higher-dimensional, more challenging optimization problem. 
Our findings underscore the necessity of supplementing an improved optimization framework like RABVI with diagnostics for assessing the accuracy of the posterior approximation \citep{Yao:2018:VI,Huggins:2020:VI,Wang2023}.

\paragraph{Comparison to MCMC.}
We additionally benchmarked the runtime and accuracy of RABVI to \texttt{Stan}'s dynamic HMC algorithm,
for which we ran 1 chain for 25{,}000 iterations including 5{,}000 warmup iterations. 
We measure runtime in terms of the number of likelihood evaluations
and compared the relative error between the methods at the RABVI termination rule trigger point or final likelihood evaluation of HMC (whichever comes first).
\Cref{fig:Runtime-comparison-posteriordb,fig:Runtime-comparison-posteriordb-more,fig:Runtime-comparison-posteriordb-summary} show that RABVI tends to provide similar or better posterior mean estimates (the exceptions are \textit{gp\_pois\_regr}, \textit{hudson\_lynx}, and \textit{sblrc}). However, the RABVI standard deviation estimates tend to be less accurate even when using the full-rank Gaussian variational family. 
This could be because the optimization of full-rank Gaussian is more challenging having more variational parameters to estimate \citep{Bhatia:2022}. \Cref{fig:Runtime-comparison-posteriordb-mmd-summary} shows that, in terms of the MMD, the HMC approximation is closer to the target than BBVI as one would expect. Overall, the MMD values for RABVI are reasonably small.

We also compared the 95\% quantiles posterior under coverage error between RABVI and FASO methods using different approximation families and MCMC.  \Cref{fig:Runtime-comparison-posteriordb-cov-perc-summary} shows that HMC and real-NVP flows %
do not undercover the posterior. However, the mean-field and full-rank Gaussian families do.

\begin{figure}[tbp]
\begin{center}
\begin{subfigure}[t]{.32\textwidth}
\centering
\includegraphics[width=\textwidth]{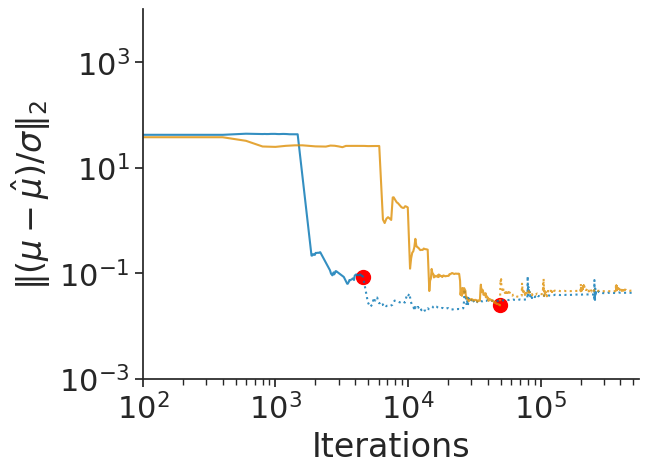}
\includegraphics[width=\textwidth]{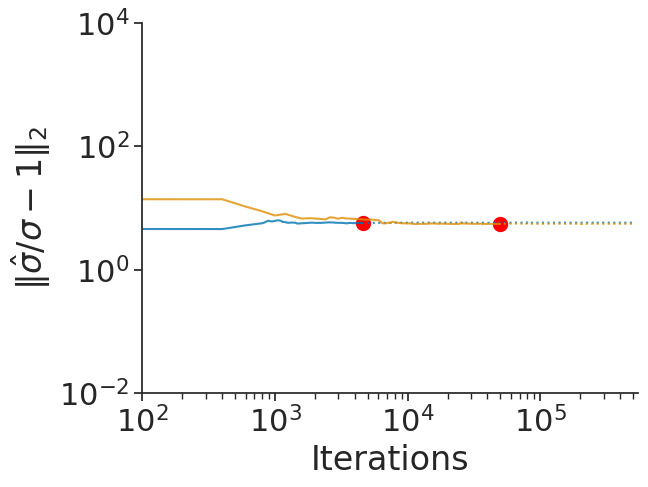}
\caption{arK} 
\end{subfigure} 
\begin{subfigure}[t]{.32\textwidth}
\centering
\includegraphics[width=\textwidth]{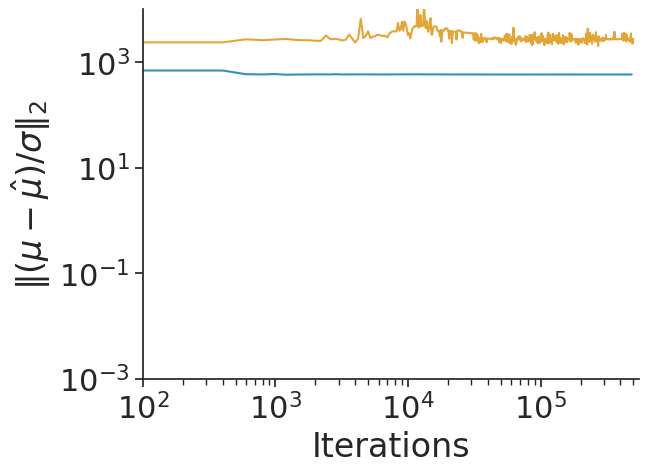}
\includegraphics[width=\textwidth]{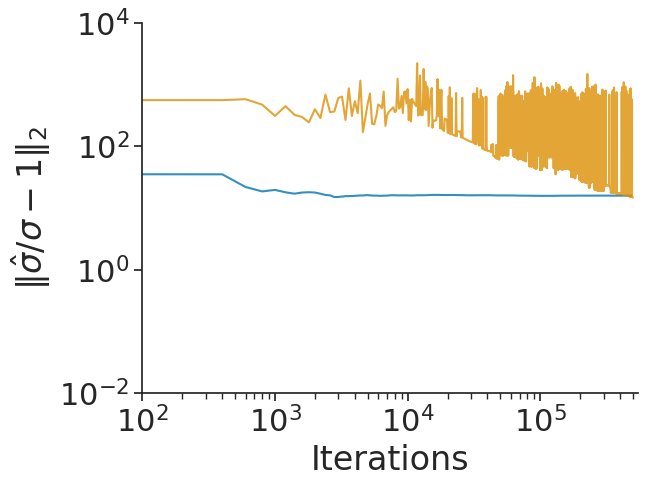}
\caption{diamonds} 
\end{subfigure}
\begin{subfigure}[t]{.32\textwidth}
\centering
\includegraphics[width=\textwidth]{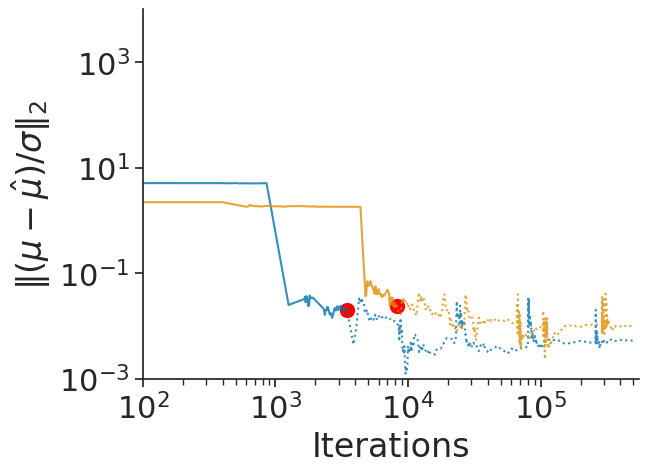}
\includegraphics[width=\textwidth]{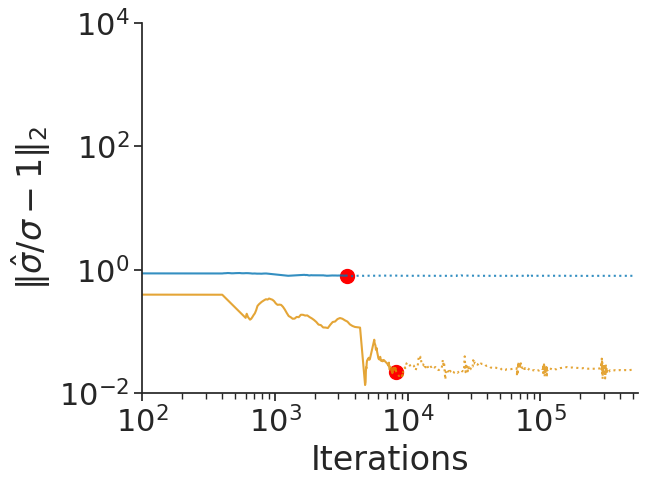}
\caption{dogs} 
\end{subfigure}
\begin{subfigure}[t]{.32\textwidth}
\centering
\includegraphics[width=\textwidth]{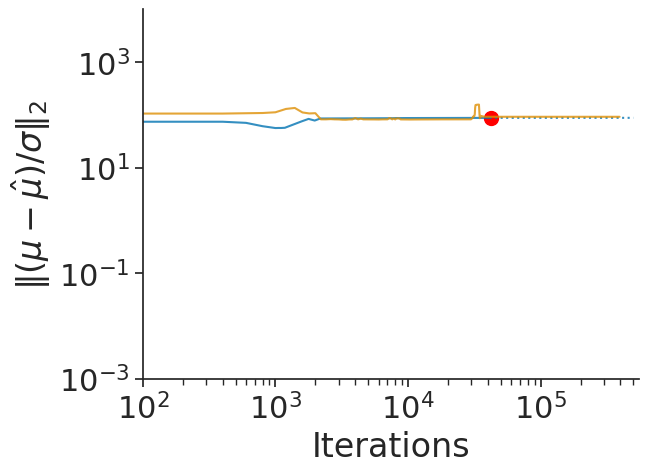}
\includegraphics[width=\textwidth]{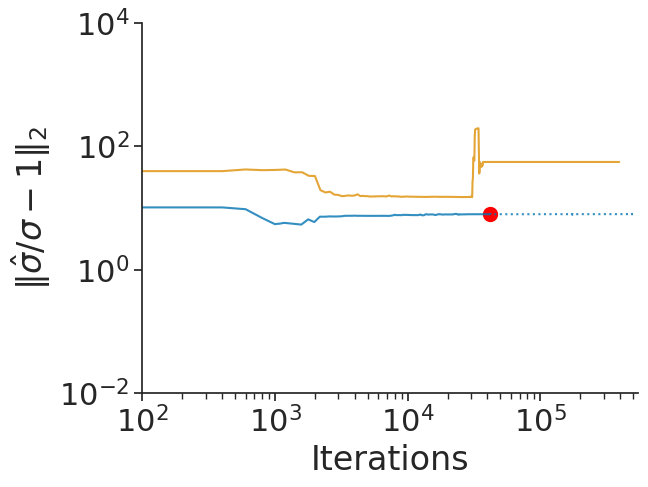}
\caption{gp\_pois\_regr} 
\end{subfigure}
\begin{subfigure}[t]{.32\textwidth}
\centering
\includegraphics[width=\textwidth]{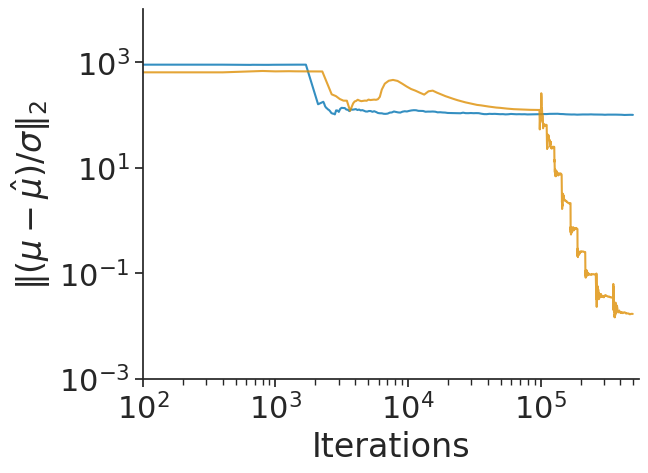}
\includegraphics[width=\textwidth]{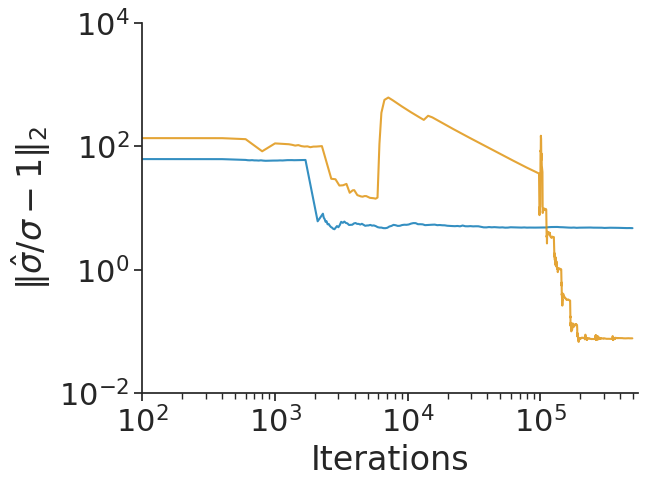}
\caption{sblrc} 
\end{subfigure}
\begin{subfigure}[t]{.32\textwidth}
\centering
\includegraphics[width=\textwidth]{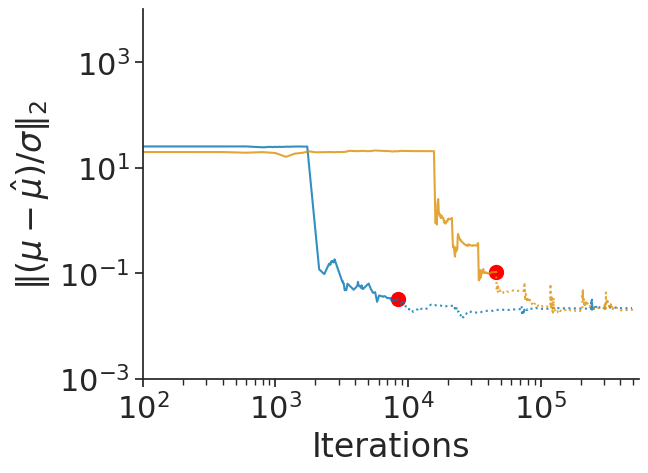}
\includegraphics[width=\textwidth]{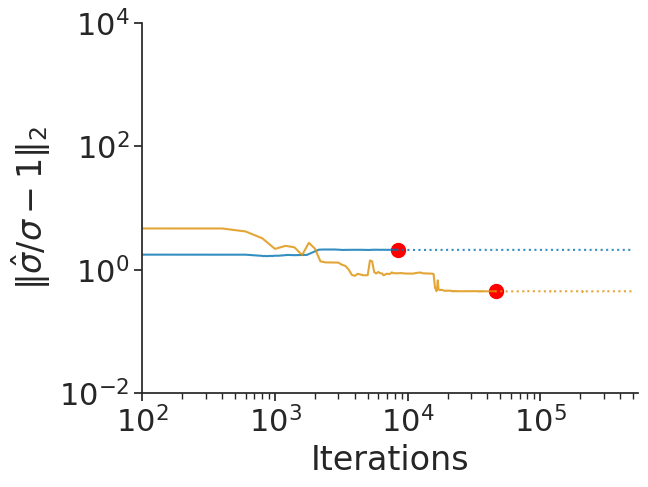}
\caption{nes2000} 
\end{subfigure}
\caption{
Accuracy of mean-field  (blue) and full-rank (orange) Gaussian family approximations for selected \texttt{posteriordb} data/models, where accuracy is measured in terms of relative mean error (top) and relative standard deviation error (bottom). 
The red dots indicate where the termination rule triggers.}
\label{fig:Posteriordb-accuracy-data-and-model-comparisons-mean}
\end{center}
\end{figure}

\begin{figure}[tbp]
\begin{center}
\centering
\includegraphics[width=.8\textwidth]{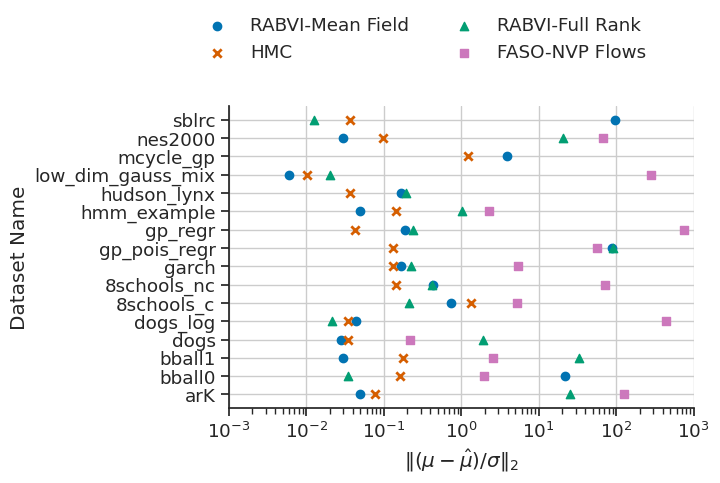} \\
\includegraphics[width=.8\textwidth]{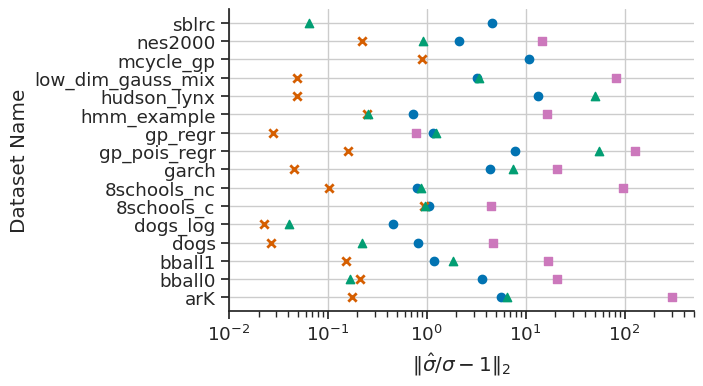}
\caption{
Results of RABVI with mean-field Gaussian and full-rank Gaussian family and FASO with real NVP flows comparison to dynamic HMC 
at the same computational cost (likelihood evaluations) in terms of relative mean error (top) and relative standard deviation error (bottom).}
\label{fig:Runtime-comparison-posteriordb-summary}
\end{center}
\end{figure}

\subsection{Robustness to Tuning Parameters: Ablation Study}
To validate the robustness of RABVI to different choices of algorithm tuning parameters, we consider the Gaussian targets and two \textit{posteriordb dataset/model pairs:  \textit{dogs} (logistic mixed-effects model) and \textit{arK} (AR(5) time-series model)}. 
We vary one tuning parameter while keeping the recommended default values for all others.
We consider the following values for each parameter (default in bold):
\begin{itemize}
\item initial learning rate $\learningrate_0$: $0.01, 0.1, \mathbf{0.3}, 0.5$
\item minimum window size $W_{\min}$: $100, \mathbf{200}, 300,500$
\item initial iterate average relative error threshold $\varepsilon_0$: $0.05, \mathbf{0.1}, 0.2, 0.5$
\item  inefficiency threshold $\tau$:  $0.1, 0.5, \mathbf{1.0}, 1.2$
\item Monte Carlo samples $M$: $1, 5, \mathbf{10}, 15, 25$.
\end{itemize}
We repeat each experimental condition 10 times to verify the robustness of different initializations of the variational parameters.
\Cref{fig:Robustness-tuning-parameters-gaussian,fig:Robustness-tuning-parameters-dogs-posteriordb,fig:Robustness-init-learning-rate,fig:Robustness-minimum-window-size-gaussian,fig:Robustness-mcse-threshold-gaussian,fig:Robustness-inefficiency-threshold-gaussian,fig:Robustness-mc-samples-gaussian,fig:Robustness-tuning-parameters-ark-posteriordb} 
suggest that overall the accuracy and runtime of RABVI is not too sensitive to the choice of the tuning parameters. 
However, extreme tuning parameter choices (e.g., $\learningrate = 0.01$ or $M \le 5$) can lead to longer runtimes.

\begin{figure}[tbp]
\begin{center}
\begin{subfigure}[t]{.24\textwidth}
\centering
\includegraphics[width=\textwidth]{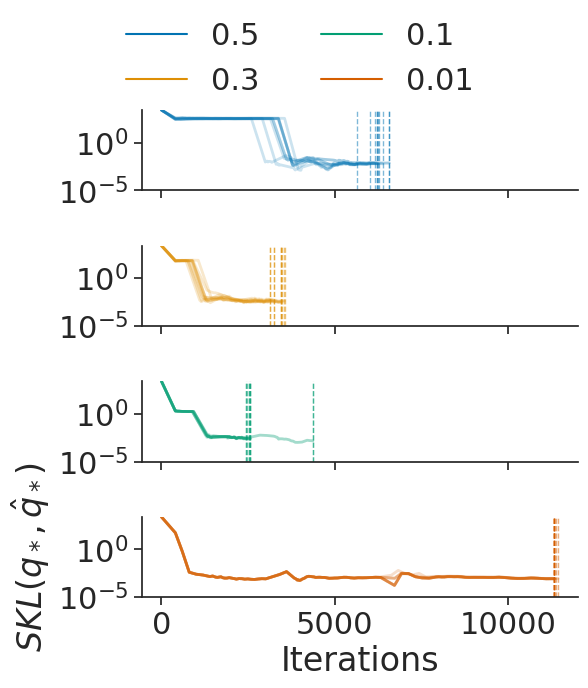}\\
\includegraphics[width=\textwidth]{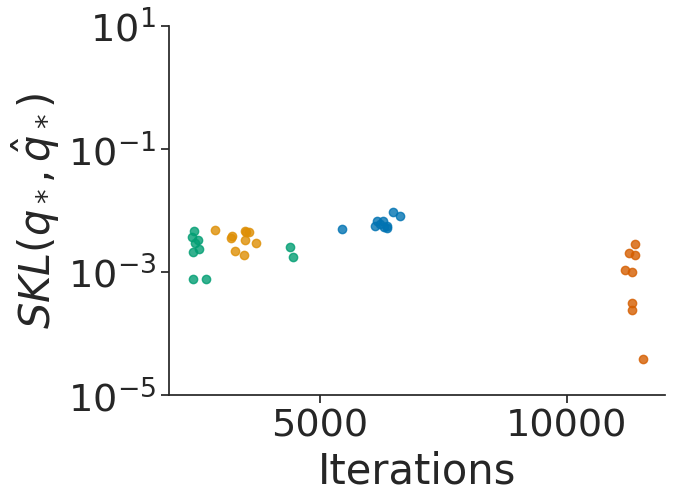}
\caption{$\learningrate_{0}$} 
\end{subfigure} 
\begin{subfigure}[t]{.24\textwidth}
\centering
\includegraphics[width=\textwidth]{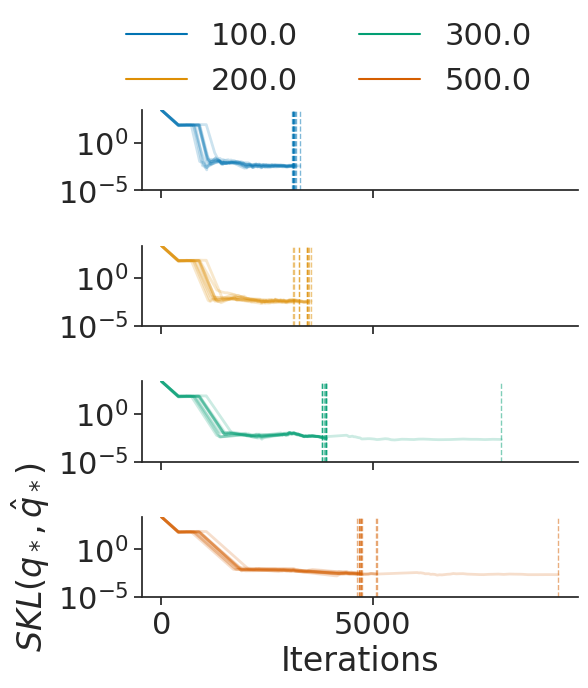} \\
\includegraphics[width=\textwidth]{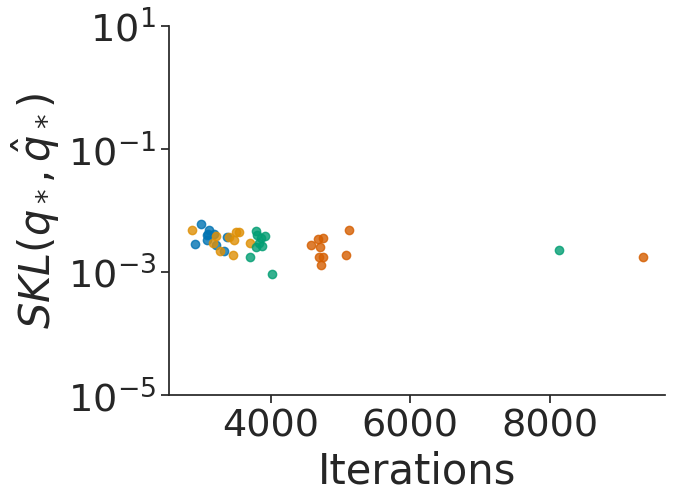}
\caption{$W_{min}$} 
\end{subfigure} 
\begin{subfigure}[t]{.24\textwidth}
\centering
\includegraphics[width=\textwidth]{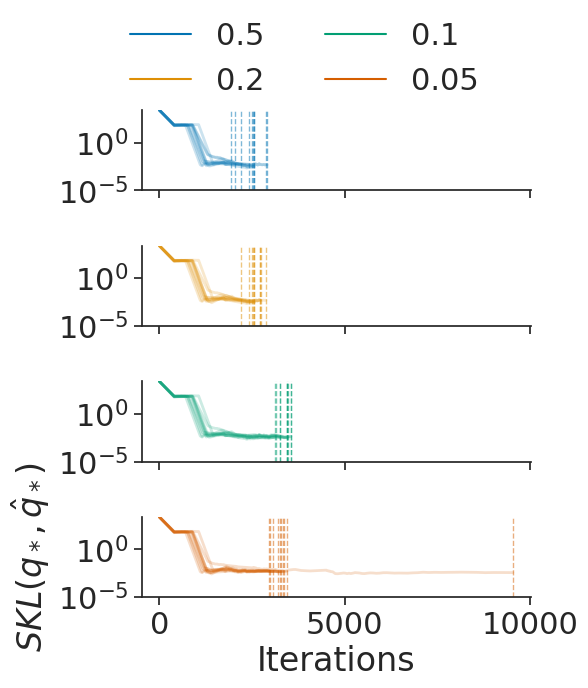} \\
\includegraphics[width=\textwidth]{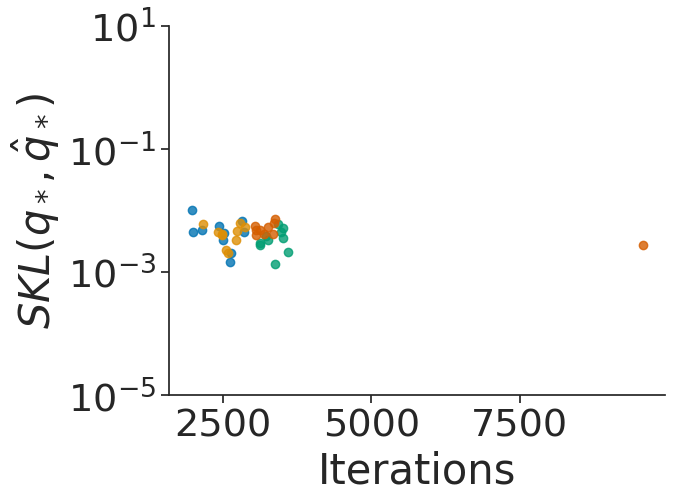}
\caption{$\varepsilon_{0}$} 
\end{subfigure} 
\begin{subfigure}[t]{.24\textwidth}
\centering
\includegraphics[width=\textwidth,height=0.198\textheight]{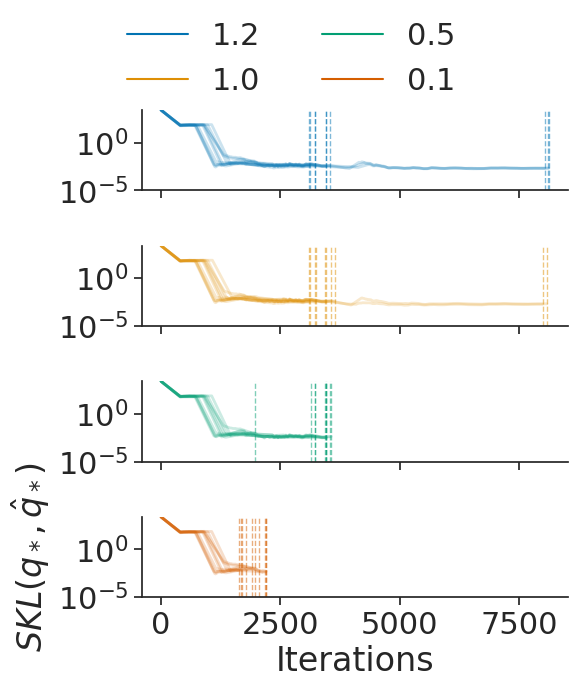} \\
\includegraphics[width=\textwidth,height=0.128\textheight]{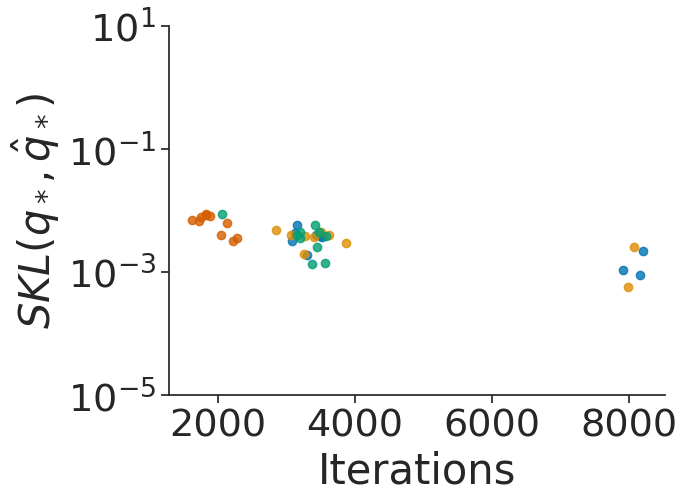}
\caption{$\tau$} 
\end{subfigure}
\caption{
Robustness to tuning parameters (a) initial learning rate $\learningrate_{0}$, (b) minimum window size $W_{\min}$, (c) initial iterate average relative error threshold $\varepsilon_{0}$, and (d)  inefficiency threshold $\tau$ using \textit{dogs}  dataset from \texttt{posteriordb} package.
\textbf{(top)} Iterations versus symmetrized KL divergence between iterate average and optimal variational approximation. 
The transparent lines represent repeated experiments and the vertical lines indicate the termination rule trigger points.
\textbf{(bottom)} Iterations versus symmetrized KL divergence between iterate average and optimal variational approximation
at the termination rule trigger point.}
\label{fig:Robustness-tuning-parameters-dogs-posteriordb}
\end{center}
\end{figure}

\section{Discussion}

As we have shown through both theory and experiments, RABVI, our stochastic optimization framework for black-box variational inference,
delivers a number of benefits compared to existing approaches:
\bitems
\item The user only needs to, at most, adjust a small number of tuning parameters which empowers the user to intuitively control and trade off
computational cost and accuracy. Moreover, RABVI is robust, both in terms of accuracy and computational cost, 
to small changes in all tuning parameters. 
\item Our framework can easily incorporate different stochastic optimization methods such as adaptive versions, natural gradient descent,
and stochastic quasi-Newton methods.  
In practice, we found that the averaged versions of RMSProp and Adam we propose perform particularly well.
However the performance of RABVI will benefit from future innovations in stochastic optimization methodology.
\item RABVI allows for any choice of tractable variational family and stochastic gradient estimator. 
For example, in many cases we find accuracy improves when using the full-rank Gaussian variational family rather than the mean-field one. 
\eitems
Our empirical results also highlight some of the limitations of BBVI, which sometimes is less accurate than dynamic HMC 
when given equal computational budgets.
However, BBVI can be further sped up using, for example, data subsampling when the dataset size is large (which was not the case for the 
\texttt{posteriodb} datasets from our experiments).

\section*{Acknowledgments}
M.~Welandawe and J.~H.~Huggins were supported by the National Institute of General Medical Sciences of the National Institutes of Health under grant number R01GM144963 as part of the Joint NSF/NIGMS Mathematical Biology Program. 
The content is solely the responsibility of the authors and does not necessarily represent the official views of the National Institutes of Health. A.~Vehtari was supported by Research Council of Finland Flagship programme: Finnish Center for Artificial Intelligence, FCAI, and M.~R.~Andersen was supported by Innovation Fund Denmark (grant number 8057-00036A).

\bibliographystyle{imsart-nameyear}
\bibliography{library,deep-models,robust-vi,extra}

\clearpage

\appendix

\numberwithin{equation}{section}
\numberwithin{figure}{section}
\numberwithin{table}{section}

\section{Proofs}

\subsection{Proof of \cref{prop:SKL-small-learning-rate}} \label{sec:SKL-small-learning-rate-proof}

Since the KL divergence of mean-field Gaussians factors across dimensions, without loss of generality we consider the case of $\dim = 1$.
The symmetrized KL divergence between two Gaussians is 
\[
\skl{\varparam_{1}}{\varparam_{2}} 
&= \frac{1}{2}\left\{ e^{2\psi_{1} - 2\psi_{2}} + e^{2\psi_{2} - 2\psi_{1}} + (\tau_{1} - \tau_{2})^{2}(e^{-2\psi_{1}} + e^{-2\psi_{2}}) - 2\right\} \\
&= \frac{1}{2}\left\{ e^{2\psi_{1} - 2\psi_{2}} + e^{2\psi_{2} - 2\psi_{1}} + (\tau_{1} - \tau_{2})^{2}e^{-2\psi_{1}}(1 + e^{2\psi_{1} - 2\psi_{2}}) - 2\right\}.
\]
Recall our assumption that for some constants $A = (A_{\tau}, A_{\psi})$ and $B = (B_{\tau}, B_{\psi})$, 
\[
\meansgditerate{\learningrate} = \optvarparam + A\learningrate  + B\learningrate^{2} + o(\learningrate^{2}).
\]
Hence
\[
e^{2\bar\psi_{\learningrate} - 2\psi_{*}} 
&= 1 +  2A_{\psi} \learningrate +  2B_{\psi} \learningrate^{2} + \frac{1}{2}(2A_{\psi}\learningrate + 2B_{\psi} \learningrate^{2})^{2} + o(\learningrate^{2}) \\
&= 1 + 2A_{\psi}\learningrate  + 2B_{\psi}\learningrate^{2}  + 2A_{\psi}^{2}\learningrate^{2}  + o(\learningrate^{2})
\]
and similarly $e^{2\psi_{*} - 2\bar\psi_{\learningrate}} = 1 - 2A_{\psi}\learningrate  - 2B_{\psi}\learningrate^{2}  + 2A_{\psi}^{2}\learningrate^{2}  + o(\learningrate^{2})$.
Therefore $e^{2\bar\psi_{\learningrate} - 2\psi_{*}} + e^{2\psi_{*} - 2\bar\psi_{\learningrate}} - 2 = 4A_{\psi}^{2}\learningrate^{2} + o(\learningrate^{2})$
and $(\bar\tau_{\learningrate} - \tau_{*})^{2} = A_{\tau}^{2}\learningrate^{2} + o(\learningrate^{2})$. 
Putting these results together, we have 
\[
\skl{\meansgditerate{\learningrate}}{\optvarparam} 
&=  \frac{1}{2}\Big[4A_{\psi}^{2}\learningrate^{2}  + o(\learningrate^{2}) +  A_{\tau}^{2}\learningrate^{2} e^{-\psi_{*}}\{2  -  2A_{\psi}\learningrate - 2B_{\psi}\learningrate^{2}  + 2\learningrate^{2} A_{\psi}^{2} + o(\learningrate^{2})\} \Big] \\
&= (2A_{\psi}^{2} + A_{\tau}^{2}e^{-2\psi_{*}}) \learningrate^{2} + o(\learningrate^{2}).
\]
Similarly, $\skl{\meansgditerate{\learningrate}}{\meansgditerate{\learningrate'}}  =  (\learningrate-\learningrate')^{2}(2A_{\psi}^{2} + A_{\tau}^{2}e^{-2\psi_{*}}) + o(\learningrate^{2})$ as long as $\learningrate' = O(\learningrate)$.
In other words, letting $\meansgditerate{0} \defined \optvarparam$, we have the relation 
\[
\skl{\meansgditerate{\learningrate}}{\meansgditerate{\learningrate'}} = C(\learningrate-\learningrate')^{2} + o(\learningrate^{2})
\]
for some unknown constant $C$.

\subsection{Proof of \cref{prop:SKL-small-learning-rate-general}} \label{sec:SKL-small-learning-rate-general-proof}

Define the Fisher information matrix $\fishinf{\varparam} = \EE_{\param \dist \approxdensity_{\varparam}} \left[- \hessian_{\varparam} \log\{\approxdensity_{\varparam}(\param)\} \right]$. 
Since $\varparam \mapsto \approxdensity_{\varparam}(\param)$ is three-times differentiable, locally the KL divergence behaves in a quadratic form \citep{amari:2016}:
\[
\skl{\varparam_{1}}{\varparam_{2}} 
&= \frac{1}{2}\left\{ (\varparam_{1} - \varparam_{2})^\top \fishinf{\varparam_{2}} (\varparam_{1} - \varparam_{2}) + (\varparam_{2} - \varparam_{1})^\top\fishinf{\varparam_{1}}  (\varparam_{1} - \varparam_{2}) \right\} + o(\norm{\varparam_{1} - \varparam_{2}}^2).
\]
Moreover, by taking first-order Taylor expansion, we have 
\[
 \fishinf{\varparam_{i}} = \fishinf{\optvarparam} + O(\statictwonorm{\optvarparam - \varparam_{i}}). 
\]

If $\biaspower = 1$, recall that for some constant vectors $A$ and $B$,  
\[
\meansgditerate{\learningrate} = \optvarparam + A\learningrate  + B\learningrate^{2} + o(\learningrate^{2}).
\]
Assume that $\learningrate_{1} = \learningrate$ and $\learningrate_{2} = O(\learningrate)$, so 
$\statictwonorm{\optvarparam - \meansgditerate{\learningrate_{i}}} = O(\learningrate)$. 
For $\biaspower' > 0$, let $\eps_{\biaspower'} \defined \gamma_{1}^{\biaspower'} - \gamma_{2}^{\biaspower'}$.
Then we have 
\[
\begin{split}
\skl{\meansgditerate{\learningrate_{1}}}{\meansgditerate{\learningrate_{2}}}
&= \frac{1}{2} \Bigg[ (A \eps_{1} + B \eps_{2} + o(\learningrate^{2}))^\top (\fishinf{\optvarparam} + O(\gamma))(A \eps_{1} + B \eps_{2} + o(\learningrate^{2})) \\
&\phantom{=  \frac{1}{2} \Bigg[} +  (-A \eps_{1} - B \eps_{2} - o(\learningrate^{2}))^\top (\fishinf{\optvarparam} + O(\gamma)) (-A \eps_{1} - B \eps_{2} - o(\learningrate^{2}))  \Bigg]  \\
&\phantom{=  \frac{1}{2} \Bigg[} + o(\statictwonorm{A \eps_{1} + B \eps_{2} + o(\learningrate^{2})}^2)
\end{split} \\
&= A^\top  \fishinf{\optvarparam} A \eps_{1}^2  +   o(\learningrate^{2})\\
&= C  (\gamma_1 - \gamma_2)^2 +  o(\learningrate^{2}),
\]
where $C = A^\top  \fishinf{\optvarparam}  A$.

If $\biaspower \in [1/2,1)$, recall that for some constant vectors $\Lambda$ and $A$,  
\[
\meansgditerate{\learningrate} = \optvarparam + \Lambda\learningrate^{\biaspower} +  A\learningrate  + o(\learningrate^{2\biaspower}).
\]
Assume that $\learningrate_{1} = \learningrate$ and $\learningrate_{2} = O(\learningrate)$, so 
$\statictwonorm{\optvarparam - \meansgditerate{\learningrate_{i}}} = O(\learningrate^{\biaspower})$. 
Then we have
\[
\begin{split}
\skl{\meansgditerate{\learningrate_{1}}}{\meansgditerate{\learningrate_{2}}}
&= \frac{1}{2} \Bigg[ (\Lambda \eps_{\biaspower} + A \eps_{1} + o(\learningrate^{2\biaspower}))^\top (\fishinf{\optvarparam} + O(\learningrate^{\biaspower})) (\Lambda \eps_{\biaspower} + A \eps_{1} + o(\learningrate^{2\biaspower})) \\
&\phantom{=  \frac{1}{2} \Bigg[} +  (-\Lambda \eps_{\biaspower} - A \eps_{1} - o(\learningrate^{2\biaspower}))^\top (\fishinf{\optvarparam} + O(\learningrate^{\biaspower}))(-\Lambda \eps_{\biaspower} - A \eps_{1} - o(\learningrate^{2\biaspower})) \Bigg] \\
&\phantom{=  \frac{1}{2} } + o(\statictwonorm{\Lambda \eps_{\biaspower} + A \eps_{1} + o(\learningrate^{2\biaspower})}^2)
\end{split} \\
\begin{split}
&= \Lambda^\top \fishinf{\optvarparam} \Lambda \eps_{\biaspower}^2 +  
2 \Lambda^\top  \fishinf{\optvarparam} A   \eps_{\biaspower}  \eps_{1} +  A^\top  \fishinf{\optvarparam} A \eps_{1}^2 
+ o(\learningrate^{2\biaspower})
\end{split} \\
&= \Lambda^\top   \fishinf{\optvarparam}  \Lambda \eps_{\biaspower}^2 + O(\gamma^{1 + \biaspower}) + o(\learningrate^{2\biaspower})   \\
&= C  (\gamma_1^\biaspower - \gamma_2^\biaspower)^2 +  o(\learningrate^{2\biaspower}),
\]
where $C = \Lambda^\top   \fishinf{\optvarparam}  \Lambda$.

\subsection{Symmetric KL divergence termination rule}

We can use \cref{prop:SKL-small-learning-rate-general} to derive a termination rule. 
If $\learningrate$ is the current learning rate, the previous learning rate was $\learningrate/\rho$.
Ignoring $o(\learningrate^{2\biaspower})$ terms, we have
\[
\delta_{\learningrate}
\defined  \skl{\meansgditerate{\learningrate/\rho}}{\meansgditerate{\learningrate}} 
= C \learningrate^{2\biaspower} (1/\rho^{\biaspower} - 1)^{2}.
\]
Therefore, we can estimate $p$ and $C$ using a regression model of the form
\[
\log \delta_{\learningrate} = \log C + 2 \log  (1/\rho^{\biaspower} - 1) + 2p \log \learningrate.
\]
Given estimates $\hat{\biaspower}$ and $\hC$, we can estimate $\skl{\meansgditerate{\learningrate}}{\optvarparam} \approx \hC \gamma^{2\hat{\biaspower}}$.

\subsection{Proof of \cref{prop:MF-Gaussian-error}} \label{sec:MF-Gaussian-error-proof}

Since for $|x| < 1/2$ it holds that $|\exp(x) - 1| \le 1.5|x|$, we have 
\[
\frac{|\hat\sigma_{i} - \bar\sigma_{i}| }{\bar\sigma_{i}}
&= |\exp(\hat\psi_{i} - \bar\psi_{i}) - 1| 
\le 1.5\varepsilon
\]
and
\[
\frac{|\hat\tau_{i} - \bar\tau_{i}| }{\bar\sigma_{i}}
&\le \varepsilon' \hat\sigma_{i}/\bar\sigma_{i}  
\le \varepsilon'(1 + 1.5\varepsilon) 
\le 1.75\varepsilon. 
\]

\subsection{Proof of \cref{prop:optimal-MCSE-rechecking}} \label{sec:optimal-MCSE-rechecking-proof}

In the optimal case, the total cost for the iterations used for iterate averaging is 
\[
\mathrm{OPT}
&= C_{O}W_\mathrm{opt} + C_{E}W_\mathrm{conv} + C_{E}W_\mathrm{opt} \\
&= C_{E}(r W_\mathrm{opt} + W_\mathrm{conv} + W_\mathrm{opt}).
\]
On the other hand, if checking at window sizes $W_{j} \defined \chi^{j}W_\mathrm{conv}\; (j=1,2,\dots)$, 
then the window size at which convergence will be detected is $j_{*} \defined \lceil \log(W_\mathrm{opt}/W_\mathrm{conv})/\log(\chi) \rceil$.
In particular , 
\[
W_{j_{*}} \le \chi W_\mathrm{opt}. 
\]
Therefore we can bound the actual computational cost as 
\[
\lefteqn{C_{O}W_{j_{*}} + C_{E}W_\mathrm{conv}\log W_\mathrm{conv} + C_{E}\sum_{j=1}^{j_{*}}W_{j}} \\
&\le C_{O}\chi W_\mathrm{opt} + C_{E}W_\mathrm{conv}\log W_\mathrm{conv} + C_{E}\sum_{j=1}^{j_{*}}\chi^{j}W_\mathrm{conv} \\
&\le C_{O}\chi W_\mathrm{opt} + C_{E}W_\mathrm{conv}\log W_\mathrm{conv} + C_{E}\frac{\chi (\chi^{j_{*}} - 1) W_\mathrm{conv}}{\chi - 1} \\
&\le C_{O}\chi W_\mathrm{opt} + C_{E}W_\mathrm{conv}\log W_\mathrm{conv} + C_{E}\frac{\chi^{2}W_\mathrm{opt}}{\chi - 1} \\
&=  C_{E}\left[\chi r W_\mathrm{opt} + W_\mathrm{conv} + \frac{\chi^{2}}{\chi - 1} W_\mathrm{opt}\right].
\]
If $\chi = 2$, then the actual computational cost is bounded by
\[
{C_{E} \left[ 2 r W_\mathrm{opt} + W_\mathrm{conv} + 4  W_\mathrm{opt}\right]}
& \le  4 C_{E} \left[  r W_\mathrm{opt} + W_\mathrm{conv} + W_\mathrm{opt}\right] \\
& =  4\,\mathrm{OPT}.
\]
On the other hand, we can minimize the total cost by solving $\chi_{*} \defined \argmin_{\chi > 1} \{r\chi + \chi^{2}/(\chi - 1)\} = \chi(r)$. 
Plugging this back in, we get the bound 
\[
\lefteqn{C_{E}\frac{2 + r + 2 \sqrt{1 + r}}{1 + r}\left(r W_\mathrm{opt} + W_\mathrm{conv}\log W_\mathrm{conv} + W_\mathrm{opt}\right)} \\
&= \frac{2 + r + 2 \sqrt{1 + r}}{1 + r}  \mathrm{OPT} \\
&< 4\, \mathrm{OPT}.
\]

\section{Further Details}

\subsection{Effective sample size (ESS) and Monte Carlo standard error (MCSE)} \label{sec:ESS-MCSE}

Let $\chain{1}, \chain{2},\dots$ denote a stationary Markov chain,
let $\bar v \defined \EE[\chain{1}]$ denote the mean at stationarity, 
and let $\hat v \defined K^{-1}\sum_{k=1}^{K}\chain{k}$ denote the Monte Carlo estimate for $\bar v$. 
The (ideal) effective sample size is defined as
\[
\mathrm{ESS}(K) \defined K/(1 + \sum_{\iternum=1}^{\infty}\rho_{\iternum}), 
\]
where $\rho_{\iternum}$ is the
autocorrelation of the Markov chain at lag $\iternum$. 
The ESS can be efficiently estimated using a variety of methods \citep{Geyer:1992,Vehtari:2021:R-hat}. 
We write $\widehat{\mathrm{ESS}}(\chain{1:K})$ to denote an estimator for $\mathrm{ESS}(K)$ based 
on the sequence $\chain{1:K} \defined (\chain{1},\dots,\chain{K})$. 
The Monte Carlo standard error of $\hat v$ is given by 
\[
MCSE(\hat v) \defined \sigma(\chain{1})/\mathrm{ESS}(K),
\]
where  $\sigma(\chain{1})$ denote the standard deviation 
of the random variable $\chain{1}$. 
Given the empirical standard deviation of $\chain{1},\dots,\chain{K}$, which we denote $\hat\sigma(\chain{1:K})$, the MCSE
can be approximated by 
\[
\widehat{\mathrm{MCSE}}(\chain{1:K}) \defined \hat\sigma(\chain{1:K})/\widehat{\mathrm{ESS}}(\chain{1:K}). 
\]

\subsection{Total variation distance and KL divergence} \label{sec:Pinskers-inequality}

For distributions $\eta$ and $\zeta$,
$|I_{\eta,i,a,b} - I_{\zeta,i,a,b}| \le \sqrt{\kl{\eta}{\zeta}/2}$ for all $a < b$ and $i$. 
This guarantee follows from Pinsker's inequality, 
which relates the KL divergence to the total variation distance 
\[
d_\mathrm{TV}(\eta, \zeta) \defined  \sup_{f : \normarg{\infty}{f} \le 1} \left|\int f(\param)\eta(\dee\param) - \int f(\param)\zeta(\dee\param)\right|,
\]
where $\normarg{\infty}{f} \defined \sup_{\param}f(\param) - \inf_{\param}f(\param)$. 
Specifically, Pinsker's inequality states that $d_\mathrm{TV}(\eta, \zeta) \le \sqrt{\kl{\eta}{\zeta}/2}$. 
Thus, small KL divergence implies small total variance distance, which implies
the difference between expectations for any function $f$ such that $\normarg{\infty}{f}$ is small.
In the case of interval probabilities, since $\normarg{\infty}{ \ind(\cdot \in [a, b])} = 1$, it follows that 
\[
|I_{\eta,i,a,b} - I_{\zeta,i,a,b}|
&= \left|\int \ind(\param \in [a, b])\eta(\dee\param) - \int \ind(\param \in [a, b])\zeta(\dee\param)\right| \\
&\le \normarg{\infty}{ \ind(\cdot \in [a, b])}d_\mathrm{TV}(\eta, \zeta)  \\
&\le \sqrt{\kl{\eta}{\zeta}/2}.
\]

\subsection{Adaptive SASA+} \label{sec:SASA}

The SASA+ algorithm of \citet{Zhang:2020:SALSA} generalizes the approach of \citet{Yaida:2019}.
The main idea is to find an appropriate \emph{invariant function} $\invariant(d, \varparam)$ that satisfies 
$\int \invariant(d, \varparam)\stationarydist{\learningrate}(\dee d, \dee\varparam) = 0$.
\citet{Yaida:2019} derived valid forms of $\invariant$ for specific choices of the descent direction, while \citet{Zhang:2020:SALSA}
showed that for any optimizer of the form \cref{eq:stochastic-opt} that is time-homogenous with $\learningrateiterate{\iternum} = \learningrate$, 
the map $(d, \varparam) \mapsto 2\inner{d}{\varparam} - \learningrate\norm{d}^{2}$ is a valid invariant function. 
The SASA+ algorithm proposed by \citet{Zhang:2020:SALSA} uses a hypothesis test to determining when the iterates are sufficiently close to stationarity.
Let $\invariantiterate{\iternum} \defined 2\inner{\descentdir{\iternum}}{\sgditerate{\iternum}} - \learningrate\norm{\descentdir{\iternum}}^{2}$
and let $W = \lceil\windowprop \iternum\rceil$ denote the window size to use for checking stationarity. 
Once $W$ is at least equal to a minimum window size $W_{\min}$, SASA+ uses $\invariantiterate{\iternum-W+1},\dots,\invariantiterate{\iternum}$
to carry out a hypothesis test, where the null hypothesis is that $\EE[\invariantiterate{k}] = 0$.

We make several adjustments to reduce the number of tuning parameters and to make the remaining ones more intuitive. 
Note that the SASA+ convergence criterion requires the choice of three parameters: $\windowprop$, $W_{\min}$, and the size of the hypothesis test $\alpha$. 
\citet{Zhang:2020:SALSA} showed empirically and our numerical experiments confirmed that the choice of $\alpha$ has little effect and 
therefore does not need to be adjusted by the user. 
The choices for $\windowprop$ and $W_{\min}$, however, have a substantial effect on efficiency. 
If $\windowprop$ is too big, then early iterations that are not at stationarity will be included, preventing the detection of convergence.
On the other hand, if $\windowprop$ is too small, then the total number of iterations must be large (specifically, greater than $W_{\min}/\windowprop$)
before the window size is large enough to trigger the first check for stationarity.
Moreover, the correct choice of $W_{\min}$ will vary depending on the problem.
If the iterates have large autocorrelation then $W_{\min}$ should be large, while if the autocorrelation is small or negative,
then $W_{\min}$ can be small.

Our approach to determining the optimal window size instead relies on the effective sample size (ESS).
$W_\mathrm{opt}$ that maximizes 
$\widehat{\mathrm{ESS}}(W) \defined \widehat{\mathrm{ESS}}(\invariantiterate{\iternum-W+1},\dots,\invariantiterate{\iternum})$,
where $\iternum$ is the current iteration. 
To ensure reliability, we impose additional conditions on $W_\mathrm{opt}$.
First, we require that $\widehat{\mathrm{ESS}}(W_\mathrm{opt}) \ge N_{\min}$, a user-specified minimum effective sample size.
Unlike $W_{\min}$, $N_{\min}$ has an intuitive and direct interpretation. 
The second condition is, when finding $W_\mathrm{opt}$, the search over values of $W$ is constrained to the lower bound of $N_{\min}$ 
(to ensure the estimator is sufficiently reliable) and the upper bound of $0.95\iternum$ (to always allow for some ``burn-in'').
In practice we do not check all $W \in \theset{N_{\min},\dots, 0.95\iternum}$, but rather perform a grid search 
over 5 equally spaced values ranging from $N_{\min}$ to $0.95\iternum$. 

A slightly different version which may improve power is to instead define the multivariate invariant function
$\vec\invariant(d, \varparam) = 2d \odot \varparam -  d \odot d$, where we recall that $\odot$ denotes component-wise 
multiplication. 
In this case a multivariate hypothesis test such as Hotelling's $T^{2}$ test or the multivariate sign test,
where a single effective sample size (e.g., the median component-wise ESS) could be used. 
Alternatively, a separate hypothesis test for each of the $\varparamdim$ components could be used,
with stationarity declared once all tests confirm stationarity (using, e.g., a test size of $\alpha/\varparamdim$).
While this approach might be more computationally efficient when $\varparamdim$ is large, it could come at the cost of test power.

\subsection{Distance Based Convergence Detection} \label{sec:distance-based}

\citet{Pesme:2020} proposed a distance based diagnostic algorithm to detect the stationarity of the SGD optimization
algorithm. The main idea behind this algorithm is to find the distance between the current iterate $\varparam_\iternum$
and the optimal variational parameter $\optvarparam$, $\norm{\eta_\iternum} \defined \norm{\varparam_\iternum - \optvarparam}$.
Since optimal variational parameter $\optvarparam$ unknown, this distance cannot be directly observed. Therefore,
they suggested using the distance between the current iterate $\varparam_\iternum$ and the initial iterate of the current 
learning rate, $\norm{\Omega_\iternum} \defined \norm{\varparam_\iternum - \varparam_0}$ and showed that $\norm{\eta_\iternum}$ and 
$\norm{\Omega_\iternum} $ have a similar behavior. 

Under the setting of quadratic objective function with additive noise, they computed the behavior of the expectation of  
$\norm{\Omega_\iternum}^2$ in closed-form to detect the convergence of $\norm{\varparam_\iternum - \varparam_0}^2$.
With the result, they have shown the asymptotic behavior of the $\EE[\norm{\Omega_\iternum}^2]$ in the transient and stationary 
phases, where $\EE[\norm{\Omega_\iternum}^2]$ has a slope greater than $1$ and slope of $0$ in a log-log plot respectively. 
Hence, the slope 
$S \defined \frac{ \log \norm{\varparam_\iternum - \varparam_0}^2 - \log  \norm{\varparam_\iternum/q - \varparam_0}^2}{\log \iternum - \log \iternum/q}$ 
computed between iterations $q^n$ and $q^{n+1}$ for $q>1$ and $n \ge n_0$ where $n_0 \in \mathcal{N}^*$ and if $S < \mathrm{thresh}$ ($\mathrm{thresh} \in (0,2]$) 
then the convergence will be detected.

\subsection{Stochastic Quasi-Newton Optimization}\label{sec:sqn}

The algorithm of \citet{liu2021quasi} provide a randomized approach of classical quasi-Newton (QN) method that is known as stochastic quasi-Newton (SQN) method.
Even though, \citet{liu2021quasi} use randomized quasi-Monte Carlo (RMCQ) samples we use Monte Carlo (MC) samples to compute the gradient .
In classical quasi-Newton optimization the Newton update is
\[
\sgditerate{\iternum+1} \gets \sgditerate{\iternum} -  (\grad^2_{\sgditerate{\iternum}}D_{\pi}(\approxdensity_{\sgditerate{\iternum}}))^{-1}  \grad_{\sgditerate{\iternum}}D_{\pi}(\approxdensity_{\sgditerate{\iternum}}),
\]
where $\grad^2_{\sgditerate{\iternum}}D_{\pi}(\approxdensity_{\sgditerate{\iternum}})$ is the Hessian matrix of $D_{\pi}(\approxdensity_{\sgditerate{\iternum}})$.
However, computation cost of Hessian matrix and its inverse is high and it also takes a large space. Therefore, we can use BFGS (discovered by Broyden, Fletcher, Goldfarb, and Shanno) method where it approximate the inverse of $\grad^2_{\sgditerate{\iternum}}D_{\pi}(\approxdensity_{\sgditerate{\iternum}})$ using $H_{\iternum}$ at the $\iternum$th iteration by initializing it with an identity matrix.  
Then the update is modified by
\[
\sgditerate{\iternum+1} \gets \sgditerate{\iternum} -  \learningrateiterate{\iternum} \approxhessian{\iternum} \grad_{\sgditerate{\iternum}}D_{\pi}(\approxdensity_{\sgditerate{\iternum}}),
\]
where 
\[
\approxhessian{\iternum+1} \gets \Big(I - \frac{s_{\iternum}y^\top_{\iternum}}{s^\top_{\iternum}y_{\iternum}}\Big) \approxhessian{\iternum} 
\Big(I - \frac{y_{\iternum}s^\top_{\iternum}}{s^\top_{\iternum}y_{\iternum}}\Big) + \frac{s_{\iternum}s^\top_{\iternum}}{s^\top_{\iternum}y_{\iternum}},
\]
$s_{\iternum} = \sgditerate{\iternum+1} - \sgditerate{\iternum}$, and $y_{\iternum} = \grad_{\sgditerate{\iternum+1}}D_{\pi}(\approxdensity_{\sgditerate{\iternum+1}}) - \grad_{\sgditerate{\iternum}}D_{\pi}(\approxdensity_{\sgditerate{\iternum}})$.
Even using the above Hessian approximation $\approxhessian{\iternum}$ will require large space for storage. To over come that problem we can use Limited-memory BFGS (L-BFGS) \citep{nocedal2006numerical} that computes 
$\approxhessian{\iternum}  \grad_{\sgditerate{\iternum}}D_{\pi}(\approxdensity_{\sgditerate{\iternum}})$ using $m$ most recent  $(s_{\iternum},y_{\iternum})$ correction pairs. 
\citet{liu2021quasi} use the  stochastic quasi-Newton approach proposed by \citet{chen2019stochastic} and they compute the correction pairs after every $B$ iterations by computing the iterate average of parameters using the most recent $B$ iterations.
To compute the $y_{\iternum}$, the gradients of the objective function is estimated by using MC samples that is independent from the MC samples obtained to compute the gradient in the update step.   

\subsection{Natural Gradient Descent Optimization}\label{sec:ngd}

\citet{Khan:2017:CVI} proposed natural gradient descent (NGD) approach for variational inference that utilize the information geometry of the variational distribution. Given the variational distribution is a exponential family
distribution
\[
\approxdensity_\varparam = h \exp(\phi^\top \varparam - A(\varparam)) 
\]
the NGD update is
\[
\sgditerate{\iternum+1} \gets \sgditerate{\iternum} -  \learningrateiterate{\iternum} (F(\sgditerate{\iternum}))^{-1} \grad_{\sgditerate{\iternum}}D_{\pi}(\approxdensity_{\sgditerate{\iternum}}),
\]
where $F(\sgditerate{\iternum})$ denote the Fisher Information Matrix (FIM) of the distribution and $\varparam$ denote its natural-parameter. Without directly computing the
inverse of FIM \citet{Khan:2017:CVI}  simplified the above update by using the relationship between natural parameter $\varparam$ and expectation parameter $\expectationparam = \mathrm{E}_{\approxdensity}[\phi]$ of the exponential family
\[
F(\varparam)^{-1} \grad_{\varparam} D_{\pi}(\approxdensity_{\varparam}) =  \grad_{\expectationparam} D_{\pi}(\approxdensity_{\expectationparam}).
\]
Therefore, the natural-gradient update can be simplified as
\[
\sgditerate{\iternum+1} \gets \sgditerate{\iternum} -  \learningrateiterate{\iternum}  \grad_{\expectationparam^{(\iternum)}}D_{\pi}(\approxdensity_{\expectationparam^{(\iternum)}}).
\]
This can be applied for the mean-field Gaussian variational family $\approxdensity = \prod_{i=1}^{d} \mathcal{N}(\mu_i, \sigma^2_i)$ where we can define the natural parameters
$\varparam_i = (\mu_i/\sigma^2_i, -0.5/\sigma^2_i)$ and expectation parameters $\expectationparam =  (\mu_i, \mu_i^2 + \sigma^2_i)$.

\section{Additional Experimental Details and Results}

\begin{table}[tbp]
\begin{center}
\caption{Datasets of PosteriorDB Package}
\label{tab:posteriordb-datasets}
\begin{tabular}{@{}llrl@{}}
\toprule
Dataset                   & Short Name& \multicolumn{1}{l}{Dimensions} &    Model Description                  \\ \midrule
arK-arK								& arK							&	7	&	AR(5) time series	\\
bball\_drive\_event\_0-hmm\_drive\_0		& bball\_0							& 	8 	&	Hidden Markov model				 \\
bball\_drive\_event\_1-hmm\_drive\_1		& bball\_1							& 	8 	&	Hidden Markov model				\\
diamonds-diamonds						& diamonds						&	26 	&  Log-Log	\\
dogs-dogs 							& dogs							&	3      & Logistic mixed-effects        \\
dogs-dogs\_log							& dogs\_log						&	2	& Logarithmic mixed-effects	\\
earnings-logearn\_interaction				& earnings						&	5	& Log-linear \\
eight\_schools-eight\_schools\_noncentered 	& 8schools\_nc						&	10	& Non-centered hierarchical 				\\
eight\_schools-eight\_schools\_centered		& 8schools\_c						&	10	& Centered hierarchical				\\
garch-garch11							& garch							&	4	& GARCH(1,1) time series  \\
gp\_pois\_regr-gp\_pois\_regr				& gp\_pois\_regr					&	13	&  \begin{tabular}[c]{@{}l@{}}Gaussian process Poisson\\ regression\end{tabular}	\\
gp\_pois\_regr-gp\_regr					& gp\_regr							&	3	& Gaussian process regression	\\
hmm\_example-hmm\_example				& hmm\_example					&	6	& Hidden Markov model \\
hudson\_lynx-hare\_lotka\_volterra			& hudson\_lynx						&	8	& Lotka-Volterra error \\
low\_dim\_gauss\_mix-low\_dim\_gauss\_mix	& low\_dim\_gauss\_mix				&	5	&  \begin{tabular}[c]{@{}l@{}}Two-dimensional\\ Gaussian mixture\end{tabular}	\\
mcycle\_gp-accel\_gp					& mcycle\_gp						&	66	&	Gaussian process\\
nes2000-nes							& nes2000						&	10	&	Multiple predictor linear \\
sblrc-blr								&sblrc							&	6	&	Linear  \\
\bottomrule
\end{tabular}
\end{center}
\end{table}

\begin{figure}[tbp]
\begin{center}
\centering
\begin{subfigure}[t]{.48\textwidth}
\centering
\includegraphics[width=\textwidth]{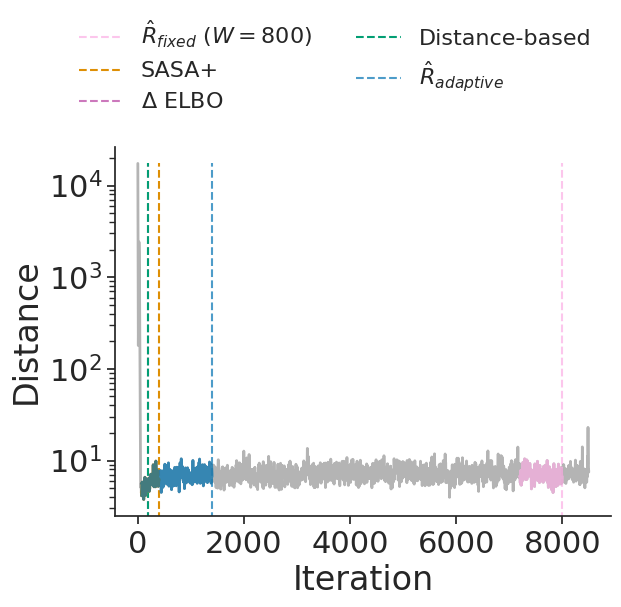}
\caption{uncorrelated $\paramdim = 100$} 
\end{subfigure} 
\begin{subfigure}[t]{.48\textwidth}
\centering
\includegraphics[width=\textwidth]{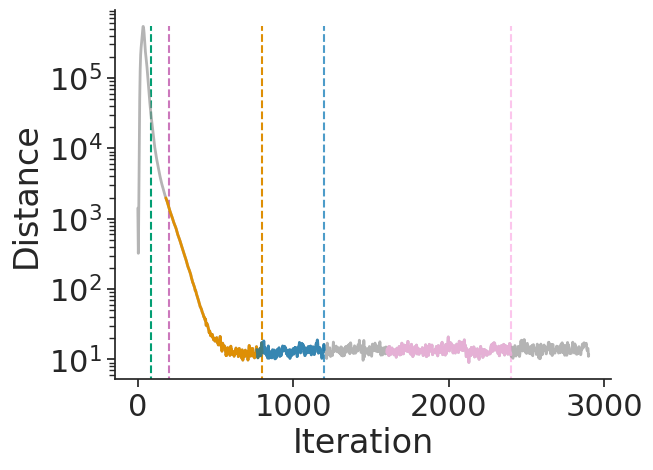}
\caption{uniform correlated $\paramdim = 100$} 
\end{subfigure}
\begin{subfigure}[t]{.48\textwidth}
\centering
\includegraphics[width=\textwidth]{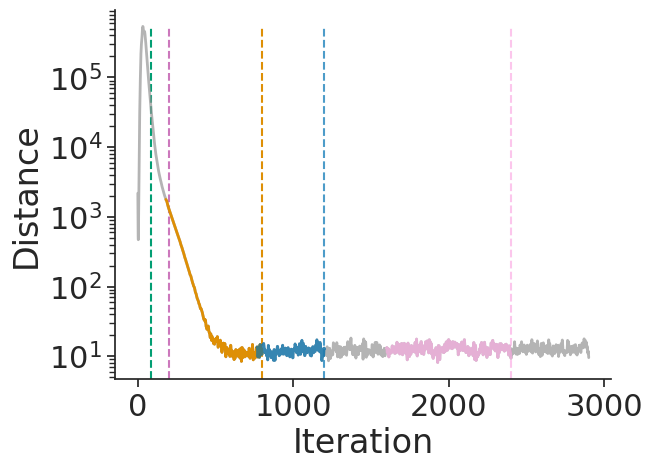}
\caption{banded correlated $\paramdim = 100$} 
\end{subfigure}
\caption{Iteration number versus distance between iterate average and current iterate for Gaussian targets. 
The vertical lines indicate convergence detection trigger points and (for SASA+ and $\widehat{R}$) the colored portion of 
the accuracy values indicate they are part of the window used for  convergence detection.}
\label{fig:convergence-detection-comparison-more}
\end{center}
\end{figure}

\begin{figure}[tbp]
\begin{center}
\begin{subfigure}[t]{\textwidth}
\centering
\includegraphics[width=0.4\textwidth]{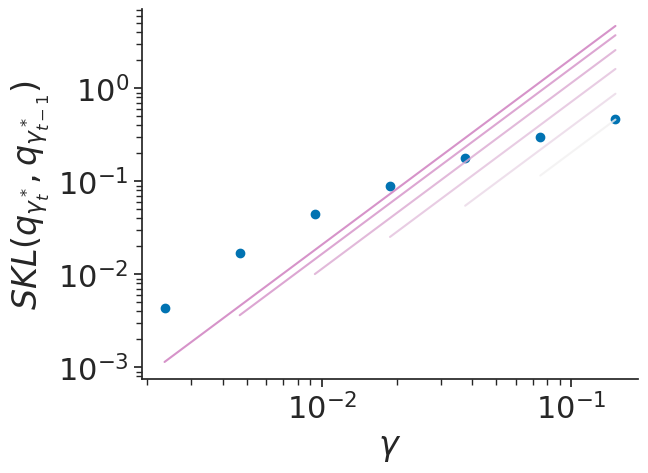} 
\includegraphics[width=0.4\textwidth]{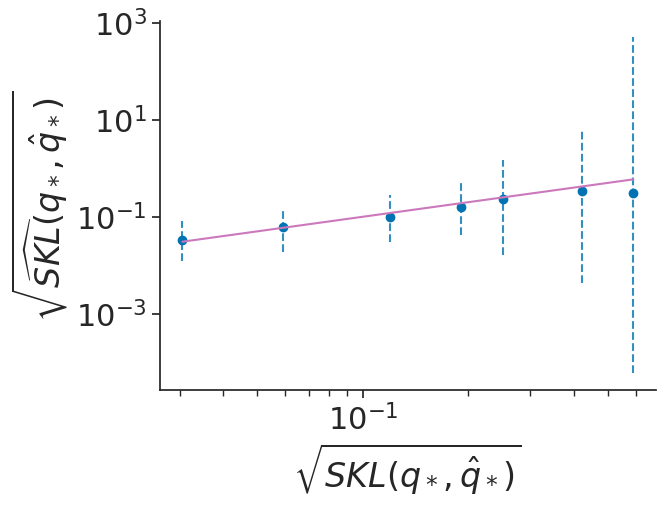}
\caption{uncorrelated $\paramdim = 500$} 
\end{subfigure}  
\begin{subfigure}[t]{\textwidth}
\centering
\includegraphics[width=0.4\textwidth]{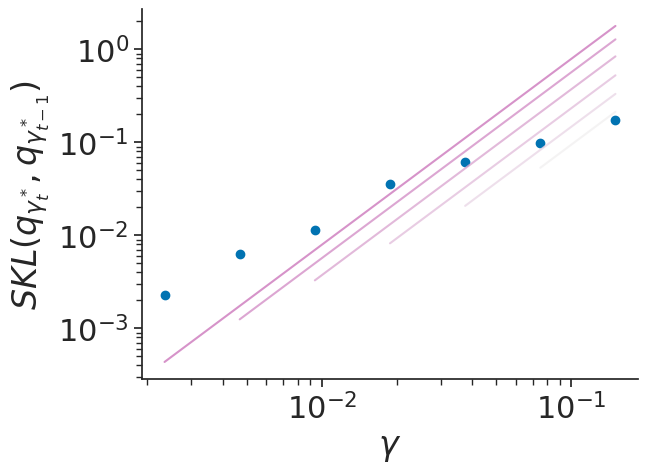} 
\includegraphics[width=0.4\textwidth]{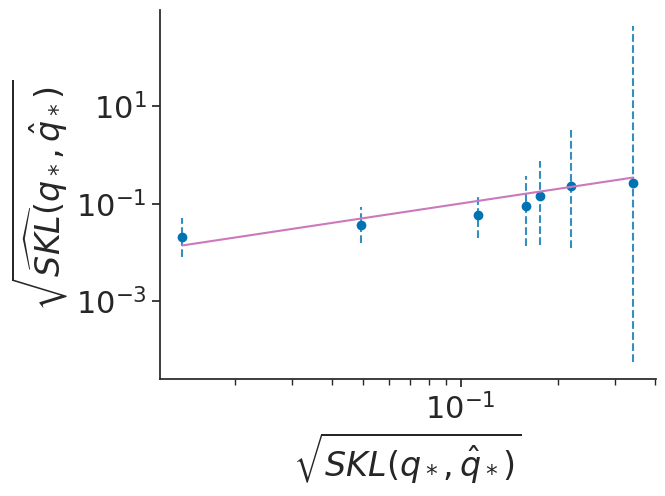}
\caption{uniform correlated $\paramdim = 100$} 
\end{subfigure}  
\begin{subfigure}[t]{\textwidth}
\centering
\includegraphics[width=0.4\textwidth]{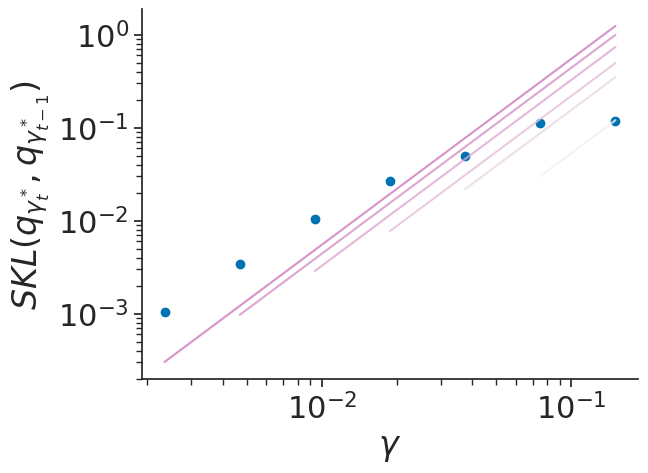} 
\includegraphics[width=0.4\textwidth]{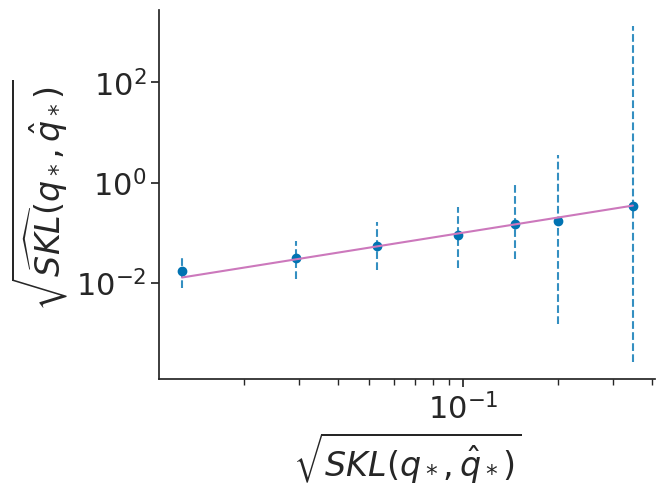}
\caption{banded correlated $\paramdim = 100$} 
\end{subfigure}  
\caption{
Results for estimating the symmetrized KL divergence with avgAdam. 
\textbf{(left)} Learning rate versus symmetrized KL divergence of adjacent iterate averaged estimates of 
optimal variational distribution. 
The lines indicate the linear regression fits, with setting $\biaspower = 1$. 
\textbf{(right)} Square root of true symmetrized KL divergence versus the estimated value with $95\%$ credible interval.
The uncertainty of the estimates decreases and remains well-calibrated as the learning rate decreases.}
\label{fig:SKL-accuracy-more-avgrmsp}
\end{center}
\end{figure}

\begin{figure}[tbp]
\begin{center}
\begin{subfigure}[t]{\textwidth}
\centering
\includegraphics[width=0.4\textwidth]{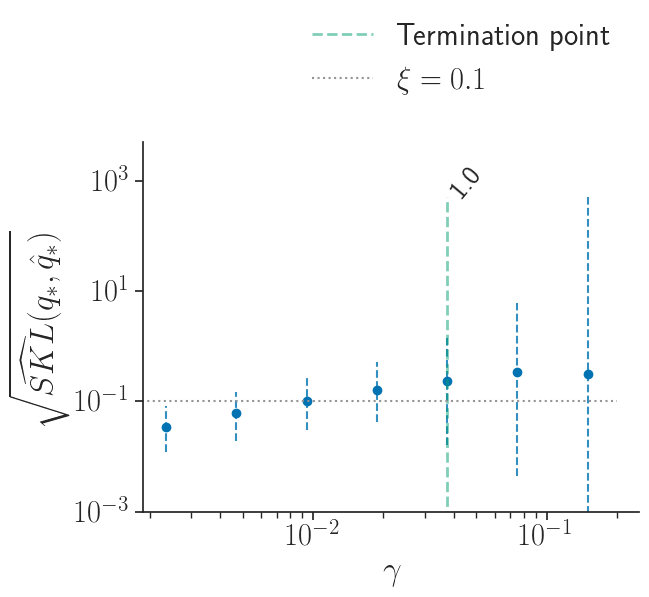} 
\includegraphics[width=0.4\textwidth]{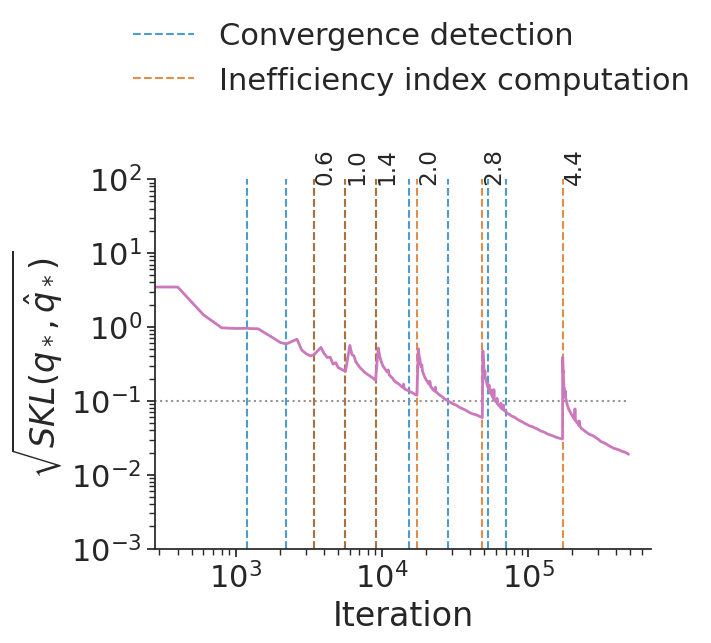}
\caption{uncorrelated $\paramdim = 500$} 
\end{subfigure} 
\begin{subfigure}[t]{\textwidth}
\centering
\includegraphics[width=0.4\textwidth]{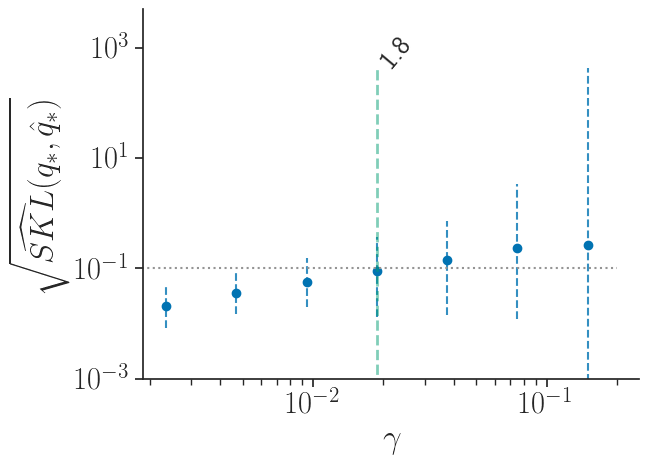}
\includegraphics[width=0.4\textwidth]{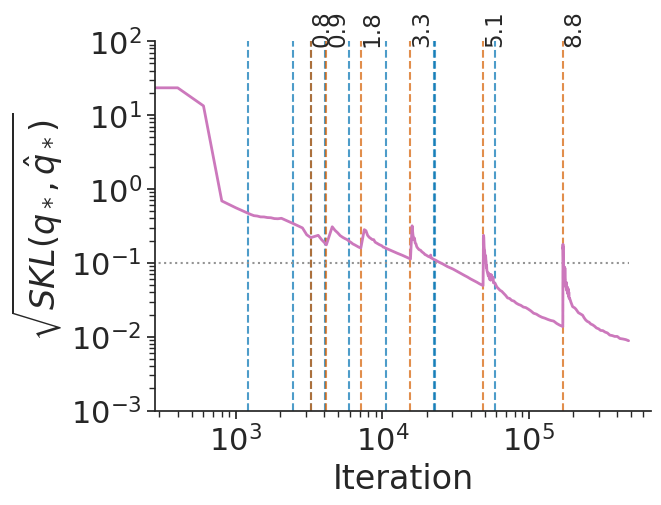}
\caption{uniform correlated $\paramdim = 100$} 
\end{subfigure}  
\begin{subfigure}[t]{\textwidth}
\centering
\includegraphics[width=0.4\textwidth]{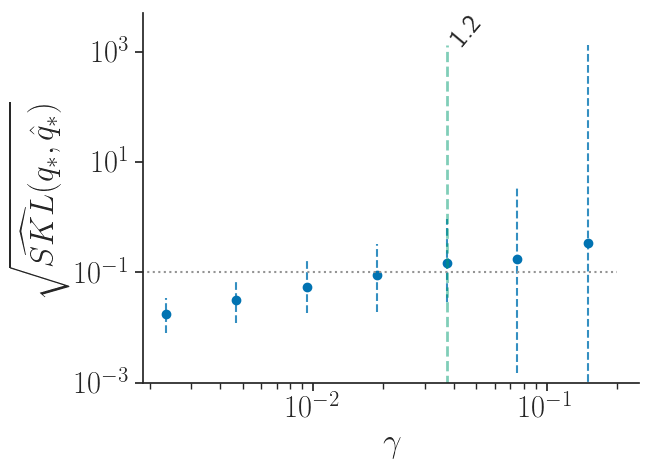}
\includegraphics[width=0.4\textwidth]{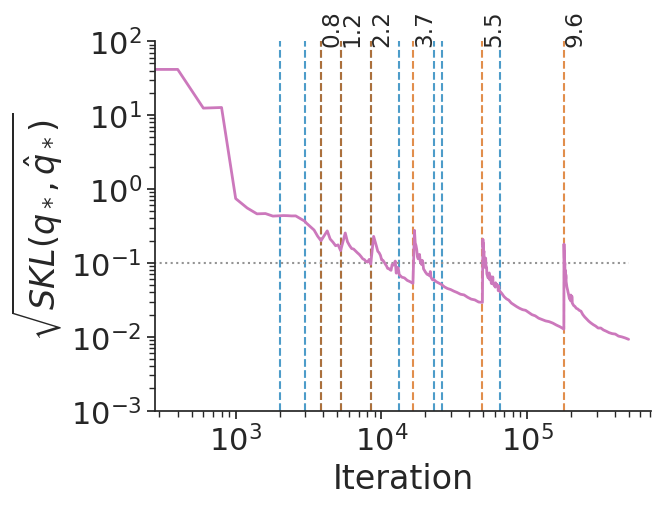}
\caption{banded correlated $\paramdim = 100$} 
\end{subfigure}   
\caption{
Results of termination rule trigger point of Gaussian targets.
\textbf{(left)} Learning rate versus square root of estimated symmetrized KL divergence with 95\% credible interval (dashed blue line).  The green vertical line indicates
the termination rule trigger point with corresponding $\hat\ineff$ value.
\textbf{(right)} Iterations versus square root of symmetrized KL divergence between iterate average and optimal variational approximation.
The vertical lines indicate the convergence detection points using $\hat{R}$ (blue) and inefficiency index computation ($\hat\ineff$) points (orange) with corresponding values.}
\label{fig:SKL-termination-point-more}
\end{center}
\end{figure}
\begin{figure}[tbp]
\begin{center}
\begin{subfigure}[t]{\textwidth}
\centering
\includegraphics[width=0.4\textwidth]{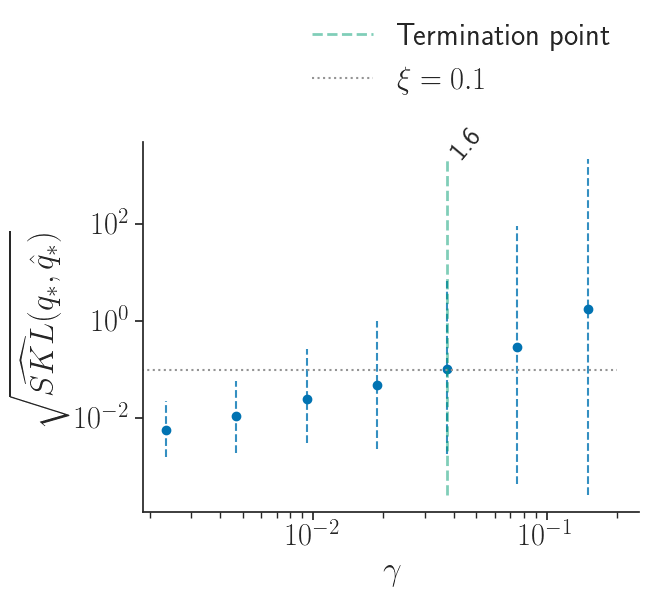} 
\includegraphics[width=0.4\textwidth]{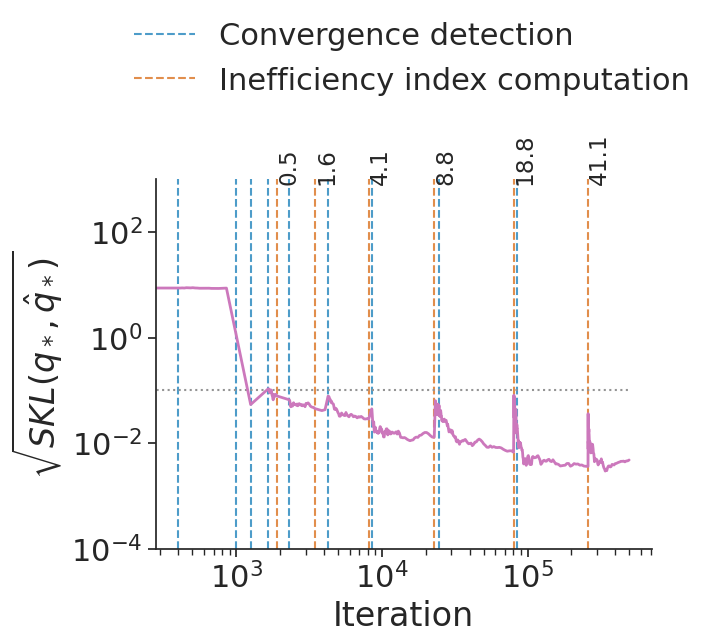}
\caption{dogs} 
\end{subfigure} 
\begin{subfigure}[t]{\textwidth}
\centering
\includegraphics[width=0.4\textwidth]{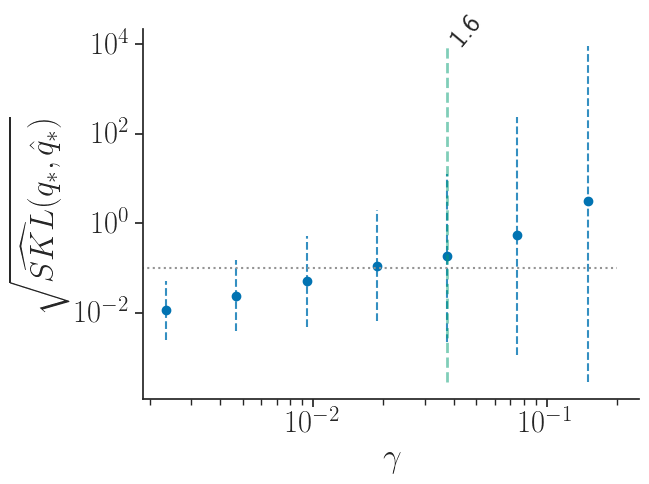} 
\includegraphics[width=0.4\textwidth]{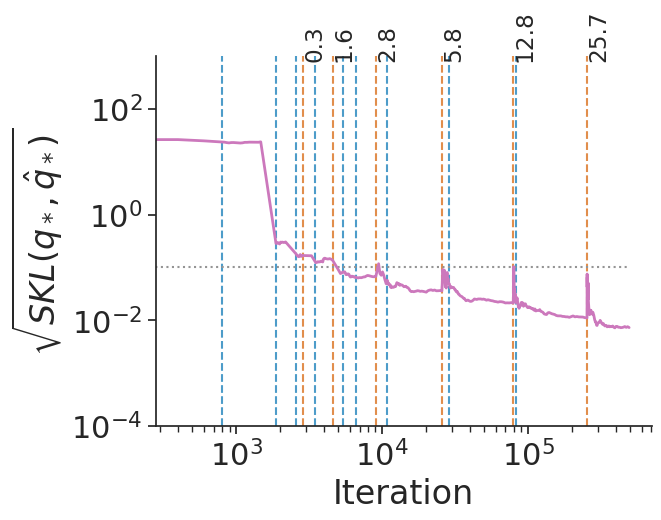}
\caption{arK} 
\end{subfigure}  
\begin{subfigure}[t]{\textwidth}
\centering
\includegraphics[width=0.4\textwidth]{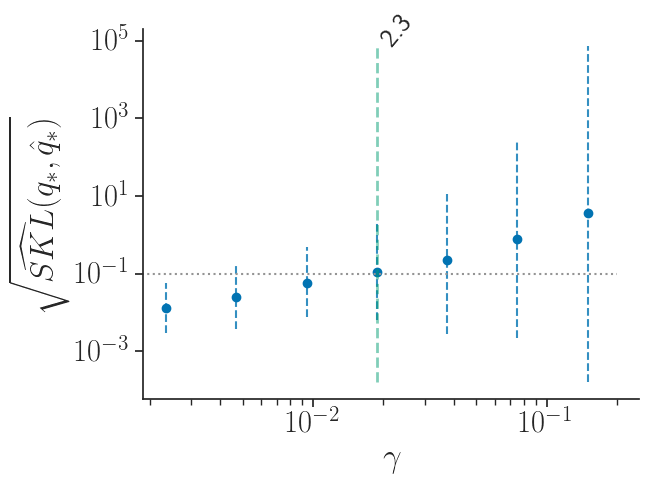} 
\includegraphics[width=0.4\textwidth]{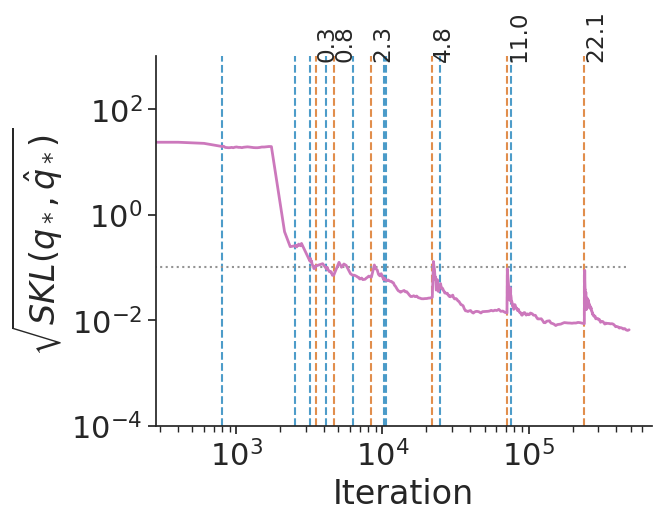}
\caption{nes2000} 
\end{subfigure}   
\caption{
Results of termination rule trigger point of  \texttt{posteriordb} package datasets/models.
\textbf{(left)} Learning rate versus square root of estimated symmetrized KL divergence with 95\% credible interval (dashed blue line).  The green vertical line indicates
the termination rule trigger point with corresponding $\hat\ineff$ value.
\textbf{(right)} Iterations versus square root of symmetrized KL divergence between iterate average and optimal variational approximation.
The vertical lines indicate the convergence detection points using $\hat{R}$ (blue) and inefficiency index computation ($\hat\ineff$) points (orange) with corresponding values.}
\label{fig:SKL-termination-point-posteriordb}
\end{center}
\end{figure}

\begin{figure}[tbp]
\begin{center}
\begin{subfigure}[t]{\textwidth}
\centering
\includegraphics[width=0.4\textwidth]{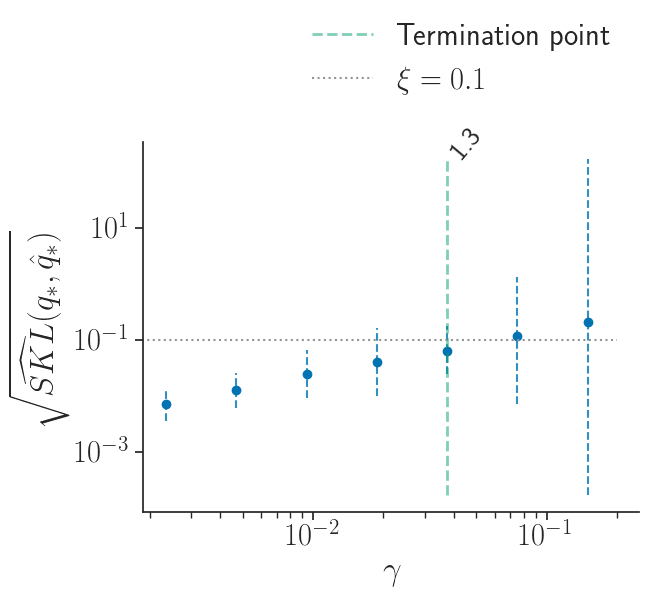} 
\includegraphics[width=0.4\textwidth]{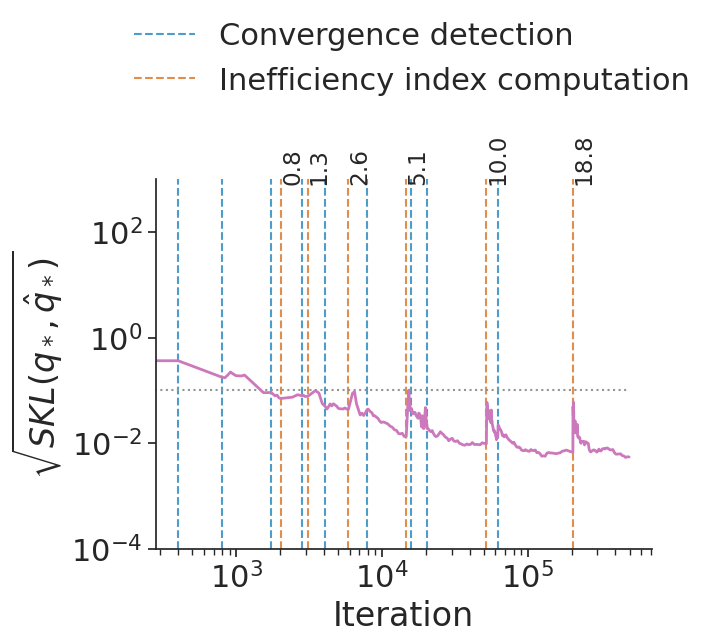}
\caption{8schools\_nc} 
\end{subfigure} 
\begin{subfigure}[t]{\textwidth}
\centering
\includegraphics[width=0.4\textwidth]{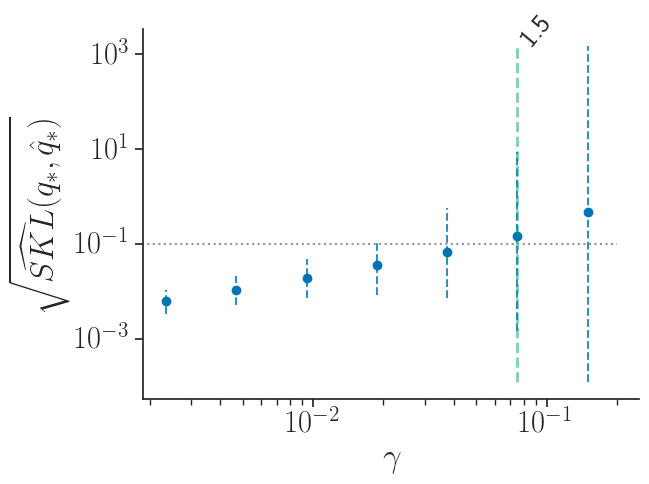} 
\includegraphics[width=0.4\textwidth]{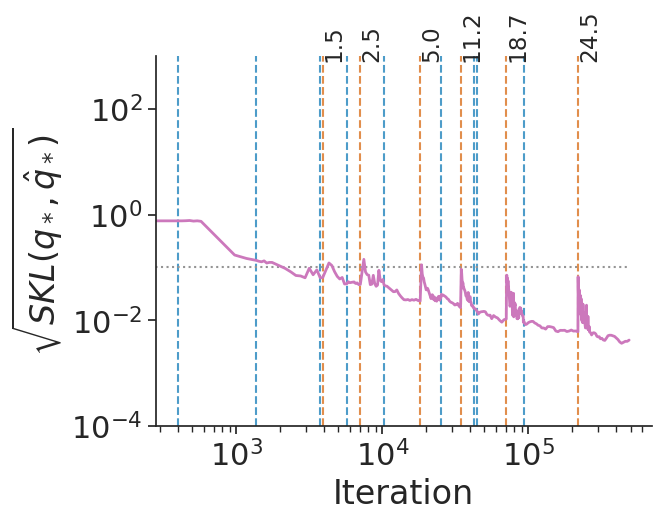}
\caption{8schools\_c} 
\end{subfigure} 
\caption{
Results of termination rule trigger point of  \texttt{posteriordb} package datasets/models.
\textbf{(left)} Learning rate versus square root of estimated symmetrized KL divergence with 95\% credible interval (dashed blue line).  The green vertical line indicates
the termination rule trigger point with corresponding $\hat\ineff$ value.
\textbf{(right)} Iterations versus square root of symmetrized KL divergence between iterate average and optimal variational approximation.
The vertical lines indicate the convergence detection points using $\hat{R}$ (blue) and inefficiency index computation ($\hat\ineff$) points (orange) with corresponding values.}
\label{fig:SKL-termination-point-posteriordb-more}
\end{center}
\end{figure}

\begin{figure}[tbp]
\begin{center}
\begin{subfigure}[t]{.4\textwidth}
\centering
\includegraphics[width=\textwidth]{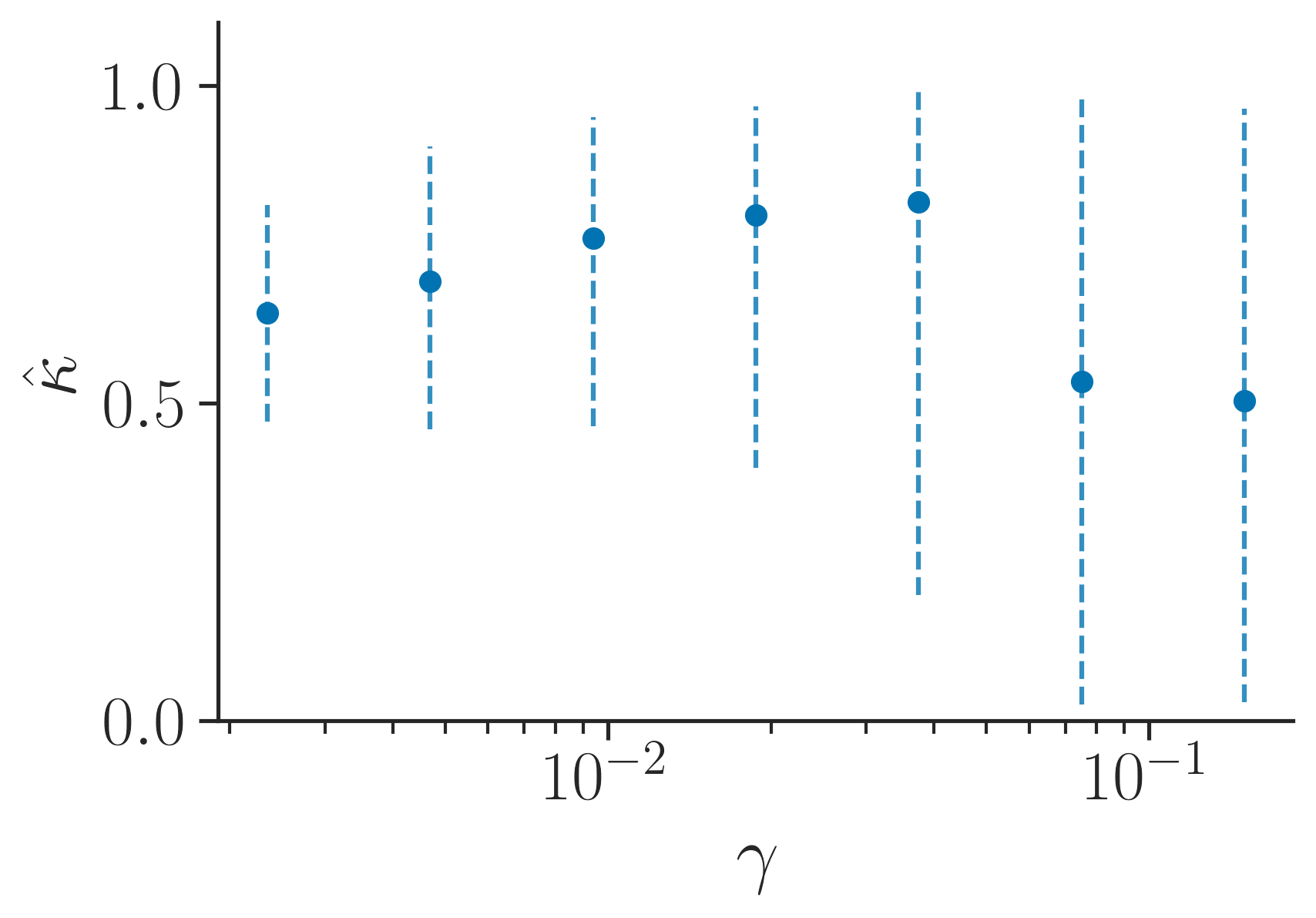}
\caption{uncorrelated $\paramdim = 100$} 
\end{subfigure} 
\begin{subfigure}[t]{.4\textwidth}
\centering
\includegraphics[width=\textwidth]{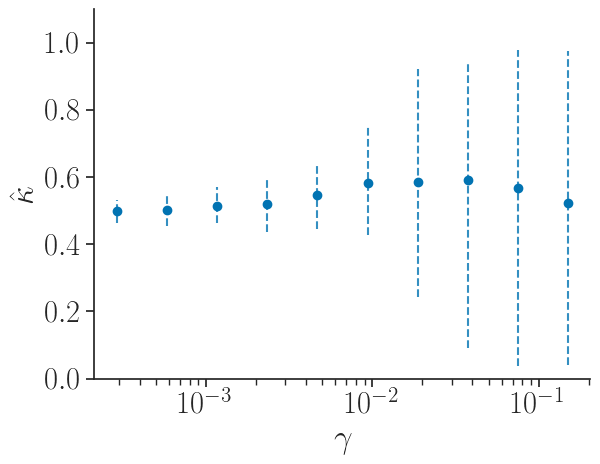}
\caption{uncorrelated $\paramdim = 500$} 
\end{subfigure} 
\begin{subfigure}[t]{.4\textwidth}
\centering
\includegraphics[width=\textwidth]{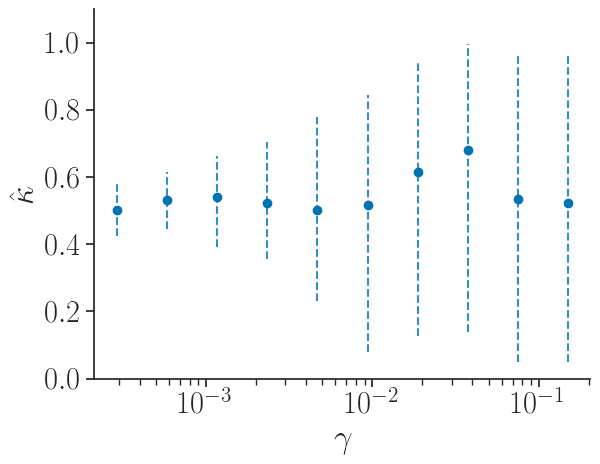}
\caption{uniform correlated $\paramdim = 100$} 
\end{subfigure}  
\begin{subfigure}[t]{.4\textwidth}
\centering
\includegraphics[width=\textwidth]{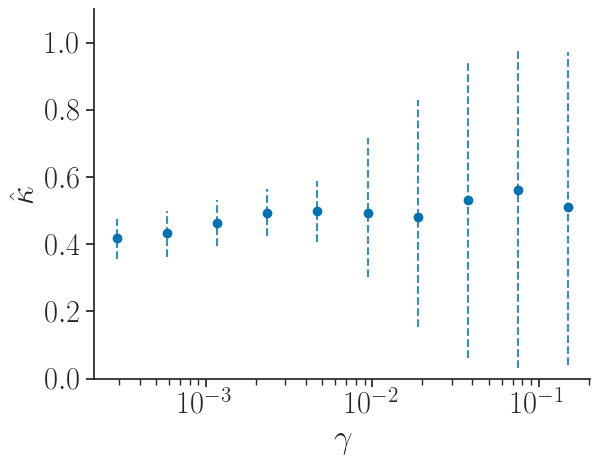}
\caption{banded correlated $\paramdim = 100$} 
\end{subfigure}   
\caption{Learning rate versus $\hat{\biaspower}$ for Gaussian targets using RMSProp with 95\% credible interval. 
The estimates suggestion $\biaspower$ is approximately $0.5$.}
\label{fig:kappa_hat_comparisons-rmsprop}
\end{center}
\end{figure}

\begin{figure}[tbp]
\begin{center}
\begin{subfigure}[t]{.4\textwidth}
\centering
\includegraphics[width=\textwidth]{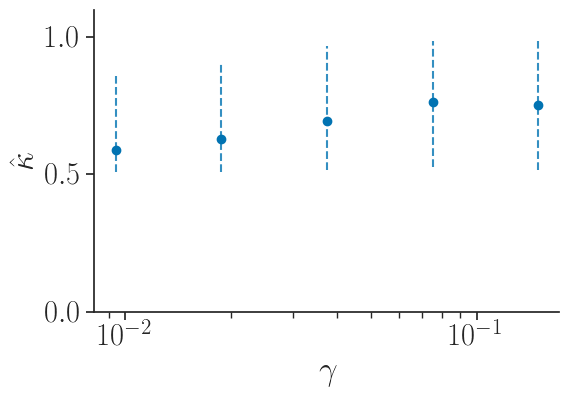}
\caption{uncorrelated $\paramdim = 100$} 
\end{subfigure} 
\begin{subfigure}[t]{.4\textwidth}
\centering
\includegraphics[width=\textwidth]{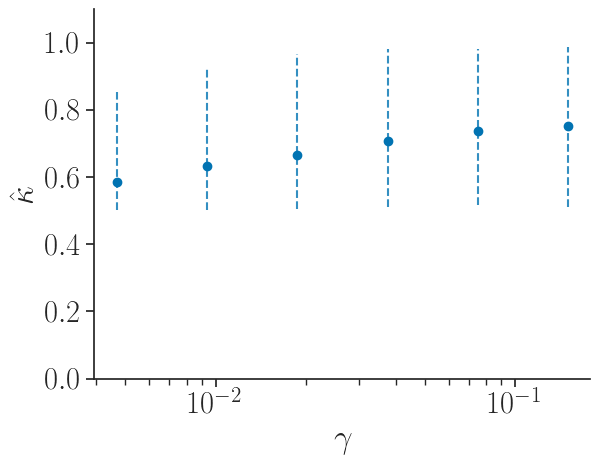}
\caption{uncorrelated $\paramdim = 500$} 
\end{subfigure} 
\begin{subfigure}[t]{.4\textwidth}
\centering
\includegraphics[width=\textwidth]{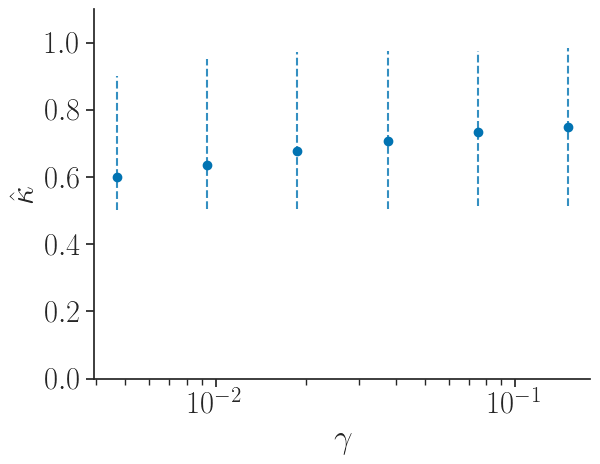}
\caption{uniform correlated $\paramdim = 100$} 
\end{subfigure}  
\begin{subfigure}[t]{.4\textwidth}
\centering
\includegraphics[width=\textwidth]{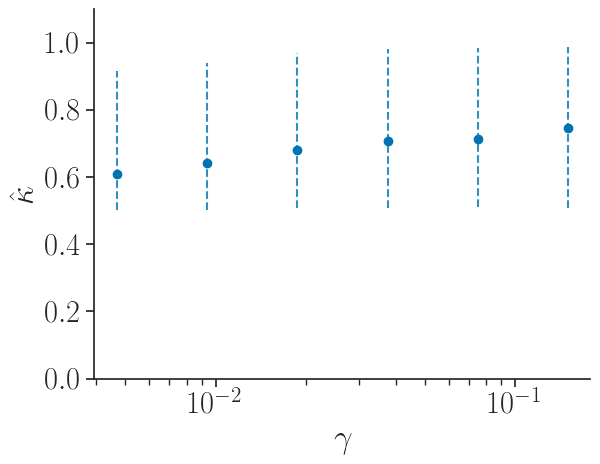}
\caption{banded correlated $\paramdim = 100$} 
\end{subfigure}   
\caption{
Learning rate versus $\hat{\biaspower}$ for Gaussian targets using Adam with 95\% credible interval. 
The estimates suggest $\biaspower$ is less than 0.8, with all point estimates close to 0.6.}
\label{fig:kappa_hat_comparisons-adam}
\end{center}
\end{figure}

\begin{figure}[tbp]
\begin{center}
\begin{subfigure}[t]{.48\textwidth}
\centering
\includegraphics[width=\textwidth]{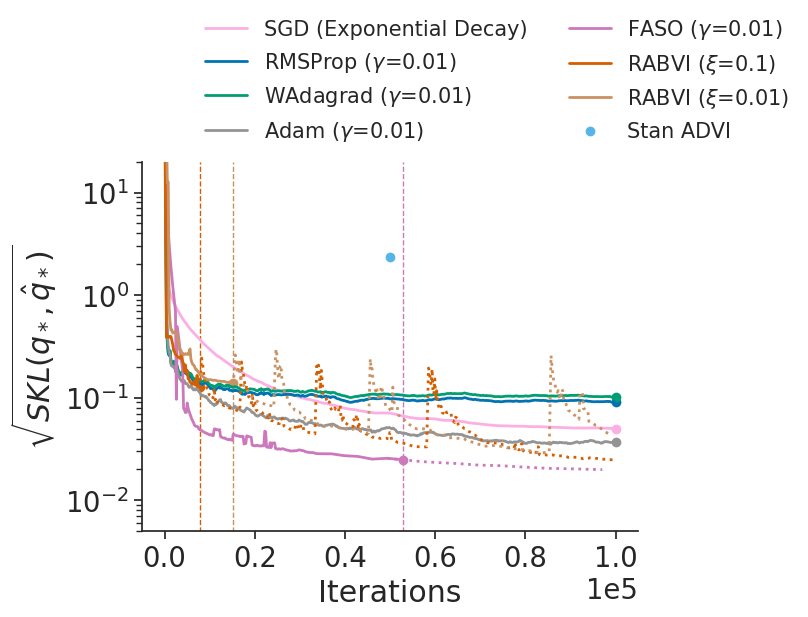}
\caption{diagonal non-identity banded correlated $\paramdim = 100$} 
\end{subfigure}  
\begin{subfigure}[t]{.48\textwidth}
\centering
\includegraphics[width=\textwidth]{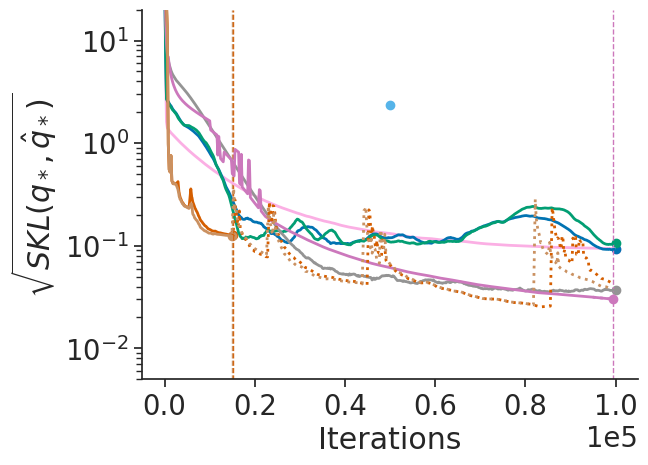}
\caption{diagonal identity (except first entry) uniform correlated $\paramdim = 100$} 
\end{subfigure} 
\begin{subfigure}[t]{.48\textwidth}
\centering
\includegraphics[width=\textwidth]{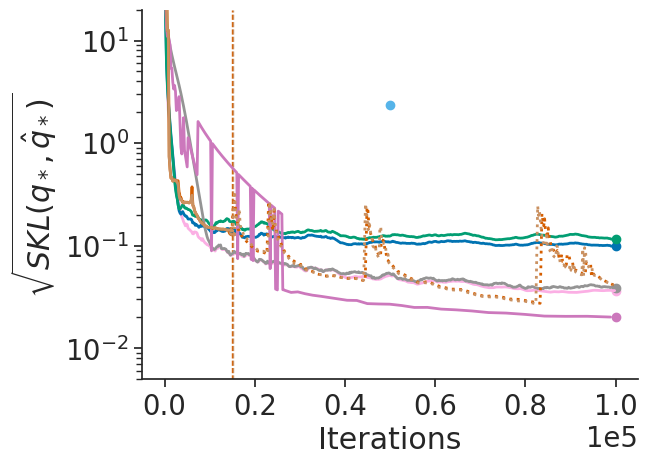}
\caption{diagonal identity (except first entry) banded correlated $\paramdim = 100$} 
\end{subfigure}  
\caption{
Accuracy comparison of variational inference algorithms using Gaussian targets,
where accuracy is measured in terms of the square root of symmetrized KL divergence between iterate average and optimal variational approximation.
The vertical lines indicate the termination rule trigger points of FASO and RABVI.
Iterate averages for Adam, RMSProp, and WAdagrad computed at every 200th iteration using a window 
size of 20\% of iterations.
}
\label{fig:Accuracy-with-gaussian-targets-high-condion-covariance-matrix}
\end{center}
\end{figure}

\begin{figure}[tbp]
\begin{center}
\begin{subfigure}[t]{.48\textwidth}
\centering
\includegraphics[width=\textwidth]{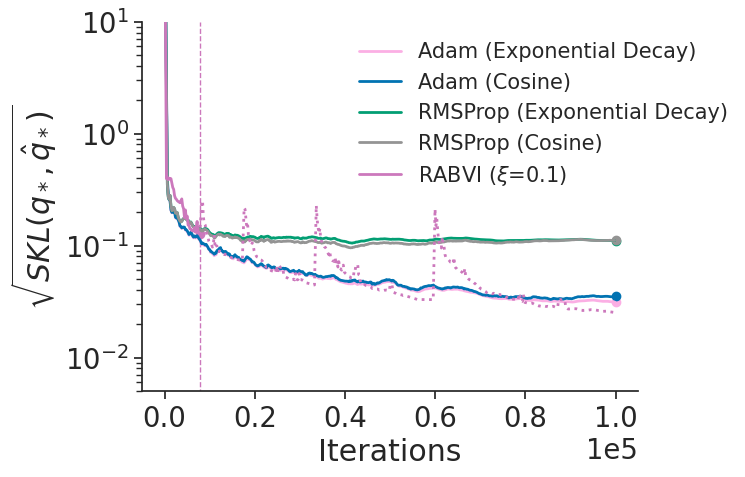}\\
\caption{uncorrelated $\paramdim = 100$} 
\end{subfigure}  
\begin{subfigure}[t]{.48\textwidth}
\centering
\includegraphics[width=\textwidth]{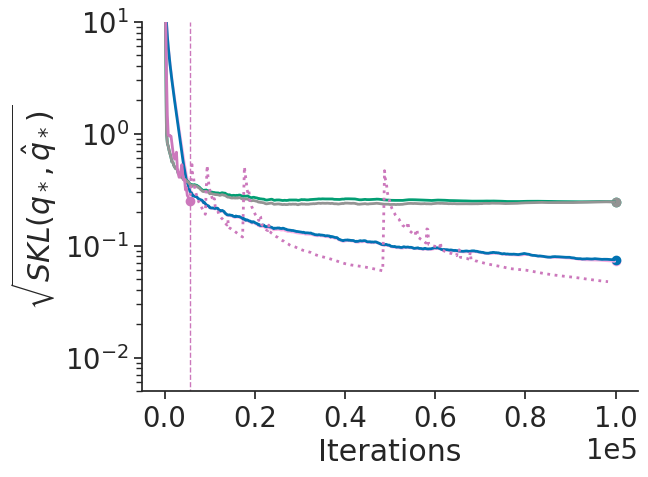}\\
\caption{uncorrelated $\paramdim = 500$} 
\end{subfigure} 
\begin{subfigure}[t]{.48\textwidth}
\centering
\includegraphics[width=\textwidth]{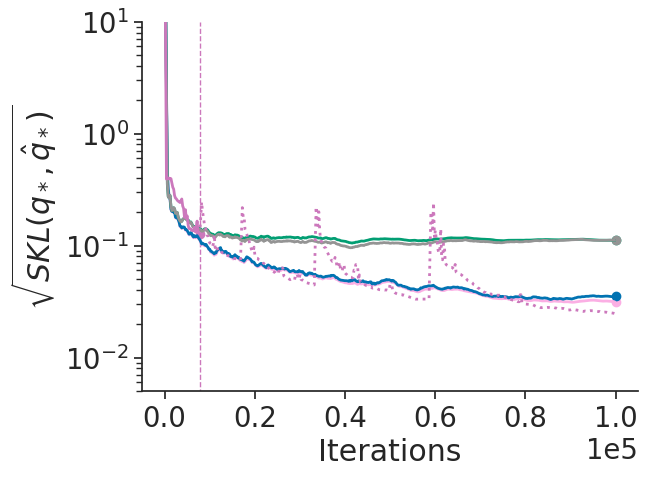}\\
\caption{uniform correlated $\paramdim = 100$} 
\end{subfigure}  
\begin{subfigure}[t]{.48\textwidth}
\centering
\includegraphics[width=\textwidth]{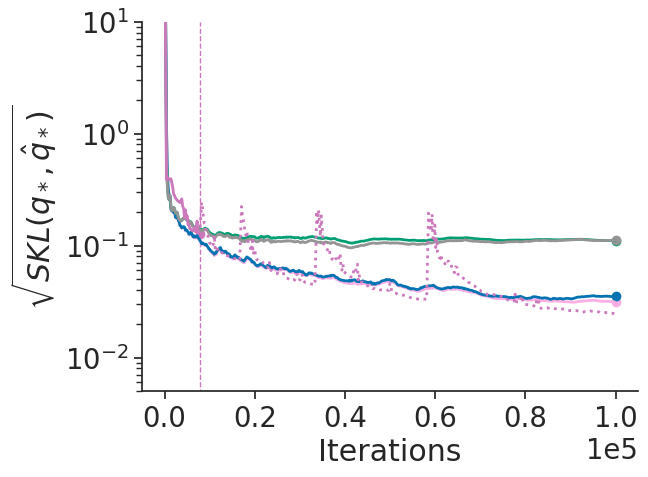}\\
\caption{banded correlated $\paramdim = 100$} 
\end{subfigure} 
\caption{
Accuracy comparison across learning rate schedules with Gaussian targets where accuracy is measured in terms of the square root of symmetrized KL divergence between iterate average and optimal variational approximation. 
The vertical lines indicate the termination rule trigger points of RABVI.}
\label{fig:learning-rate-schedules-with-gaussian-targets}
\end{center}
\end{figure}

\begin{figure}[tbp]
\begin{center}
\begin{subfigure}[t]{.48\textwidth}
\centering
\includegraphics[width=\textwidth]{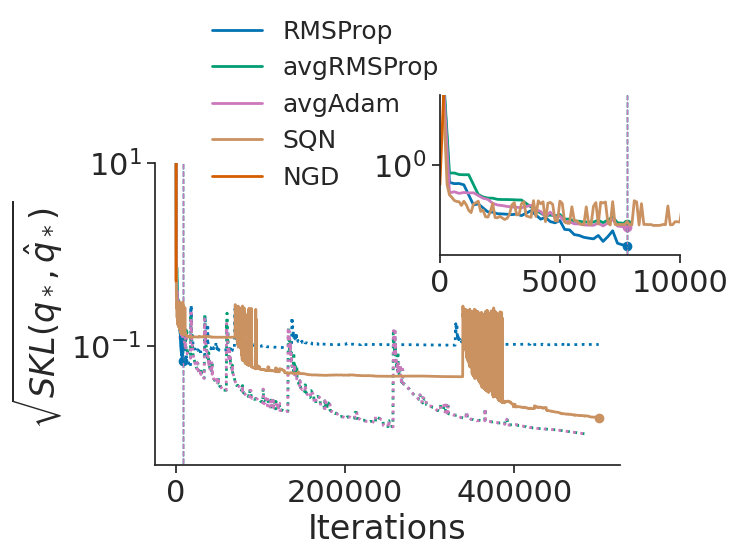}\\
\caption{uncorrelated $\paramdim = 100$} 
\end{subfigure}  
\begin{subfigure}[t]{.48\textwidth}
\centering
\includegraphics[width=\textwidth]{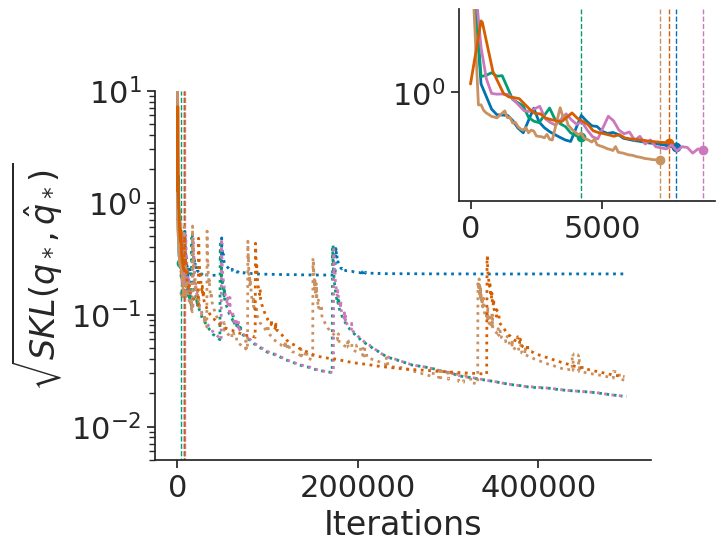}\\
\caption{uncorrelated $\paramdim = 500$} 
\end{subfigure} 
\begin{subfigure}[t]{.48\textwidth}
\centering
\includegraphics[width=\textwidth]{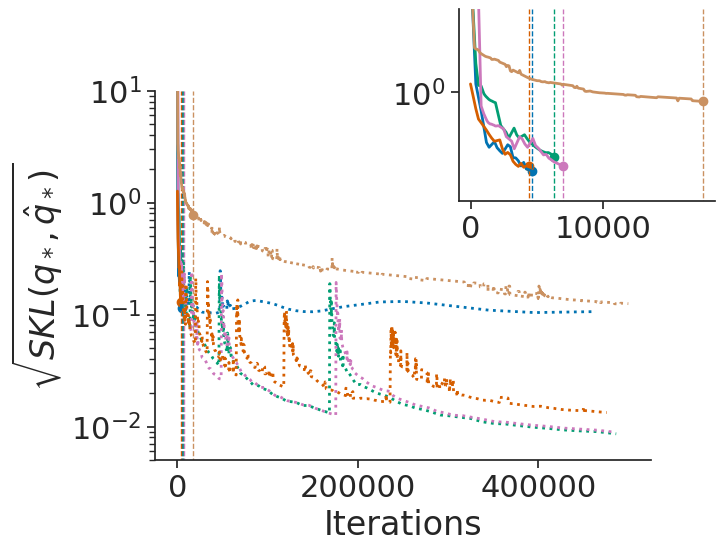}\\
\caption{uniform correlated $\paramdim = 100$} 
\end{subfigure}  
\begin{subfigure}[t]{.48\textwidth}
\centering
\includegraphics[width=\textwidth]{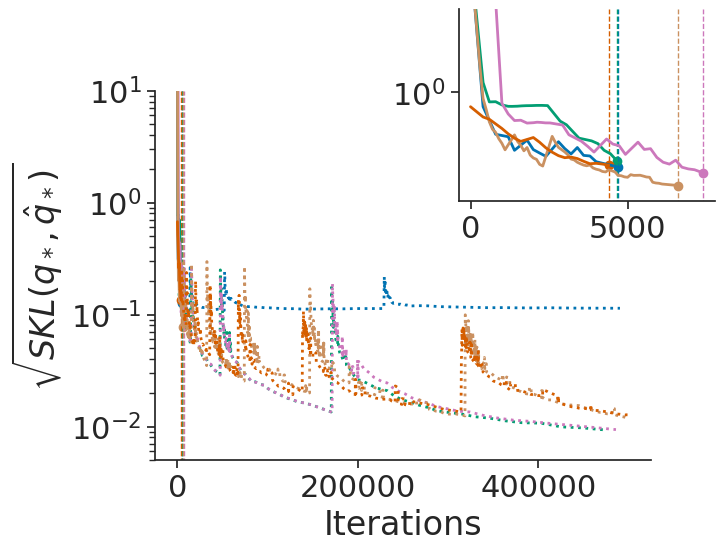}\\
\caption{banded correlated $\paramdim = 100$} 
\end{subfigure} 
\caption{
Accuracy comparison of \textit{RMSProp, avgRMSProp, avgAdam, SQN,} and \textit{NGD} optimization methods in RABVI using Gaussian targets where accuracy is measured in terms of square root of symmetrized KL divergence between iterate average and optimal variational approximation. 
The vertical lines indicate the termination rule trigger points and the behavior of optimization methods at the trigger points showed in inset plots.}
\label{fig:RABVI-with-gaussian-targets}
\end{center}
\end{figure}

\begin{figure}[tbp]
\begin{center}
\begin{subfigure}[t]{.32\textwidth}
\centering
\includegraphics[width=\textwidth,height=0.17\textheight]{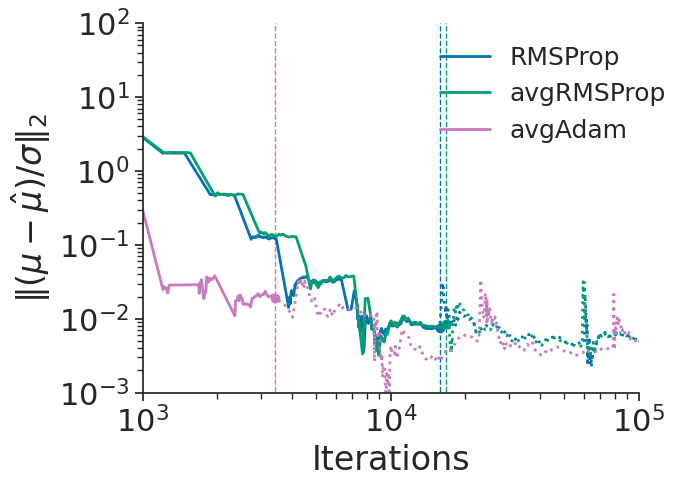}\\
\includegraphics[width=\textwidth]{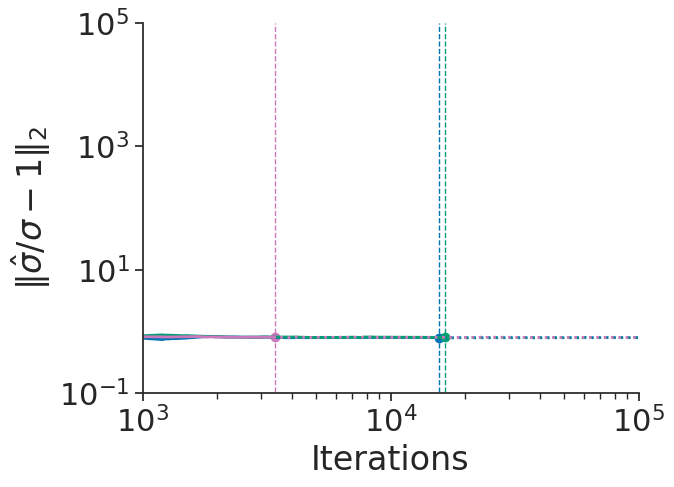}
\caption{dogs} 
\end{subfigure}  
\begin{subfigure}[t]{.32\textwidth}
\centering
\includegraphics[width=\textwidth]{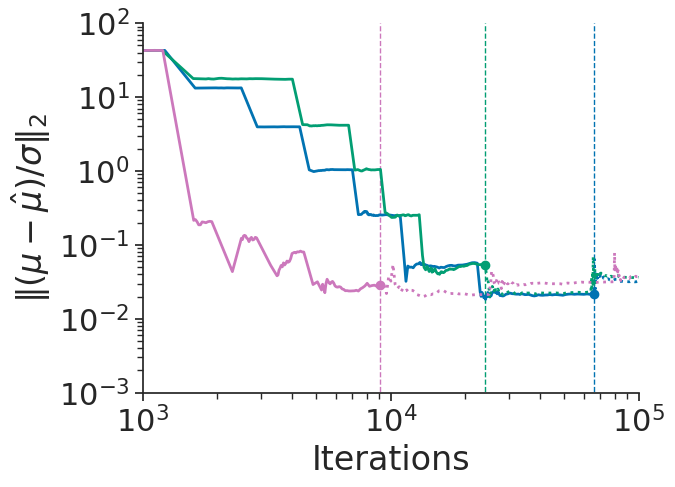}\\
\includegraphics[width=\textwidth]{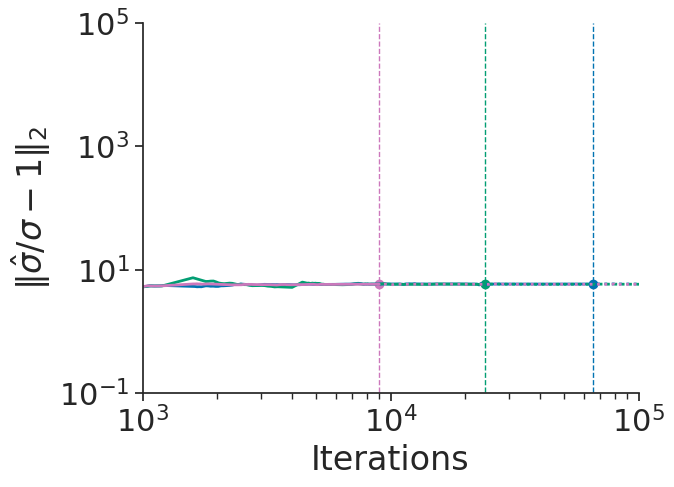}
\caption{arK} 
\end{subfigure}  
\begin{subfigure}[t]{.32\textwidth}
\centering
\includegraphics[width=\textwidth]{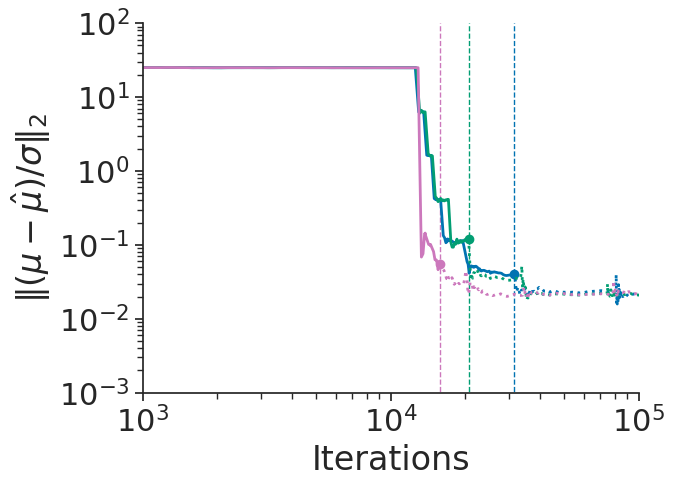}\\
\includegraphics[width=\textwidth]{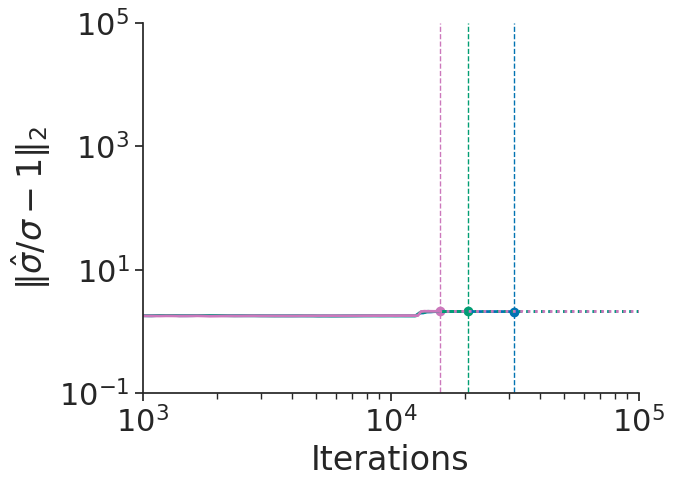}
\caption{nes2000} 
\end{subfigure}  
\begin{subfigure}[t]{.32\textwidth}
\centering
\includegraphics[width=\textwidth]{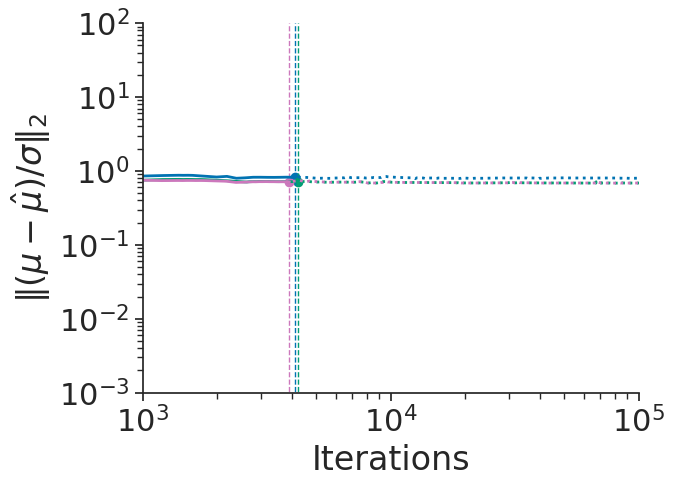}\\
\includegraphics[width=\textwidth]{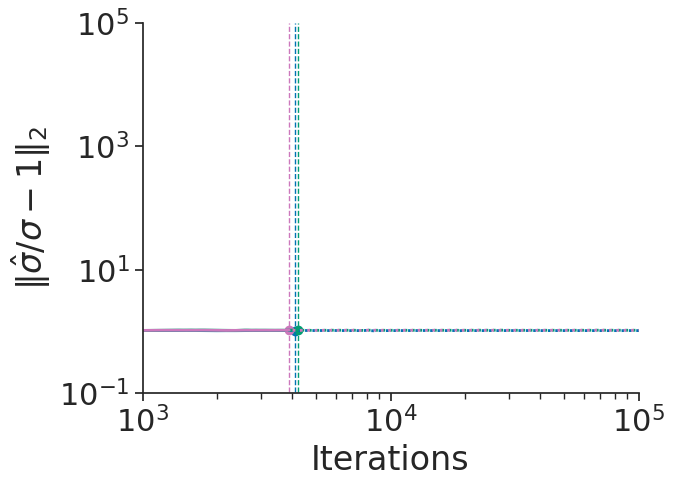}
\caption{8schools\_c} 
\end{subfigure}  
\begin{subfigure}[t]{.32\textwidth}
\centering
\includegraphics[width=\textwidth]{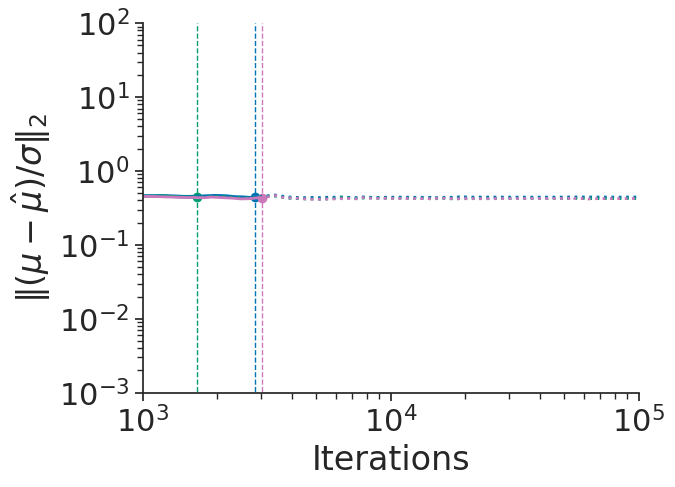}\\
\includegraphics[width=\textwidth]{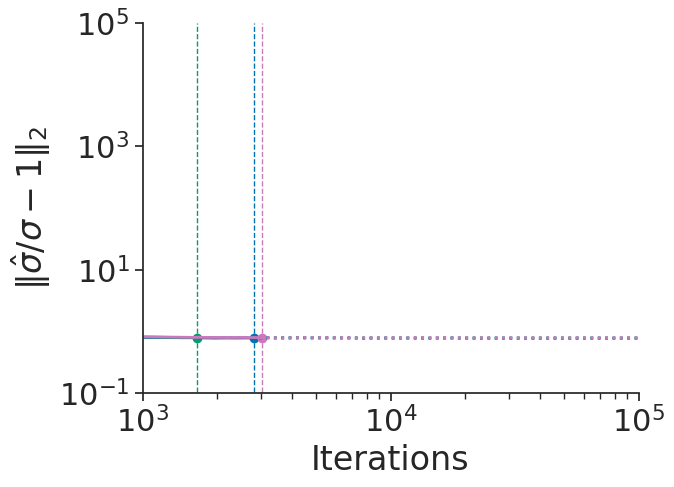}
\caption{8schools\_nc} 
\end{subfigure}  
\caption{
Accuracy comparison of  \textit{RMSProp, avgRMSProp,} and \textit{avgAdam} optimization methods in RABVI using \textit{posteriordb} datasets, where accuracy measured in terms of relative mean error (top) and relative standard deviation error (bottom). 
The vertical lines indicate the termination rule trigger points.}
\label{fig:RABVI-with-posteriordb}
\end{center}
\end{figure}

\begin{figure}[tbp]
\begin{center}
\begin{subfigure}[t]{.24\textwidth}
\centering
\includegraphics[width=\textwidth]{Posteriordb/mf_vs_fr/mf_vs_fr_arK_arK_mean_avgadam.png}
\caption{arK} 
\end{subfigure} 
\begin{subfigure}[t]{.24\textwidth}
\centering
\includegraphics[width=\textwidth]{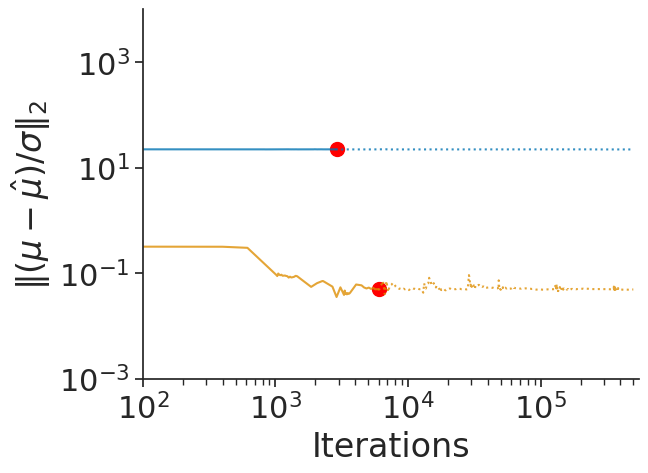}
\caption{bball\_0} 
\end{subfigure}
\begin{subfigure}[t]{.24\textwidth}
\centering
\includegraphics[width=\textwidth]{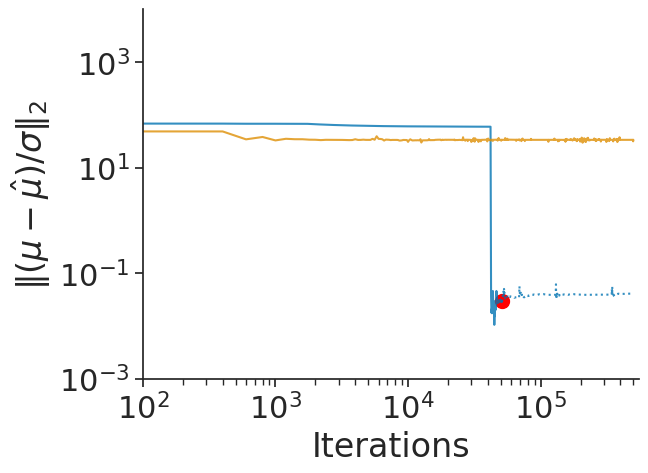}
\caption{bball\_1}
\end{subfigure}
\begin{subfigure}[t]{.24\textwidth}
\centering
\includegraphics[width=\textwidth]{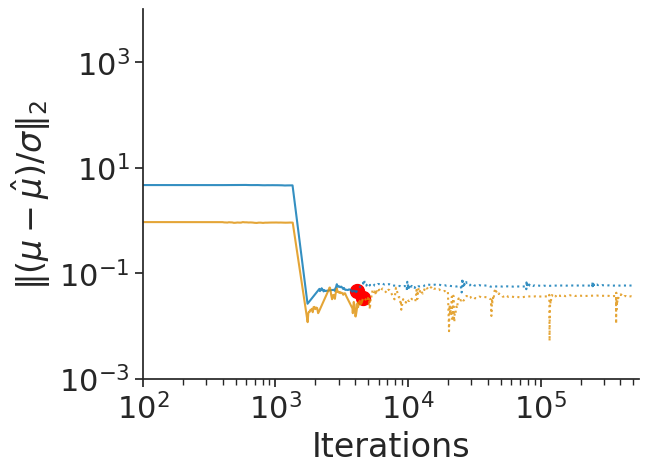}
\caption{dogs\_log} 
\end{subfigure}
\begin{subfigure}[t]{.24\textwidth}
\centering
\includegraphics[width=\textwidth]{Posteriordb/mf_vs_fr/mf_vs_fr_dogs_dogs_mean_avgadam.png}
\caption{dogs} 
\end{subfigure}
\begin{subfigure}[t]{.24\textwidth}
\centering
\includegraphics[width=\textwidth]{Posteriordb/mf_vs_fr/mf_vs_fr_diamonds_diamonds_mean_avgadam.png}
\caption{diamonds} 
\end{subfigure}
\begin{subfigure}[t]{.24\textwidth}
\centering
\includegraphics[width=\textwidth]{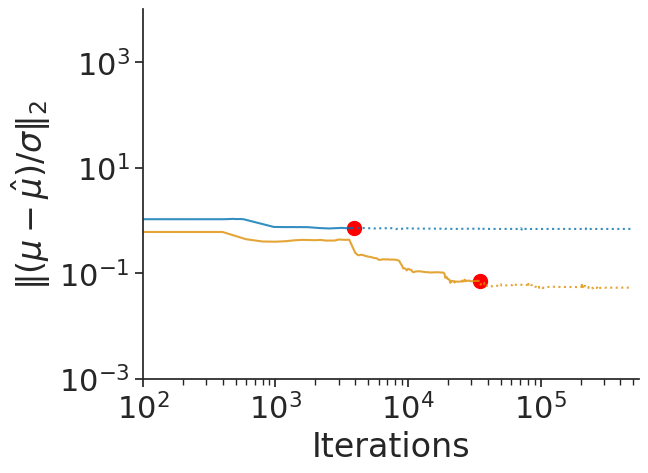}
\caption{8schools\_c} 
\end{subfigure}
\begin{subfigure}[t]{.24\textwidth}
\centering
\includegraphics[width=\textwidth]{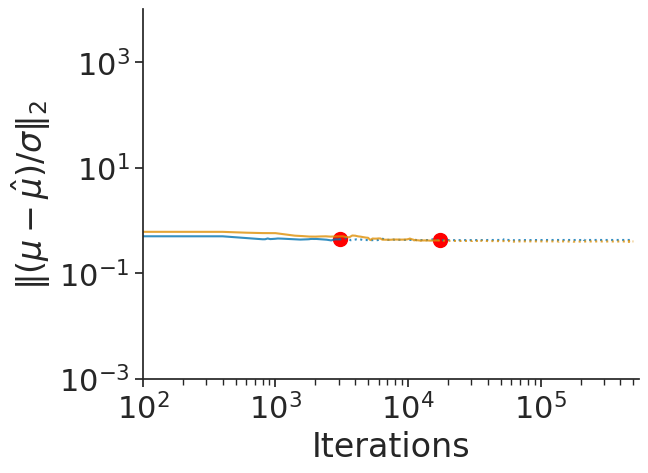}
\caption{8schools\_nc} 
\end{subfigure}
\begin{subfigure}[t]{.24\textwidth}
\centering
\includegraphics[width=\textwidth]{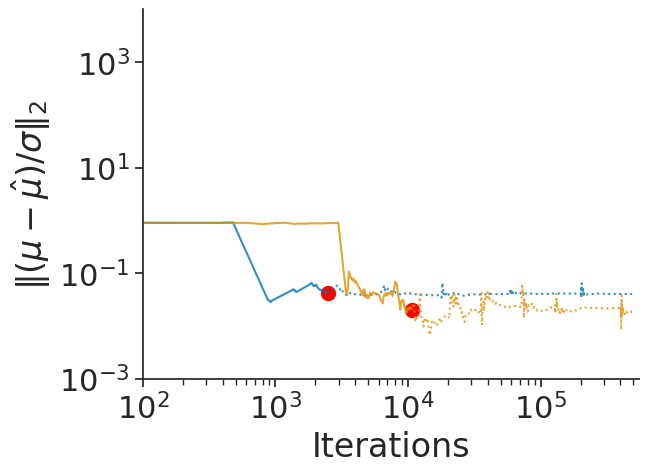}
\caption{hmm\_example}
\end{subfigure}
\begin{subfigure}[t]{.24\textwidth}
\centering
\includegraphics[width=\textwidth]{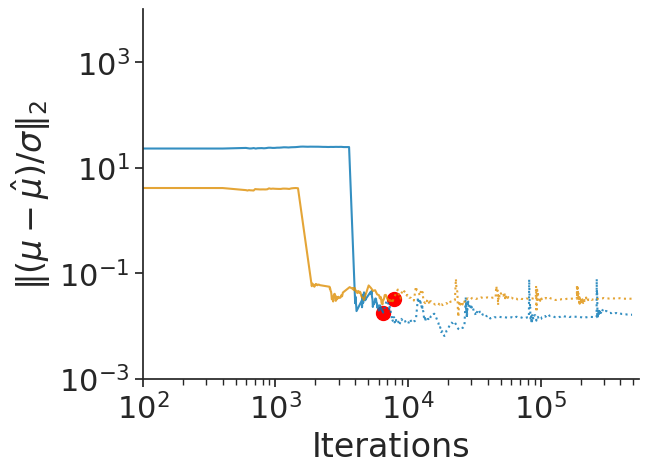}
\caption{low\_dim\_gauss} 
\end{subfigure}
\begin{subfigure}[t]{.24\textwidth}
\centering
\includegraphics[width=\textwidth]{Posteriordb/mf_vs_fr/mf_vs_fr_nes2000_nes_mean_avgadam.png}
\caption{nes2000} 
\end{subfigure}
\begin{subfigure}[t]{.24\textwidth}
\centering
\includegraphics[width=\textwidth]{Posteriordb/mf_vs_fr/mf_vs_fr_sblrc_blr_mean_avgadam.png}
\caption{sblrc} 
\end{subfigure}
\begin{subfigure}[t]{.24\textwidth}
\centering
\includegraphics[width=\textwidth]{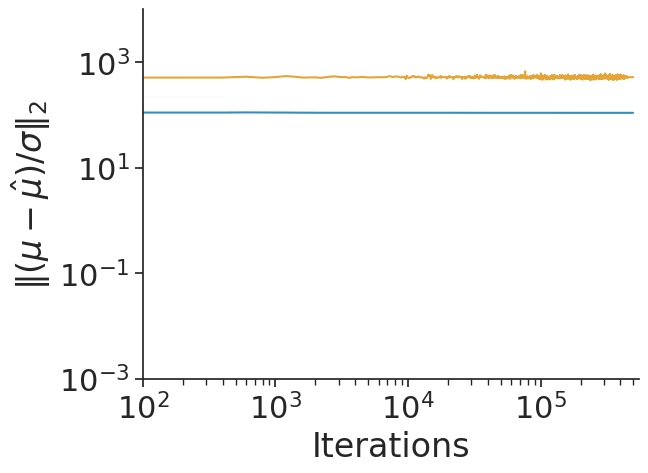}
\caption{earnings} 
\end{subfigure}
\begin{subfigure}[t]{.24\textwidth}
\centering
\includegraphics[width=\textwidth]{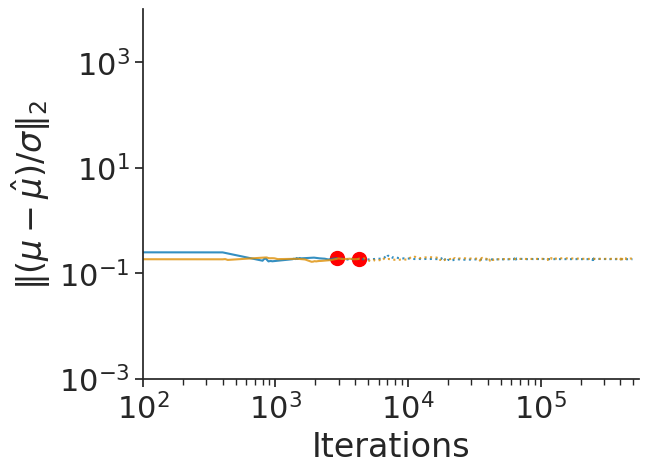}
\caption{garch} 
\end{subfigure}
\begin{subfigure}[t]{.24\textwidth}
\centering
\includegraphics[width=\textwidth]{Posteriordb/mf_vs_fr/mf_vs_fr_gp_pois_regr_gp_pois_regr_mean_avgadam.png}
\caption{gp\_pois\_regr} 
\end{subfigure} 
\begin{subfigure}[t]{.24\textwidth}
\centering
\includegraphics[width=\textwidth]{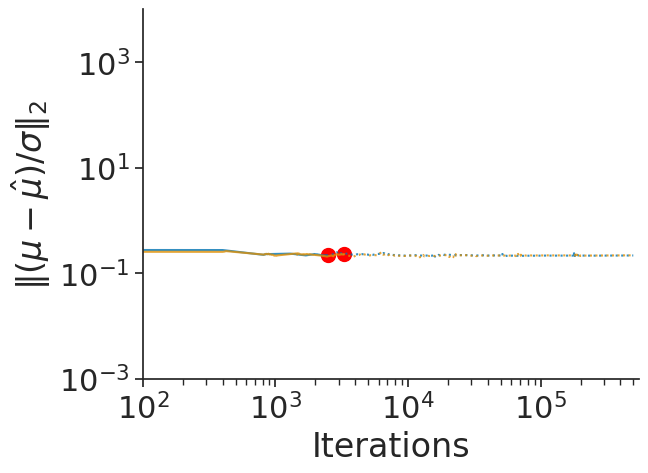}
\caption{gp\_regr} 
\end{subfigure}
\begin{subfigure}[t]{.24\textwidth}
\centering
\includegraphics[width=\textwidth]{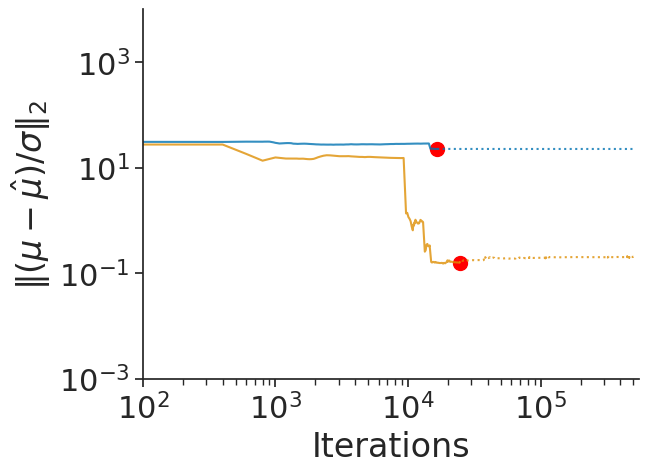}
\caption{hudson\_lynx} 
\end{subfigure}
\begin{subfigure}[t]{.24\textwidth}
\centering
\includegraphics[width=\textwidth]{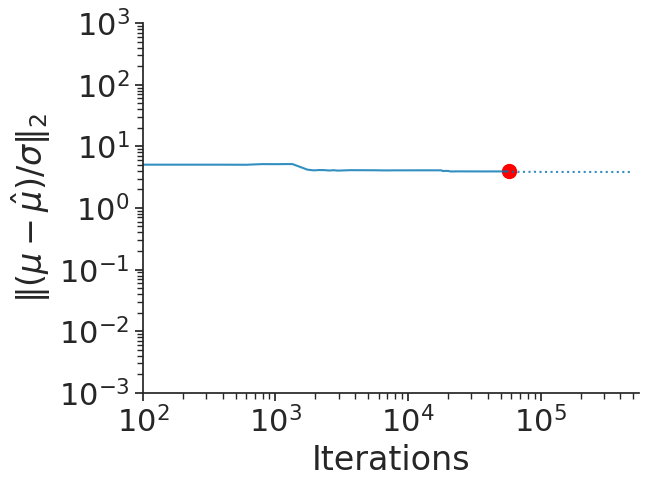}
\caption{mcycle\_gp} 
\end{subfigure}
\caption{
Accuracy of mean-field  (blue) and full-rank (orange) Gaussian family approximations for selected \texttt{posteriordb} data/models, where accuracy is measured in terms of relative mean error. 
The red dots indicate where the termination rule triggers}
\label{fig:Posteriordb-mf_vs_fr-data-and-model-comparisons-mean}
\end{center}
\end{figure}

\begin{figure}[tbp]
\begin{center}
\begin{subfigure}[t]{.24\textwidth}
\centering
\includegraphics[width=\textwidth]{Posteriordb/mf_vs_fr/mf_vs_fr_arK_arK_std_avgadam.png}
\caption{arK} 
\end{subfigure} 
\begin{subfigure}[t]{.24\textwidth}
\centering
\includegraphics[width=\textwidth]{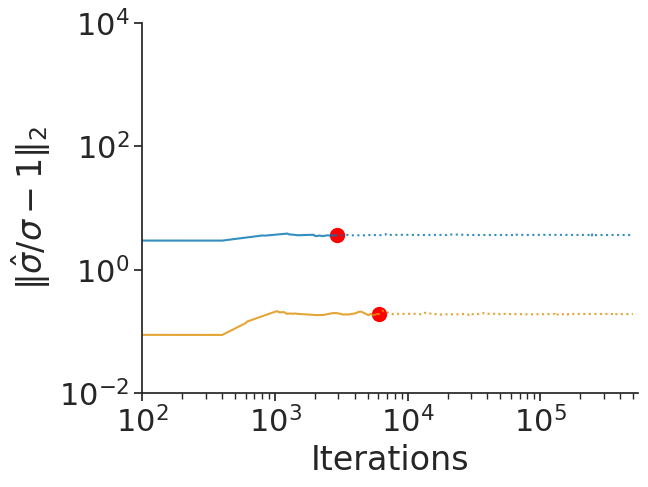}
\caption{bball\_0} 
\end{subfigure}
\begin{subfigure}[t]{.24\textwidth}
\centering
\includegraphics[width=\textwidth]{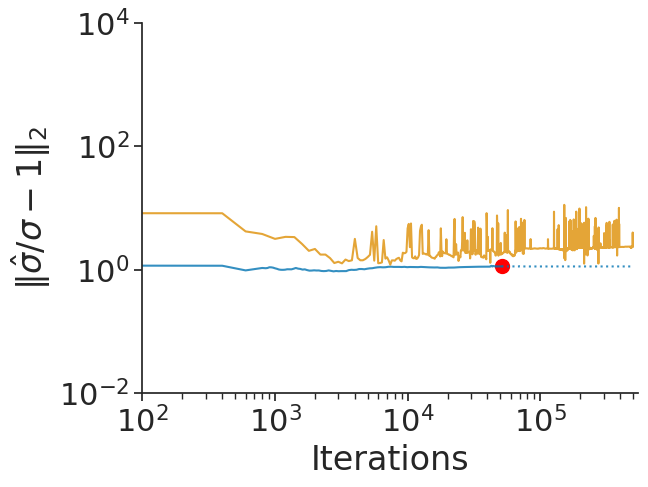}
\caption{bball\_1}
\end{subfigure}
\begin{subfigure}[t]{.24\textwidth}
\centering
\includegraphics[width=\textwidth]{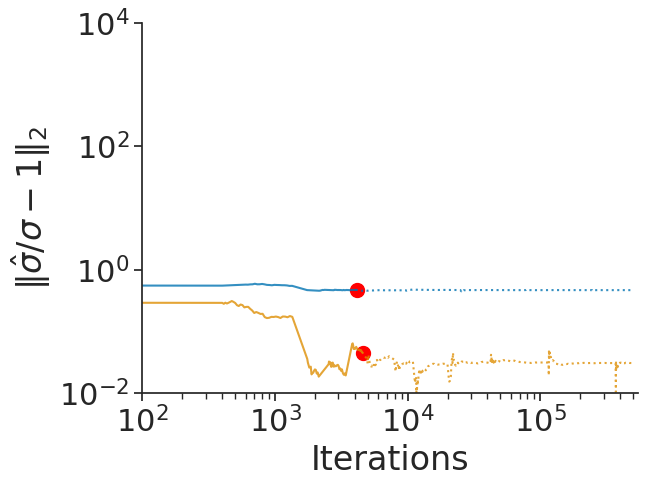}
\caption{dogs\_log} 
\end{subfigure}
\begin{subfigure}[t]{.24\textwidth}
\centering
\includegraphics[width=\textwidth]{Posteriordb/mf_vs_fr/mf_vs_fr_dogs_dogs_std_avgadam.png}
\caption{dogs} 
\end{subfigure}
\begin{subfigure}[t]{.24\textwidth}
\centering
\includegraphics[width=\textwidth]{Posteriordb/mf_vs_fr/mf_vs_fr_diamonds_diamonds_std_avgadam.png}
\caption{diamonds} 
\end{subfigure}
\begin{subfigure}[t]{.24\textwidth}
\centering
\includegraphics[width=\textwidth]{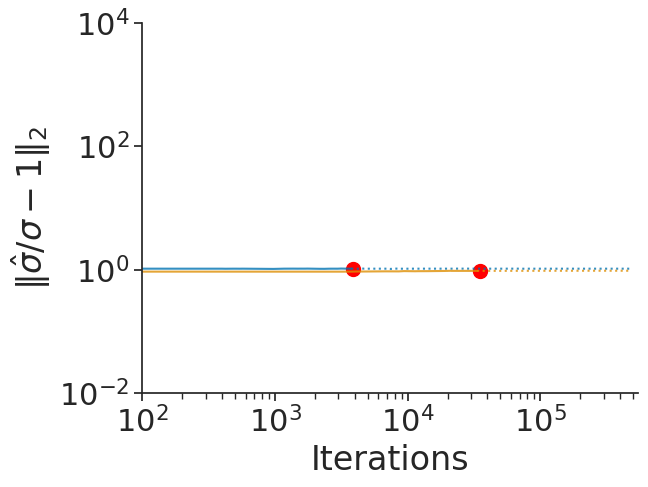}
\caption{8schools\_c} 
\end{subfigure}
\begin{subfigure}[t]{.24\textwidth}
\centering
\includegraphics[width=\textwidth]{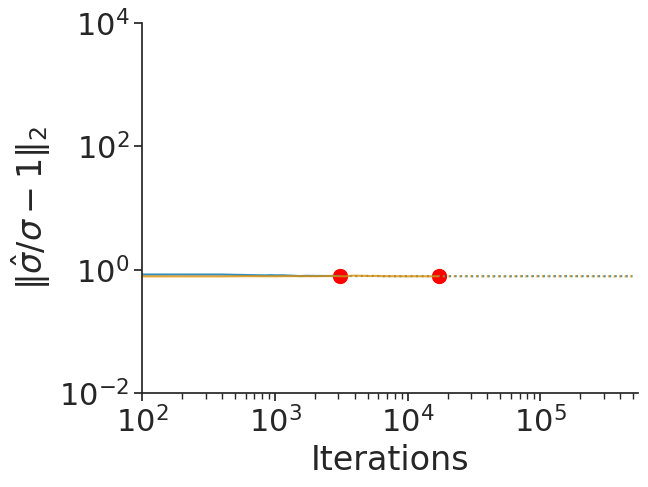}
\caption{8schools\_nc} 
\end{subfigure}
\begin{subfigure}[t]{.24\textwidth}
\centering
\includegraphics[width=\textwidth]{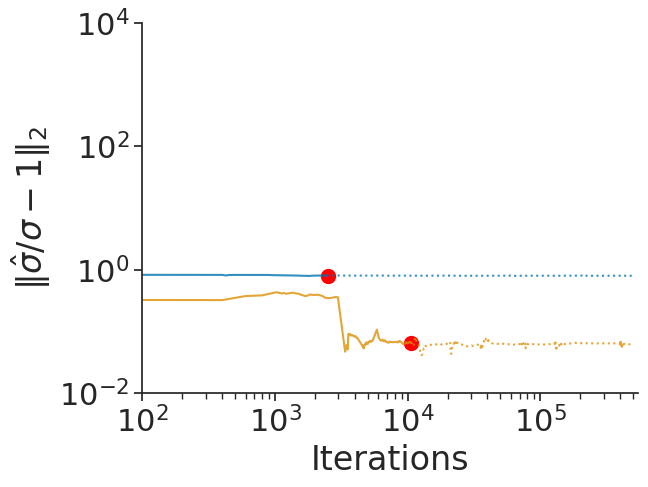}
\caption{hmm\_example}
\end{subfigure}
\begin{subfigure}[t]{.24\textwidth}
\centering
\includegraphics[width=\textwidth]{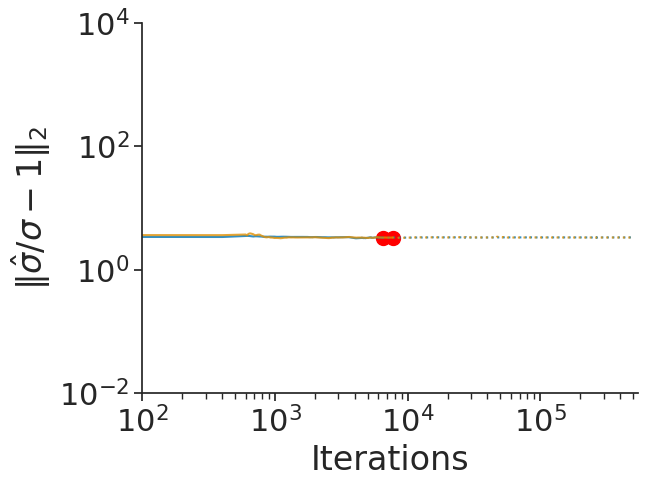}
\caption{low\_dim\_gauss} 
\end{subfigure}
\begin{subfigure}[t]{.24\textwidth}
\centering
\includegraphics[width=\textwidth]{Posteriordb/mf_vs_fr/mf_vs_fr_nes2000_nes_std_avgadam.png}
\caption{nes2000} 
\end{subfigure}
\begin{subfigure}[t]{.24\textwidth}
\centering
\includegraphics[width=\textwidth]{Posteriordb/mf_vs_fr/mf_vs_fr_sblrc_blr_std_avgadam.png}
\caption{sblrc} 
\end{subfigure}
\begin{subfigure}[t]{.24\textwidth}
\centering
\includegraphics[width=\textwidth]{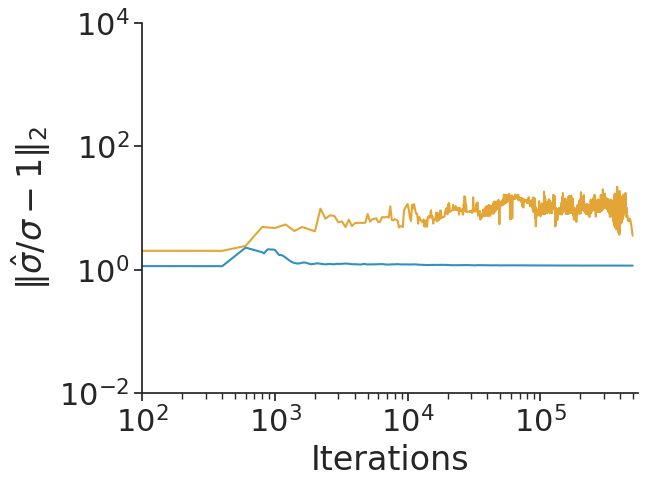}
\caption{earnings} 
\end{subfigure}
\begin{subfigure}[t]{.24\textwidth}
\centering
\includegraphics[width=\textwidth]{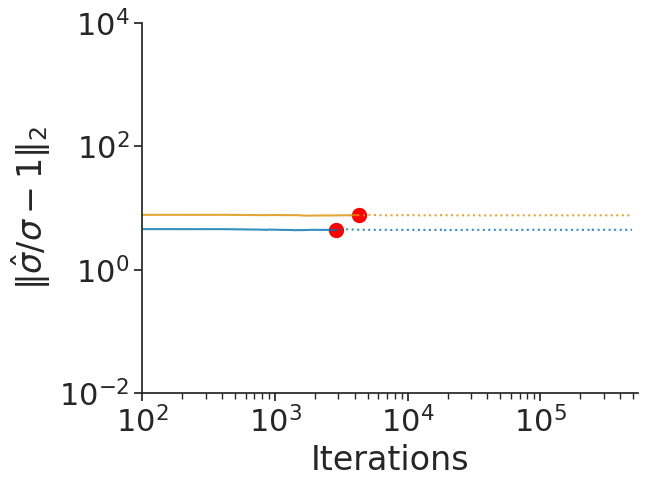}
\caption{garch} 
\end{subfigure}
\begin{subfigure}[t]{.24\textwidth}
\centering
\includegraphics[width=\textwidth]{Posteriordb/mf_vs_fr/mf_vs_fr_gp_pois_regr_gp_pois_regr_std_avgadam.png}
\caption{gp\_pois\_regr} 
\end{subfigure} 
\begin{subfigure}[t]{.24\textwidth}
\centering
\includegraphics[width=\textwidth]{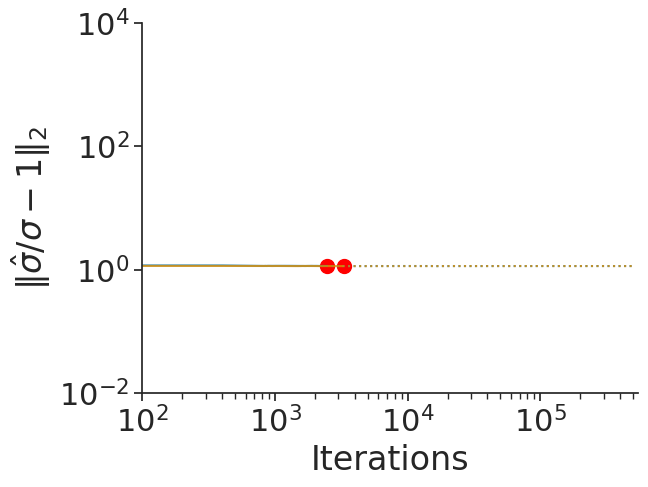}
\caption{gp\_regr} 
\end{subfigure}
\begin{subfigure}[t]{.24\textwidth}
\centering
\includegraphics[width=\textwidth]{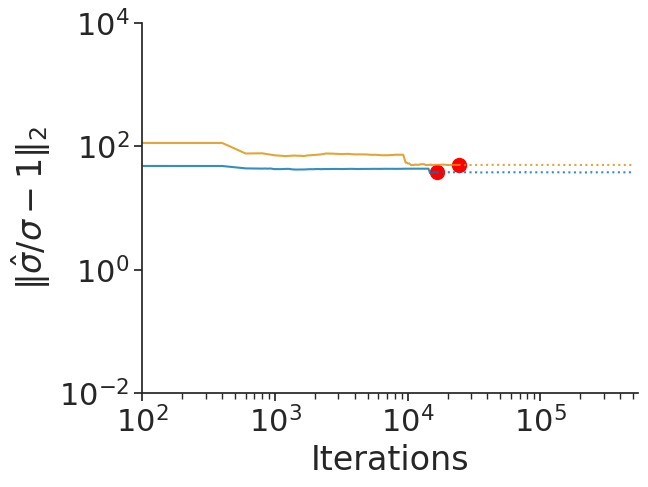}
\caption{hudson\_lynx} 
\end{subfigure}
\begin{subfigure}[t]{.24\textwidth}
\centering
\includegraphics[width=\textwidth]{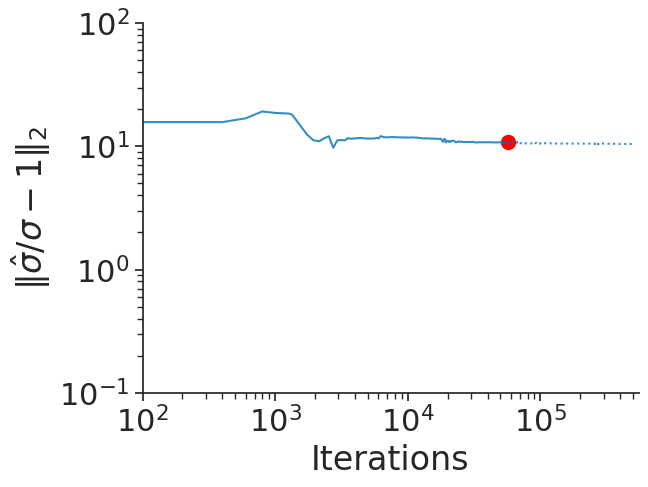}
\caption{mcycle\_gp} 
\end{subfigure}
\caption{
Accuracy of mean-field  (blue) and full-rank (orange) Gaussian family approximations for selected \texttt{posteriordb} data/models, where accuracy is measured in terms of relative standard deviation error. 
The red dots indicate where the termination rule triggers}
\label{fig:Posteriordb-mf_vs_fr-data-and-model-comparisons-std}
\end{center}
\end{figure}

\begin{figure}[tbp]
\begin{center}
\begin{subfigure}[t]{.24\textwidth}
\centering
\includegraphics[width=\textwidth]{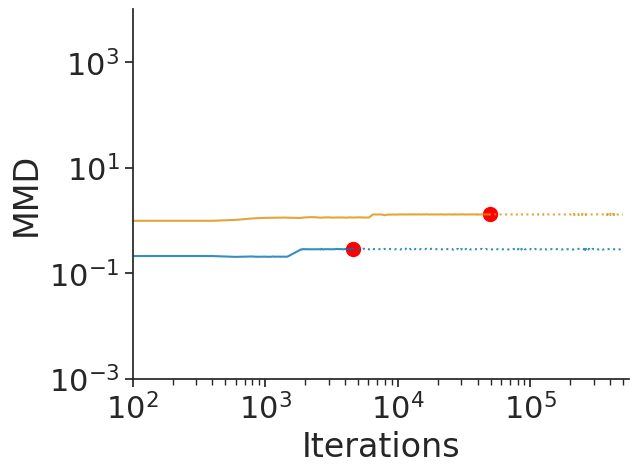}
\caption{arK} 
\end{subfigure} 
\begin{subfigure}[t]{.24\textwidth}
\centering
\includegraphics[width=\textwidth]{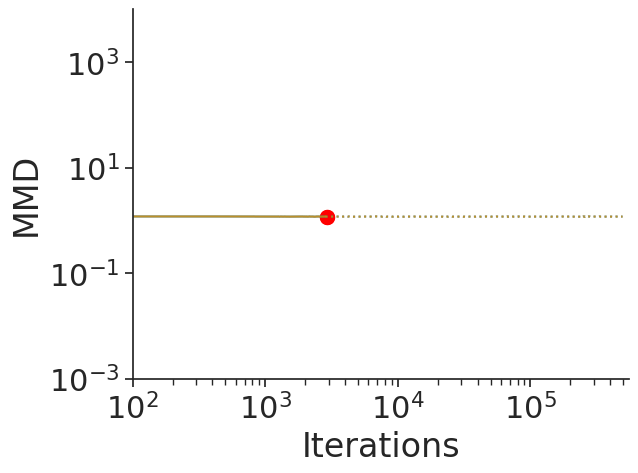}
\caption{bball\_0} 
\end{subfigure}
\begin{subfigure}[t]{.24\textwidth}
\centering
\includegraphics[width=\textwidth]{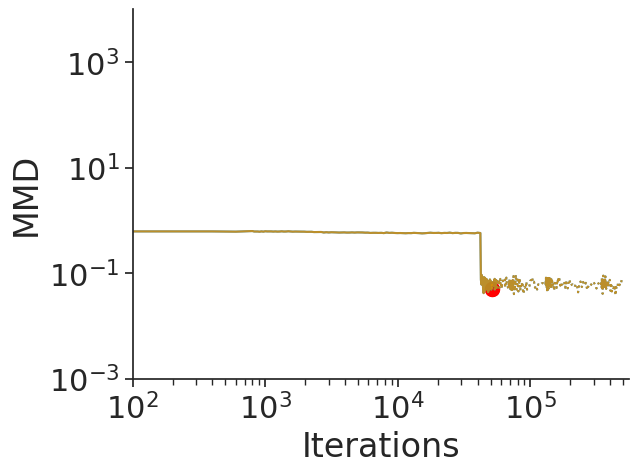}
\caption{bball\_1}
\end{subfigure}
\begin{subfigure}[t]{.24\textwidth}
\centering
\includegraphics[width=\textwidth]{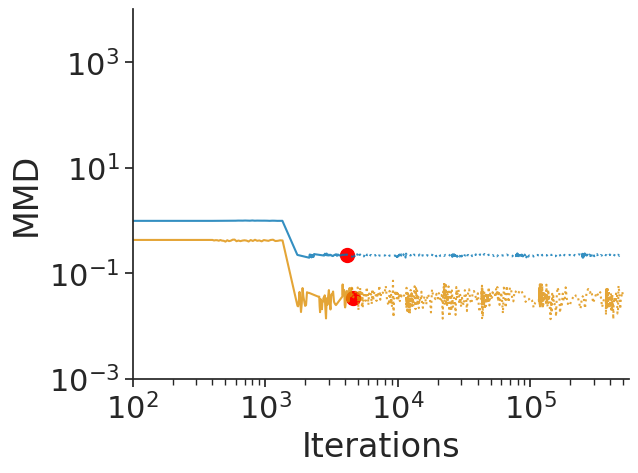}
\caption{dogs\_log} 
\end{subfigure}
\begin{subfigure}[t]{.24\textwidth}
\centering
\includegraphics[width=\textwidth]{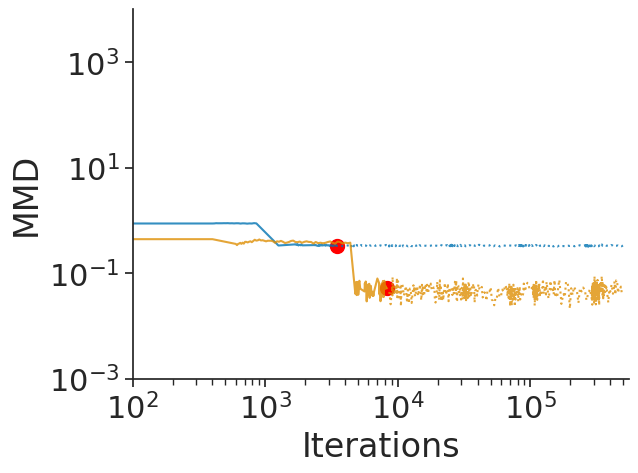}
\caption{dogs} 
\end{subfigure}
\begin{subfigure}[t]{.24\textwidth}
\centering
\includegraphics[width=\textwidth]{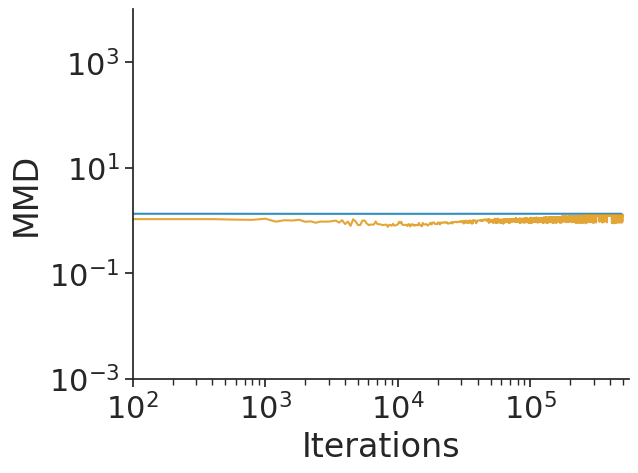}
\caption{diamonds} 
\end{subfigure}
\begin{subfigure}[t]{.24\textwidth}
\centering
\includegraphics[width=\textwidth]{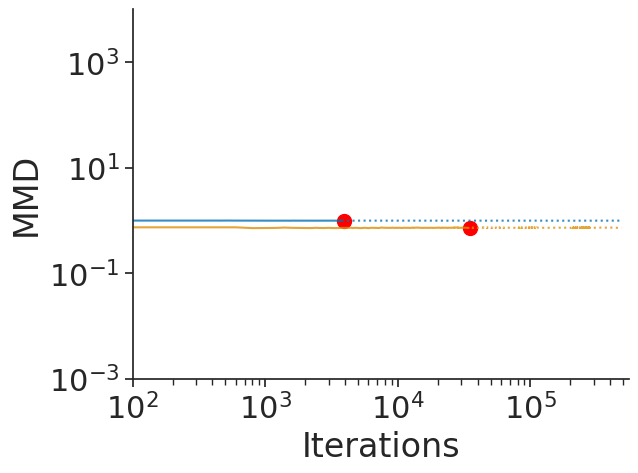}
\caption{8schools\_c} 
\end{subfigure}
\begin{subfigure}[t]{.24\textwidth}
\centering
\includegraphics[width=\textwidth]{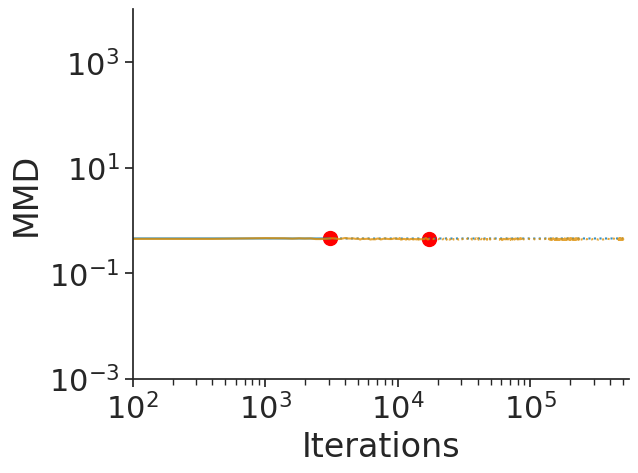}
\caption{8schools\_nc} 
\end{subfigure}
\begin{subfigure}[t]{.24\textwidth}
\centering
\includegraphics[width=\textwidth]{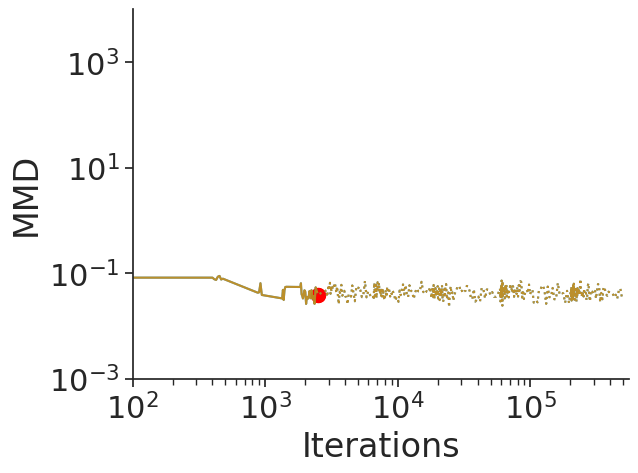}
\caption{hmm\_example}
\end{subfigure}
\begin{subfigure}[t]{.24\textwidth}
\centering
\includegraphics[width=\textwidth]{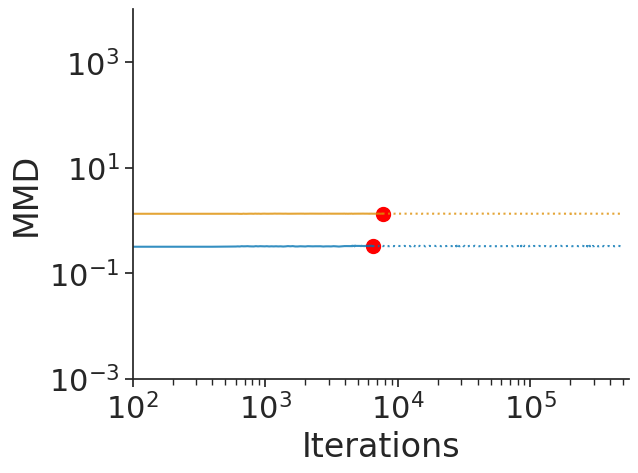}
\caption{low\_dim\_gauss} 
\end{subfigure}
\begin{subfigure}[t]{.24\textwidth}
\centering
\includegraphics[width=\textwidth]{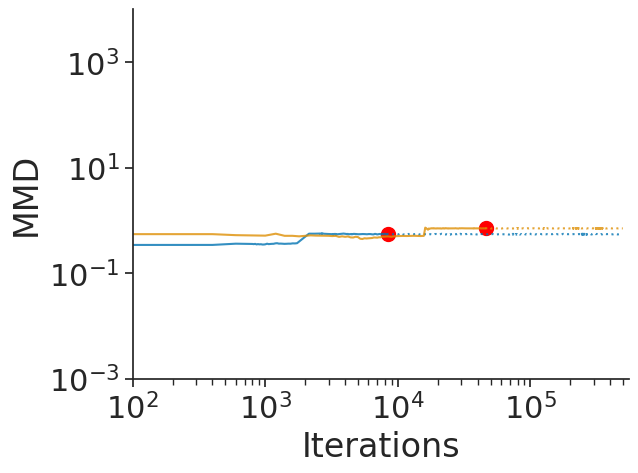}
\caption{nes2000} 
\end{subfigure}
\begin{subfigure}[t]{.24\textwidth}
\centering
\includegraphics[width=\textwidth]{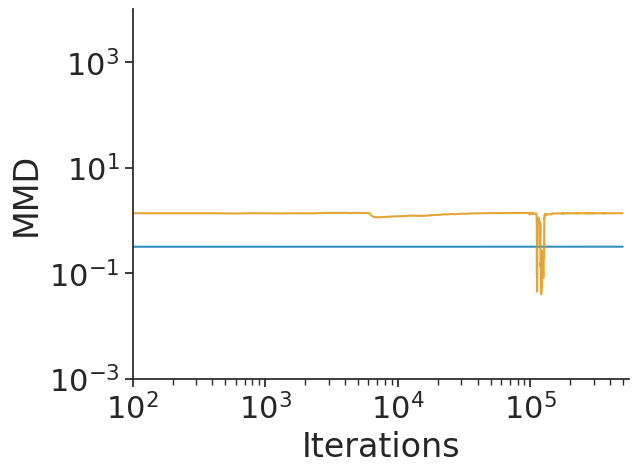}
\caption{sblrc} 
\end{subfigure}
\begin{subfigure}[t]{.24\textwidth}
\centering
\includegraphics[width=\textwidth]{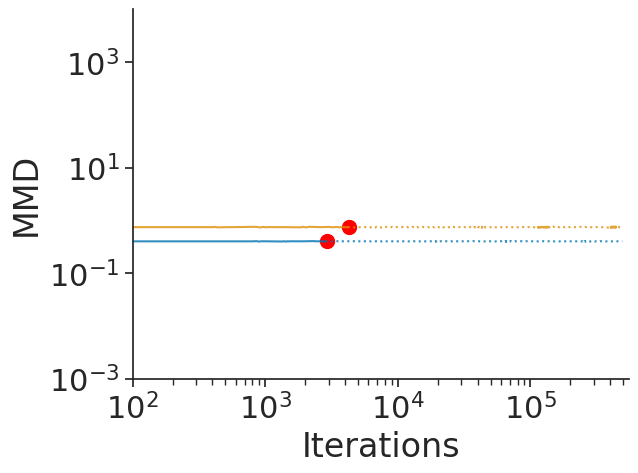}
\caption{garch} 
\end{subfigure}
\begin{subfigure}[t]{.24\textwidth}
\centering
\includegraphics[width=\textwidth]{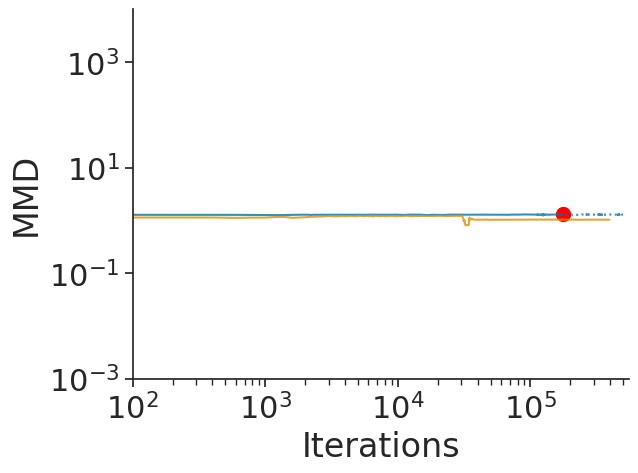}
\caption{gp\_pois\_regr} 
\end{subfigure} 
\begin{subfigure}[t]{.24\textwidth}
\centering
\includegraphics[width=\textwidth]{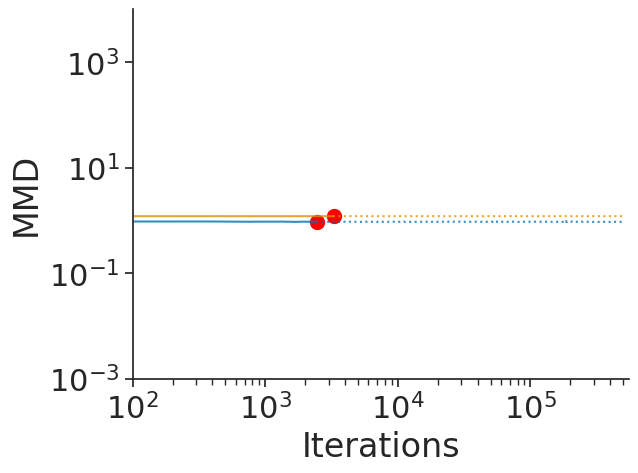}
\caption{gp\_regr} 
\end{subfigure}
\begin{subfigure}[t]{.24\textwidth}
\centering
\includegraphics[width=\textwidth]{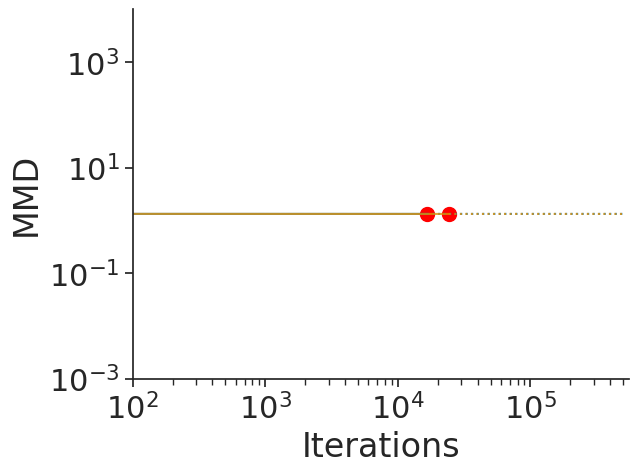}
\caption{hudson\_lynx} 
\end{subfigure}
\begin{subfigure}[t]{.24\textwidth}
\centering
\includegraphics[width=\textwidth]{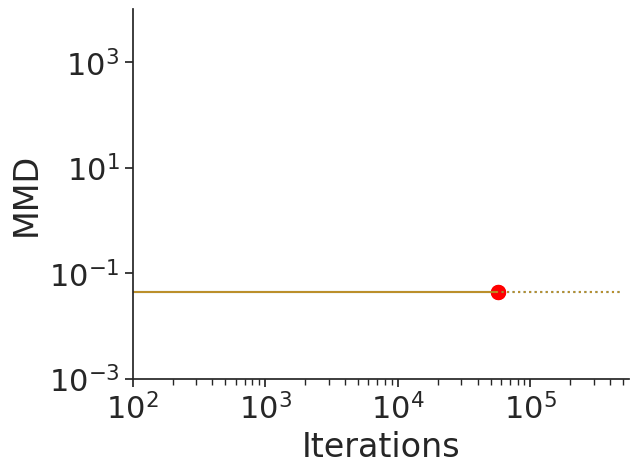}
\caption{mcycle\_gp} 
\end{subfigure}
\caption{
Accuracy of mean-field  (blue) and full-rank (orange) Gaussian family approximations for selected \texttt{posteriordb} data/models, where accuracy is measured in MMD. 
The red dots indicate where the termination rule triggers}
\label{fig:Posteriordb-mf_vs_fr-data-and-model-comparisons-mean}
\end{center}
\end{figure}

\begin{figure}[tbp]
\begin{center}
\begin{subfigure}[t]{.24\textwidth}
\centering
\includegraphics[width=\textwidth]{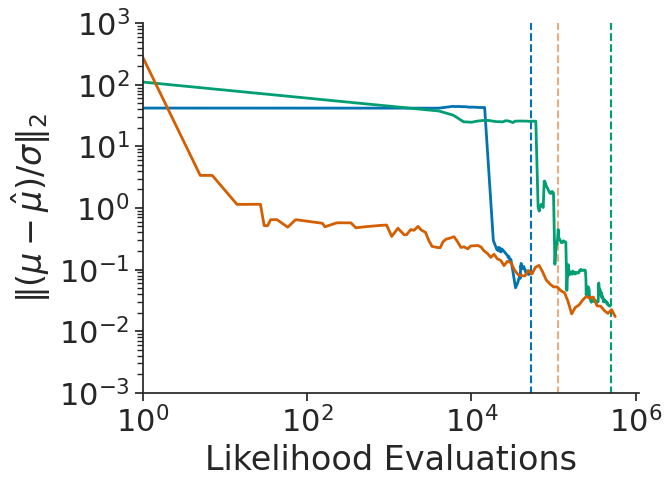} \\
\includegraphics[width=\textwidth]{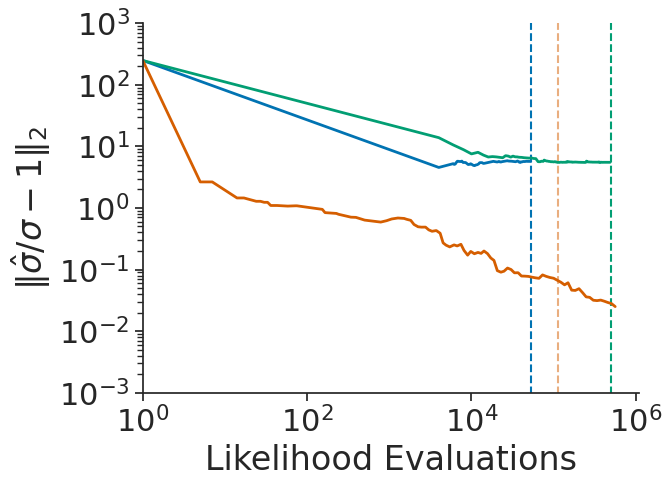}
\caption{arK} 
\end{subfigure} 
\begin{subfigure}[t]{.24\textwidth}
\centering
\includegraphics[width=\textwidth]{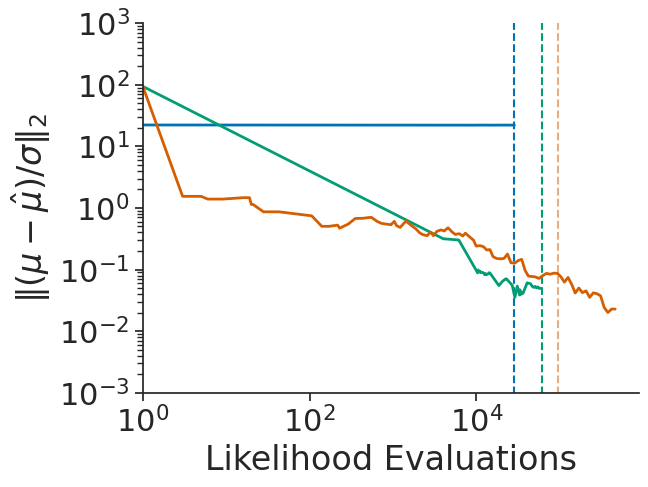} \\
\includegraphics[width=\textwidth]{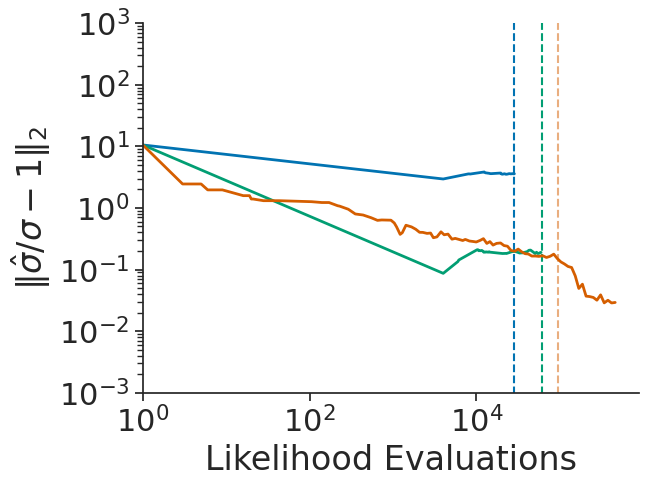}
\caption{bball\_0} 
\end{subfigure} 
\begin{subfigure}[t]{.24\textwidth}
\centering
\includegraphics[width=\textwidth]{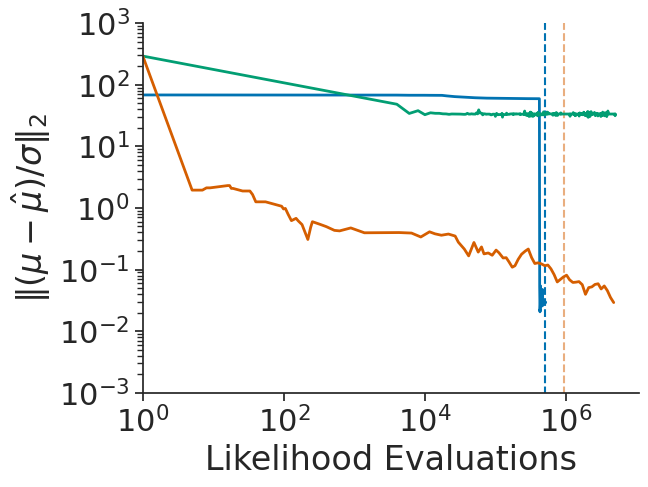} \\
\includegraphics[width=\textwidth]{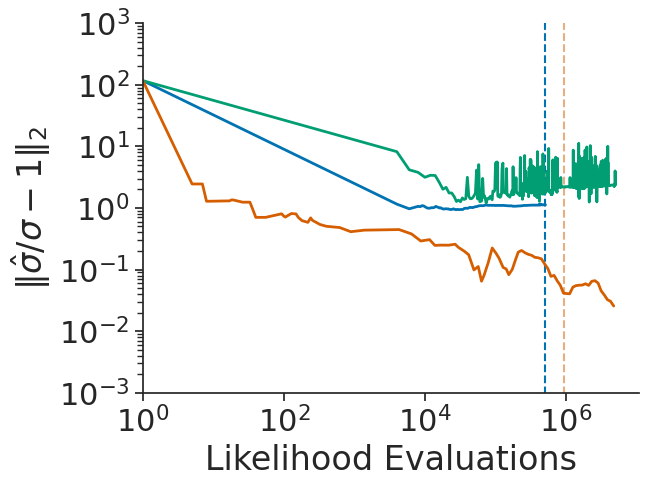}
\caption{bball\_1} 
\end{subfigure} 
\begin{subfigure}[t]{.24\textwidth}
\centering
\includegraphics[width=\textwidth]{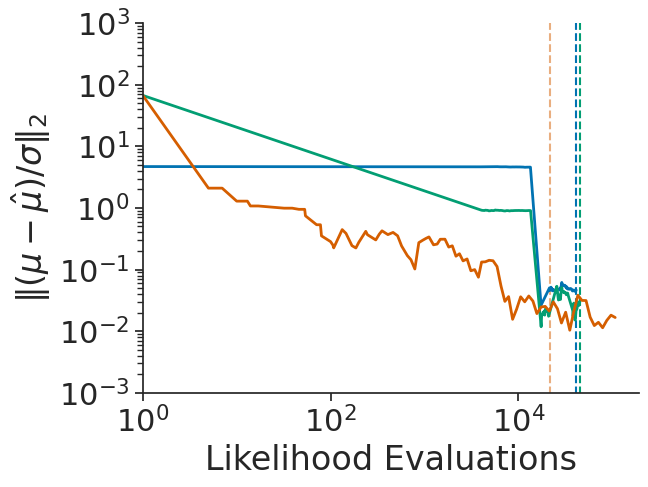} \\
\includegraphics[width=\textwidth]{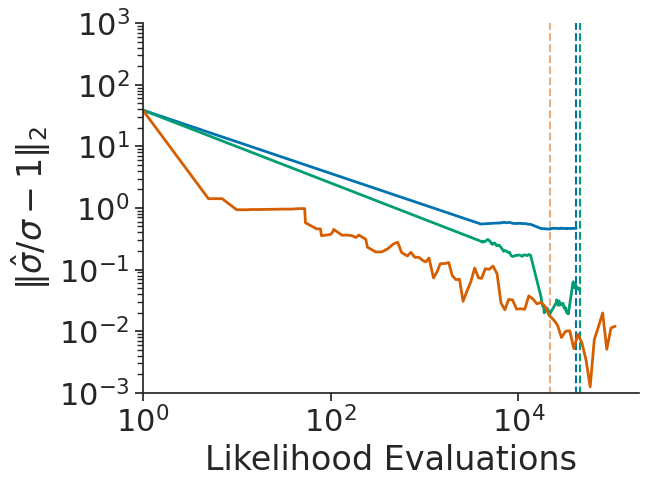}
\caption{dogs\_logs}  
\end{subfigure} 
\begin{subfigure}[t]{.24\textwidth}
\centering
\includegraphics[width=\textwidth]{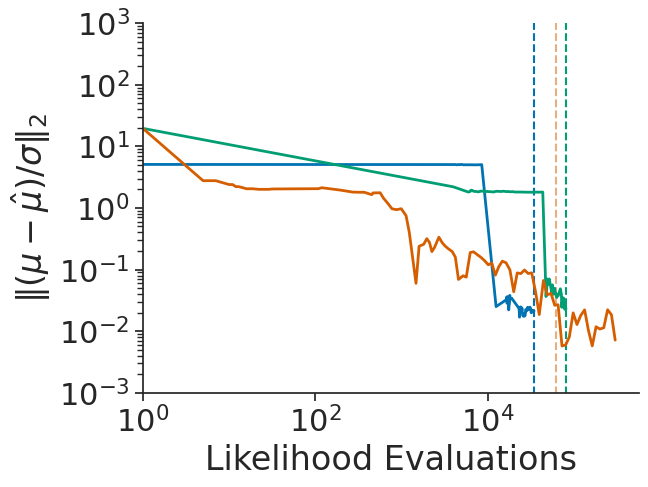} \\
\includegraphics[width=\textwidth]{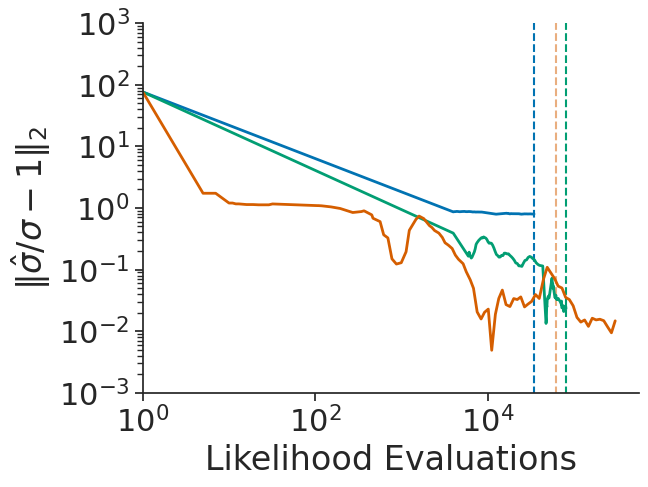}
\caption{dogs}  
\end{subfigure} 
\begin{subfigure}[t]{.24\textwidth}
\centering
\includegraphics[width=\textwidth]{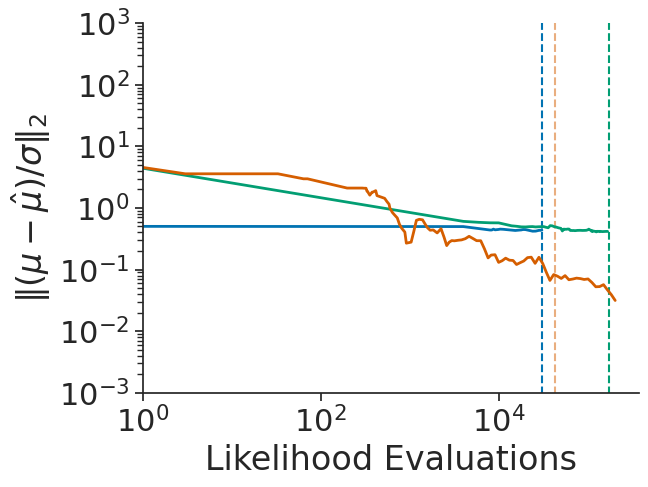} \\
\includegraphics[width=\textwidth]{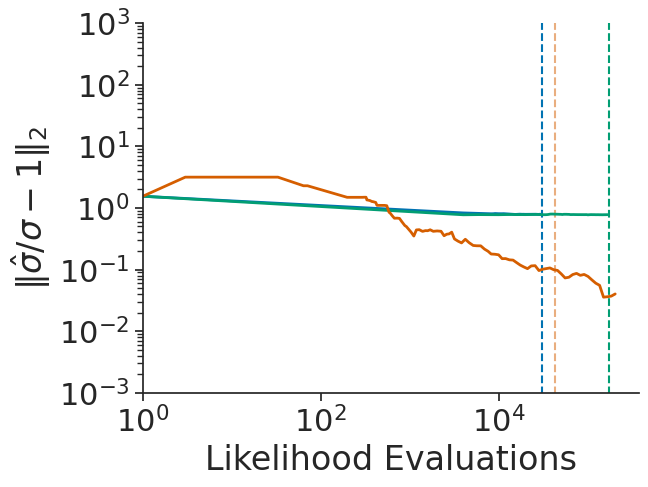}
\caption{8schools\_nc} 
\end{subfigure} 
\begin{subfigure}[t]{.24\textwidth}
\centering
\includegraphics[width=\textwidth]{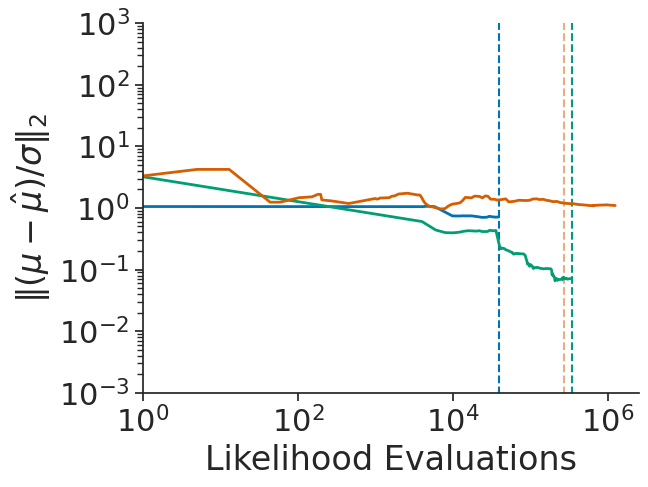} \\
\includegraphics[width=\textwidth]{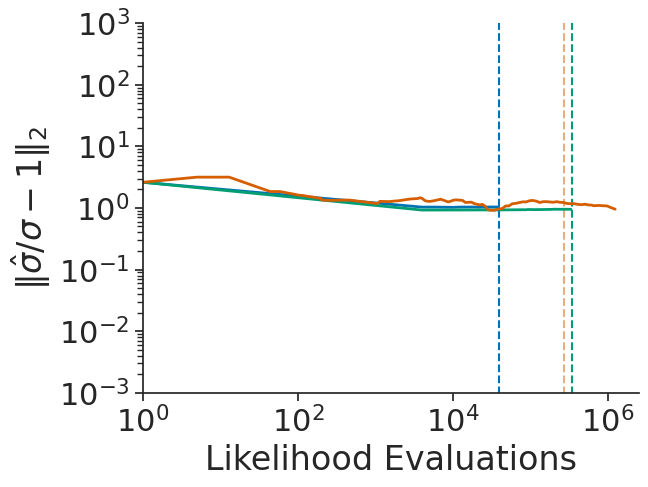}
\caption{8schools\_c} 
\end{subfigure} 
\begin{subfigure}[t]{.24\textwidth}
\centering
\includegraphics[width=\textwidth]{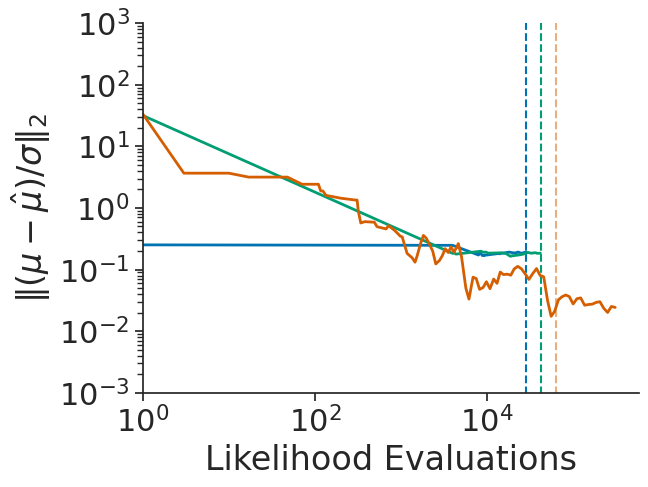} \\
\includegraphics[width=\textwidth]{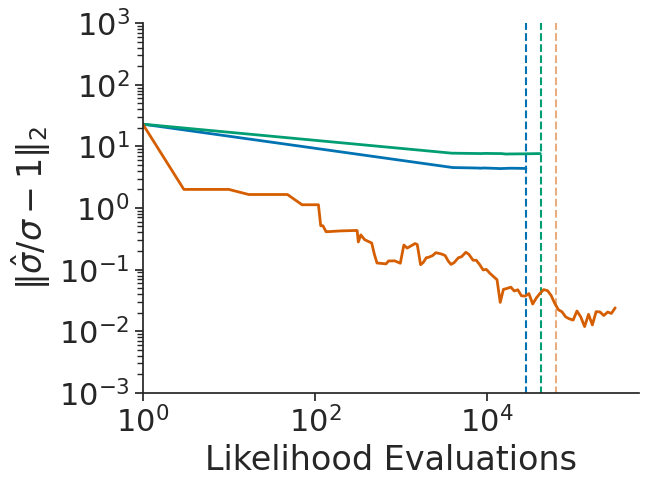}
\caption{garch} 
\end{subfigure} 
\caption{
Results of RABVI with mean-field Gaussian (blue) and full-rank Gaussian (green) family comparison to dynamic  HMC (orange) in terms of relative mean error (top) and relative standard error (bottom).
Blue and green vertical lines show the termination rule triggering points in RABVI and orange vertical line shows the end of warm-up period in dynamic HMC.}
\label{fig:Runtime-comparison-posteriordb}
\end{center}
\end{figure}

\begin{figure}[tbp]
\begin{center}
\begin{subfigure}[t]{.24\textwidth}
\centering
\includegraphics[width=\textwidth]{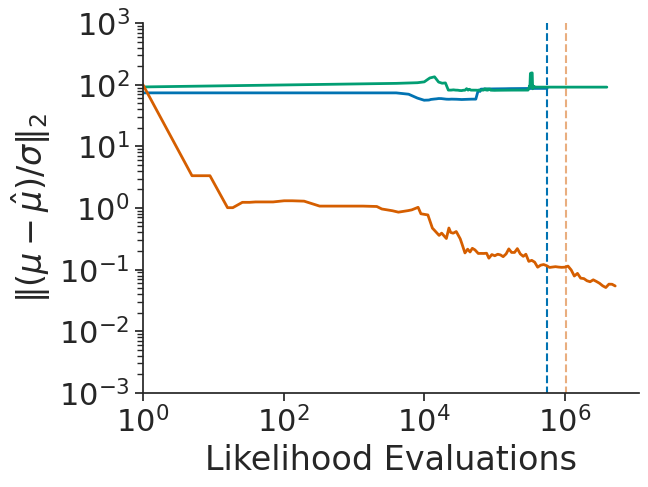} \\
\includegraphics[width=\textwidth]{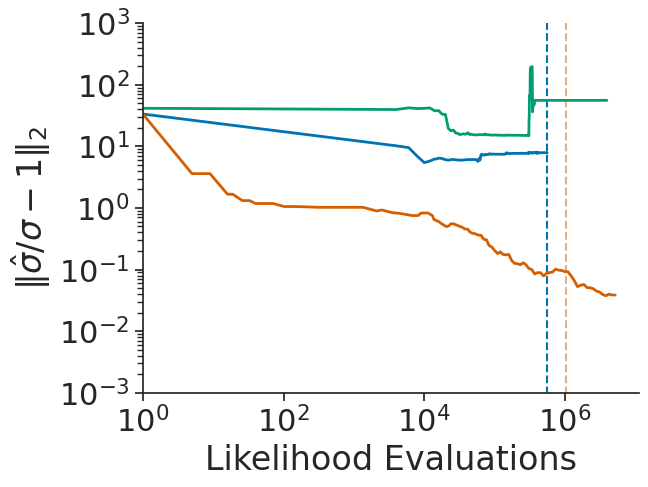}
\caption{gp\_pois\_regr} 
\end{subfigure} 
\begin{subfigure}[t]{.24\textwidth}
\centering
\includegraphics[width=\textwidth]{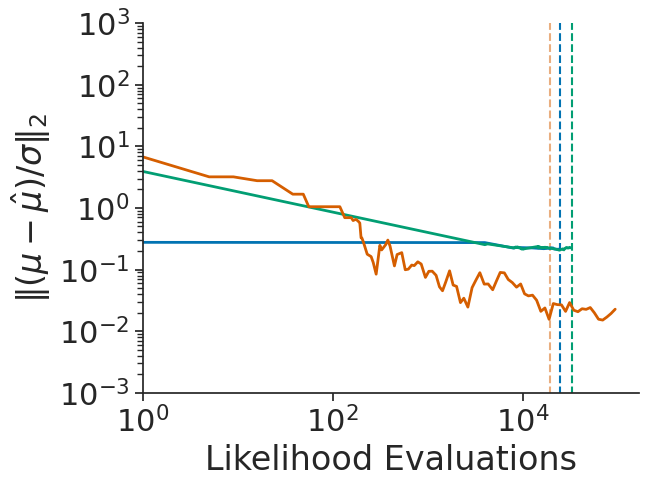} \\
\includegraphics[width=\textwidth]{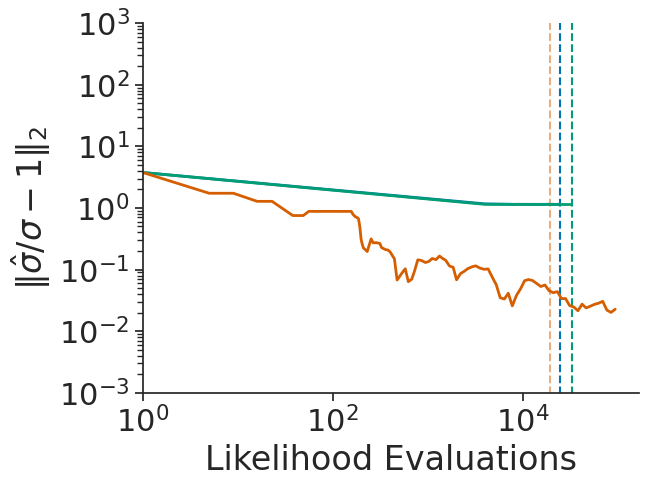}
\caption{gp\_regr} 
\end{subfigure} 
\begin{subfigure}[t]{.24\textwidth}
\centering
\includegraphics[width=\textwidth]{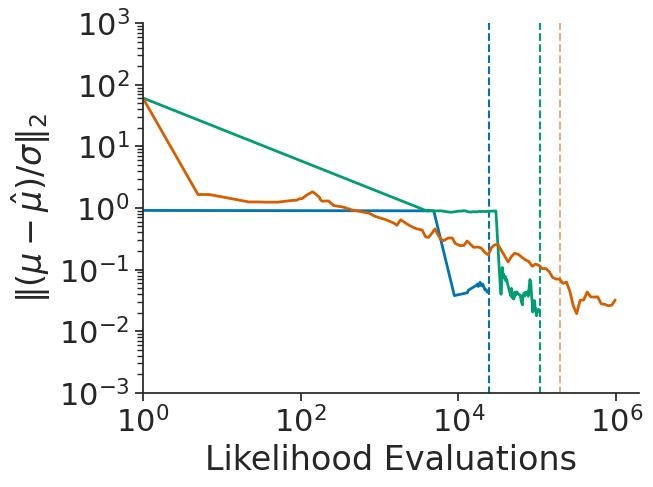} \\
\includegraphics[width=\textwidth]{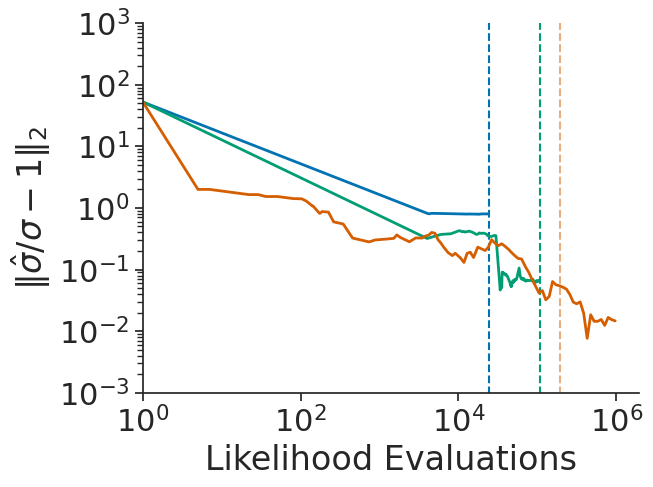}
\caption{hmm\_example} 
\end{subfigure} 
\begin{subfigure}[t]{.24\textwidth}
\centering
\includegraphics[width=\textwidth]{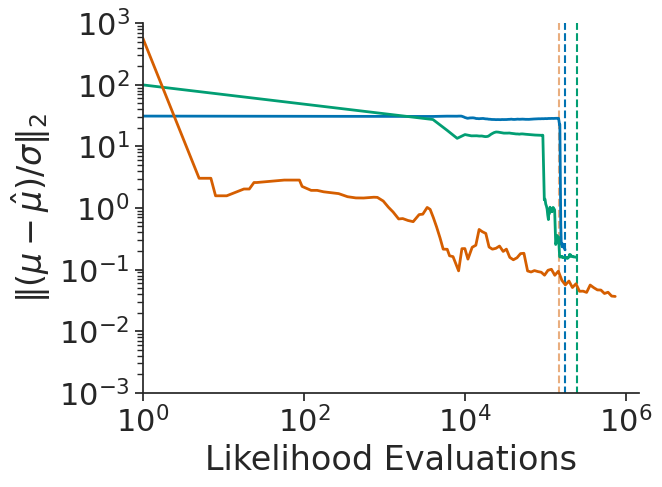} \\
\includegraphics[width=\textwidth]{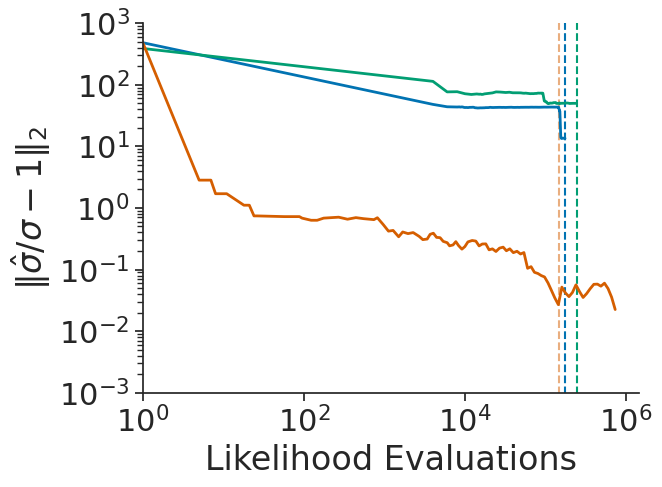}
\caption{hudson\_lynx} 
\end{subfigure} 
\begin{subfigure}[t]{.24\textwidth}
\centering
\includegraphics[width=\textwidth]{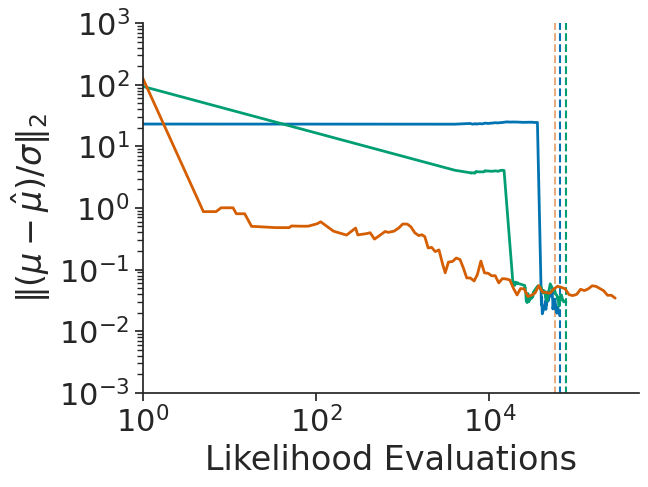} \\
\includegraphics[width=\textwidth]{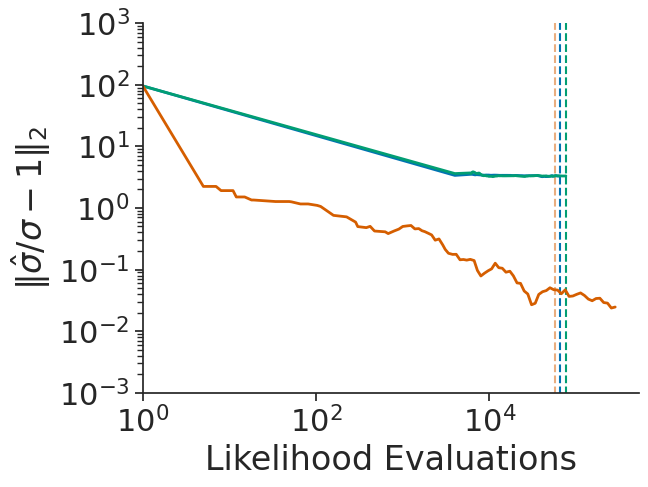}
\caption{low\_dim\_gauss} 
\end{subfigure} 
\begin{subfigure}[t]{.24\textwidth}
\centering
\includegraphics[width=\textwidth]{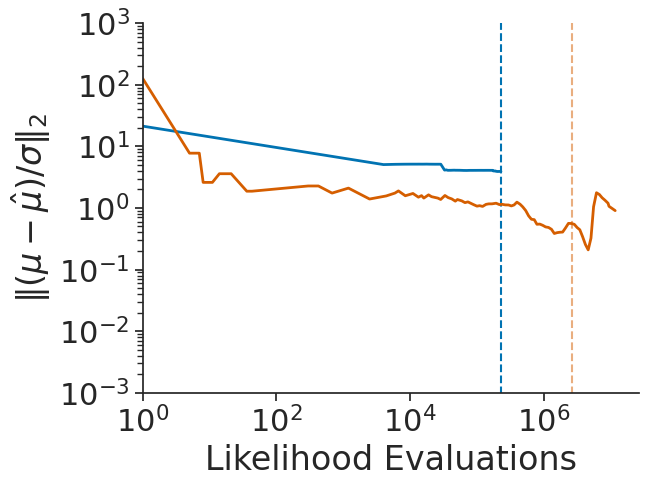} \\
\includegraphics[width=\textwidth]{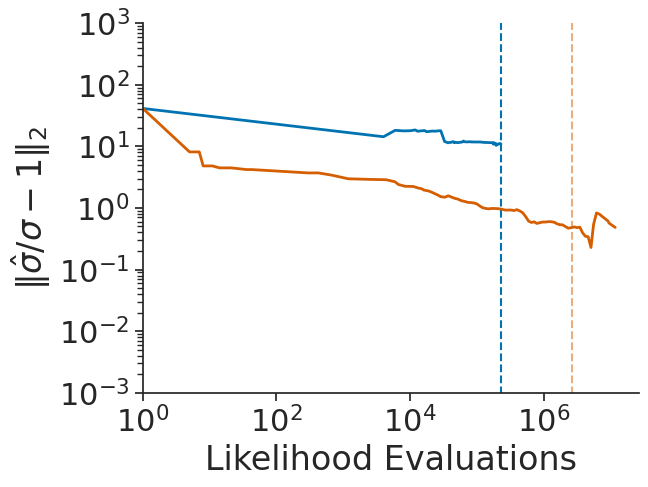}
\caption{mcycle\_gp} 
\end{subfigure} 
\begin{subfigure}[t]{.24\textwidth}
\centering
\includegraphics[width=\textwidth]{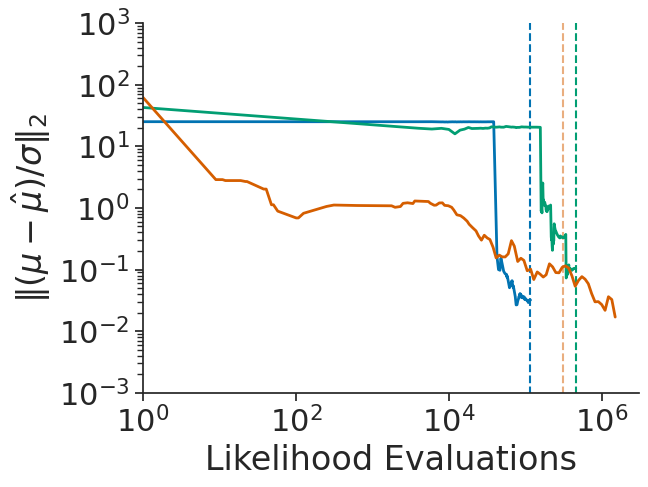} \\
\includegraphics[width=\textwidth]{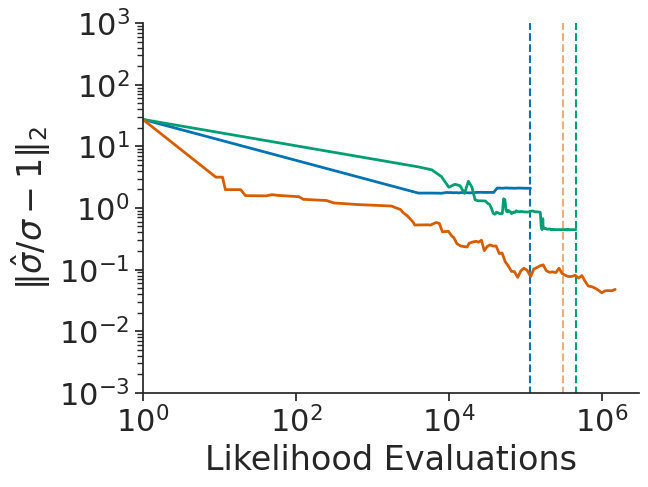}
\caption{nes2000} 
\end{subfigure} 
\begin{subfigure}[t]{.24\textwidth}
\centering
\includegraphics[width=\textwidth]{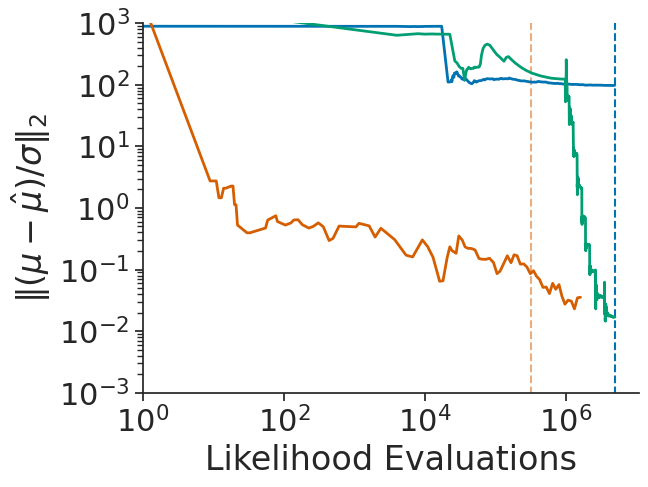} \\
\includegraphics[width=\textwidth]{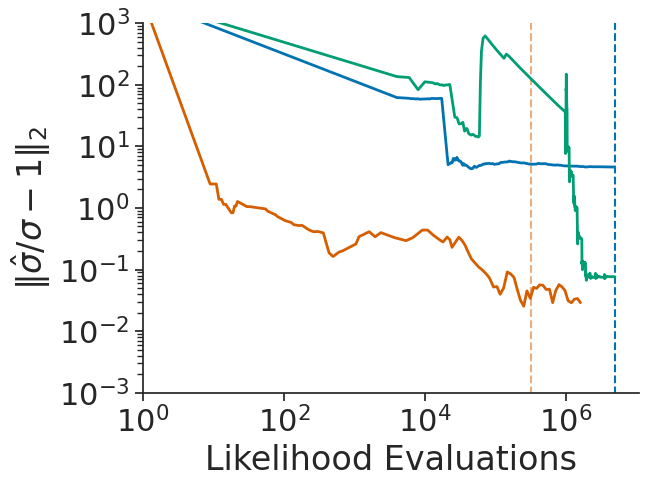}
\caption{sblrc} 
\end{subfigure} 
\caption{
Results of RABVI with mean-field Gaussian (blue) and full-rank Gaussian (green) family comparison to dynamic  HMC (orange) in terms of relative mean error (top) and relative standard error (bottom).
Blue and green vertical lines show the termination rule triggering points in RABVI and orange vertical line shows the end of warm-up period in dynamic HMC.}
\label{fig:Runtime-comparison-posteriordb-more}
\end{center}
\end{figure}

\begin{figure}[tbp]
\begin{center}
\centering
\includegraphics[width=.8\textwidth]{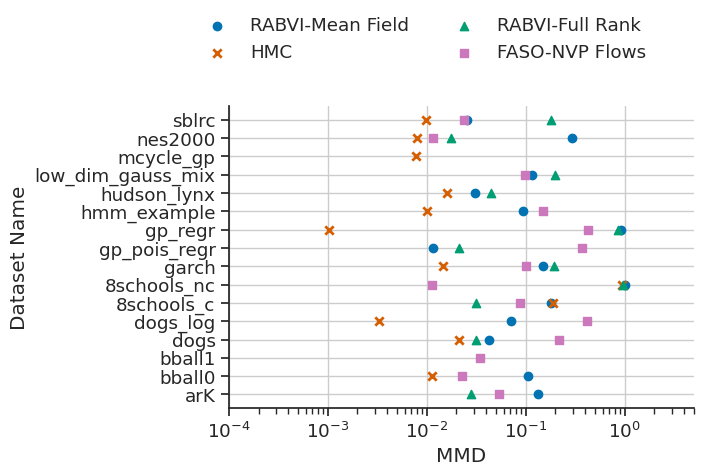}
\caption{
Results of RABVI with mean-field Gaussian and full-rank Gaussian family and FASO with NVP flows comparison to dynamic HMC 
at the same computational cost (likelihood evaluations) in terms of MMD.}
\label{fig:Runtime-comparison-posteriordb-mmd-summary}
\end{center}
\end{figure}

\begin{figure}[tbp]
\begin{center}
\centering
\includegraphics[width=.8\textwidth]{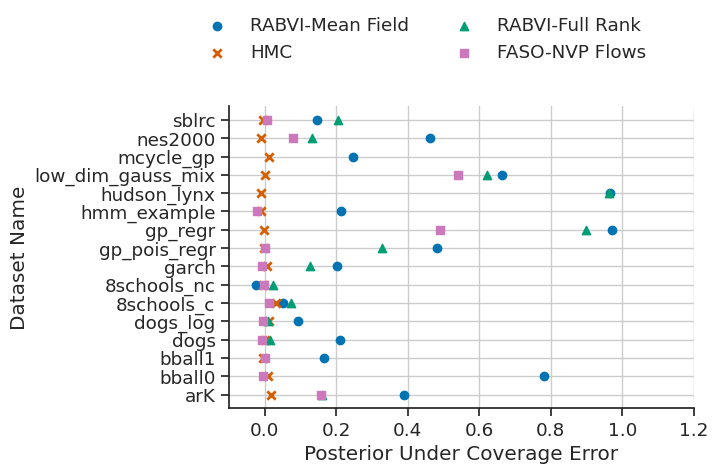}
\caption{
Results of RABVI with mean-field Gaussian and full-rank Gaussian family and FASO with NVP flows comparison to dynamic HMC 
at the same computational cost (likelihood evaluations) in terms of posterior under coverage error of 95\% quantiles.}
\label{fig:Runtime-comparison-posteriordb-cov-perc-summary}
\end{center}
\end{figure}

\begin{figure}[tbp]
\begin{center}
\begin{subfigure}[t]{.24\textwidth}
\centering
\includegraphics[width=\textwidth]{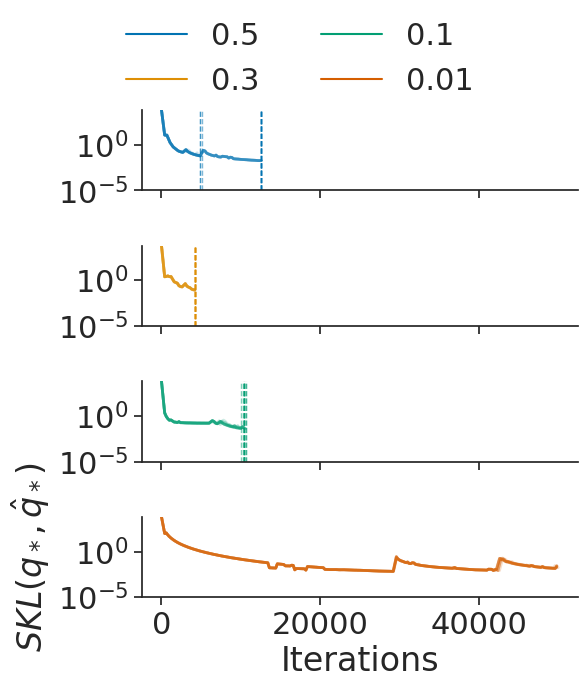}\\
\includegraphics[width=\textwidth]{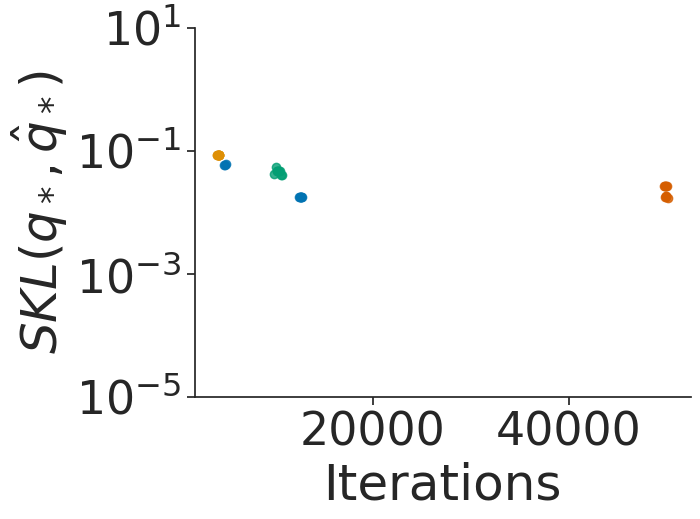}
\caption{$\learningrate_{0}$} 
\end{subfigure} 
\begin{subfigure}[t]{.24\textwidth}
\centering
\includegraphics[width=\textwidth]{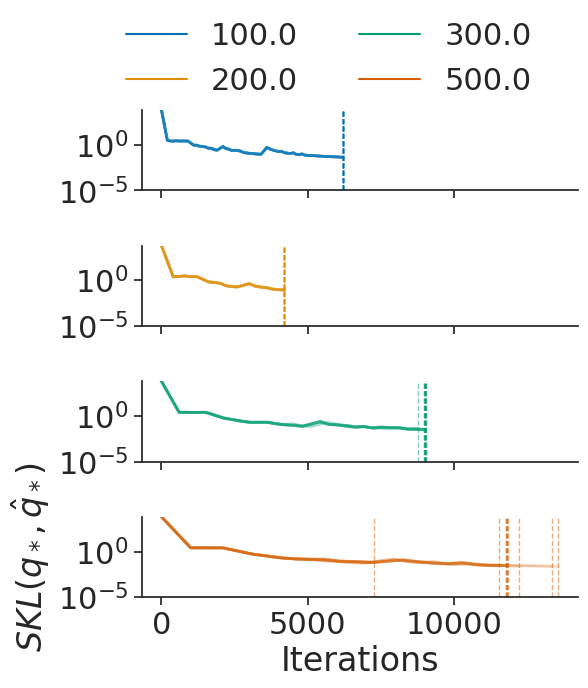} \\
\includegraphics[width=\textwidth]{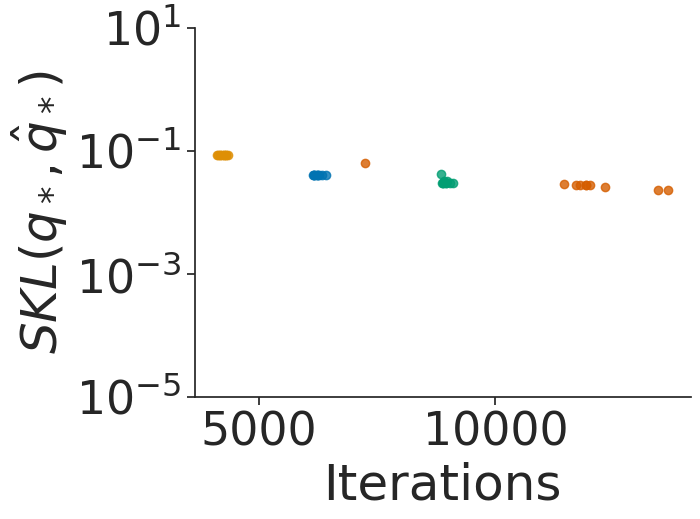}
\caption{$W_{min}$} 
\end{subfigure} 
\begin{subfigure}[t]{.24\textwidth}
\centering
\includegraphics[width=\textwidth]{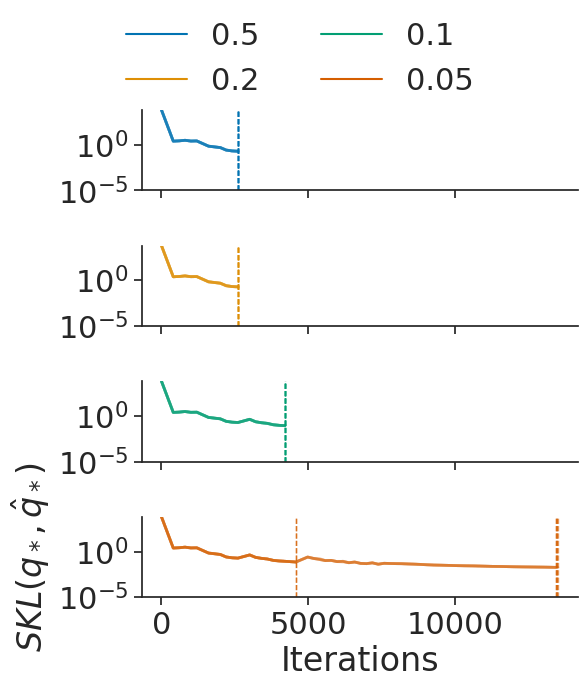} \\
\includegraphics[width=\textwidth]{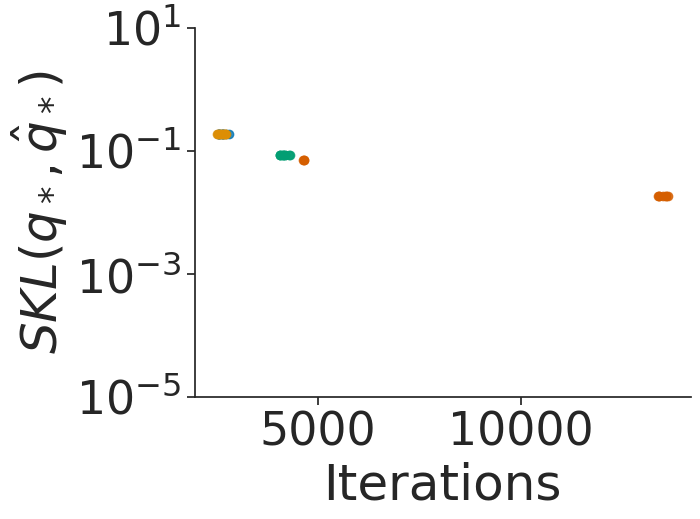}
\caption{$\varepsilon_{0}$} 
\end{subfigure} 
\begin{subfigure}[t]{.24\textwidth}
\centering
\includegraphics[width=\textwidth,height=0.198\textheight]{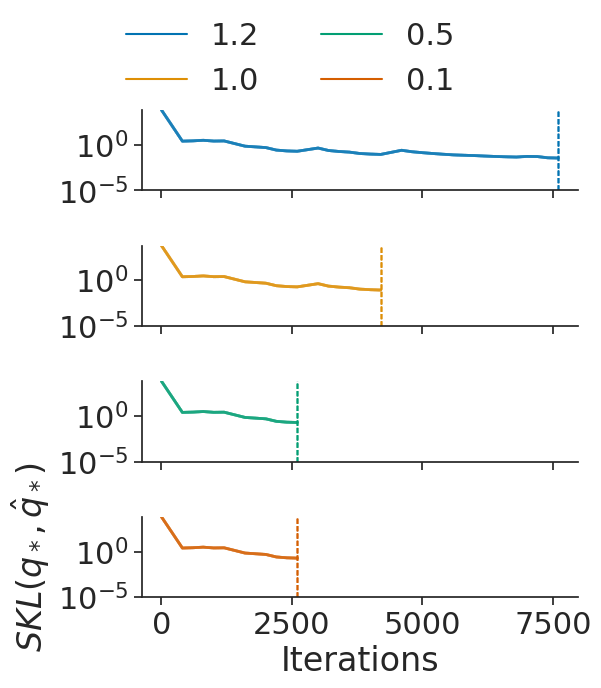} \\
\includegraphics[width=\textwidth,height=0.128\textheight]{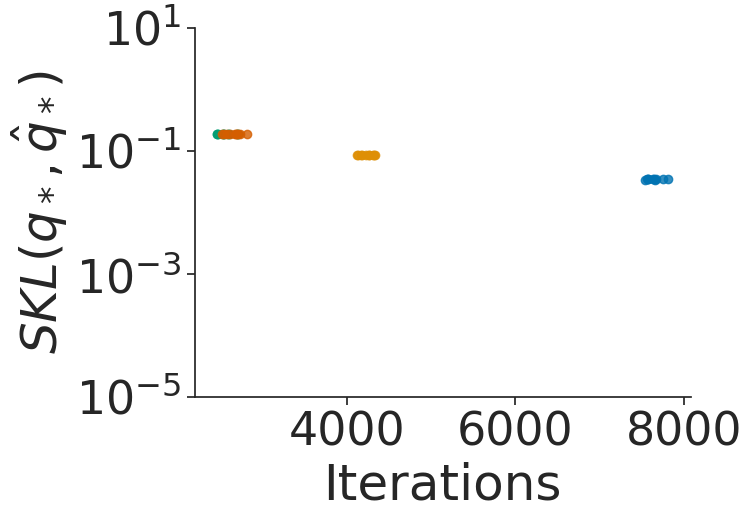}
\caption{$\tau$} 
\end{subfigure}
\caption{
Robustness to tuning parameters \textbf{(a)} initial learning rate $\learningrate_{0}$, 
\textbf{(b)} minimum window size $W_{\min}$, 
\textbf{(c)} initial iterate average relative error threshold $\varepsilon_{0}$, and 
\textbf{(d)} inefficiency threshold $\tau$.
Results use Gaussian target $\distNorm(0, V)$ with $\paramdim = 100$ and $V= I$ (identity covariance).
\textbf{(top)} Iterations versus symmetrized KL divergence between iterate average and optimal variational approximation. 
The distinct lines represent repeated experiments and the vertical lines indicate the termination rule trigger points.
\textbf{(bottom)} Iterations versus symmetrized KL divergence between iterate average and optimal variational approximation
at the termination rule trigger point.}
\label{fig:Robustness-tuning-parameters-gaussian}
\end{center}
\end{figure}

\begin{figure}[tbp]
\begin{center}
\begin{subfigure}[t]{.24\textwidth}
\centering
\includegraphics[width=\columnwidth]{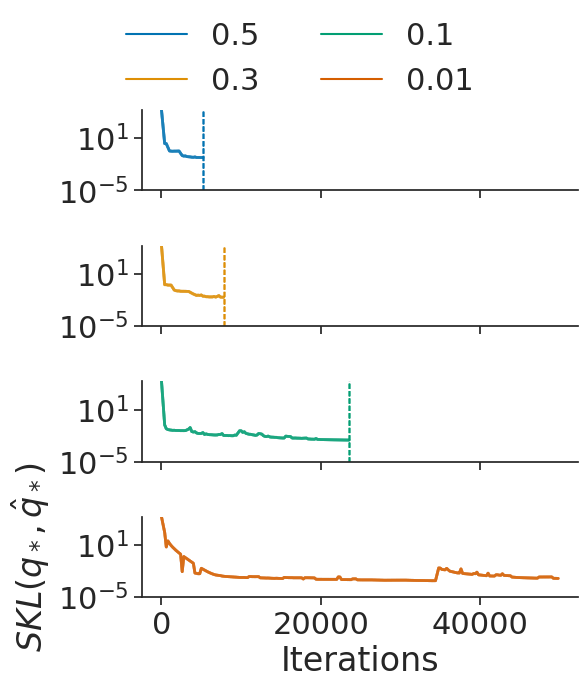}\\
\includegraphics[width=\columnwidth]{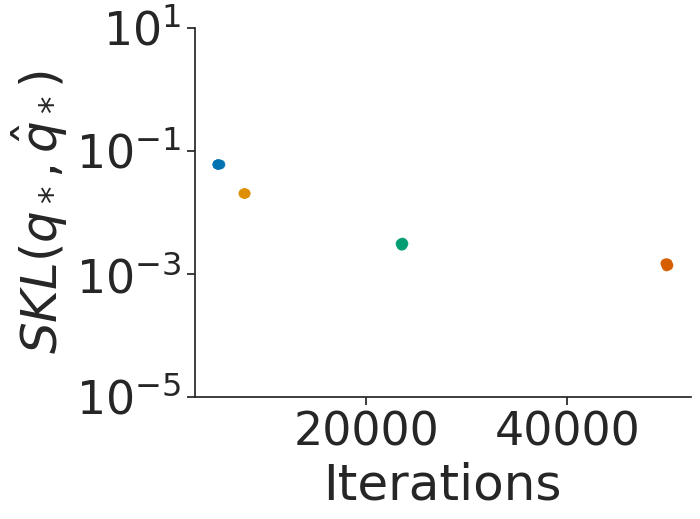}
\caption{\footnotesize uncorrelated $\paramdim = 100$} 
\end{subfigure}  
\begin{subfigure}[t]{.24\textwidth}
\centering
\includegraphics[width=\columnwidth]{Hyper_parameter_robust/avgadam/gaussian_500D_uncorrelated_var_constant_check_of_gamma0_lines_avgadam.png}\\
\includegraphics[width=\columnwidth]{Hyper_parameter_robust/avgadam/gaussian_500D_uncorrelated_var_constant_check_of_gamma0_scatter_avgadam.png}
\caption{\footnotesize uncorrelated $\paramdim = 500$} 
\end{subfigure} 
\begin{subfigure}[t]{.24\textwidth}
\centering
\includegraphics[width=\columnwidth]{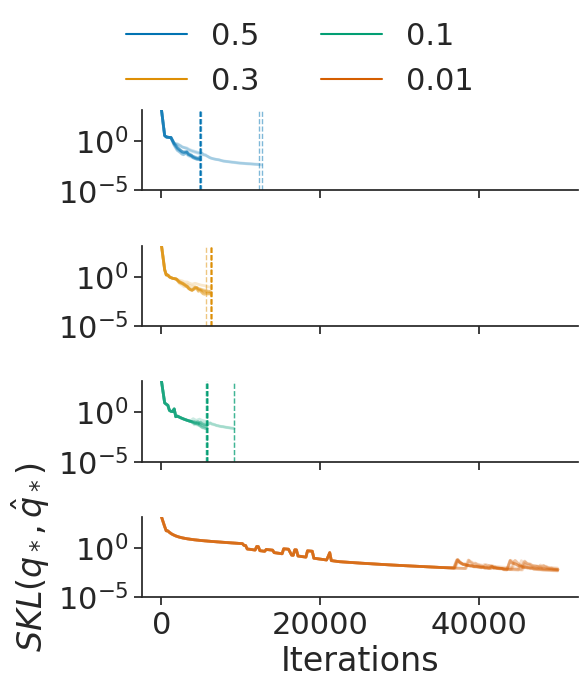}\\
\includegraphics[width=\columnwidth]{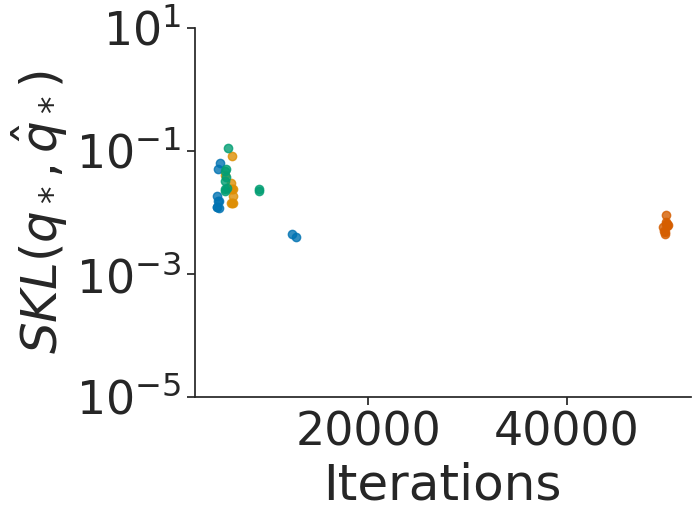}
\caption{\footnotesize uniform correlated $\paramdim = 100$} 
\end{subfigure}  
 \begin{subfigure}[t]{.24\textwidth}
\centering
\includegraphics[width=\columnwidth]{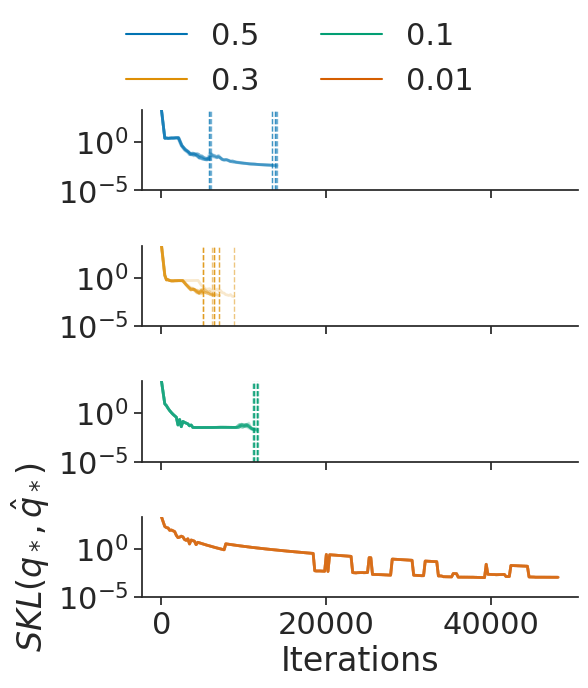}\\
\includegraphics[width=\columnwidth]{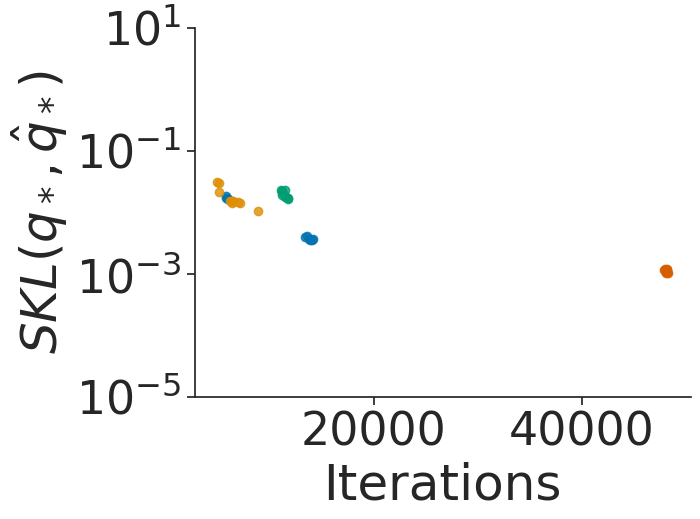}
\caption{\footnotesize banded correlated $\paramdim = 100$} 
\end{subfigure}  
\caption{
Robustness to initial learning rate $\learningrate_{0}$ using Gaussian targets.
\textbf{(top)} Iterations versus symmetrized KL divergence between iterate average and optimal variational approximation. 
The distinct lines represent repeated experiments and the vertical lines indicate the termination rule trigger points.
\textbf{(bottom)} Iterations versus symmetrized KL divergence between iterate average and optimal variational approximation
at the termination rule trigger point.}
\label{fig:Robustness-init-learning-rate}
\end{center}
\end{figure}

\begin{figure}[tbp]
\begin{center}
\begin{subfigure}[t]{.24\textwidth}
\centering
\includegraphics[width=\textwidth]{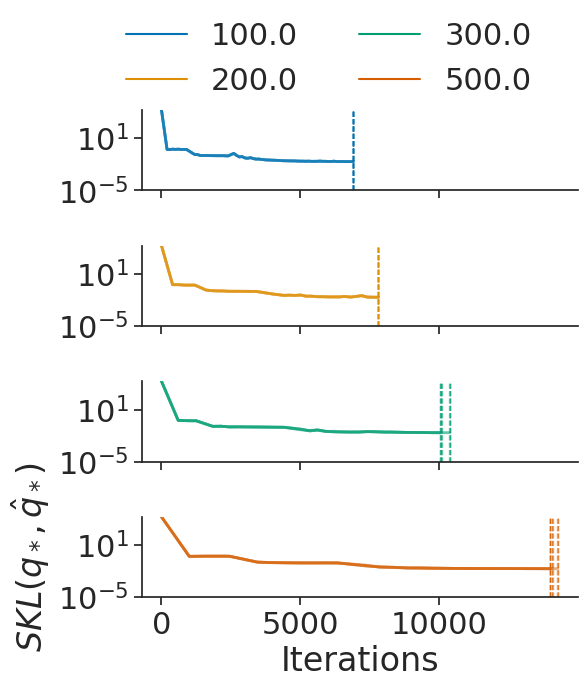} \\
\includegraphics[width=\textwidth]{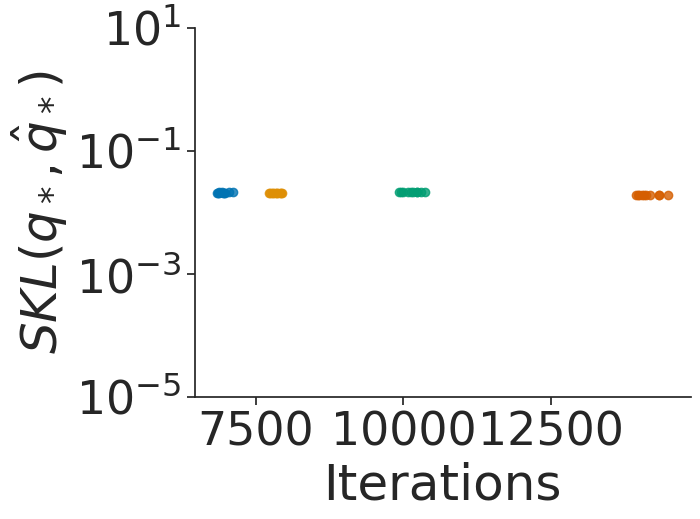}
\caption{\footnotesize uncorrelated $\paramdim = 100$} 
\end{subfigure}  
\begin{subfigure}[t]{.24\textwidth}
\centering
\includegraphics[width=\textwidth]{Hyper_parameter_robust/avgadam/gaussian_500D_uncorrelated_var_constant_check_of_W_min_lines_avgadam.png} \\
\includegraphics[width=\textwidth]{Hyper_parameter_robust/avgadam/gaussian_500D_uncorrelated_var_constant_check_of_W_min_scatter_avgadam.png}
\caption{\footnotesize uncorrelated $\paramdim = 500$} 
\end{subfigure} 
\begin{subfigure}[t]{.24\textwidth}
\centering
\includegraphics[width=\textwidth,height=0.198\textheight]{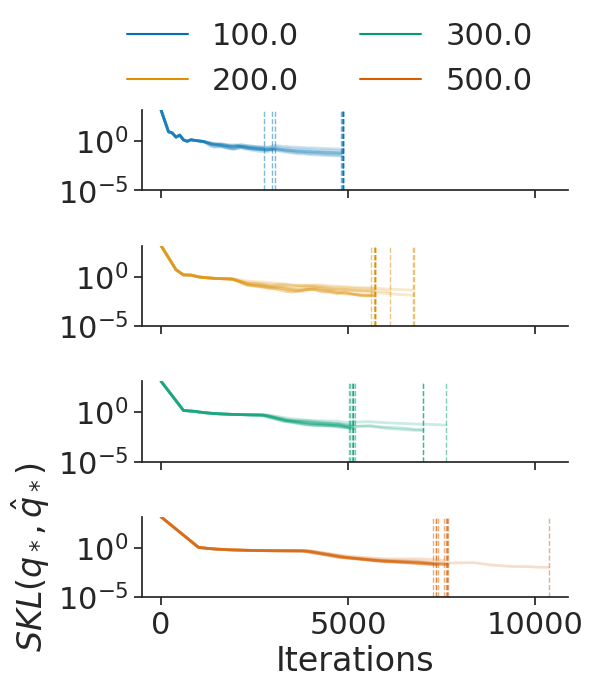} \\
\includegraphics[width=\textwidth,height=0.128\textheight]{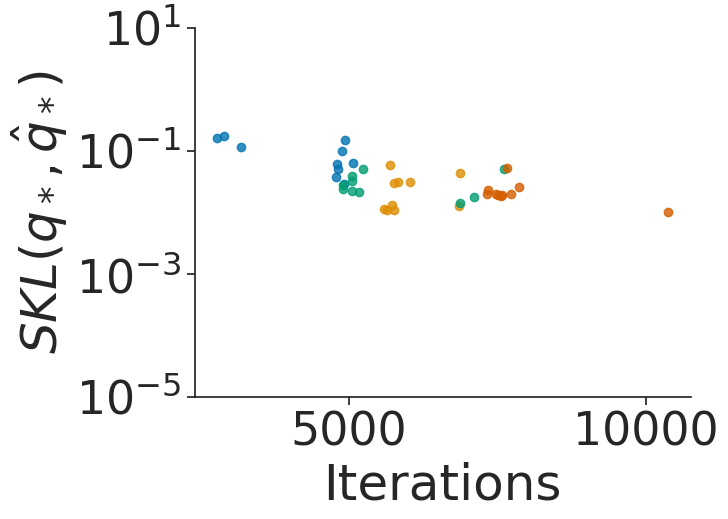}
\caption{\footnotesize uniform correlated $\paramdim = 100$} 
\end{subfigure}  
 \begin{subfigure}[t]{.24\textwidth}
\centering
\includegraphics[width=\textwidth,height=0.198\textheight]{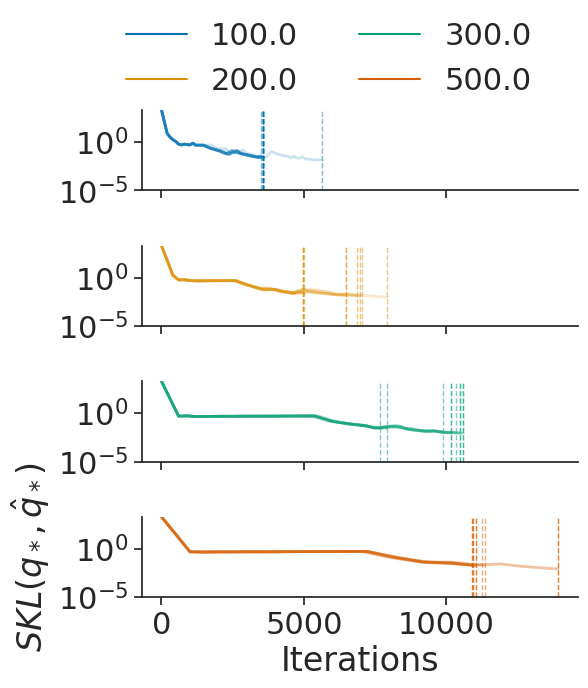} \\
\includegraphics[width=\textwidth,height=0.128\textheight]{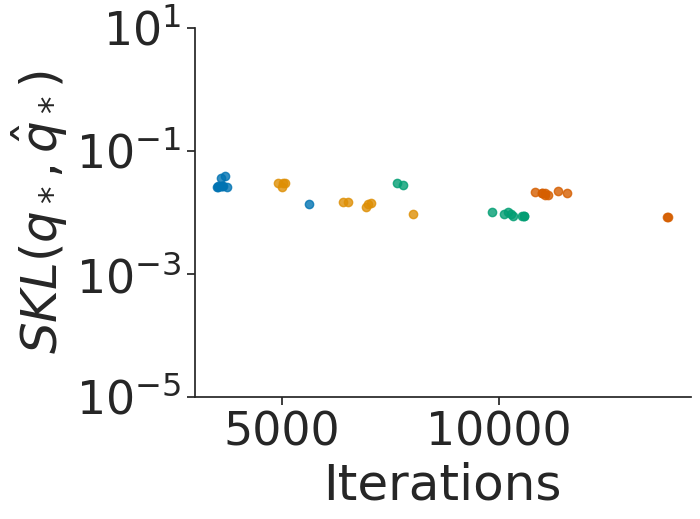}
\caption{\footnotesize banded correlated $\paramdim = 100$} 
\end{subfigure} 
\caption{
Robustness to minimum window size $W_{min}$,, using Gaussian targets.
\textbf{(top)} Iterations versus symmetrized KL divergence between iterate average and optimal variational approximation. 
The distinct lines represent repeated experiments and the vertical lines indicate the termination rule trigger points.
\textbf{(bottom)} Iterations versus symmetrized KL divergence between iterate average and optimal variational approximation
at the termination rule trigger point.}
\label{fig:Robustness-minimum-window-size-gaussian}
\end{center}
\end{figure}

\begin{figure}[tbp]
\begin{center}
\begin{subfigure}[t]{.24\textwidth}
\centering
\includegraphics[width=\textwidth,height=0.198\textheight]{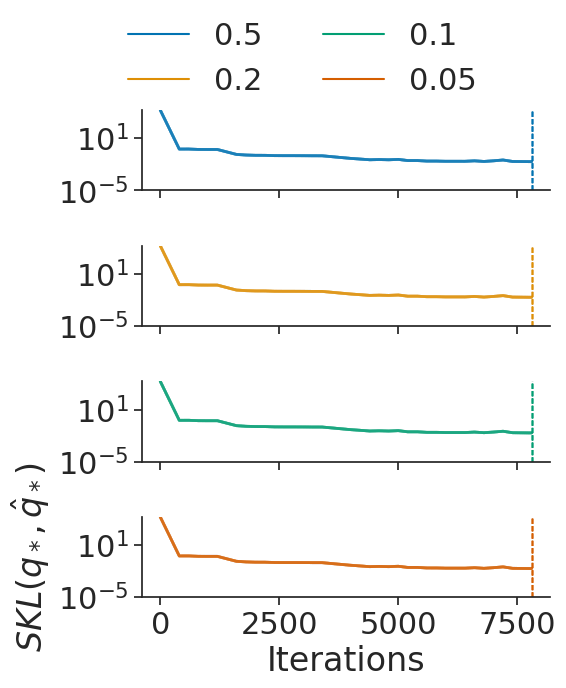} \\
\includegraphics[width=\textwidth,height=0.128\textheight]{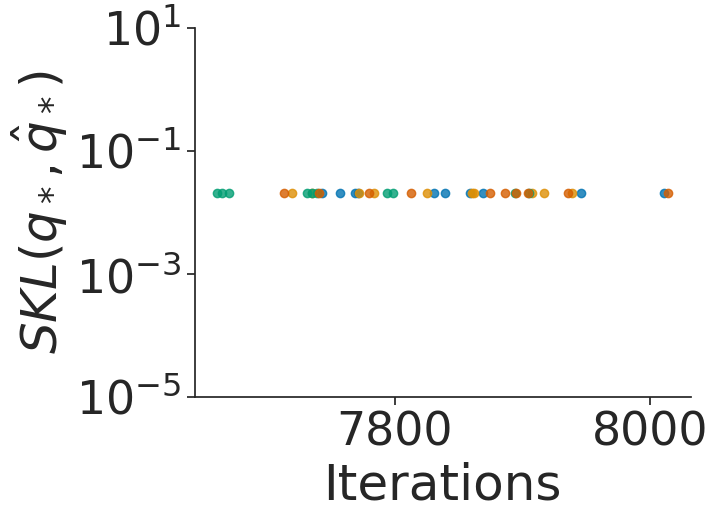}
\caption{\footnotesize uncorrelated $\paramdim = 100$} 
\end{subfigure}  
\begin{subfigure}[t]{.24\textwidth}
\centering
\includegraphics[width=\textwidth]{Hyper_parameter_robust/avgadam/gaussian_500D_uncorrelated_var_constant_check_of_mcse_threshold_lines_avgadam.png} \\
\includegraphics[width=\textwidth]{Hyper_parameter_robust/avgadam/gaussian_500D_uncorrelated_var_constant_check_of_mcse_threshold_scatter_avgadam.png}
\caption{\footnotesize uncorrelated $\paramdim = 500$} 
\end{subfigure} 
\begin{subfigure}[t]{.24\textwidth}
\centering
\includegraphics[width=\textwidth]{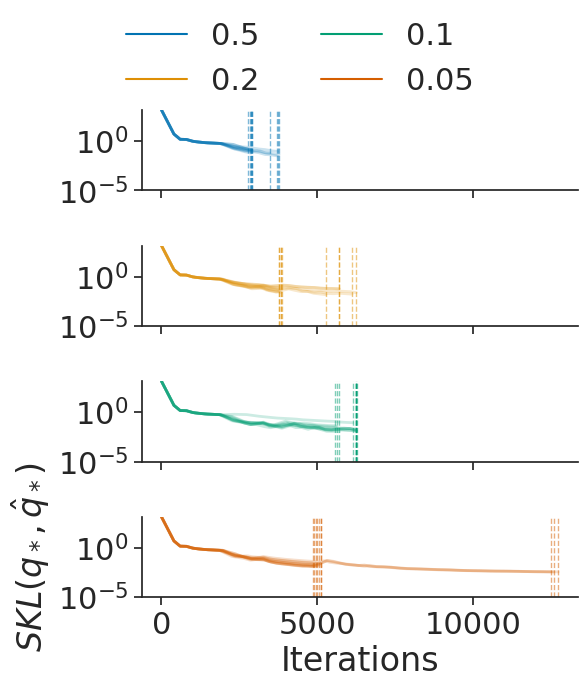} \\
\includegraphics[width=\textwidth]{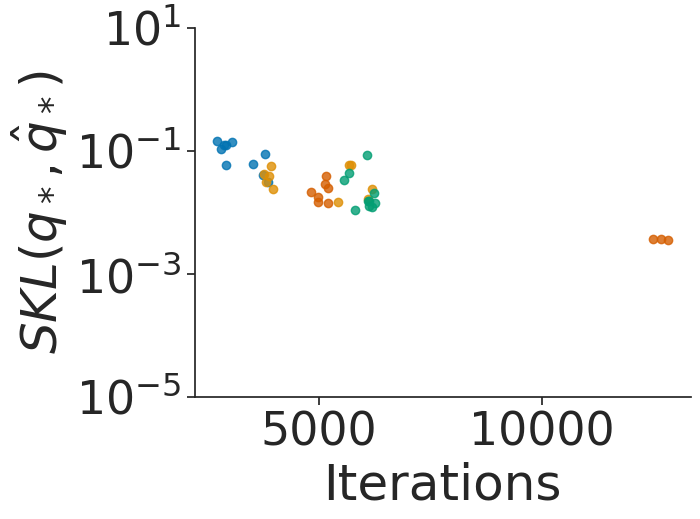}
\caption{\footnotesize uniform correlated $\paramdim = 100$} 
\end{subfigure}  
 \begin{subfigure}[t]{.24\textwidth}
\centering
\includegraphics[width=\textwidth]{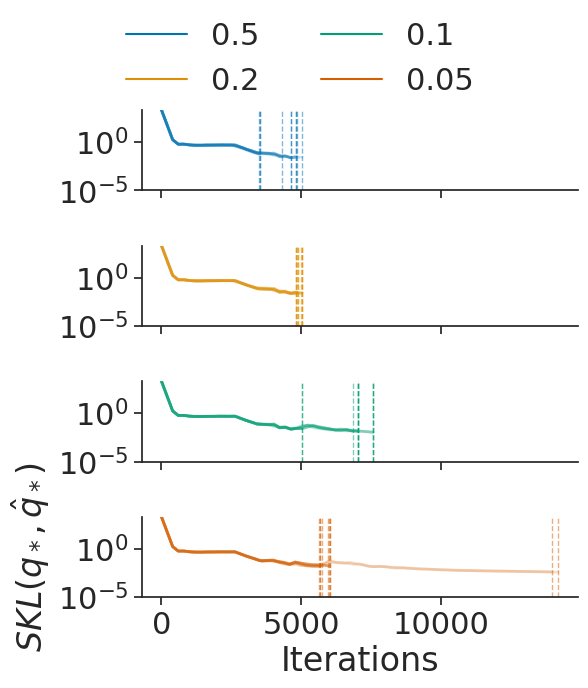} \\
\includegraphics[width=\textwidth]{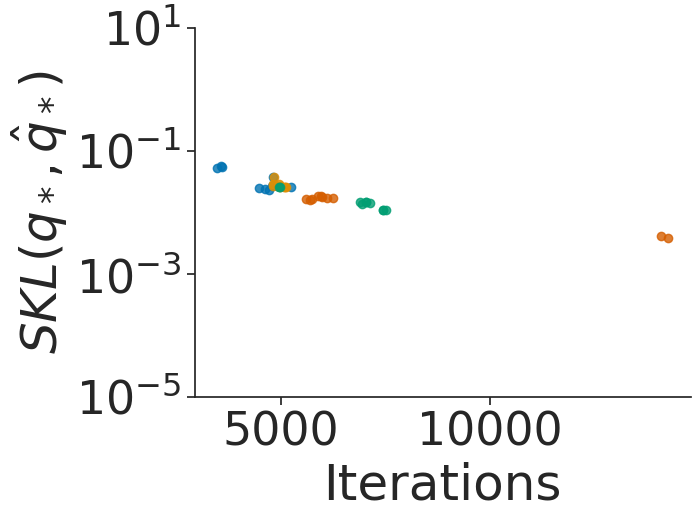}
\caption{\footnotesize banded correlated $\paramdim = 100$} 
\end{subfigure}  
\caption{
Robustness to initial iterate average relative error threshold $\varepsilon$ using Gaussian targets.
\textbf{(top)} Iterations versus symmetrized KL divergence between iterate average and optimal variational approximation. 
The distinct lines represent repeated experiments and the vertical lines indicate the termination rule trigger points.
\textbf{(bottom)} Iterations versus symmetrized KL divergence between iterate average and optimal variational approximation
at the termination rule trigger point.}
\label{fig:Robustness-mcse-threshold-gaussian}
\end{center}
\end{figure}

\begin{figure}[tbp]
\begin{center}
\begin{subfigure}[t]{.24\textwidth}
\centering
\includegraphics[width=\textwidth,height=0.198\textheight]{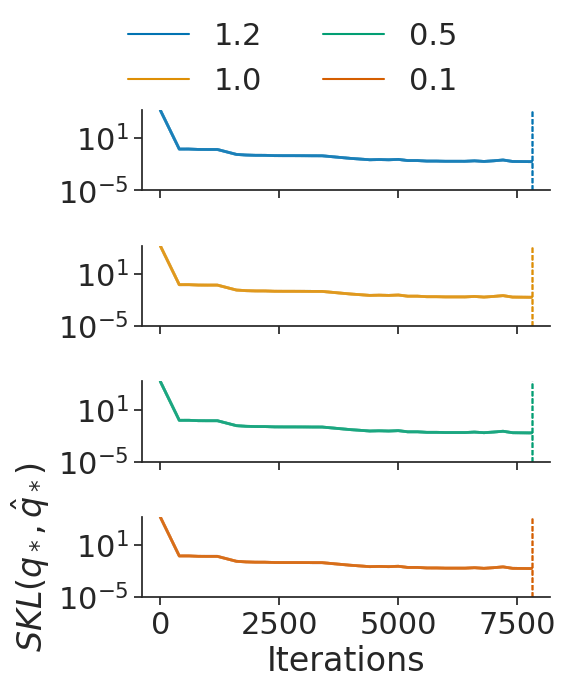} \\
\includegraphics[width=\textwidth,height=0.128\textheight]{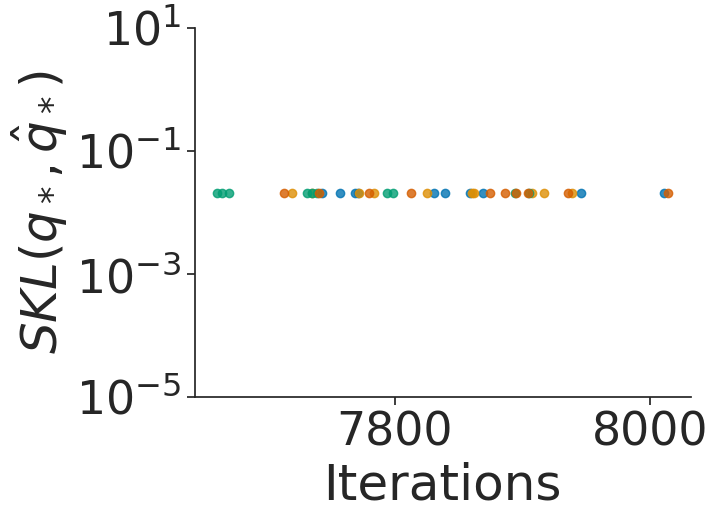}
\caption{\footnotesize uncorrelated $\paramdim = 100$} 
\end{subfigure}  
\begin{subfigure}[t]{.24\textwidth}
\centering
\includegraphics[width=\textwidth,height=0.198\textheight]{Hyper_parameter_robust/avgadam/gaussian_500D_uncorrelated_var_constant_check_of_inefficiency_threshold_lines_avgadam.png} \\
\includegraphics[width=\textwidth,height=0.128\textheight]{Hyper_parameter_robust/avgadam/gaussian_500D_uncorrelated_var_constant_check_of_inefficiency_threshold_scatter_avgadam.png}
\caption{\footnotesize uncorrelated $\paramdim = 500$} 
\end{subfigure}
\begin{subfigure}[t]{.24\textwidth}
\centering
\includegraphics[width=\textwidth]{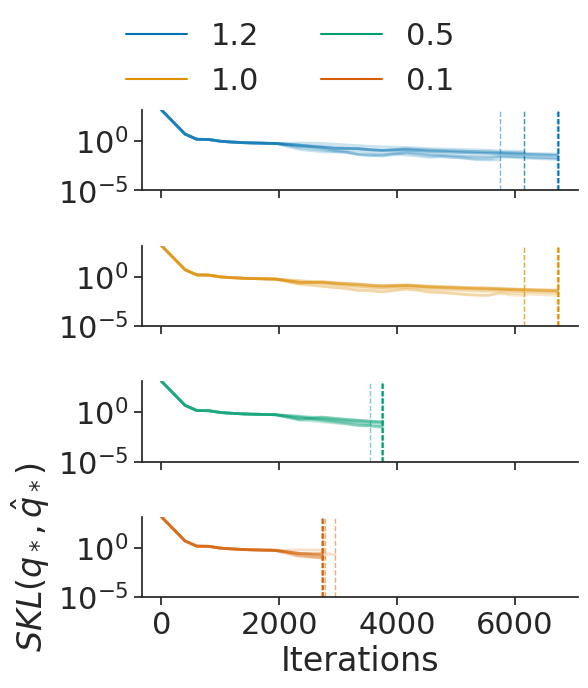} \\
\includegraphics[width=\textwidth]{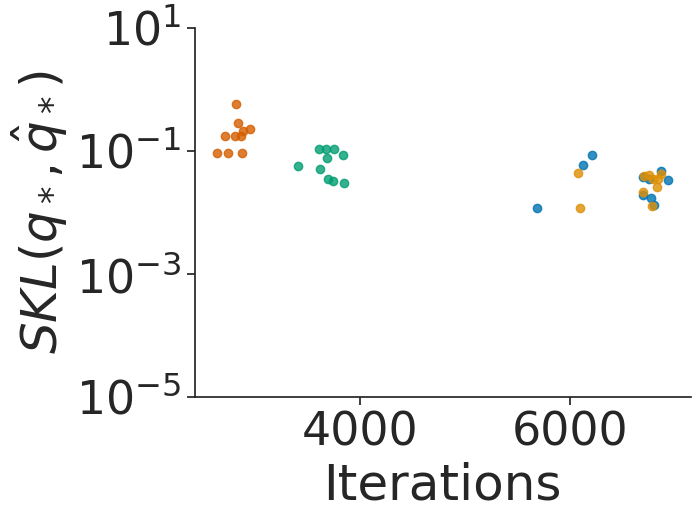}
\caption{\footnotesize uniform correlated $\paramdim = 100$} 
\end{subfigure}  
\begin{subfigure}[t]{.24\textwidth}
\centering
\includegraphics[width=\textwidth]{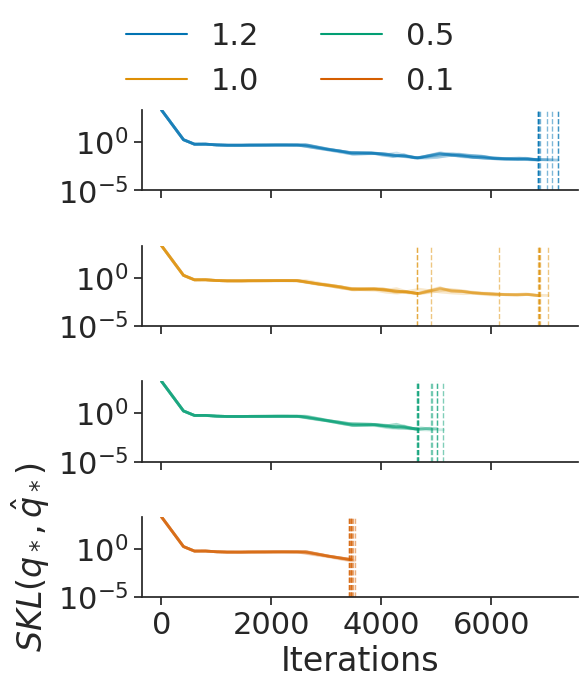} \\
\includegraphics[width=\textwidth]{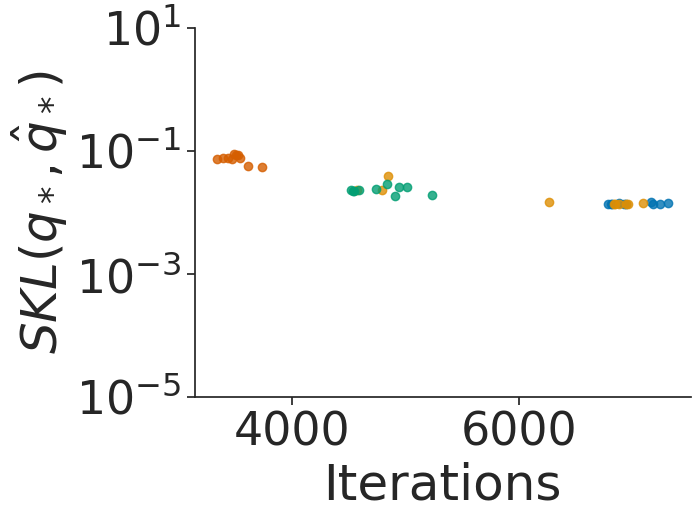}
\caption{\footnotesize banded correlated $\paramdim = 100$} 
\end{subfigure}  
\caption{
Robustness to inefficiency threshold $\tau$ using Gaussian targets.
\textbf{(top)} Iterations versus symmetrized KL divergence between iterate average and optimal variational approximation. 
The distinct lines represent repeated experiments and the vertical lines indicate the termination rule trigger points.
\textbf{(bottom)} Iterations versus symmetrized KL divergence between iterate average and optimal variational approximation
at the termination rule trigger point.}
\label{fig:Robustness-inefficiency-threshold-gaussian}
\end{center}
\end{figure}

\begin{figure}[tbp]
\begin{center}
\begin{subfigure}[t]{.24\textwidth}
\centering
\includegraphics[width=\textwidth]{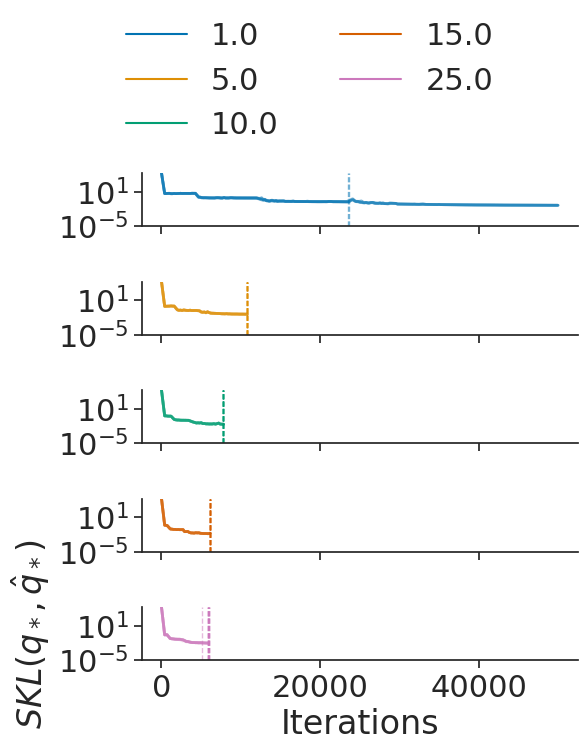} \\
\includegraphics[width=\textwidth]{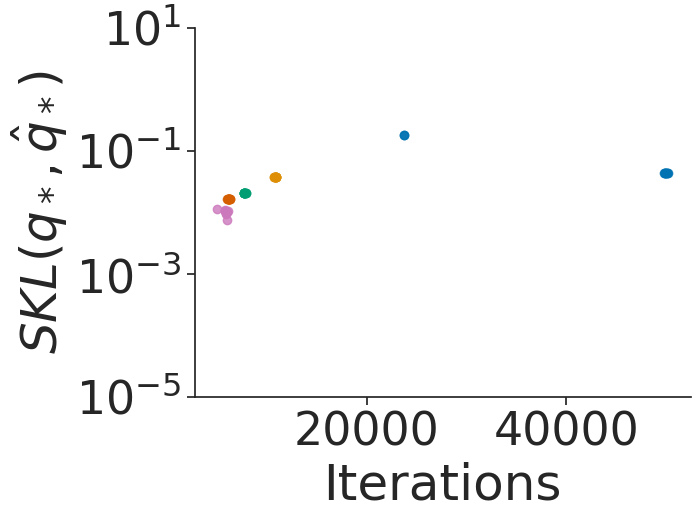}
\caption{\footnotesize uncorrelated $\paramdim = 100$} 
\end{subfigure}  
\begin{subfigure}[t]{.24\textwidth}
\centering
\includegraphics[width=\textwidth]{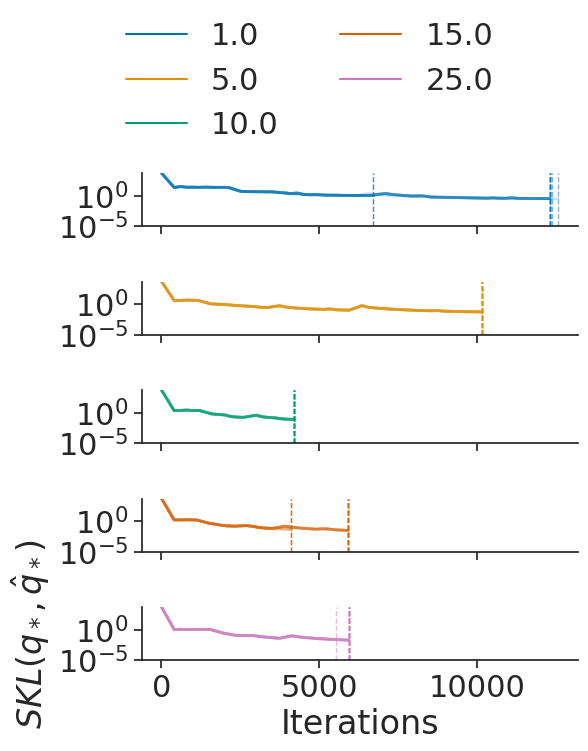} \\
\includegraphics[width=\textwidth]{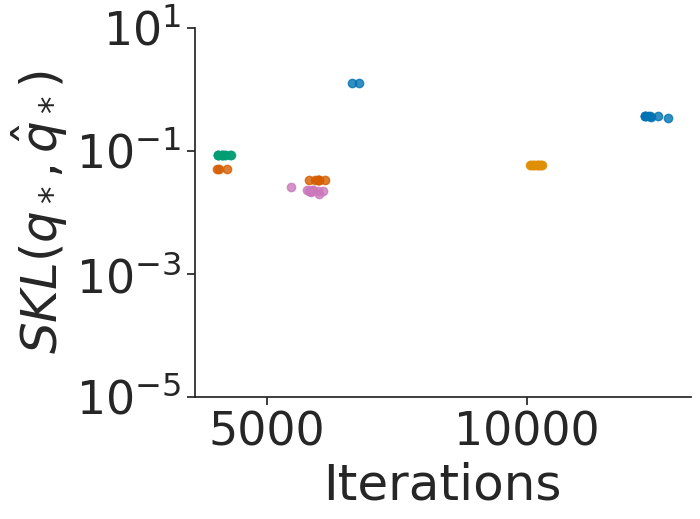}
\caption{\footnotesize uncorrelated $\paramdim = 500$} 
\end{subfigure}
\begin{subfigure}[t]{.24\textwidth}
\centering
\includegraphics[width=\textwidth]{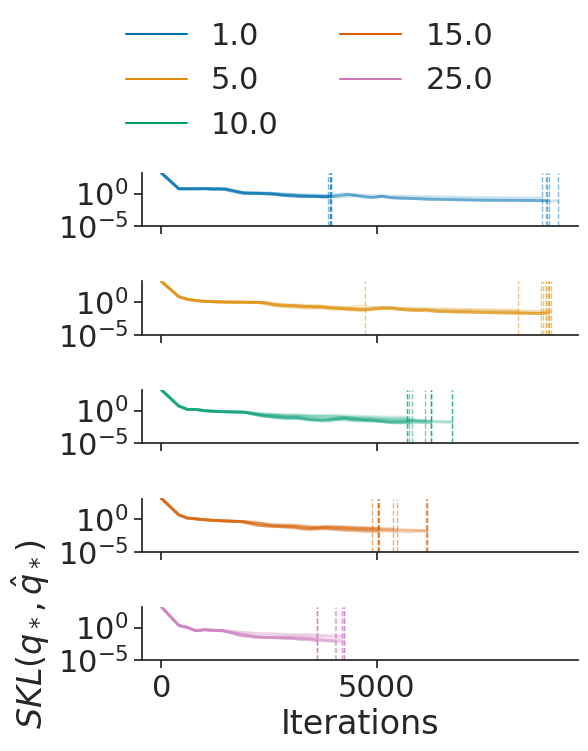} \\
\includegraphics[width=\textwidth]{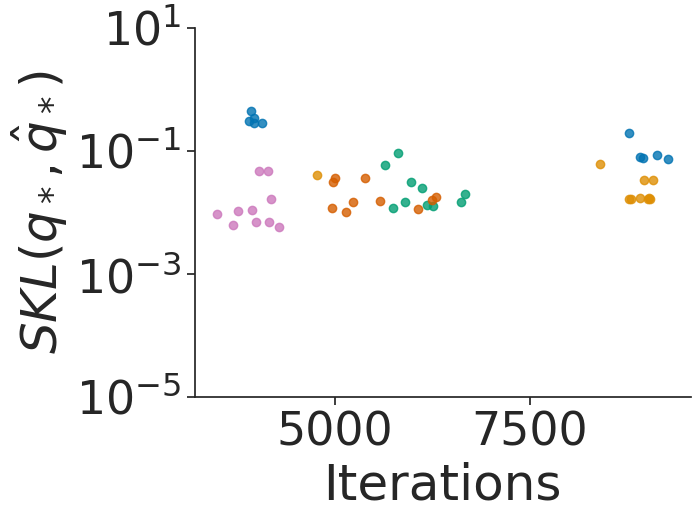}
\caption{\footnotesize uniform correlated $\paramdim = 100$} 
\end{subfigure}  
\begin{subfigure}[t]{.24\textwidth}
\centering
\includegraphics[width=\textwidth]{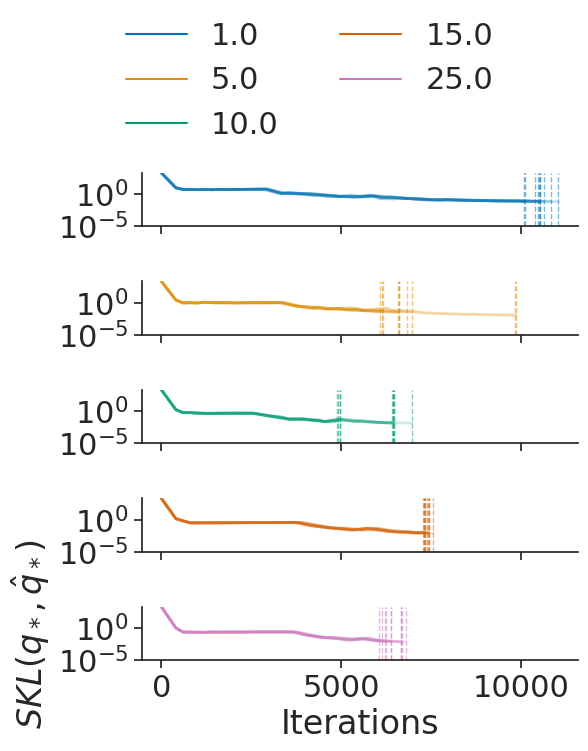} \\
\includegraphics[width=\textwidth]{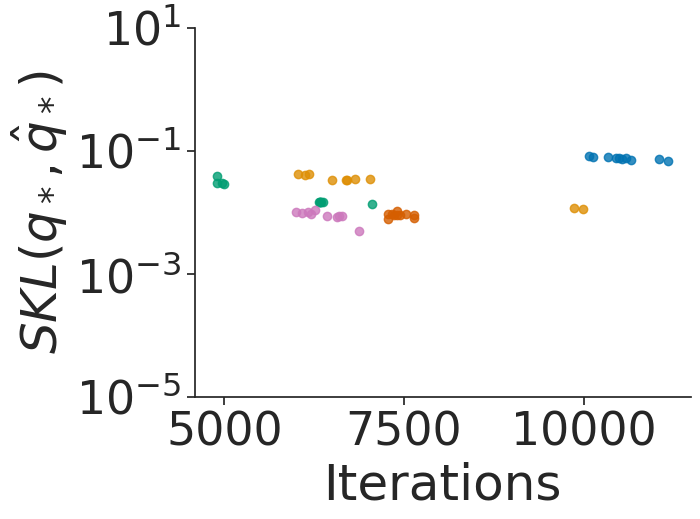}
\caption{\footnotesize banded correlated $\paramdim = 100$} 
\end{subfigure}  
\caption{
Robustness to Monte Carlo samples $M$ using Gaussian targets.
\textbf{(top)} Iterations versus symmetrized KL divergence between iterate average and optimal variational approximation. 
The distinct lines represent repeated experiments and the vertical lines indicate the termination rule trigger points.
\textbf{(bottom)} Iterations versus symmetrized KL divergence between iterate average and optimal variational approximation
at the termination rule trigger point.}
\label{fig:Robustness-mc-samples-gaussian}
\end{center}
\end{figure}

\begin{figure}[tbp]
\begin{center}
\begin{subfigure}[t]{.24\textwidth}
\centering
\includegraphics[width=\textwidth]{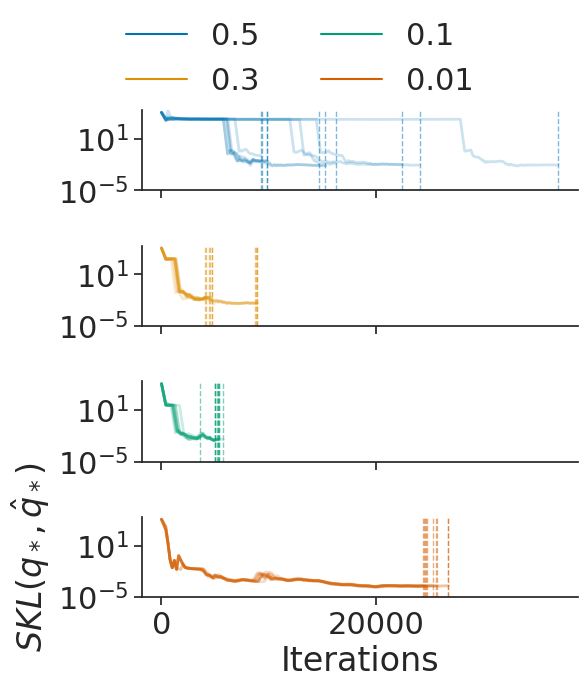}\\
\includegraphics[width=\textwidth]{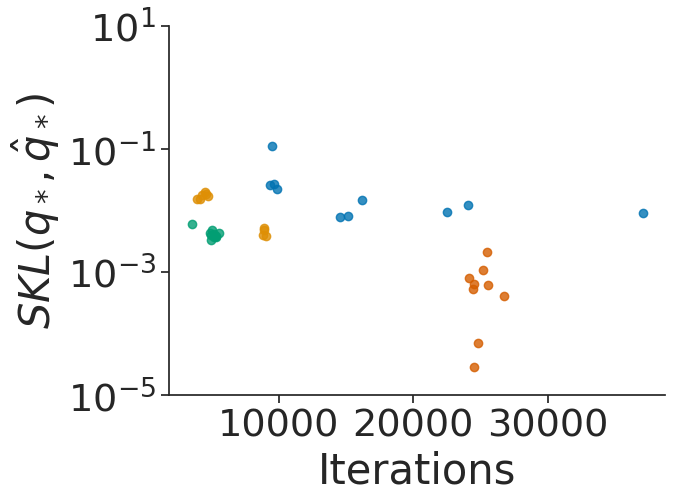}
\caption{$\learningrate_{0}$} 
\end{subfigure} 
\begin{subfigure}[t]{.24\textwidth}
\centering
\includegraphics[width=\textwidth]{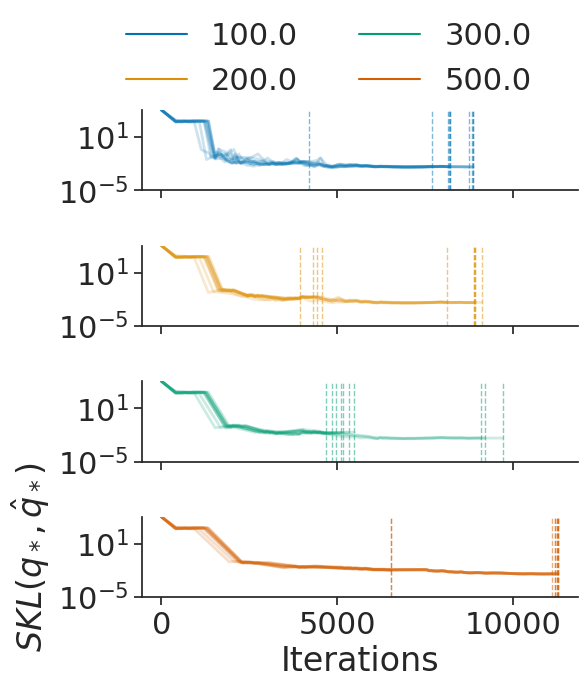} \\
\includegraphics[width=\textwidth]{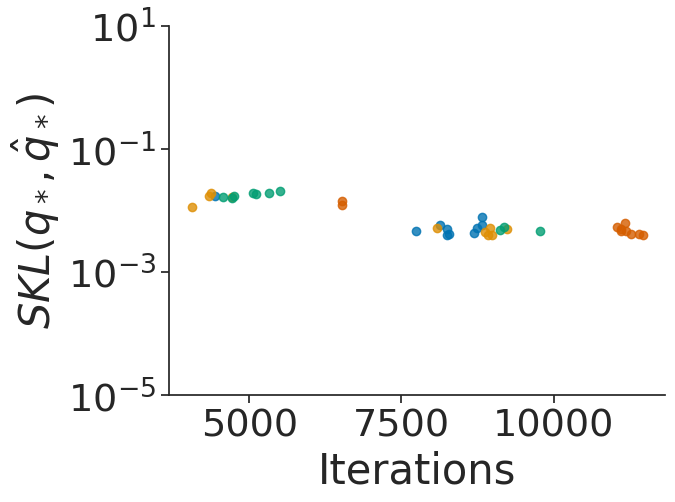}
\caption{$W_{min}$} 
\end{subfigure} 
\begin{subfigure}[t]{.24\textwidth}
\centering
\includegraphics[width=\textwidth]{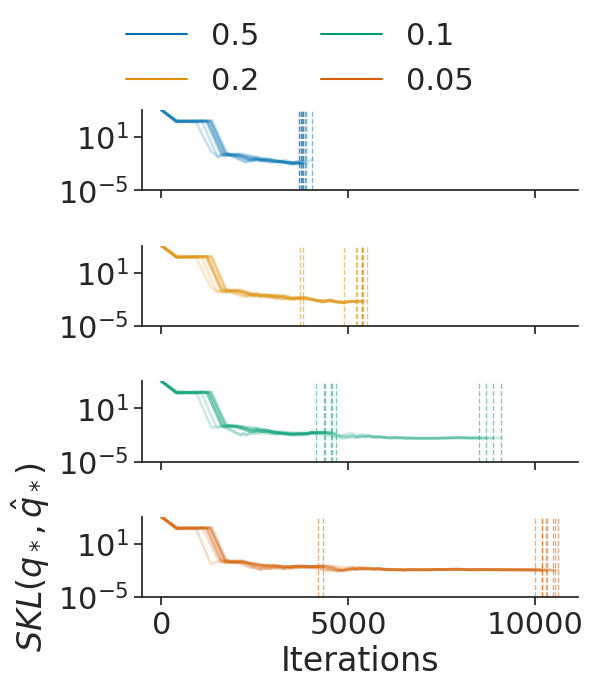} \\
\includegraphics[width=\textwidth]{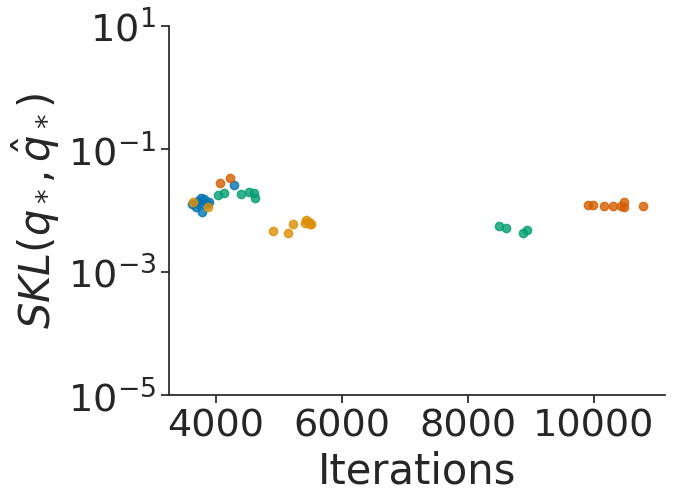}
\caption{$\varepsilon_{0}$} 
\end{subfigure} 
\begin{subfigure}[t]{.24\textwidth}
\centering
\includegraphics[width=\textwidth,height=0.198\textheight]{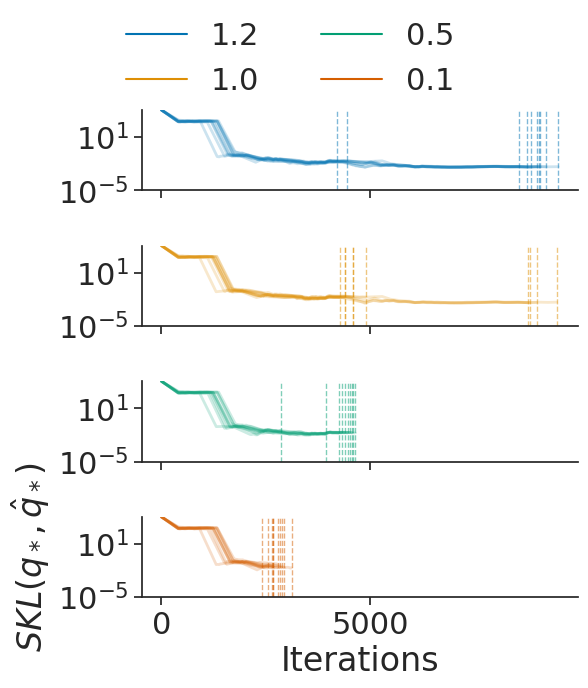} \\
\includegraphics[width=\textwidth,height=0.128\textheight]{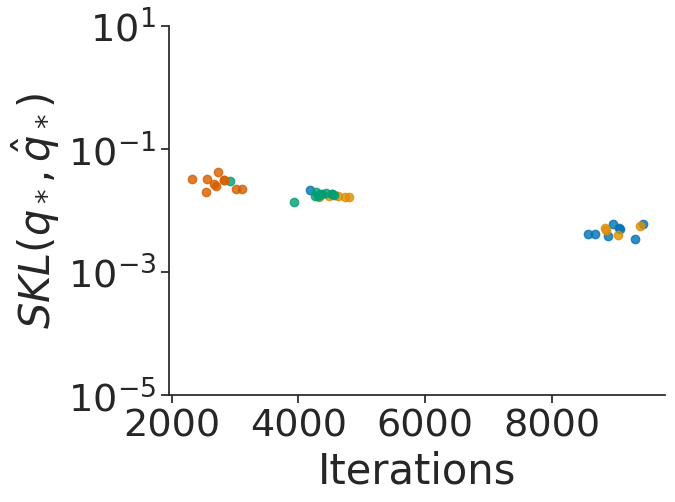}
\caption{$\tau$} 
\end{subfigure}
\caption{
Robustness to tuning parameters (a) initial learning rate $\learningrate_{0}$, (b) minimum window size $W_{\min}$, (c) initial iterate average relative error threshold $\varepsilon_{0}$, and (d)  inefficiency threshold $\tau$ using \textit{arK}  dataset from \texttt{posteriordb} package.
\textbf{(top)} Iterations versus symmetrized KL divergence between iterate average and optimal variational approximation. 
The transparent lines represent repeated experiments and the vertical lines indicate the termination rule trigger points.
\textbf{(bottom)} Iterations versus symmetrized KL divergence between iterate average and optimal variational approximation
at the termination rule trigger point.}
\label{fig:Robustness-tuning-parameters-ark-posteriordb}
\end{center}
\end{figure}

\end{document}